\newcommand{\qedsymbol}{$\blacksquare$}
\newcommand*\widefbox[1]{\fbox{\hspace{2em}#1\hspace{2em}}}
\newcommand{\fndict}{\mathcal{D}}
\newcommand{\vf}{\mathfrak{X}}
\newcommand{\tf}{\mathfrak{T}}
\DeclareMathOperator{\Lie}{Lie}
\DeclareMathOperator{\graph}{gr}
\DeclareMathOperator{\image}{im}
\DeclareMathOperator{\sect}{\Sigma}
\DeclareMathOperator{\sym}{\mathfrak{sym}}
\DeclareMathOperator{\Sym}{Sym}
\DeclareMathOperator*{\minimize}{\min\!imize\enskip}
\DeclareMathOperator{\Tr}{Tr}
\DeclareMathOperator{\rank}{rank}
\DeclareMathOperator{\codim}{codim}
\DeclareMathOperator{\diag}{diag}
\DeclareMathOperator{\vspan}{span}
\DeclareMathOperator{\Range}{Range}
\DeclareMathOperator{\Null}{Null}
\DeclareMathOperator{\diverg}{div}
\DeclareMathOperator{\td}{d}
\DeclareMathOperator{\ddt}{\frac{\td}{\td t}}
\DeclareMathOperator{\D}{d}
\DeclareMathOperator{\Id}{Id}
\DeclareMathOperator{\intmul}{\mathlarger{\lrcorner}}
\DeclareMathOperator{\vpr}{vpr} 
\DeclareMathOperator{\Flow}{Fl}
\DeclareMathOperator{\volform}{dV}
\newcommand{\mat}[1]{{\pmb{#1}}}
\newcommand{\vect}[1]{{\boldsymbol{#1}}}
\newcommand{\mcal}[1]{{\mathcal{#1}}}
\newcommand{\mfrak}[1]{{\mathfrak{#1}}}
\newcommand{\R}{\mathbb{R}}
\DeclareMathOperator{\GL}{GL}
\DeclareMathOperator{\SE}{SE}
\DeclareMathOperator{\se}{\mathfrak{se}}
\DeclareMathOperator{\Ogrp}{O}
\DeclareMathOperator{\oalg}{\mathfrak{o}}
\DeclareMathOperator{\SO}{SO}
\renewcommand*\env@matrix[1][*\c@MaxMatrixCols c]{%
  \hskip -\arraycolsep
  \let\@ifnextchar\new@ifnextchar
  \array{#1}}
\begin{document}

\title{\LARGE{\textbf{A Unified Framework to Enforce, Discover, and Promote Symmetry in Machine Learning}}}

\author{
    \name \hspace{-4pt}Samuel E. Otto\thanks{\,Present affiliation: Sibley School of Mechanical and Aerospace Engineering, Cornell University, Ithaca, NY, USA} \email s.otto@cornell.edu \\
    \addr AI Institute in Dynamic Systems\\    
    University of Washington\\
    Seattle, WA 98195-4322, USA
    \AND
    \name Nicholas Zolman \email nzolman@uw.edu \\
    \addr AI Institute in Dynamic Systems\\    
    University of Washington\\
    Seattle, WA 98195-4322, USA
    \AND
    \name J. Nathan Kutz \email kutz@uw.edu \\
    \addr AI Institute in Dynamic Systems\\
    University of Washington \\
    Seattle, WA 98195-4322, USA
    \AND
    \name Steven L. Brunton \email sbrunton@uw.edu \\
    \addr AI Institute in Dynamic Systems\\
    University of Washington \\
    Seattle, WA 98195-4322, USA
}

\editor{My editor}

\maketitle

\begin{abstract}
    Symmetry is present throughout nature and continues to play an increasingly central role in machine learning.
    In this paper, we provide a unifying theoretical and methodological framework for incorporating Lie group symmetry into machine learning models in three ways:
    \textbf{1. enforcing} known symmetry when training a model; \textbf{2. discovering} unknown symmetries of a given model or data set; and \textbf{3. promoting} symmetry during training by learning a model that breaks symmetries within a user-specified group of candidates when there is sufficient evidence in the data.
    We show that these tasks can be cast within a common mathematical framework whose central object is the Lie derivative. 
    We extend and unify several existing results by showing that enforcing and discovering symmetry are linear-algebraic tasks that are dual with respect to the bilinear structure of the Lie derivative.
    We also propose a novel way to promote symmetry by introducing a class of convex regularization functions based on the Lie derivative and nuclear norm relaxation to penalize symmetry breaking during training of machine learning models. 
    We explain how these ideas can be applied to a wide range of machine learning models including basis function regression, dynamical systems discovery, neural networks, and neural operators acting on fields.\\

    \noindent\textbf{Keywords:} Symmetries, machine learning, Lie groups, manifolds, invariance, equivariance, neural networks, deep learning
\end{abstract}


\newpage
\renewcommand{\baselinestretch}{.9}\normalsize
\tableofcontents
\renewcommand{\baselinestretch}{1.0}\normalsize

\newpage

\section{Introduction}

Symmetry is present throughout nature, 
and according to David \cite{Gross1996role} the discovery of fundamental symmetries has played an increasingly central role in physics since the beginning of the 20th century.
He asserts that
\vspace{-.1in}\begin{quote}
   ``Einstein’s great advance in 1905 was to put symmetry first, to regard the symmetry principle as the primary feature of nature that constrains the allowable dynamical laws.''
\end{quote}
\vspace{-.025in}
According to Einstein's special theory of relativity, physical laws including those of electromagnetism and quantum mechanics are Poincar\'{e}-invariant, meaning that after predictable transformations (actions of the Poincar\'{e} group), these laws can be applied in any non-accelerating reference frame, anywhere in the universe, at all times.
Specifically these transformations form a ten-parameter group including four translations of space-time, three rotations of space, and three shifts or \emph{boosts} in velocity.
For small boosts of velocity, these transformations become the Galilean symmetries of classical mechanics.
Similarly, the theorems of Euclidean geometry are unchanged after arbitrary translations, rotations, and reflections, comprising the Euclidean group.
In fluid mechanics, the conformal (angle-preserving) symmetry of Laplace's equation is used to reduce the study of idealized flows in complicated geometries to canonical flows in simple domains.
In dynamical systems, the celebrated theorem of \citet{noether1918invariante} establishes a correspondence between symmetries and conservation laws, an idea which has become a central pillar of mechanics~\citep{Abraham2008foundations}.
These examples illustrate the diversity of symmetry groups and their physical applications.
More importantly, they illustrate how \emph{symmetric models and theories in physics automatically extrapolate in explainable ways to environments beyond the available data.}

In machine learning, models that exploit symmetry can be trained with less data and use fewer parameters compared to their asymmetric counterparts.
Early examples include augmenting data with known transformations (see \cite{shorten2019survey, van2001art}) or using convolutional neural networks (CNNs) to achieve translation invariance for image processing tasks (see \cite{fukushima1980neocognitron, lecun1989backpropagation}).
More recently, equivariant neural networks respecting Euclidean symmetries have achieved state-of-the-art performance for predicting potentials in molecular dynamics \cite{batzner2022e3atom}. 
As with physical laws, symmetries and invariances allow machine learning models to extrapolate beyond the training data, and achieve high performance with fewer modeling parameters.

However, many problems are only weakly symmetric.
Gravity, friction, and other external forces can cause some or all of the Poincar\'{e} or Galilean symmetries to be broken.
Interactions between particles can be viewed as breaking symmetries possessed by non-interacting particles.
Written characters have translation and scaling symmetry, but not rotation (cf. $6$ and $9$, d and p, N and Z) or reflection (cf. b and d, b and p).
One of the main contributions of this work is to propose a method of enforcing a new principle of parsimony in machine learning.
This principal of parsimony by maximal symmetry states that \emph{a model should break a symmetry within a set of physically reasonable transformations 
(such as Poincar\'{e}, Galilean, Euclidean, or conformal symmetry) 
only when there is sufficient evidence in the data.}

In this paper, we provide a unifying theoretical and methodological framework for incorporating symmetry into machine learning models in the following three ways:
\vspace{-.05in}
\begin{enumerate}[label=Task \arabic*., leftmargin=1.5cm,itemsep=-1.75pt]
    \item \textbf{Enforce.} Train a model with known symmetry. 
    \item \textbf{Discover.} Identify the symmetries of a given model or data set.
    \item \textbf{Promote.} Train a model with as many symmetries as possible (from among candidates), breaking symmetries only when there is sufficient evidence in the data.
\end{enumerate}

While these tasks have been studied to varying extents separately,
we show how they can be situated within a common mathematical framework whose central object is
the Lie derivative associated with fiber-linear Lie group actions on vector bundles.
As a special case, the Lie derivative recovers the linear constraints derived by \citet{Finzi2021practical} for weights in equivariant multilayer perceptrons.
In full generality, \emph{we show that known symmetries can be enforced as linear constraints derived from Lie derivatives for a large class of problems} including learning vector and tensor fields on manifolds as well as learning equivariant integral operators acting on such fields.
For example the kernels of \emph{steerable} CNNs developed by \cite{weiler20183d, Weiler2019general} are constructed to automatically satisfy these constraints for the groups $\SO(3)$ (rotations in three dimensions) and $\SE(2)$ (rotations and translations in two dimensions).
We show how analogous steerable networks for other groups, such as subgroups of $\SE(n)$, can be constructed by numerically enforcing linear constraints derived from the Lie derivative on integral kernels defining each layer. 
Symmetries, conservation laws, and symplectic structure can also be enforced when learning dynamical systems via linear constraints on the vector field.
Again these constraints come from the Lie derivative and can be 
incorporated into
neural network architectures and basis function regression models such as Sparse Identification of Nonlinear Dynamics (SINDy) \citep{Brunton2016discovering}.

\citet{Moskalev2022liegg} identifies the connected subgroup of symmetries of a trained neural network by computing the nullspace of a linear operator.
Likewise, \citet{Kaiser2018discovering,kaiser2021data} recovers the symmetries and conservation laws of a dynamical system by computing the nullspace of a different linear operator.
\emph{We observe that these operators and others whose nullspaces encode the symmetries of more general models can be derived directly from the Lie derivative in a manner dual to the construction of operators used to enforce symmetry.}
Specifically, the nullspaces of the operators we construct reveal the largest connected subgroups of symmetries for large classes of models.
This extends work by \cite{Gruver2022Lie} using the Lie derivative to test whether a trained neural network is equivariant with respect to a given one-parameter group, e.g., rotation of images.
Generalizing the ideas in \citet{Cahill2023Lie} to nonlinear actions on manifolds, we also show that the symmetries of point clouds approximating submanifolds could potentially be recovered by computing the nullspaces of associated linear operators.
This could allow for unsupervised mining of data for hidden symmetries.

The idea of relaxed symmetry has been introduced recently by \cite{Wang2022approximately}, along with architecture-specific symmetry-promoting regularization functions involving sums or integrals over the candidate group of transformations.
The Augerino method introduced by \cite{Benton2020learning} uses regularization to promote equivariance with respect to a larger collection of candidate transformations.
Promoting physical constraints through the loss function is also a core concept of Physics-Informed Neural Networks (PINNs) introduced by \cite{Raissi2019physics}.
\emph{Our approach to the third task (promoting symmetry) is to introduce a unified and broadly applicable class of convex regularization functions based on the Lie derivative to penalize symmetry breaking during training of machine learning models.}
As we describe above, the Lie derivative yields an operator whose nullspace corresponds to the symmetries of a given model.
Hence, the lower the rank of this operator, the more symmetric the model is.
The nuclear norm has been used extensively as a convex relaxation of the rank with favorable theoretical properties for compressed sensing and low-rank matrix recovery \citep{Recht2010guaranteed, Gross2011recovering}, as well as in robust PCA \citep{Candes2011robust, Bouwmans2018applications}.
Penalizing the nuclear norm of our symmetry-encoding operator yields a convex regularization function that can be added to the loss when training machine learning models, including basis function regression and neural networks.
Likewise, we show how nuclear norm penalties can be used to promote conservation laws and Hamiltonicity with respect to candidate symplectic structures when fitting dynamical systems to data.
This approach lets us train a model and enforce data-consistent symmetries simultaneously. 

\newpage
\section{Executive summary}
\label{sec:precap}

This paper provides a linear-algebraic framework to enforce, discover, and promote symmetry of machine learning models.
To illustrate, consider a model defined by a function $F: \R^m \to \R^n$.
By a symmetry, we mean an invertible transformation $\theta$ and an invertible linear transformation $T$ so that
\begin{equation}
    \label{eqn:exec_symmetry}
    F(\theta(x)) = T F(x) \qquad \mbox{for all $x\in\R^m$}.
\end{equation}
Examples to keep in mind are rotations and translations.
If $(\theta_a, T_a)$ is a symmetry, then so is its inverse $(\theta_a^{-1}, T_a^{-1})$, and if $(\theta_b, T_b)$ is another symmetry, then so is the composition $(\theta_b \circ \theta_a, T_b T_a)$.
We work with sets of transformations $\{ (\theta_g, T_g) \}_{g\in G}$, called groups, that have an identity element and are closed under composition and inverses.
Specifically, we consider Lie group actions on manifolds.

\subsection{Enforcing symmetry}
Given a group of transformations, the symmetry condition \eqref{eqn:exec_symmetry} imposes linear constraints on $F$ that can be enforced during the fitting process.
However, there is one constraint per transformation, making direct enforcement impractical for continuous Lie groups such as rotations or translations.
For smoothly-parametrized connected groups it suffices to enforce a finite collection of linear constraints $\mcal{L}_{\xi_i} F = 0$ where $\xi_i$ are elements in the \emph{Lie algebra} and $\mcal{L}_{\xi}$ is the \emph{Lie derivative} defined by
\begin{equation}
    \label{eqn:exec_Lie_derivative}
    \mcal{L}_\xi F(x) = \left(  \frac{\partial F(x)}{\partial x} \left.\frac{\partial \theta_g(x) }{\partial g}\right\vert_{g=\Id}  - \left.\frac{\partial T_g F(x) }{\partial g}\right\vert_{g=\Id}\right) \xi.
\end{equation}
This operation is linear with respect to $\xi$ and with respect to $F$.

\subsection{Discovering symmetry}
The symmetries of a given model $F$ form a subgroup that we seek to identify within a given group of candidates.
For continuous Lie groups of transformations, the component of the subgroup containing the identity is revealed by the nullspace of the linear map
\begin{equation}
    \label{eqn:exec_symmetry_detector}
    L_F: \xi \mapsto \mcal{L}_{\xi} F.
\end{equation}
More generally, the symmetries of a smooth surface in $\R^n$ can be determined from data sampled from this surface by computing the nullspace of a positive semidefinite operator.
When the surface is the graph of the function $F$ in $\R^m \times \R^n$, this operator is $L_F^* L_F$ with $L_F^*$ being an adjoint operator.

\subsection{Promoting symmetry}
Here, we seek to learn a model $F$ that both fits data and possesses as many symmetries as possible from a given candidate group of transformations.
Since the nullspace of the operator $L_F$ defined in \eqref{eqn:exec_symmetry_detector} corresponds with the symmetries of $F$, we seek to minimize the rank of $L_F$ during training.
To do this, we regularize optimization problems for $F$ using a convex relaxation of the rank given by the nuclear norm (sum of singular values)
    \begin{equation}
        \| L_F \|_* = \sum_{i=1}^{\dim G} \sigma_i(L_F).
    \end{equation}
This is convex with respect to $F$ because $F \mapsto L_F$ is linear and the nuclear norm is convex.

\newpage

\section{Related work}

\subsection{Enforcing symmetry}
\label{subsec:related_work_enforcing_symmetry}
Data-augmentation, as reviewed by \cite{shorten2019survey, van2001art}, is one of the simplest ways to incorporate known symmetry into machine learning models.
Usually this entails training a neural network architecture on training data to which known transformations have been applied.
The theoretical foundations of these methods are explored by \cite{Chen2020GroupAug}.
Data-augmentation has also been used by \cite{Benton2020learning} to construct equivariant neural networks by averaging the network's output over transformations applied to the data. Finally,  \cite{brandstetter2022lie} applied data-augmentation strategies with known Lie-point symmetries for improving neural PDE solvers.

Symmetry can also be enforced directly on the machine learning architecture.
For example, Convolutional Neural Networks (CNNs), introduced by \cite{fukushima1980neocognitron} and popularized by \cite{lecun1989backpropagation}, achieve translational equivariance by employing convolutional filters with trainable kernels in each layer.
CNNs have been generalized to provide equivariance with respect to symmetry groups other than translation. 
Group-Equivariant CNNs (G-CNNs) \citep{Cohen2016GroupEqConv} provide equivariance with respect to arbitrary discrete groups generated by translations, reflections, and rotations. 
Rotational equivariance can be enforced on three-dimensional scalar, vector, or tensor fields using the 3D Steerable CNNs developed by \cite{weiler20183d}. 
Spherical CNNs \cite{cohen2018spherical, esteves2018learning} allow for rotation-equivariant maps to be learned for fields (such as projected images of 3D objects) on spheres. 
Essentially any group equivariant linear map (defining a layer of an equivariant neural network) acting fields can be described by group convolution \citep{Kondor2018generalization, Cohen2019general}, with the spaces of appropriate convolution kernel characterized by \cite{Cohen2019general}. 
\cite{Finzi2020generalizing} provides a practical way to construct convolutional layers that are equivariant with respect to arbitrary Lie groups and for general data types.
For dynamical systems, \cite{Marsden:MS,Rowley2003reduction, Abraham2008foundations} describe techniques for symmetry reduction of the original problem to a quotient space where the known symmetry group has been factored out.
Related approaches have been used by \cite{peitz2023partial, steyert2022uncovering} to approximate Koopman operators for symmetric dynamical systems (see \cite{Koopman1931Hamiltonian, Mezic2005spectral, Mauroy2020koopman, Otto2021koopman, Brunton2022siamreview}).

A general method for constructing equivariant neural networks is introduced by \cite{Finzi2021practical}, and relies on the observation that equivariance can be enforced through a set of linear constraints. 
For graph neural networks, \cite{maron2018invariant} characterizes the subspaces of linear layers satisfying permutation equivariance. 
Similarly, \cite{Ahmadi2020learning_short, Ahmadi2023learning_long} show that discrete symmetries and other types of side information including interpolation at finite sets of points, nonnegativity, monotonicity, invariance of sets, and gradient or Hamiltonian structure can be enforced via linear or convex constraints in learning problems for dynamical systems.
Our work builds on the results of \cite{Finzi2021practical}, \cite{weiler20183d}, \cite{Cohen2019general}, and \cite{Ahmadi2020learning_short, Ahmadi2023learning_long} by showing that equivariance can be enforced in a systematic and unified way via linear constraints for large classes of functions and neural networks.
Concurrent work by \citep{yang2024symmetry} shows how to enforce known or discovered Lie group symmetries on latent dynamics using hard linear constraints or soft penalties.

\subsection{Discovering symmetry}
Early work by \cite{Rao1999learning, Miao2007learning} used nonlinear optimization to learn infinitesimal generators describing transformations between images. 
Later, it was recognized by \cite{Cahill2023Lie} that linear algebraic methods could be used to uncover the generators of continuous linear symmetries of arbitrary point clouds in Euclidean space. 
Similarly, \cite{Kaiser2018discovering} and \cite{Moskalev2022liegg} show how conserved quantities of dynamical systems and invariances of trained neural networks can be revealed by computing the nullspaces of associated linear operators.
We connect these linear-algebraic methods to the Lie derivative, and provide generalizations to nonlinear group actions on manifolds. 
The Lie derivative has been used by \cite{Gruver2022Lie} to quantify the extent to which a trained network is equivariant with respect to a given one-parameter subgroup of transformations.
Our results show how the Lie derivative can reveal the entire connected subgroup of symmetries of a trained model via symmetric eigendecomposition.

More sophisticated nonlinear optimization techniques use Generative Adversarial Networks (GANs) to learn the transformations that leave a data distribution unchanged. 
These methods include SymmetryGAN developed by \cite{Desai2022symmetry} and LieGAN developed by \cite{Yang2023generative}. 
In contrast, our methods for detecting symmetry are entirely linear-algebraic.

While symmetries may exist in data, their representation may be difficult to describe. \cite{yang2023latent} develop Latent LieGAN (LaLieGAN) to extend LieGAN to find linear representations of symmetries in a latent space.
Recently this has been applied to dynamics discovery \citep{yang2024symmetry}. 
Likewise, \cite{Liu2022hidden} discover hidden symmetries by optimizing nonlinear transformations into spaces where candidate symmetries hold. 
Similar to our approach for promoting symmetry, they use a cost function to measure whether a given symmetry holds. 
In contrast, our regularization functions enable subgroups of candidate symmetry groups to be identified.

\subsection{Promoting symmetry}
Biasing a network towards increased symmetry can be accomplished through methods such as symmetry regularization. Analogous to the physics-informed loss developed in PINNs by \cite{Raissi2019physics} that penalize a solution for violating known dynamics, one can penalize symmetry violation for a known group; for example,  \cite{akhound2024lie} extends the PINN framework to penalize deviations of known Lie-point symmetries of a PDE. More generally, however, one can consider a \textit{candidate} group of symmetries and promote as much symmetry as possible that is \textit{consistent with the available data}. \cite{Wang2022approximately} discusses these approaches, along with architecture-specific methods, including regularization functions involving summations or integrals over the candidate group of symmetries.
While our regularization functions resemble these for discrete groups, we use a different approach for continuous Lie groups.
By leveraging the Lie algebra, our regularization functions eliminate the need to numerically integrate complicated functions over the group---a task that is already prohibitive for the $10$-dimensional non-compact group of Galilean symmetries in classical mechanics.

Automated data augmentation techniques introduced by \cite{Cubuk2019autoaugment, Hataya2020faster, Benton2020learning} are another class of methods that arguably promote symmetry.
These techniques optimize the distribution of transformations applied to augment the data during training. 
For example \emph{Augerino} is an elegant method developed by \cite{Benton2020learning} which averages an arbitrary network's output over the augmentation distribution and relies on regularization to prevent the distribution of transformations from becoming concentrated near the identity. 
In essence, the regularization biases the averaged network towards increased symmetry.

In contrast, our regularization functions promote symmetry on an architectural level for the original network. 
This eliminates the need to perform averaging, which grows more costly for larger collections of symmetries. 
While a distribution over symmetries can be useful for learning interesting partial symmetries (e.g. $6$ stays $6$ for small rotations, before turning into $9$), as is done by \cite{Benton2020learning} and \cite{romero2022learning}, it is not clear how to use sampling from a continuous distribution of transformations to identify lower-dimensional subgroups, which have measure zero. 
On the other hand, a linear-algebraic approach provides a natural way to identify and promote symmetries in lower-dimensional connected subgroups.

\subsection{Additional approaches and applications}
There are several other approaches that incorporate various aspects of enforcing, discovering, and promoting symmetries. For example, \cite{baddoo2023physics} developed algorithms to enforce and promote known symmetries in dynamic mode decomposition, through manifold constrained learning and regularization, respectively.  \cite{baddoo2023physics} also showed that discovering unknown symmetries is a dual problem to enforcing symmetry.   
Exploiting symmetry has also been a central theme in the reduced-order modeling of fluids for decades~\citep{HLBR_turb}. As machine learning methods are becoming widely used to develop these models~\citep{Brunton2020arfm}, the themes of enforcing and discovering symmetries in machine models are increasingly relevant.  Known fluid symmetries have been enforced in SINDy for fluid systems~\citep{Loiseau2017jfm} through linear equality constraints; this approach was generalized to enforce more complex constraints~\citep{champion2020unified}. Unknown symmetries were similarly uncovered for electroconvective flows~\citep{guan2020sparse}. Symmetry breaking is also important in many turbulent flows~\citep{Callaham2022scienceadvances}. 


\section{Elementary theory of Lie group actions}
\label{sec:background_on_matrix_Lie_groups}
This section provides background and notation required to understand the main results of this paper in the less abstract, but still highly useful setting of Lie groups acting on vector spaces.
In Section~\ref{sec:fundamental_operators} we use this theory to study the symmetries of continuously differentiable functions between vector spaces.
Such functions form the basic building blocks of many machine learning models such as basis functions regression models, the layers of multilayer perceptrons, and the kernels of integral operators acting on spatial fields such as images.
We emphasize that this is not the most general setting for our results, but we provide this section and simpler versions of our main Theorems in Section~\ref{sec:fundamental_operators} in order to make the presentation more accessible.
We develop our main results in the more general setting of fiber-linear Lie group actions on sections of vector bundles in Section~\ref{sec:sections_of_vector_bundles}.

\subsection{Lie groups and subgroups}

Lie groups are ubiquitous in science and engineering.
Some familiar examples include the general linear group $\GL(n)$ consisting of all real, invertible, $n\times n$ matrices; the orthogonal group
\begin{equation}
    \label{eqn:orthogonal_group}
    \Ogrp(n) = \left\{ Q \in \R^{n\times n} \ : \ Q^T Q = I \right\};
\end{equation}
and the special Euclidean group
\begin{equation}\label{eqn:SEn}
    \SE(n) = \left\{ \begin{bmatrix} Q & b \\ 0 & 1\end{bmatrix} \ : \ Q\in\R^{n\times n}, \ b \in \R^n, \ Q^T Q = I, \ \det(Q) = 1 \right\},
\end{equation}
representing rotations and translations in real $n$-dimensional space, $\R^n$, embedded in $\R^{n+1}$ via $x \mapsto (x, 1)$. 
Observe that the sets $\GL(n)$, $\Ogrp(n)$, and $\SE(n)$ contain the identity matrix and are closed under matrix multiplication and inversion, making them into (non-commutative) groups.
They are also smooth manifolds, which makes them \emph{Lie groups}.
In general, a Lie group is a smooth manifold that is simultaneously an algebraic group whose composition and inversion operations are smooth maps.
The identity element is usually denoted $e$ for ``\emph{einselement}'', which for a matrix Lie group is the identity matrix $e = I$.
This section summarizes some basic results that can be found in references such as \citep{MarsdenMTAA, Lee2013introduction, Varadarajan1984Lie, Hall2015Lie}.


A key property of a Lie group is that it is almost entirely characterized by an associated vector space called its \emph{Lie algebra}.
This allows global nonlinear questions about the group --- such as which elements leave a function unchanged --- to be answered using linear algebra.
If $G$ is a Lie group, its Lie algebra, commonly denoted $\Lie(G)$ or $\mfrak{g}$, is the vector space consisting of all smooth vector fields on $G$ that remain invariant when pushed forward by left translation $L_g : h \mapsto g \cdot h$.
Translating back and forth from the identity element, the Lie algebra can be identified with the tangent space $\Lie(G) \cong T_e G$.
For example, the Lie algebra of the orthogonal group $\Ogrp(n)$ consists of all skew-symmetric matrices, and is denoted 
\begin{equation}
    \oalg(n)
    = \left\{ S \in \R^{n\times n} \ : \ S + S^T = 0 \right\}.
\end{equation}
A key fact is that the Lie algebra of $G$ is closed under the \emph{Lie bracket} of vector fields\footnote{In $\R^n$ a vector field $V = (V^1, \ldots, V^n)$ is equivalent to a directional derivative operator $V^1 \frac{\partial}{\partial x^1} + \cdots + V^n \frac{\partial}{\partial x^n}$. A vector field on a smooth manifold is defined as an analogous linear operator acting on the space of smooth functions. The Lie bracket is defined as the commutator of these operators.}
\begin{equation}
    [\xi, \eta] = \xi \eta - \eta \xi \in \Lie(G).
\end{equation}
For matrix Lie groups, this corresponds to the same commutator of matrices $\xi, \eta \in T_I G$. 

The key tool relating global properties of a Lie group back to its Lie algebra is the exponential map $\exp: \Lie(G) \to G$.
A vector field $\xi \in \Lie(G)$ has a unique integral curve $\gamma : (-\infty, \infty) \to G$ passing through the identity $\gamma(0) = e$ and satisfying $\gamma'(t) = \xi\vert_{\gamma(t)}$.
The exponential map defined by
\begin{equation}
    \exp(\xi) := \gamma(1)
\end{equation}
reproduces the entire integral curve via $\exp(t\xi) = \gamma(t)$. 
An exponential curve is illustrated in Figure~\ref{fig:Lie_group_action}.
For a matrix Lie group and $\xi \in T_I G$, the exponential map is the matrix exponential
\begin{equation}
    \exp(\xi) = e^{\xi} = \sum_{k=0}^{\infty} \frac{1}{k!} \xi^k.
\end{equation}
 Many of the basic properties of the exponential map, such as $\exp((s+t)\xi) = \exp(s\xi)\cdot \exp(t\xi)$, $\exp(\xi)^{-1} = \exp(-\xi)$, and $\D\exp(0) = \Id_{T_e G}$ are summarized by \cite{Lee2013introduction} in Proposition~20.8. 
We highlight that $\exp$ provides a diffeomorphism of an open neighborhood of the origin $0$ in $\Lie(G)$ and an open neighborhood of the identity element $e$ in $G$.

The connected component of $G$ containing the identity element is called the \emph{identity component} of the Lie group and is denoted $G_0$.
Any element in this component can then be expressed as a finite product of exponentials, that is
\begin{equation}
    G_0 = \big\{ \exp{(\xi_1)} \cdots \exp{(\xi_N)} \ : \ \xi_1, \ldots, \xi_N \in \Lie(G), \ N = 1, 2, 3, \ldots \big\}.
\end{equation}
Moreover, the identity component is a normal subgroup of $G$ and all of the other connected components of $G$ are diffeomorphic cosets of $G_0$, as we illustrate in Figure~\ref{fig:Lie_group_action}.
For example, the special Euclidean group $\SE(n)$ is connected, and thus equal to its identity component.
On the other hand, the orthogonal group $\Ogrp(n)$ is compact and has two components consisting of orthogonal matrices $Q$ whose determinants are $1$ and $-1$.
The identity component of the orthogonal group is called the special orthogonal group and is denoted $\SO(n)$.
It is a general fact that when a Lie group such as $\SO(n)$ is connected and compact, it is equal to the image of the exponential map without the need to consider products of exponentials (see \cite{Tao2011expsurjective} and Appendix~C.1 of \cite{Lezcano2019cheap}).

\begin{figure}[t]
    \centering
    \begin{tikzonimage}[trim=20 150 0 150, clip=true, width=0.9\textwidth]{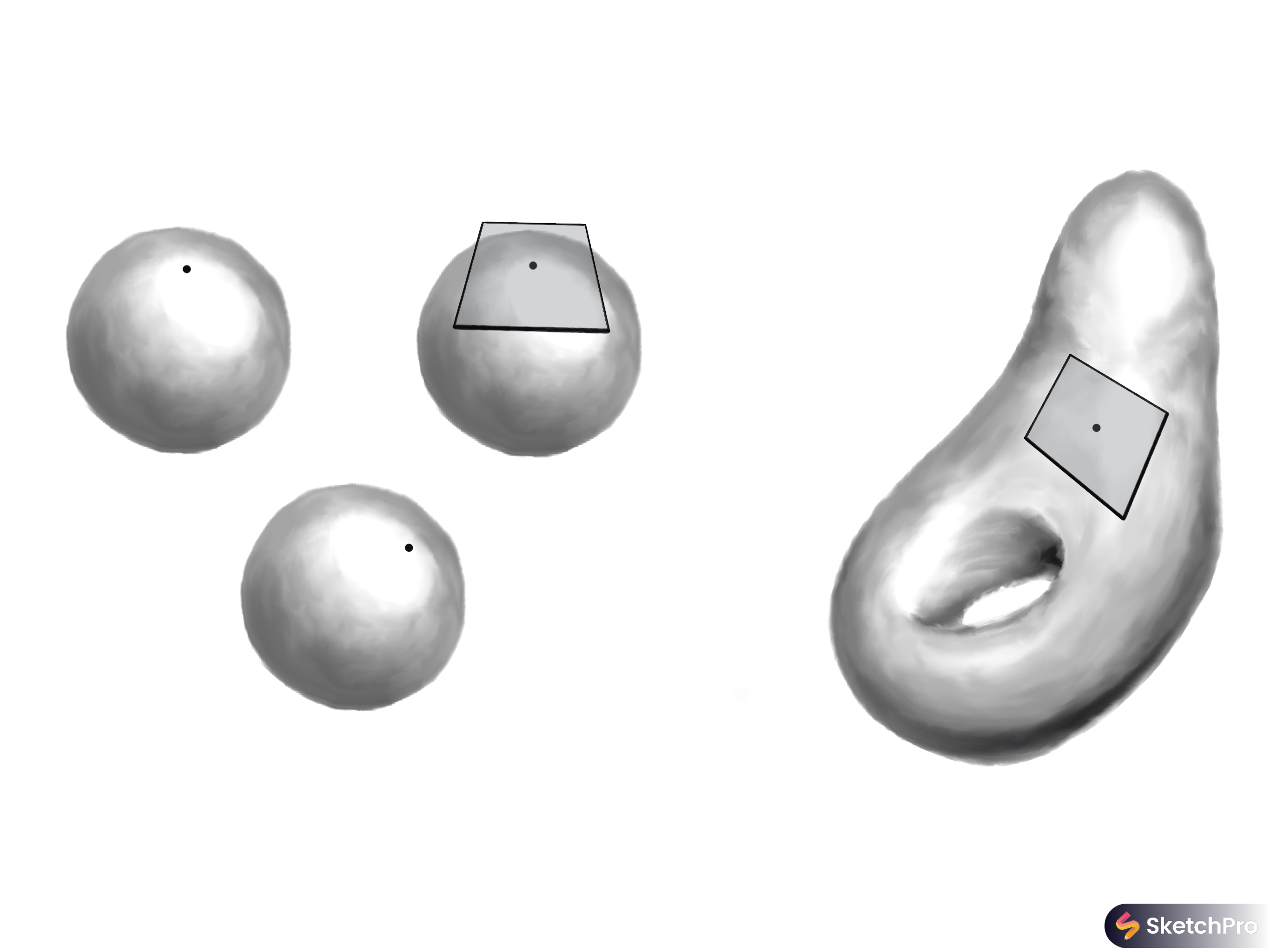}
        \node[rotate=0] at (0.27, 0.92) {\footnotesize Lie group, $G$};
        \node[rotate=0] at (0.475, 0.53) {\footnotesize $G_0$};
        \node[rotate=0, anchor=north] at (0.411, 0.71) {\footnotesize $T_e G \cong \Lie(G)$};
        \node[rotate=0, anchor=north] at (0.411, 0.81) {\footnotesize $e$};
        \draw[->] (0.411, 0.81) arc[radius=0.083, start angle=90, end angle=10];
        \node[rotate=0, anchor=west] at (0.50, 0.74) {\footnotesize $\exp(t\xi)$};
        \draw[->] (0.411, 0.81) -- (.50, 0.81);
        \node[rotate=0, anchor=west] at (0.50, 0.81) {\footnotesize $\xi$};
        \node[rotate=0, anchor=north] at (0.312, 0.393) {\footnotesize $g_1$};
        \node[rotate=0] at (0.345, 0.155) {\footnotesize $g_1 G_0$};
        \node[rotate=0, anchor=north] at (0.135, 0.80) {\footnotesize $g_2$};
        \node[rotate=0] at (0.20, 0.535) {\footnotesize $g_2 G_0$};
        \node[rotate=0, anchor=west] at (0.88, 0.14) {\footnotesize Manifold, $\mcal{M}$};
        \node[rotate=0] at (0.9, 0.665) {\footnotesize $T_x \mcal{M}$};
        \node[rotate=0, anchor=north] at (0.861, 0.570) {\footnotesize $x$};
        \draw[->] (0.861, 0.570) -- (0.861-0.075, 0.570+0.075);
        \node[rotate=0, anchor=east] at (0.788, 0.665) {\footnotesize $\hat{\theta}(\xi)_{x}$};
        \draw[->] (0.861, 0.570) arc[radius=0.076, start angle=45, end angle=150];
        \node[rotate=0, anchor=east] at (0.742, 0.554) {\footnotesize $\theta_{\exp(t\xi)}(x)$};
        \draw[->] (0.51, 0.90) arc[radius=0.2, start angle=135, end angle=45];
        \node[rotate=0, anchor=south] at (0.66, 0.96) {\footnotesize Action, $\theta$};
    \end{tikzonimage}
    \vspace{-.15in}
    \caption{A Lie group $G$ and its action $\theta$ on a manifold $\mcal{M}$.
    The Lie group $G$ consists of three connected components with $G_0$ being the one that contains the identity element $e$.
    Each non-identity component of $G$ is a coset $g_i G_0$ formed by translating the identity component by an arbitrary element $g_i$ in the component.
    The Lie algebra $\Lie(G)$ is identified with the tangent space $T_e G$ and an exponential curve $\exp(t\xi)$ generated by an element $\xi \in \Lie(G)$ is shown.
    The infinitesimal generator $\hat{\theta}(\xi)$ 
    is the vector field on $\mcal{M}$ whose flow corresponds with the action $\theta_{\exp(t\xi)}$ of group elements along $\exp(t\xi)$.}
    \label{fig:Lie_group_action}
\end{figure}

A subgroup $H$ of a Lie group $G$ is called a \emph{Lie subgroup} when $H$ is an immersed submanifold of $G$ and the group operations are smooth when restricted to $H$.
An immersed submanifold does not necessarily inherit its topology as a subset of $G$, but rather $H$ has a topology and smooth structure that makes the inclusion $\imath_H : H \hookrightarrow G$ smooth and its derivative injective.
The tangent space to a Lie subgroup $H \subset G$ at the identity, defined as $T_e H = \Range(\D \imath_H(I)) \subset \Lie(G)$, is closed under the Lie bracket and thus forms a \emph{Lie subalgebra} of $\Lie(G)$, denoted $\Lie(H)$.
Conversely, any subalgebra $\mfrak{h} \subset \Lie(G)$, that is, any subspace closed under the Lie bracket corresponds to a unique connected Lie subgroup $H \subset G_0$ satisfying $\Lie(H) = \mfrak{h}$ (see Theorem~19.26 in \cite{Lee2013introduction}).
Later on, we will use this fact to identify the connected subgroups of symmetries of machine learning models based on infinitesimal criteria. 
Another useful result is the \emph{closed subgroup theorem} (see Theorem~20.12 in \cite{Lee2013introduction}).
It says that if $H \subset G$ is a closed subset and is closed under the group operations of $G$, then $H$ is automatically an embedded Lie subgroup of $G$.
Interestingly, while a Lie subgroup $H \subset GL(n)$ need not be a closed subset, it turns out that $H$ can always be embedded as a closed subgroup in a larger $\GL(n')$, $n' \geq n$ thanks to Theorem~9 in \cite{Goto1950faithful}.

\subsection{Group representations, actions, and infinitesimal generators}
\label{subsec:representations_actions_generators}
A Lie group homomorphism is a smooth map $\Phi:G_1 \to G_2$ between Lie groups that respects the group product, that is,
\begin{equation}
    \Phi(g_1 g_2) = \Phi(g_1) \cdot \Phi(g_2).
\end{equation}
The tangent map $\phi := \D \Phi(e): T_e G_1 \cong \mfrak{g}_1 \to T_e G_2 \cong \mfrak{g}_2$ is a Lie algebra homomorphism, meaning that it is a linear map respecting the Lie bracket:
\begin{equation}
    \phi\big( [\xi_1, \xi_2] \big) = \big[\phi(\xi_1), \phi(\xi_2)\big].
\end{equation}
Moreover, the Lie group homomorphism and its induced Lie algebra homomorphism are related by 
the exponential maps
on $G_1$ and $G_2$ via the identity
\begin{equation}
    \Phi\big( \exp(\xi) \big) = \exp\big( \phi(\xi) \big).
    \label{eqn:intertwining_exp_and_rep}
\end{equation}
Another fundamental result  
is that any Lie algebra homomorphism $\Lie(G_1) \to \Lie(G_2)$ corresponds to a unique Lie group homomorphism $G_1 \to G_2$ when $G_1$ is simply connected.
When $G_2$ is the general linear group on a vector space, then the Lie group and Lie algebra homomorphisms are called Lie group and Lie algebra \emph{representations}.

A Lie group $G$ can act on a vector space $\mcal{V}$ via a representation $\Phi: G \to \GL(\mcal{V})$ according to
\begin{equation}
    \theta: (x, g) \mapsto \Phi(g^{-1}) x,
    \label{eqn:right_action_by_representation}
\end{equation}
with $x\in\mcal{V}$ and $g \in G$.
More generally, a nonlinear right action of a Lie group $G$ on a manifold $\mcal{M}$ is any smooth map $\theta: \mcal{M} \times G \to \mcal{M}$ satisfying 
\begin{equation}
    \theta(\theta(x, g_1), g_2) = \theta(x, g_1 g_2) \qquad \mbox{and} \qquad \theta(x, e) = x
\end{equation}
for every $x\in\mcal{M}$ and $g_1, g_2\in G$.
Figure~\ref{fig:Lie_group_action} depicts the action of a Lie group on a manifold.
We make frequent use of the maps $\theta_g = \theta (\cdot, g)$, which have smooth inverses $\theta_{g^{-1}}$, and the \emph{orbit maps} $\theta^{(x)} = \theta(x, \cdot)$.
For example, using a representation $\Phi : \SE(3) \to \GL(\R^7)$, the position $q$ and velocity $v$ of a particle in $\R^3$ 
can be rotated and translated via the action
\begin{equation}\label{eqn:SE_action}
    \theta\left( 
    \begin{bmatrix} 
    Q & b \\ 
    0 & 1\end{bmatrix}, \ \begin{bmatrix}
        q \\
        v \\
        1
    \end{bmatrix} \right)
    =
    \Phi\left( \begin{bmatrix} 
        Q^T & -Q^T b \\ 
        0 & 1 \end{bmatrix} \right) 
    \begin{bmatrix}
        q \\
        v \\
        1
    \end{bmatrix}
    = 
    \begin{bmatrix} 
    Q^T & 0 & -Q^T b \\ 
    0 & Q^T & 0 \\ 
    0 & 0 & 1\end{bmatrix}
    \begin{bmatrix}
        q \\
        v \\
        1
    \end{bmatrix} = 
    \begin{bmatrix}
        Q^T (q - b) \\
        Q^T v \\
        1
    \end{bmatrix}.
\end{equation}
The positions and velocities of $n$ particles arranged as a vector $(q_1, \ldots, q_n, v_1, \ldots, v_n, 1)$ can be simultaneously rotated and translated via an analogous representation $\Phi: \SE(3) \to \GL(\R^{6n + 1})$.

We will make use of the fact that a Lie group action is almost completely characterized by a linear map called its \emph{infinitesimal generator}.
The generator $\hat{\theta}$ assigns to each element $\xi\in\Lie(G)$ in the Lie algebra, a smooth vector field $\hat{\theta}(\xi)$ defined at $x\in\mcal{M}$ by
\begin{equation}
    \hat{\theta}(\xi)_{x} 
    = \left.\ddt \theta_{\exp(t\xi)}(x) \right\vert_{t=0}
    = \D \theta^{(x)}(e) \xi.
\end{equation}
The infinitesimal generator and its relation to the group action are illustrated in Figure~\ref{fig:Lie_group_action}.
For the linear action in \eqref{eqn:right_action_by_representation}, the infinitesimal generator is the linear vector field given by the matrix-vector product $\hat{\theta}(\xi)_{x} = -\phi(\xi) x$.
Crucially, the generator's flow is identical to the group action along the exponential curve $\exp(t\xi)$, i.e.,
\begin{equation}
    \Flow_{\hat{\theta}(\xi)}^t(x) 
    = \theta_{\exp(t\xi)}(x).
\end{equation}
For the linear right action in \eqref{eqn:right_action_by_representation}, this is easily verified by differentiation, applying \eqref{eqn:intertwining_exp_and_rep}, and the fact that solutions of smooth ordinary differential equations are unique.
For a nonlinear right action this follows from Lemma~20.14 and Proposition~9.13 in \cite{Lee2013introduction}.

\begin{remark}
    In contrast to a \emph{right action} $\theta: \mcal{M} \times G \to \mcal{M}$, a \emph{left action} $\theta^L: G \times \mcal{M} \to \mcal{M}$ satisfies $\theta^L(g_2, \theta^L(g_1,x)) = \theta^L(g_2 g_1, x)$.
    While our main results work for left actions too, e.g. $\theta^L(g,x) = \Phi(g) x$, right actions are slightly more natural because the infintesimal generator is a Lie alegbra homomorphism, i.e.,
    \begin{equation}
        \hat{\theta}([\xi, \eta]) = [\hat{\theta}(\xi), \hat{\theta}(\eta)],
    \end{equation}
    whereas this holds with a sign change for left actions.
    Every left action can be converted into an equivalent right action defined by $\theta^R(x,g) = \theta^L(g^{-1},x)$, and vice versa.
\end{remark}

\section{Fundamental operators for studying symmetry}
\label{sec:fundamental_operators}

Here we introduce our main theoretical results for studying symmetries of machine learning models by focusing on a concrete and useful special case.
The basic building blocks of the machine learning models we consider here are continuously differentiable functions $F: \mcal{V} \to \mcal{W}$ between finite-dimensional vector spaces.
The spaces of functions $\mcal{V} \to \mcal{W}$ with continuous derivatives up to order $k\in\mathbb{N}\cup\{\infty\}$ is denoted $C^k(\mcal{V};\mcal{W})$, with addition and scalar multiplication defined point-wise.
These functions could be layers of a multilayer neural network, integral kernels applied to spatio-temporal fields, or simply linear combinations of user-specified basis functions in a regression task as in~\cite{Brunton2016discovering}.
General versions of our results for sections of vector bundles are developed later in Section~\ref{sec:sections_of_vector_bundles}.
Our main results show that two families of fundamental linear operators encode the symmetries of these functions.
The fundamental operators allow us to enforce, discover, and promote symmetry in machine learning models as we describe in Sections~\ref{sec:enforcing_symmetry},~\ref{sec:discovering_symmetry},~and~\ref{sec:promoting_symmetry}.

Letting $G$ be a Lie group, we consider a right action $\Theta: \mcal{V} \times \mcal{W} \times G \to \mcal{V} \times \mcal{W}$ of the form
\begin{equation}\label{eqn:Theta_action_on_VW}
    \Theta(x,y,g) = \big(\theta(x, g),\ T(x,g) y\big),
\end{equation}
meaning $\theta$ is a general (perhaps nonlinear) right G-action on $\mcal{V}$ and $T: \mcal{V} \times G \to \GL(\mcal{W})$ is a smooth function satisfying
\begin{equation}
    T(x, g_1 g_2) = T(\theta(x, g_1), g_2) T(x, g_1) \qquad \mbox{and} \qquad 
    T(x, e) = I
\end{equation}
for all $x \in \mcal{V}$ and every $g_1, g_2 \in G$.
The infinitesimal generator of $\Theta$ at an element $\xi \in \Lie(G)$ is
\begin{equation}
    \hat{\Theta}(\xi)_{(x,y)} = \big( \hat{\theta}(\xi)_x, \ \hat{T}(\xi)_x y \big), 
    \quad \mbox{where} \quad
    \hat{T}(\xi)_x = \left.\ddt T(x, \exp(t\xi)) \right\vert_{t=0} = \D T(x,e) (0,\xi).
\end{equation}
For example, using representations $\Phi : G \to \GL(\mcal{V})$, $\Psi : G \to \GL(\mcal{W})$, the action can be defined by
\begin{equation}\label{eqn:VW_rep_actions}
    \theta(x, g) = \Phi(g^{-1}) x
    \qquad \mbox{and} \qquad
    T(x, g) y = \Psi(g^{-1}) y.
\end{equation}
With $\phi$ and $\psi$ being the corresponding Lie algebra representations, the generator has components
\begin{equation}\label{eqn:VW_rep_actions_generators}
    \hat{\theta}(\xi)_x = - \phi(\xi) x
    \qquad \mbox{and} \qquad
    \hat{T}(\xi)_x y = - \psi(\xi) y.
\end{equation}

The definition of equivariance, the symmetry group of a function, and the first family of fundamental operators are introduced by the following:
\begin{definition}
    \label{def:equivariance_real_map_version}
    We say that $F$ is \textbf{equivariant} with respect to a group element $g\in G$ if
    \begin{empheq}[box=\widefbox]{equation}
        (\mcal{K}_g F)(x) 
        := T(x, g)^{-1} F(\theta_g (x)) 
        = F(x)
        \label{eqn:transformation_operators_real_map}
    \end{empheq}
    for every $x \in \mcal{V}$.
    These elements form a subgroup of $G$ denoted $\Sym_G(F)$.
\end{definition}
Note that when the action is defined by representations \eqref{eqn:VW_rep_actions}, then \eqref{eqn:transformation_operators_real_map} becomes
\begin{equation}
    (\mcal{K}_g F)(x) 
        := \Psi(g) F(\Phi(g)^{-1} x ) 
        = F(x).
\end{equation}
The transformation operators $\mcal{K}_g$ are linear maps sending functions in $C^k(\mcal{V};\mcal{W})$ to functions in $C^k(\mcal{V};\mcal{W})$.
These fundamental operators form a group with composition $\mcal{K}_{g} \mcal{K}_{h} = \mcal{K}_{g h}$ and inversion $\mcal{K}_g^{-1} = \mcal{K}_{g^{-1}}$.
Thus, $g \mapsto \mcal{K}_g$ is an infinite-dimensional representation of $G$ in $C^k(\mcal{V};\mcal{W})$ for any $k$.
These operators are useful for studying discrete symmetries of functions.
However, for a continuous group $G$ it is impractical to work directly with the uncountable family $\{ \mcal{K}_g \}_{g\in G}$.

\begin{figure}
    \centering
    \vspace{-.3in}
    \begin{tikzonimage}[trim=20 150 100 150, clip=true, width=0.75\textwidth]{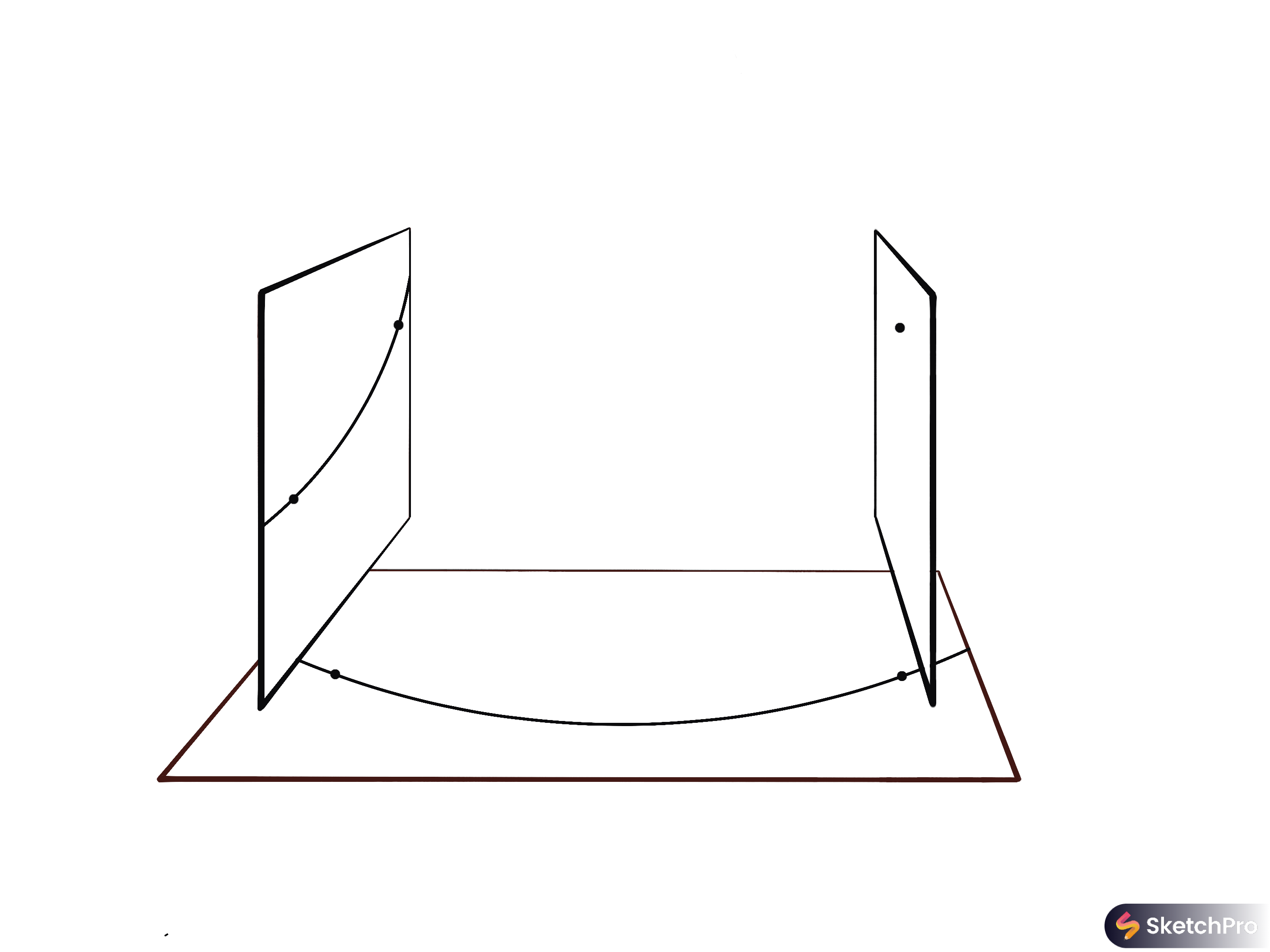}
        \node[rotate=0] at (0.5, 0.3) {\footnotesize $\mcal{V}$};
        \node[rotate=0, anchor=north] at (0.275, 0.2) {\footnotesize $x$};
        \node[rotate=0, anchor=north] at (0.749, 0.165) {\footnotesize $ \theta_{\exp(t\xi)}(x)$};
        \node[rotate=0, anchor=south] at (0.274, 0.83) {\footnotesize $\mcal{W}$};
        \node[rotate=0] at (0.260, 0.42) {\footnotesize $F(x)$};
        \draw[->] (0.240, 0.468) -- (0.330, 0.615);
        \node[rotate=0, anchor=west] at (0.335, 0.620) {\footnotesize $\mcal{L}_{\xi}F(x)$};
        \node[rotate=0, anchor=west] at (0.335, 0.790) {\footnotesize $\mcal{K}_{\exp(t\xi)}F(x)$};
        \node[rotate=0, anchor=south] at (0.770, 0.83) {\footnotesize $\mcal{W}$};
        \node[rotate=0, anchor=west] at (0.790, 0.719) {\footnotesize $F\big( \theta_{\exp(t\xi)}(x) \big)$};
        \draw[->] (0.750, 0.719) -- (0.345, 0.722);
        \node[rotate=0, anchor=north] at (0.600, 0.719) {\footnotesize $T(x, \exp(t\xi))^{-1}$};
    \end{tikzonimage}
    \vspace{-.2in}
    \caption{The fundamental operators for functions between vector spaces and linear Lie group actions defined by representations. 
    The finite transformation operators $\mcal{K}_{g}$ act on the function $F: \mcal{V} \to \mcal{W}$ by composing it with the transformation $\theta_g$ and then applying $T(x, g)^{-1}$ to the values in $\mcal{W}$. The function is $g$-equivariant when this process does not alter the function. 
    The Lie derivative $\mcal{L}_{\xi}$ is formed by differentiating $t \mapsto \mcal{K}_{\exp(t\xi)}$ at $t=0$. 
    Geometrically, $\mcal{L}_{\xi} F(x)$ is the vector in $\mcal{W}$ tangent to the curve $t \mapsto \mcal{K}_{\exp(t\xi)} F (x)$ in $\mcal{W}$ passing through $F(x)$ at $t=0$.}
    \label{fig:baby_Lie_derivative}
\end{figure}

The second family of fundamental operators are the key objects we use to study continuous symmetries of functions.
These are the Lie derivatives $\mcal{L}_{\xi} : C^{1}(\mcal{V};\mcal{W}) \to C^{0}(\mcal{V};\mcal{W})$ defined along each $\xi \in \Lie(G)$ by
\begin{empheq}[box=\widefbox]{equation}
    (\mcal{L}_{\xi} F)(x) 
    = \left.\ddt\right\vert_{t=0} \big( \mcal{K}_{\exp(t \xi)} F \big)(x) 
    = \frac{\partial F(x)}{\partial x} \hat{\theta}(\xi)_{x} - \hat{T}(\xi)_x F(x).
    \label{eqn:Lie_derivative_of_real_map}
\end{empheq}
Note that when the action is defined using representations \eqref{eqn:VW_rep_actions}, then \eqref{eqn:Lie_derivative_of_real_map} becomes
\begin{equation}\label{eqn:Lie_derivative_with_reps}
    (\mcal{L}_{\xi} F)(x)
    = \psi(\xi) F(x) - \frac{\partial F(x)}{\partial x} \phi(\xi) x.
\end{equation}
Evident from \eqref{eqn:Lie_derivative_of_real_map} is the fact that the Lie derivative is linear with respect to both $\xi$ and $F$, and sends functions in $C^{k+1}$ to functions in $C^k$ for every $k \geq 0$.
The geometric construction of the fundamental operators $\mcal{K}_g$ and $\mcal{L}_{\xi}$ are depicted in Figure~\ref{fig:baby_Lie_derivative}.
It turns out (see Proposition~\ref{prop:properties_of_Lie_derivative}) that $\xi \mapsto \mcal{L}_{\xi}$ is the Lie algebra representation corresponding to $g \mapsto \mcal{K}_g$ on $C^{\infty}(\mcal{V};\mcal{W})$, meaning that on this space we have the handy relations
\begin{equation}
    \ddt \mcal{K}_{\exp(t\xi)} 
    = \mcal{L}_{\xi} \mcal{K}_{\exp(t\xi)} 
    = \mcal{K}_{\exp(t\xi)} \mcal{L}_{\xi} 
    \qquad \mbox{and} \qquad
    \mcal{L}_{[\xi,\eta]} = \mcal{L}_{\xi}\mcal{L}_{\eta} - \mcal{L}_{\eta} \mcal{L}_{\xi}.
\end{equation}
The results stated below are special cases of more general results developed later in Section~\ref{sec:sections_of_vector_bundles}.

Our first main result provides necessary and sufficient conditions for a continuously differentiable function $F:\mcal{V} \to \mcal{W}$ to be equivariant with respect to the Lie group actions on $\mcal{V}$ and $\mcal{W}$.
This generalizes the constraints derived by \cite{Finzi2021practical} for the linear layers of equivariant multilayer perceptrons by allowing for nonlinear group actions and nonlinear functions.
\begin{theorem}
    \label{thm:invariance_conditions_for_real_map}
    Let $\{ \xi_i \}_{i=1}^{q}$ generate (via linear combinations and Lie brackets) the Lie algebra $\Lie(G)$ and let $\{ g_j \}_{j=1}^{n_G-1}$ contain one element from each non-identity component of $G$.
    Then $F \in C^1( \mcal{V}; \mcal{W})$ is $G$-equivariant if and only if
    \begin{equation}
        \mcal{L}_{\xi_i} F = 0 \qquad \mbox{and} \qquad
        \mcal{K}_{g_j} F - F = 0
        \label{eqn:equivariance_everywhere_for_real_map}
    \end{equation}
    for every $i=1,\ldots,q$ and every $j=1,\ldots, n_G - 1$.
    This is a special case of Theorem~\ref{thm:equivariance_conditions_for_vb_section}.
\end{theorem}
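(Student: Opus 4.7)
The plan is to prove both directions, with the reverse direction being the main content. The forward direction is immediate: if $F$ is $G$-equivariant, then $\mcal{K}_g F = F$ for all $g\in G$, which directly yields $\mcal{K}_{g_j} F - F = 0$; furthermore differentiating the identity $\mcal{K}_{\exp(t\xi_i)} F = F$ at $t=0$ using the definition in \eqref{eqn:Lie_derivative_of_real_map} gives $\mcal{L}_{\xi_i} F = 0$.

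For the reverse direction, the central idea is to introduce the symmetry subgroup $H := \Sym_G(F) \subset G$ and show $H = G$ under the stated hypotheses. First I would observe that the composition rule $\mcal{K}_g \mcal{K}_h = \mcal{K}_{gh}$ makes $H$ a subgroup, and that continuity of $g \mapsto \mcal{K}_g F(x)$ (for fixed $x$) coming from smoothness of $\theta$ and $T$ implies $H$ is closed in $G$. By the closed subgroup theorem quoted in Section~\ref{sec:background_on_matrix_Lie_groups}, $H$ is an embedded Lie subgroup of $G$ with Lie algebra $\Lie(H) = \{\xi \in \Lie(G) : \exp(t\xi) \in H \text{ for all } t\in\R\}$.

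The key lemma to establish is the characterization $\Lie(H) = \{\xi \in \Lie(G) : \mcal{L}_{\xi} F = 0\}$. The inclusion ``$\subseteq$'' is by differentiating $\mcal{K}_{\exp(t\xi)} F = F$ at $t = 0$. The reverse inclusion uses that $\mcal{K}_{\exp((t+s)\xi)} F = \mcal{K}_{\exp(s\xi)} \mcal{K}_{\exp(t\xi)} F$, so differentiating at $s=0$ gives
\begin{equation}
    \ddt \mcal{K}_{\exp(t\xi)} F
    = \mcal{L}_{\xi} \big( \mcal{K}_{\exp(t\xi)} F \big).
\end{equation}
Since $\mcal{K}_{\exp(t\xi)} F(x) = T(x,\exp(t\xi))^{-1} F(\theta_{\exp(t\xi)}(x))$ is the composition and rescaling of $F$ by the flow of the generator $\hat\theta(\xi)$, a chain-rule/change-of-variables computation yields $\mcal{L}_\xi(\mcal{K}_{\exp(t\xi)} F) = \mcal{K}_{\exp(t\xi)} \mcal{L}_\xi F$, which vanishes by hypothesis. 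Thus $t \mapsto \mcal{K}_{\exp(t\xi)} F$ is constant and equal to $F$, so $\exp(t\xi) \in H$ for all $t$.

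With this identification in hand, $\Lie(H)$ is automatically a Lie subalgebra of $\Lie(G)$ (since it is the Lie algebra of a Lie subgroup), hence closed under linear combinations and Lie brackets. The hypothesis that $\{\xi_i\}$ generates $\Lie(G)$ under these operations therefore forces $\Lie(H) = \Lie(G)$. Since $H$ is an embedded Lie subgroup of full dimension, its identity component $H_0$ is open in $G_0$; being also closed and contained in the connected set $G_0$, it equals $G_0$. Finally, the discrete hypotheses $\mcal{K}_{g_j} F = F$ place a representative of each non-identity component of $G$ into $H$, so $H = \bigcup_j g_j G_0 \cup G_0 = G$, proving $F$ is $G$-equivariant.

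The main obstacle I expect is the chain-rule identity $\mcal{L}_\xi(\mcal{K}_{\exp(t\xi)} F) = \mcal{K}_{\exp(t\xi)} \mcal{L}_\xi F$ given only the $C^1$ regularity of $F$ — one must handle that $\mcal{L}_\xi F$ is only continuous, so this is verified by direct computation using the flow identity $\Flow^t_{\hat\theta(\xi)}(x) = \theta_{\exp(t\xi)}(x)$ from Section~\ref{subsec:representations_actions_generators} rather than by iterating differential operators. The rest is essentially bookkeeping using the closed subgroup theorem and the connected-component structure recalled earlier.
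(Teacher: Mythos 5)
Your proof is correct and essentially mirrors the paper's: both establish that $\Sym_G(F)$ is a closed, hence embedded, Lie subgroup via the closed-subgroup theorem, identify its Lie algebra as $\{\xi : \mcal{L}_\xi F = 0\}$ by showing $t\mapsto\mcal{K}_{\exp(t\xi)}F$ is constant, and observe that this nullspace, being the Lie algebra of a subgroup, is automatically closed under brackets --- so the hypothesis that $\{\xi_i\}$ generates $\Lie(G)$ forces it to be all of $\Lie(G)$; the coset decomposition $G_j = G_0\cdot g_j$ then handles the remaining components. The one organizational difference is in the ODE step: you first derive $\ddt\mcal{K}_{\exp(t\xi)}F = \mcal{L}_\xi\big(\mcal{K}_{\exp(t\xi)}F\big)$ and then argue the commutation $\mcal{L}_\xi\mcal{K}_{\exp(t\xi)}F = \mcal{K}_{\exp(t\xi)}\mcal{L}_\xi F$, which you rightly flag as the delicate step at $C^1$ regularity; the paper's Proposition~\ref{prop:properties_of_Lie_derivative} sidesteps this by instead differentiating $\mcal{K}_{\exp((t_0+t)\xi)}F = \mcal{K}_{\exp(t_0\xi)}\mcal{K}_{\exp(t\xi)}F$ at $t=0$ and using fiber-linearity of $T(\cdot,\exp(t_0\xi))^{-1}$ to pull $\mcal{K}_{\exp(t_0\xi)}$ outside the difference quotient, obtaining $\left.\ddt\right\vert_{t=t_0}\mcal{K}_{\exp(t\xi)}F = \mcal{K}_{\exp(t_0\xi)}\mcal{L}_\xi F$ directly. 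Both routes reach the same conclusion; the latter avoids any separate chain-rule lemma and is marginally cleaner when $F$ is only $C^1$.
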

Since the fundamental operators $\mcal{L}_{\xi}$ and $\mcal{K}_g$ are linear, Theorem~\ref{thm:invariance_conditions_for_real_map} provides linear constraints for a continuously differentiable function $F$ to be $G$-equivariant.

Our second main result shows that the continuous symmetries of a given continuously differentiable function $F:\mcal{V} \to \mcal{W}$ are encoded by its Lie derivatives.
\begin{theorem}
    \label{thm:symmetries_of_a_map}
    Given $F \in C^1( \mcal{V}; \mcal{W})$,
    the symmetry group $\Sym_G(F)$ is a closed, embedded Lie subgroup of $G$ with Lie subalgebra
    \begin{equation}\label{eqn:Sym_G_F}
        \sym_G(F) = \left\{ \xi \in \Lie(G) \ : \ \mcal{L}_{\xi} F = 0 \right\}.
    \end{equation}
    This is a special case of Theorem~\ref{thm:symmetries_of_sections}.
\end{theorem}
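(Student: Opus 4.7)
The plan is to break the theorem into three parts: (i) $\Sym_G(F)$ is closed in $G$, (ii) apply the closed subgroup theorem cited in the paper to upgrade closedness to an embedded Lie subgroup structure, and (iii) identify the Lie algebra via the usual characterization $\Lie(H) = \{\xi \in \Lie(G) : \exp(t\xi) \in H \text{ for all } t \in \R\}$ for closed subgroups $H \subset G$, reducing the problem to showing $\mcal{L}_\xi F = 0 \iff \exp(t\xi) \in \Sym_G(F) \text{ for all } t$.

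For (i), I would write $\Sym_G(F) = \bigcap_{x \in \mcal{V}} \{ g \in G : T(x,g)^{-1} F(\theta(x,g)) - F(x) = 0 \}$. Because $\theta$ and $T$ are smooth (in particular continuous) and $F$ is continuous, each set in the intersection is the preimage of $\{0\}$ under a continuous map $G \to \mcal{W}$, hence closed. An intersection of closed sets is closed, so $\Sym_G(F)$ is closed. Together with the fact that $\Sym_G(F)$ is a subgroup (which follows from $\mcal{K}_{gh} = \mcal{K}_g \mcal{K}_h$ and $\mcal{K}_g^{-1} = \mcal{K}_{g^{-1}}$), the closed subgroup theorem (Theorem~20.12 of Lee, cited in the text) yields the embedded Lie subgroup structure.

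For the Lie algebra identification, the inclusion $\sym_G(F) \supseteq \{\xi : \mcal{L}_\xi F = 0\}$ is easy: any one-parameter subgroup $t \mapsto \exp(t\xi)$ lies in $\Sym_G(F)$ iff $\mcal{K}_{\exp(t\xi)} F = F$ for all $t$, and the forward implication of this equivalence is obtained by differentiating at $t=0$, which yields $\mcal{L}_\xi F = 0$ by the definition \eqref{eqn:Lie_derivative_of_real_map}. The reverse implication is the main obstacle and will use the intertwining identity $\frac{d}{dt} \mcal{K}_{\exp(t\xi)} F = \mcal{L}_\xi \mcal{K}_{\exp(t\xi)} F = \mcal{K}_{\exp(t\xi)} \mcal{L}_\xi F$, which must be verified in the $C^1$ setting (the text stated it for $C^\infty$). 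The second equality follows by direct computation using the one-parameter group law $\mcal{K}_{\exp((s+t)\xi)} = \mcal{K}_{\exp(t\xi)} \mcal{K}_{\exp(s\xi)} = \mcal{K}_{\exp(s\xi)} \mcal{K}_{\exp(t\xi)}$ and commuting the $s$-differentiation past the smooth point evaluation $H \mapsto T(x,\exp(t\xi))^{-1} H(\theta_{\exp(t\xi)}(x))$, which is valid because $s \mapsto \mcal{K}_{\exp(s\xi)} F(y)$ is differentiable and the multiplier and pullback depend only on $t,x$.

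With the intertwining relation in hand, assume $\mcal{L}_\xi F = 0$. Then for every fixed $x \in \mcal{V}$, the function $\varphi_x(t) := \mcal{K}_{\exp(t\xi)} F(x)$ is $C^1$ in $t$ with $\varphi_x'(t) = (\mcal{K}_{\exp(t\xi)} \mcal{L}_\xi F)(x) = 0$ and $\varphi_x(0) = F(x)$, so $\varphi_x(t) = F(x)$ for all $t \in \R$. Thus $\exp(t\xi) \in \Sym_G(F)$ for all $t$, completing the characterization \eqref{eqn:Sym_G_F}. The one genuine technical point to be careful about is justifying the $C^1$ dependence of $\varphi_x$ on $t$ when $F$ is merely $C^1$; this requires noting that $\mcal{L}_\xi \mcal{K}_{\exp(t\xi)} F$ is jointly continuous in $(t,x)$ because $\theta$ and $T$ are smooth and $F \in C^1$, after which the fundamental theorem of calculus gives the conclusion.
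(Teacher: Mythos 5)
Your proposal is correct and follows essentially the same route as the paper's proof of the general Theorem~\ref{thm:symmetries_of_sections}, of which this is a special case: closedness via an intersection of preimages of closed sets under the continuous orbit-evaluation maps, the closed subgroup theorem, the easy inclusion by differentiation, and the reverse inclusion via the intertwining identity plus an ODE argument. Your worry about needing the intertwining relation in the $C^1$ (rather than $C^\infty$) setting is well placed but already resolved by the paper: Proposition~\ref{prop:properties_of_Lie_derivative} states the relation $\ddt\mcal{K}_{\exp(t\xi)}F = \mcal{K}_{\exp(t\xi)}\mcal{L}_\xi F$ for all $F \in D(\mcal{L}_\xi)$, which contains $C^1$, and its proof uses only fiber-linearity of the action and the group law, exactly as you re-derive it.
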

This result completely characterizes the identity component of the symmetry group $\Sym_G(F)$ because the connected Lie subgroups of $G$ are in one-to-one correspondence with Lie subalgebras of $\Lie(G)$ (see Theorem~19.26 in \cite{Lee2013introduction}).
Moreover, the Lie subalgebra of symmetries of a $C^1$ function $F$ can be identified via linear algebra because $\sym_g(F)$ is the nullspace of the linear operator $L_F : \Lie(G) \to C^0(\mcal{V};\mcal{W})$ defined by
\begin{equation} \label{eqn:L_F_op_for_vector_space_case}
    L_F : \xi \mapsto \mcal{L}_{\xi} F.
\end{equation}
Discretization methods suitable for linear-algebraic computations with the fundamental operators will be discussed in Section~\ref{sec:discretization}.
The key point is that when the functions $F$ lie in a finite-dimensional subspace $\mcal{F} \subset C^{1}(\mcal{V};\mcal{W})$, the ranges of the restricted Lie derivatives $\{ \left.\mcal{L}_{\xi}\right\vert_{\mcal{F}} \}_{\xi \in \Lie(G)}$, hence, also the ranges of $\{ L_F \}_{F\in\mcal{F}}$, are contained in a corresponding finite-dimensional subspace $\mcal{F}' \subset C^{0}(\mcal{V};\mcal{W})$ on which inner products can be defined using sampling or quadrature.

The preceding two theorems already show the duality between enforcing and discovering continuous symmetries with respect to the Lie derivative, viewed as a bilinear form $(\xi, F) \mapsto \mcal{L}_{\xi} F$.
To discover symmetries, we seek generators $\xi \in \Lie(G)$ satisfying $\mcal{L}_{\xi} F = 0$ for a known function $F$.
On the other hand, to enforce a connected group of symmetries, we seek functions $F$ satisfying $\mcal{L}_{\xi_i} F = 0$ with known generators $\xi_1, \ldots, \xi_{q}$ of $\Lie(G)$. 

\section{Enforcing symmetry with linear constraints}
\label{sec:enforcing_symmetry}
Methods to enforce symmetry in neural networks and other machine learning models have been studied extensively, as we reviewed briefly in Section~\ref{subsec:related_work_enforcing_symmetry}.
A unifying theme in these techniques has been the use of linear constraints to enforce symmetry~\citep{Finzi2021practical,Loiseau2017jfm,weiler20183d,Cohen2019general,Ahmadi2020learning_short}.
The purpose of this section is to show how several of these methods can be understood in terms of the fundamental operators and linear constraints provided by Theorem~\ref{thm:invariance_conditions_for_real_map}.

\subsection{Multilayer perceptrons}
Enforcing symmetry in multilayer percetrons was studied by \cite{Finzi2021practical}.
They provide a practical method based on enforcing linear constraints on the weights defining each layer of a neural network.
The network uses specialized nonlinearities that are automatically equivariant, meaning that the constraints need only be enforced on the linear component of each layer.
We show that the constraints derived by \cite{Finzi2021practical} are the same as those given by Theorem~\ref{thm:invariance_conditions_for_real_map}.

Specifically, each linear layer $F^{(l)}: \mcal{V}_{l-1} \to \mcal{V}_l$, for $l=1,\ldots,L$, is defined by
\begin{equation}
    F^{(l)}(x) = W^{(l)} x + b^{(l)},
\end{equation}
where $W^{(l)}$ are weight matrices and $b^{(l)}$ are bias vectors.
Defining group representations $\Phi_l: G \to \GL(\mcal{V}_l)$ for each layer, yields fundamental operators given by
\begin{align}
    \mcal{K}_{g} F^{(l)}(x) 
    &= \Phi_{l}(g) W^{(l)} \Phi_{l-1}(g)^{-1} x + \Phi_{l}(g) b^{(l)} \\
    \mcal{L}_{\xi} F^{(l)}(x) 
    &= \big(\phi_{l}(\xi) W^{(l)} - W^{(l)} \phi_{l-1}(\xi)\big) x +  \phi_{l}(\xi) b^{(l)}.
\end{align}
Let $\{ \xi_i \}_{i=1}^{q}$ generate $\Lie(G)$ and let $\{g_j\}_{j=1}^{n_G - 1}$ consist of an element from each non-identity component of $G$.
Using the fundamental operators and Theorem~\ref{thm:invariance_conditions_for_real_map}, it follows that the layer $F^{(l)}$ is $G$-equivariant if and only if the weights and biases satisfy 
\begin{equation}
    \phi_{l}(\xi_i) W^{(l)} = W^{(l)} \phi_{l-1}(\xi_i), 
    \quad \mbox{and} \quad
    \Phi_{l}(g_j) W^{(l)} = W^{(l)} \Phi_{l-1}(g_j),
\end{equation}
\begin{equation}
    \phi_{l}(\xi_i) b^{(l)} = 0,
    \quad \mbox{and} \quad
    \Phi_{l}(g_j) b^{(l)} = b^{(l)}
\end{equation}
for every $i = 1, \ldots,q$ and $j = 1, \ldots, n_g-1$.
These are the same as the linear constraints one derives using the method by \cite{Finzi2021practical}.
The equivariant linear layers are then combined with specialized equivariant nonlinearities $\sigma^{(l)}: \mcal{V}_l \to \mcal{V}_l$
to produce an equivariant network
\begin{equation}
    F = \sigma^{(L)} \circ F^{(L)} 
    \circ \cdots \circ 
    \sigma^{(1)} \circ F^{(1)}: \mcal{V}_0 \to \mcal{V}_L.
\end{equation}
The composition of equivariant functions is equivariant, as one can easily check using Definition~\ref{def:equivariance_real_map_version}.

\subsection{Neural operators acting on fields}
\label{subsec:NNs_acting_on_fields}
Enforcing symmetry in neural networks acting on spatial fields has been studied extensively by \cite{weiler20183d, cohen2018spherical, esteves2018learning, Kondor2018generalization, Cohen2019general} among others.
Many of these techniques use integral operators to define equivariant linear layers, which are coupled with equivariant nonlinearities, such as the gated nonlinearities proposed by \cite{weiler20183d}.
Networks built by composing integral operators with nonlinearities constitute a large class of \emph{neural operators} described by \cite{Kovachki2023neural, Goswami2023physics, Boulle2023mathematical}.
The key task is to identify appropriate bases for equivariant kernels.
For certain groups, such as the Special Euclidean group $G = \SE(3)$, bases can be constructed explicitly using spherical harmonics, as in \cite{weiler20183d}. 
We show that equivariance with respect to arbitrary group actions can be enforced via linear constraints on the integral kernels derived using the fundamental operators introduced in Section~\ref{sec:fundamental_operators}.
Appropriate bases of kernel functions can then be constructed numerically by computing an appropriate nullspace, as is done by \cite{Finzi2021practical} for multilayer perceptrons.

For the sake of simplicity we consider integral operators acting on vector-valued functions $F:\R^m\to \mcal{V}$, where $\mcal{V}$ is a finite-dimensional vector space.
Later on in Section~\ref{subsec:equivariant_integral_operators} we study higher-order integral operators acting on sections of vector bundles.
If $\mcal{W}$ is another finite-dimensional vector space, an integral operator acting on $F$ to produce a new function $\R^n\to \mcal{W}$ is defined by
\begin{equation}
    \mcal{T}_K F(x) = \int_{\R^m} K(x,y) F(y) \td y,
    \label{eqn:intergral_operator_on_Rn}
\end{equation}
where the \emph{kernel} function $K$ provides a linear map $K(x,y): \mcal{V} \to \mcal{W}$ at each $(x,y) \in \R^{n}\times\R^m$.
In other words, the kernel is a function on $\R^{n}\times\R^m$ taking values in the tensor product space $\mcal{W} \otimes \mcal{V}^*$, where $\mcal{V}^*$ denotes the algebraic dual of $\mcal{V}$.
Many of the neural operator architectures described by \cite{Kovachki2023neural, Goswami2023physics, Boulle2023mathematical} are constructed by composing layers defined by integral operators \eqref{eqn:intergral_operator_on_Rn} with nonlinear activation functions, usually acting pointwise.
The kernel functions $K$ are optimized during training of the neural operator.

With group actions defined by representations on $\R^m, \R^n, \mcal{V}, \mcal{W}$, functions $F:\R^m \to \mcal{V}$ transform according to 
\begin{equation}
    \mcal{K}_g^{(\R^m,\mcal{V})} F(x) = \Phi_{\mcal{V}}(g) F(\Phi_{\R^m}(g)^{-1} x)
\end{equation}
for $g \in G$.
Likewise, 
functions $\R^n \to \mcal{W}$ transform via an analogous operator $\mcal{K}_g^{(\R^n,\mcal{W})}$.
\begin{definition}
    \label{def:equivariance_of_integral_operator_linear_case}
    The integral operator $\mcal{T}_K$ in \eqref{eqn:intergral_operator_on_Rn} is \textbf{equivariant} with respect to $g \in G$ when
    \begin{equation}
         \mcal{K}_{g}^{(\R^n,\mcal{W})} \circ \mcal{T}_K \circ \mcal{K}_{g^{-1}}^{(\R^m,\mcal{V})} = \mcal{T}_K.
    \end{equation}
    The elements $g$ satisfying this equation form a subgroup of $G$ denoted $\Sym_G(\mcal{T}_K)$.
\end{definition}
By changing variables in the integral, the operator on the left is given by
\begin{equation}
    \mcal{K}_{g}^{(\R^n,\mcal{W})} \circ \mcal{T}_K \circ \mcal{K}_{g^{-1}}^{(\R^m,\mcal{V})} F(x)
    = \int_{\R^m} \mcal{K}_g K(x,y) F(y) \td y,
\end{equation}
where
\begin{empheq}[box=\widefbox]{equation}
    \mcal{K}_g K (x,y) 
    = \Phi_{\mcal{W}}(g) K\big(\Phi_{\R^n}(g)^{-1} x, \Phi_{\R^m}(g)^{-1} y\big) \Phi_{\mcal{V}}(g)^{-1}  \det\big[ \Phi_{\R^m}(g)^{-1} \big].
    \label{eqn:transformation_for_linear_integral_kernels_on_Rn}
\end{empheq}
The following result provides equivariance conditions in terms of the kernel, generalizing Lemma~1 in \cite{weiler20183d} beyond a specific action of $\SE(3)$.
\begin{proposition}
    \label{prop:symmetries_of_linear_integral_operator}
    Let $K$ be continuous and suppose that $\mcal{T}_K$ acts on a function space containing all smooth, compactly supported fields. Then 
    \begin{equation}
        \Sym_G(\mcal{T}_K) = \left\{ g \in G \ : \ \mcal{K}_g K = K \right\}.
    \end{equation}
    We give a proof in Appendix~\ref{app:proofs_of_minor_results}
\end{proposition}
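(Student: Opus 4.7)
The plan is to prove the two set inclusions separately. The inclusion ($\supseteq$) is immediate from the change-of-variables computation displayed just before the statement: if $\mcal{K}_g K = K$ pointwise, then substituting this identity into $\int_{\R^m} \mcal{K}_g K(x,y) F(y)\,\td y$ shows that $\mcal{K}_g^{(\R^n,\mcal{W})} \circ \mcal{T}_K \circ \mcal{K}_{g^{-1}}^{(\R^m,\mcal{V})}$ acts as $\mcal{T}_K$ on every admissible input $F$, so $g \in \Sym_G(\mcal{T}_K)$.

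For the reverse inclusion ($\subseteq$), suppose $g \in \Sym_G(\mcal{T}_K)$ and set $D(x,y) := \mcal{K}_g K(x,y) - K(x,y) \in \mcal{W} \otimes \mcal{V}^*$. Combining the equivariance hypothesis with the same change of variables yields $\int_{\R^m} D(x,y) F(y)\,\td y = 0$ for every $x \in \R^n$ and every $F$ in the function space on which $\mcal{T}_K$ acts; by the standing assumption this includes all $F \in C_c^\infty(\R^m; \mcal{V})$. The key step is then to upgrade this distributional vanishing to pointwise vanishing of $D$ via the fundamental lemma of the calculus of variations: fix bases $\{e_i\}$ of $\mcal{V}$ and $\{f_j\}$ of $\mcal{W}$, write $D(x,y) = \sum_{i,j} D_{ji}(x,y)\, f_j \otimes e^i$, and test with $F(y) = \varphi(y)\, e_i$ for arbitrary $\varphi \in C_c^\infty(\R^m)$ to obtain $\int_{\R^m} D_{ji}(x,y)\, \varphi(y)\,\td y = 0$ for every $i,j$ and every such $\varphi$. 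Continuity of the scalar coefficients $D_{ji}$ in $y$ then forces $D_{ji}(x,y) = 0$ for every $y$, and since $x$ was arbitrary this gives $\mcal{K}_g K = K$ pointwise, completing the argument.

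The main obstacle, and the place where the continuity hypothesis on $K$ is essential, is the final pointwise conclusion. This relies on $\mcal{K}_g K$ being continuous jointly in $(x,y)$, which follows from the explicit formula \eqref{eqn:transformation_for_linear_integral_kernels_on_Rn}: each factor $\Phi_{\mcal{W}}(g)$, $\Phi_{\R^n}(g)^{-1}$, $\Phi_{\R^m}(g)^{-1}$, $\Phi_{\mcal{V}}(g)^{-1}$, and the Jacobian determinant $\det[\Phi_{\R^m}(g)^{-1}]$ is smooth in $g$, so $\mcal{K}_g K$ inherits joint continuity in $(x,y)$ from $K$. Without continuity, one could only conclude $D = 0$ almost everywhere, which would still suffice for operator equality but not for the strict pointwise statement asserted in the proposition.
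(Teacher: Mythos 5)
Your proof is correct and takes essentially the same route as the paper. The paper argues the contrapositive with an explicit bump function: if $\mcal{K}_g K(x_0,y_0)\neq K(x_0,y_0)$, pick $v\in\mcal{V}$, $w\in\mcal{W}^*$ with $\langle w, (\mcal{K}_g K - K)(x_0,y_0)v\rangle>0$, use continuity to keep this positive on a neighborhood of $y_0$, and test with $F=v\varphi$ for a nonnegative bump $\varphi$ supported there to make the integral strictly positive. You instead phrase the forward implication, test with $F = \varphi\, e_i$ over arbitrary $\varphi\in C_c^\infty(\R^m)$, and invoke the fundamental lemma of the calculus of variations — which is exactly the inline bump-function argument the paper writes out. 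The only slight misstatement is in your last paragraph: what you need is that $\mcal{K}_g K$ is jointly continuous in $(x,y)$ for \emph{fixed} $g$, which follows because for fixed $g$ the matrices $\Phi_{\mcal{W}}(g)$, $\Phi_{\mcal{V}}(g)^{-1}$, etc.\ are constants and $(x,y)\mapsto(\Phi_{\R^n}(g)^{-1}x,\Phi_{\R^m}(g)^{-1}y)$ is linear, so $\mcal{K}_g K$ is a composition of continuous maps; smoothness in $g$ is not the relevant point. That said, the conclusion is right and the fix is cosmetic.
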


The Lie derivative of a continuously differentiable kernel function is given by
\begin{empheq}[box=\widefbox]{multline}
    \mcal{L}_{\xi} K(x,y) 
    = \phi_{\mcal{W}}(\xi)K(x,y) 
    - K(x,y) \phi_{\mcal{V}}(\xi)
    - K(x,y) \Tr[\phi_{\R^m}(\xi)] \\
    - \frac{\partial K(x,y)}{\partial x} \phi_{\R^n}(\xi)x 
    - \frac{\partial K(x,y)}{\partial y} \phi_{\R^m}(\xi)y
    \label{eqn:Lie_derivative_for_linear_integral_kernels_on_Rn}
\end{empheq}
The operators $\mcal{K}_g$ and $\mcal{L}_{\xi}$ are the fundamental operators from Section~\ref{sec:fundamental_operators} because 
the transformation law for the kernel can be written as
\begin{equation}
    \mcal{K}_g K 
    = \Phi_{\mcal{W} \otimes \mcal{V}^*}(g) K \circ \Phi_{\R^m\times \R^m}(g)^{-1},
\end{equation}
where 
\begin{equation}
\begin{aligned}
    \Phi_{\R^n\times \R^m}(g) &: (x,y) \mapsto \left(\Phi_{\R^n}(g) x, \Phi_{\R^m}(g) y\right), \\
    \Phi_{\mcal{W} \otimes \mcal{V}^*}(g) &: A \mapsto \Phi_{\mcal{W}}(g) A \Phi_{\mcal{V}}(g)^{-1} \det\big[ \Phi_{\R^m}(g)^{-1} \big]
\end{aligned}
\end{equation}
are representations of $G$ in $\R^m\times \R^m$ and $\mcal{W} \otimes \mcal{V}^*$.

As an immediate consequence of Theorem~\ref{thm:invariance_conditions_for_real_map}, we have the following corollary establishing linear constraints for the kernel to produce an equivariant integral operator.
\begin{corollary}
    Let $\{ \xi_i \}_{i=1}^{q}$ generate the Lie algebra $\Lie(G)$ and let $\{ g_j \}_{j=1}^{n_G-1}$ contain one element from each non-identity component of $G$.
    Under the same hypotheses as Proposition~\ref{prop:symmetries_of_linear_integral_operator} and assuming $K$ is continuously differentiable, the integral operator $\mcal{T}_K$ in \eqref{eqn:intergral_operator_on_Rn} is $G$-equivariant in the sense of Definition~\ref{def:equivariance_of_integral_operator_linear_case} if and only if
    \begin{equation}
        \mcal{L}_{\xi_i} K = 0 \qquad \mbox{and} \qquad
        \mcal{K}_{g_j} K - K = 0
    \end{equation}
    for every $i=1,\ldots,q$ and every $j=1,\ldots, n_G - 1$.
\end{corollary}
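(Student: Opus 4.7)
The plan is to reduce this corollary to a direct application of Theorem~\ref{thm:invariance_conditions_for_real_map} after re-interpreting the kernel $K$ as a vector-valued $C^1$ function on the product space $\R^n \times \R^m$ with values in $\mcal{W} \otimes \mcal{V}^*$. The excerpt has already done most of the conceptual work by observing that the kernel transformation law factors as $\mcal{K}_g K = \Phi_{\mcal{W}\otimes\mcal{V}^*}(g) \, K \circ \Phi_{\R^n\times\R^m}(g)^{-1}$, which matches exactly the form of the action in \eqref{eqn:VW_rep_actions} with $\mcal{V} \rightsquigarrow \R^n\times\R^m$ and $\mcal{W} \rightsquigarrow \mcal{W}\otimes\mcal{V}^*$. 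So the fundamental operators $\mcal{K}_g$ and $\mcal{L}_\xi$ acting on kernels are literally instances of the fundamental operators of Section~\ref{sec:fundamental_operators}.

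First, I would invoke Proposition~\ref{prop:symmetries_of_linear_integral_operator} to rewrite $G$-equivariance of $\mcal{T}_K$ (in the sense of Definition~\ref{def:equivariance_of_integral_operator_linear_case}) as the condition that $\mcal{K}_g K = K$ for every $g \in G$, i.e., that $K$ itself is $G$-equivariant as a function in $C^1(\R^n\times\R^m;\,\mcal{W}\otimes\mcal{V}^*)$ under the two representations $\Phi_{\R^n\times\R^m}$ and $\Phi_{\mcal{W}\otimes\mcal{V}^*}$ displayed in the excerpt. This is the key reinterpretation, and it is justified precisely because $\mcal{T}_K$ acts on a large enough space of test fields for the integral-operator-equivariance to pin down the kernel pointwise.

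Next, I would apply Theorem~\ref{thm:invariance_conditions_for_real_map} directly to $K$ with these representations: $K$ is $G$-equivariant if and only if $\mcal{L}_{\xi_i} K = 0$ for a generating set $\{\xi_i\}_{i=1}^{q}$ of $\Lie(G)$ and $\mcal{K}_{g_j} K - K = 0$ for representatives $\{g_j\}_{j=1}^{n_G-1}$ of the non-identity components. The formula \eqref{eqn:Lie_derivative_for_linear_integral_kernels_on_Rn} for $\mcal{L}_\xi K$ is then simply the specialization of \eqref{eqn:Lie_derivative_with_reps} to the present representations, where the infinitesimal generator of $\Phi_{\mcal{W}\otimes\mcal{V}^*}$ produces the three zeroth-order terms $\phi_{\mcal{W}}(\xi) K - K \phi_{\mcal{V}}(\xi) - K \Tr[\phi_{\R^m}(\xi)]$ via the product rule on the tensor-representation and the Jacobi formula applied to $g \mapsto \det[\Phi_{\R^m}(g)^{-1}]$, and the generator of $\Phi_{\R^n\times\R^m}$ supplies the two first-order transport terms.

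The only nontrivial step, and what I expect to be the main (though still mild) obstacle, is the bookkeeping that turns the integral-operator condition into a pointwise condition on the kernel, i.e., the justification of Proposition~\ref{prop:symmetries_of_linear_integral_operator}. That is handled in Appendix~\ref{app:proofs_of_minor_results} and relies on the density of smooth compactly supported test fields together with continuity of $K$; once that pointwise reformulation is in hand, the rest is a mechanical translation of the general theorem to this particular pair of representations, so no further work is needed beyond citing Theorem~\ref{thm:invariance_conditions_for_real_map} and identifying the generators.
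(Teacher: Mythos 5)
Your proposal is correct and takes essentially the same route the paper does: the text immediately preceding the corollary already records the identification $\mcal{K}_g K = \Phi_{\mcal{W}\otimes\mcal{V}^*}(g)\,K\circ\Phi_{\R^n\times\R^m}(g)^{-1}$, so the corollary is obtained by chaining Proposition~\ref{prop:symmetries_of_linear_integral_operator} (equivariance of $\mcal{T}_K$ is equivalent to $\mcal{K}_g K = K$ for all $g$) with Theorem~\ref{thm:invariance_conditions_for_real_map} applied to $K\in C^1(\R^n\times\R^m;\,\mcal{W}\otimes\mcal{V}^*)$ under those two representations. Your two-step reduction and the remark on where the only real work lies (the pointwise reformulation via Proposition~\ref{prop:symmetries_of_linear_integral_operator}) match the paper's reasoning exactly.
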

These linear constraint equations must be satisfied to enforced equivariance with respect to a known symmetry $G$ in the machine learning process.
By discretizing the operators $\mcal{K}_g$ and $\mcal{L}_{\xi}$, as discussed later in Section~\ref{sec:discretization}, one can solve these constraints numerically to construct a basis of kernel functions for equivariant integral operators.

As an immediate consequence of Theorem~\ref{thm:symmetries_of_a_map}, the following result shows that the Lie derivative of the kernel encodes the continuous symmetries of a given integral operator.
\begin{corollary}
    Under the same hypotheses as Proposition~\ref{prop:symmetries_of_linear_integral_operator} and assuming $K$ is continuously differentiable, it follows that
    $\Sym_G(\mcal{T}_K)$ is a closed, embedded Lie subgroup of $G$ with Lie subalgebra
    \begin{equation}
        \sym_G(\mcal{T}_K) = \left\{ \xi\in\Lie(G) \ : \ \mcal{L}_{\xi} K = 0 \right\}.
    \end{equation}
\end{corollary}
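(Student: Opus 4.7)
The plan is to deduce this corollary as a direct consequence of Theorem~\ref{thm:symmetries_of_a_map}, by viewing the kernel $K$ itself as a continuously differentiable function between finite-dimensional vector spaces and recognizing that the operators $\mcal{K}_g$ and $\mcal{L}_\xi$ on kernels are instances of the fundamental operators from Section~\ref{sec:fundamental_operators}. The text preceding the corollary has already done most of the conceptual setup: it exhibits representations $\Phi_{\R^n\times\R^m}$ on the domain and $\Phi_{\mcal{W}\otimes\mcal{V}^*}$ on the codomain such that $\mcal{K}_g K = \Phi_{\mcal{W}\otimes\mcal{V}^*}(g)\,K\circ\Phi_{\R^n\times\R^m}(g)^{-1}$, which is exactly the shape of the transformation operator in \eqref{eqn:transformation_operators_real_map} specialized to linear actions defined via \eqref{eqn:VW_rep_actions}.

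First, I would apply Proposition~\ref{prop:symmetries_of_linear_integral_operator} to replace the abstract definition of $\Sym_G(\mcal{T}_K)$ with the kernel-level condition $\mcal{K}_g K = K$, so that $\Sym_G(\mcal{T}_K)$ is literally the symmetry group of $K$ as a $C^1$ function $\R^n\times\R^m\to\mcal{W}\otimes\mcal{V}^*$ in the sense of Definition~\ref{def:equivariance_real_map_version}, relative to the pair of representations above. Then Theorem~\ref{thm:symmetries_of_a_map} applied to $F=K$ immediately gives that this set is a closed, embedded Lie subgroup of $G$ whose Lie algebra equals the nullspace of the Lie derivative acting on $K$.

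The only computational step is to check that the Lie derivative formula produced by Theorem~\ref{thm:symmetries_of_a_map} via \eqref{eqn:Lie_derivative_with_reps}, applied to the pair $(\Phi_{\R^n\times\R^m},\Phi_{\mcal{W}\otimes\mcal{V}^*})$, matches the explicit expression \eqref{eqn:Lie_derivative_for_linear_integral_kernels_on_Rn}. The domain part is immediate: the infinitesimal generator of $\Phi_{\R^n\times\R^m}$ at $(x,y)$ is $-(\phi_{\R^n}(\xi)x,\phi_{\R^m}(\xi)y)$, which accounts for the two partial-derivative terms. For the codomain part, I would differentiate $g\mapsto \Phi_{\mcal{W}}(g)\,A\,\Phi_{\mcal{V}}(g)^{-1}\det[\Phi_{\R^m}(g)^{-1}]$ at $g=e$ in the direction $\xi$ using the product rule and Jacobi's formula $\left.\tfrac{d}{dt}\det\Phi_{\R^m}(\exp(t\xi))\right|_{t=0}=\Tr[\phi_{\R^m}(\xi)]$, obtaining the Lie algebra representation $A\mapsto \phi_{\mcal{W}}(\xi)A - A\phi_{\mcal{V}}(\xi) - A\Tr[\phi_{\R^m}(\xi)]$. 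Combining the two pieces reproduces \eqref{eqn:Lie_derivative_for_linear_integral_kernels_on_Rn} exactly.

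The main (minor) obstacle is this bookkeeping identification of the tensor-product Lie algebra representation and the Jacobian-determinant term; once that is in hand, the conclusion $\sym_G(\mcal{T}_K)=\{\xi\in\Lie(G):\mcal{L}_\xi K=0\}$ is just a restatement of Theorem~\ref{thm:symmetries_of_a_map}, and the closed, embedded Lie subgroup property is inherited from that theorem. No additional analytic input about the integral operator is needed beyond what Proposition~\ref{prop:symmetries_of_linear_integral_operator} has already extracted.
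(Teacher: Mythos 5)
Your proposal is correct and takes essentially the same route the paper intends: the text immediately preceding the corollary already identifies $\mcal{K}_g$ and $\mcal{L}_\xi$ on kernels as the fundamental operators for the representations $\Phi_{\R^n\times\R^m}$ and $\Phi_{\mcal{W}\otimes\mcal{V}^*}$, and the paper states the corollary as an immediate consequence of Theorem~\ref{thm:symmetries_of_a_map} via Proposition~\ref{prop:symmetries_of_linear_integral_operator}, which is exactly your chain of reasoning. The only addition you make --- explicitly verifying that \eqref{eqn:Lie_derivative_with_reps} specialized to $(\Phi_{\R^n\times\R^m},\Phi_{\mcal{W}\otimes\mcal{V}^*})$ reproduces \eqref{eqn:Lie_derivative_for_linear_integral_kernels_on_Rn} via the product rule and Jacobi's formula --- is a useful sanity check but not logically required, since the paper already asserts the identification.
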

This result will be useful for methods that promote symmetry of the integral operator, as we describe later in Section~\ref{sec:promoting_symmetry}.

\section{Discovering symmetry by computing nullspaces}
\label{sec:discovering_symmetry}
In this section we show that in a wide range of settings, the continuous symmetries of a manifold, point cloud, or map can be recovered by computing the nullspace of a linear operator.
For functions, this is already covered by Theorem~\ref{thm:symmetries_of_a_map}, which allows us to compute the connected subgroup of symmetries by identifying its Lie subalgebra
\begin{equation}
    \sym_G(F) = \Null(L_F)
\end{equation}
where $L_F: \xi \mapsto \mcal{L}_{\xi}$ is the linear operator defined by \eqref{eqn:L_F_op_for_vector_space_case}.
Hence, if a machine learning model $F$ has a symmetry group $\Sym_G(F)$, then its Lie algebra is equal to the nullspace of $L_F$.

This section explains how this is actually a special case of a more general result allowing us to reveal the symmetries of submanifolds via the nullspace of a closely related operator.
We begin with the more general case where we study the symmetries of a submanifold of Euclidean space, and we explain how to recover symmetries from point clouds approximating submanifolds.
The Lie derivative described in Section~\ref{sec:fundamental_operators} is then recovered when the submanifold is the graph of a function.
We also briefly describe how the fundamental operators from Section~\ref{sec:fundamental_operators} can be used to recover symmetries and conservation laws of dynamical systems.


\subsection{Symmetries of submanifolds}
\label{subsec:submanifolds_of_Rd}
We begin by studying the symmetries of smooth submanifolds $\mcal{M}$ of Euclidean space $\R^d$ using an approach similar to \cite{Cahill2023Lie}.
However, we use a different operator that generalizes more naturally to nonlinear group actions on arbitrary manifolds (see Section~\ref{sec:submanifolds_and_tangency}) and recovers the Lie derivative (see Section~\ref{eqn:functions_as_submanifolds}).
With right action $\theta:\R^d \times G \to \R^d$ of a Lie group, we define invariance of a submanifold as follows:
\begin{definition}
    A submanifold $\mcal{M}\subset\R^d$ is \textbf{invariant} with respect to a group element $g\in G$ if
    \begin{equation}
        \theta_g(z) \in \mcal{M}
    \end{equation}
    for every $z\in\mcal{M}$.
    These elements form a subgroup of $G$ denoted $\Sym_G(\mcal{M})$.
\end{definition} 
The subgroup of symmetries of a submanifold is characterized by the following theorem.
\begin{theorem} \label{thm:submanifolds_of_Rd}
    Let $\mcal{M}$ be a smooth, closed, embedded submanifold of $\R^d$.
    Then $\Sym_G(\mcal{M})$ is a closed, embedded Lie subgroup of $G$ whose Lie subalgebra is 
    \begin{equation}
        \sym_G(\mcal{M}) = \{ \xi \in \Lie(G) \ : \ \hat{\theta}(\xi)_{z} \in T_z \mcal{M} \quad \forall z\in\mcal{M} \}.
    \end{equation}
    This is a special case of Theorem~\ref{thm:symmetries_of_a_submanifold}.
\end{theorem}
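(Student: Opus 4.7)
The plan is to establish the statement in two stages: first, that $\Sym_G(\mcal{M})$ is a closed, embedded Lie subgroup of $G$; second, that its Lie subalgebra equals $\mcal{A} := \{\xi \in \Lie(G) : \hat{\theta}(\xi)_z \in T_z\mcal{M} \text{ for all } z\in\mcal{M}\}$. For the first stage, $\Sym_G(\mcal{M})$ is topologically closed in $G$ because, for each $z\in\mcal{M}$, the set $\{g \in G : \theta_g(z)\in\mcal{M}\}$ is the preimage of the closed set $\mcal{M}$ under the continuous orbit map $\theta^{(z)}$, and $\Sym_G(\mcal{M})$ is the intersection of these closed sets. The group axioms follow from the cocycle properties of the action. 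The closed subgroup theorem (Theorem~20.12 in \cite{Lee2013introduction}) then upgrades $\Sym_G(\mcal{M})$ to a closed, embedded Lie subgroup of $G$.

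For the Lie algebra characterization, the inclusion $\sym_G(\mcal{M}) \subseteq \mcal{A}$ is immediate. Any $\xi \in \sym_G(\mcal{M})$ satisfies $\exp(t\xi) \in \Sym_G(\mcal{M})$ for all $t \in \R$ (a standard property of Lie subalgebras of Lie subgroups), so the curve $t \mapsto \theta_{\exp(t\xi)}(z)$ lies entirely in $\mcal{M}$ for every $z\in\mcal{M}$, and differentiating at $t=0$ yields $\hat{\theta}(\xi)_z \in T_z\mcal{M}$.

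The reverse inclusion $\mcal{A} \subseteq \sym_G(\mcal{M})$ is the main content and rests on the identity $\Flow_{\hat{\theta}(\xi)}^t(z) = \theta_{\exp(t\xi)}(z)$ from Section~\ref{subsec:representations_actions_generators}. Given $\xi \in \mcal{A}$, I need to show $\exp(t\xi) \in \Sym_G(\mcal{M})$ for all $t \in \R$, which reduces to showing that integral curves of $\hat{\theta}(\xi)$ starting in $\mcal{M}$ remain in $\mcal{M}$ for all time. Fix $z \in \mcal{M}$ and set $I = \{t \in \R : \theta_{\exp(t\xi)}(z)\in\mcal{M}\}$. The set $I$ is closed in $\R$ because $\mcal{M}$ is closed in $\R^d$ and the orbit is continuous. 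It is open because if $t_0 \in I$ and $z_0 := \theta_{\exp(t_0\xi)}(z)\in\mcal{M}$, a slice chart for $\mcal{M}$ around $z_0$ witnesses that the ambient vector field $\hat{\theta}(\xi)$ has no components transverse to $\mcal{M}$; uniqueness of ODE solutions then forces the integral curve to remain in $\mcal{M}$ locally, and combining this with the cocycle identity $\theta_{\exp(t\xi)}(z) = \theta_{\exp((t-t_0)\xi)}(z_0)$ yields an open neighborhood of $t_0$ inside $I$. Since $\R$ is connected and $0 \in I$, we conclude $I = \R$, completing the proof. The main obstacle is precisely this flow-invariance step for tangent vector fields along closed embedded submanifolds; everything else amounts to routine bookkeeping with results already recalled in the excerpt.
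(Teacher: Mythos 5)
Your argument is correct, and it departs from the paper in a way worth noting. The paper does not prove this theorem directly; it derives it as a special case of Theorem~\ref{thm:symmetries_of_a_submanifold}, whose proof is built around Lemma~\ref{lem:integral_curve_tangency} (maximal integral curves of tangent vector fields stay inside arcwise-closed, weakly-embedded submanifolds). That lemma is designed to handle the delicate case of weakly-embedded, arcwise-closed $\mcal{M}$ --- e.g.\ leaves of foliations --- and the proof has to track what happens at the endpoint of the maximal interval using arcwise-closedness. You instead exploit that a closed embedded submanifold of $\R^d$ is \emph{properly} embedded, which lets you run a self-contained open-and-closed argument on $I = \{t : \theta_{\exp(t\xi)}(z)\in\mcal{M}\}$: closedness of $I$ comes for free from $\mcal{M}$ being a closed subset, and openness follows from the local flow-invariance of $\mcal{M}$ under a vector field tangent to it. Conceptually this covers the same ground as the paper's argument (both ultimately invoke the restricted vector field on $\mcal{M}$ plus uniqueness of integral curves) but avoids the arcwise-closed machinery entirely, and also avoids the generation-by-products-of-exponentials step by working pointwise along the orbit. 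What the paper's route buys is generality; what yours buys is a shorter, cleaner argument for this particular case. One minor point of presentation: when you argue openness of $I$, the statement that ``uniqueness of ODE solutions forces the integral curve to remain in $\mcal{M}$ locally'' should be expanded slightly --- uniqueness alone is not enough; you need to first produce the integral curve of the \emph{restricted} vector field $\hat{\theta}(\xi)\vert_{\mcal{M}}\in\vf(\mcal{M})$ (via Proposition~8.23 in \cite{Lee2013introduction}), push it forward by the inclusion to get an integral curve of $\hat{\theta}(\xi)$ that lives in $\mcal{M}$, and only then invoke uniqueness to conclude the ambient integral curve coincides with it. Also worth mentioning: you get that the tangency set is a Lie subalgebra for free by identifying it with $\sym_G(\mcal{M})$, whereas the paper checks bracket-closure separately (via Corollary~8.32 in Lee) because it establishes the subalgebra before knowing $\Sym_G(\mcal{M})$ is a Lie subgroup. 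Both orders are valid.
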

The meaning of this result and its practical use for detecting symmetry are illustrated in Figure~\ref{fig:manifold_tangent_symmetry}.

To reveal the connected component of $\Sym_G(\mcal{M})$, we let $P_z:\R^d \to \R^d$ 
be a family of linear projections onto $T_z \mcal{M} \subset \R^d$.
These are assumed to vary continuously with respect to $z\in\mcal{M}$.
Then under the assumptions of the above theorem, $\sym_G(\mcal{M})$ is the nullspace of the symmetric, positive-semidefinite operator $S_{\mcal{M}}:\Lie(G) \to \Lie(G)$ defined by
\begin{empheq}[box=\widefbox]{equation}
    \big\langle \eta, \ S_{\mcal{M}} \xi \big\rangle_{\Lie(G)} 
    = \int_{\mcal{M}} \hat{\theta}(\eta)_{z}^T (I - P_z)^T (I - P_z) \hat{\theta}(\xi)_{z} \ \td \mu(z)
    \label{eqn:symmetry_operator_for_submanifolds_of_Rd}
\end{empheq}
for every $\eta,\xi\in\Lie(G)$.
We see in Figure~\ref{fig:manifold_tangent_symmetry} that $(I - P_z) \hat{\theta}(\xi)_{z}$ measures the component of the infinitesimal generator not tangent to the submanifold at $z$.
Here, $\mu$ is any strictly positive measure on $\mcal{M}$ that makes all of these integrals finite.
The above formula is useful for computing the matrix of $S_{\mcal{M}}$ in an orthonormal basis for $\Lie(G)$.

\begin{figure}
    \centering
    \begin{tikzonimage}[trim=250 200 150 150, clip=true, width=0.45\textwidth]{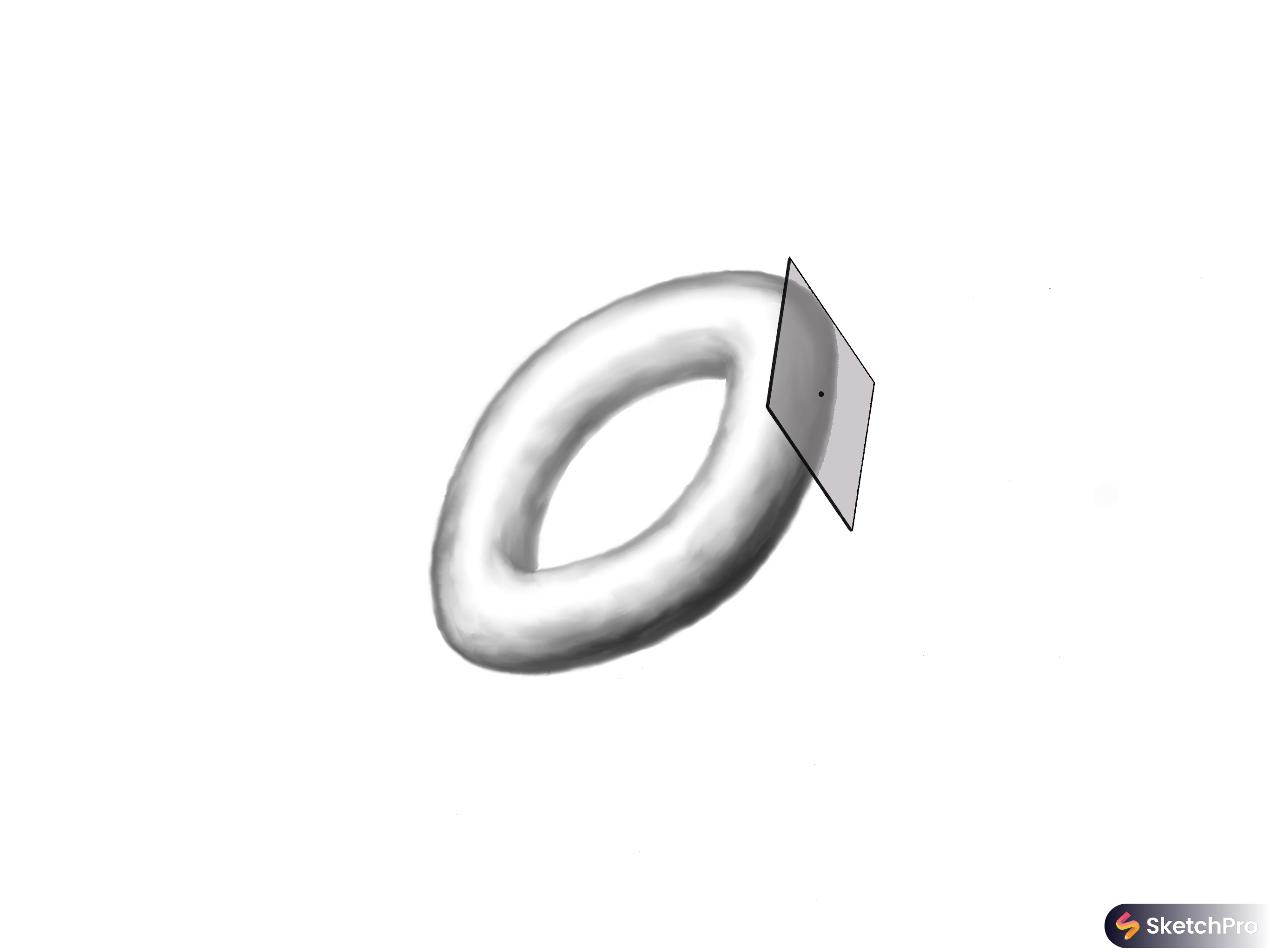}
        \node[rotate=0] at (0.40, 0.65) {\footnotesize $\mcal{M}$};
        \node[rotate=0,anchor=west] at (0.675, 0.70) {\footnotesize $T_z \mcal{M}$};
        \def\zx{0.655}; \def\zy{0.593};
        \def\vx{-0.7*0.050}; \def\vy{-0.7*0.440};
        \node[rotate=0,anchor=south] at (\zx, \zy) {\footnotesize $z$};
        \draw[->] (\zx, \zy) -- (\zx+\vx, \zy+\vy);
        \node[rotate=0,anchor=west] at (\zx+\vx+0.01, \zy+\vy) {\footnotesize $\hat{\theta}(\xi)_{z}$};
        \draw[->] (\zx, \zy) arc[radius=0.56, start angle=-6, end angle=-45];
        \node[rotate=0,anchor=east] at (0.5, 0.27) {\footnotesize $\theta_{\exp(t\xi)}(z)$};
    \end{tikzonimage}
    \hspace{-0.25cm}
    \begin{tikzonimage}[trim=250 200 150 150, clip=true, width=0.45\textwidth]{Figures/manifold_tangent.png}
        \node[rotate=0] at (0.30, 0.25) {\footnotesize $\mcal{M}$};
        \node[rotate=0,anchor=north] at (0.690, 0.380) {\footnotesize $T_z \mcal{M}$};
        \def\zx{0.655}; \def\zy{0.593};
        \def\Pvx{0.025}; \def\Pvy{0.220};
        \def\Nvx{0.15}; \def\Nvy{0.00};
        \def\vx{\Pvx+\Nvx}; \def\vy{\Pvy+\Nvy};
        \node[rotate=0,anchor=north] at (\zx, \zy) {\footnotesize $z$};
        \draw[->] (\zx, \zy) -- (\zx+\Pvx, \zy+\Pvy);
        \node[rotate=0,anchor=south] at (\zx+\Pvx, \zy+\Pvy) {\footnotesize $P_z \hat{\theta}(\xi)_{z}$};
        \draw[->] (\zx, \zy) -- (\zx+\Nvx, \zy+\Nvy);
        \node[rotate=0,anchor=west] at (\zx+\Nvx, \zy+\Nvy) {\footnotesize $(I-P_z) \hat{\theta}(\xi)_{z}$};
        \draw[->] (\zx, \zy) -- (\zx+\vx, \zy+\vy);
        \node[rotate=0,anchor=west] at (\zx+\vx, \zy+\vy) {\footnotesize $\hat{\theta}(\xi)_{z}$};
        \draw[->] (\zx, \zy) arc[radius=1.4, start angle=-36, end angle=-19];
        \node[rotate=0,anchor=west] at (0.850, 0.99) {\footnotesize $\theta_{\exp(t\xi)}(z)$};
    \end{tikzonimage}
    \caption{Tangency of infinitesimal generators and symmetries of submanifolds. The infinitesimal generator $\hat{\theta}(\xi)$ is everywhere tangent to the submanifold $\mcal{M}$ if and only if the curves $t\mapsto\theta_{\exp(t\xi)}(z)$, with $z\in\mcal{M}$, lie in $\mcal{M}$ for all $t$. 
    The Lie algebra elements $\xi$ satisfying this tangency condition form the Lie subalgebra of symmetries of $\mcal{M}$. 
    To test for tangency of the infinitesimal generator we use a family of projections $P_z$ onto the tangent spaces $T_z \mcal{M}$ for every $z\in\mcal{M}$.
    Specifically, $(I - P_z) \hat{\theta}(\xi)_z$ is the component of the infinitesimal generator at $z$ that does not lie tangent to $\mcal{M}$. 
    Hence, $\xi$ generates a symmetry of $\mcal{M}$ if and only if $(I - P_z) \hat{\theta}(\xi)_z = 0$ for all $z\in\mcal{M}$.}
    \label{fig:manifold_tangent_symmetry}
\end{figure}

Alternatively, when the dimension of $G$ is large, one can compute the nullspace using a Krylov algorithm such as the one described in \cite{Finzi2021practical}.
Such algorithms rely solely on queries of $S_{\mcal{M}}$ acting on vectors $\xi\in\Lie(G)$.
When $\theta_g(z) = \Phi(g^{-1}) z$ and $\hat{\theta}(\xi)_{z} = - \phi(\xi)z$ are given by a Lie group representation (see Section~\ref{subsec:representations_actions_generators}), then the operator defined in \eqref{eqn:symmetry_operator_for_submanifolds_of_Rd} is given explicitly by
\begin{equation}
    S_{\mcal{M}} \xi = \int_{\mcal{M}} \D\Phi(e)^*\left[ (I-P_z)^T (I-P_z) \phi(\xi) z z^T \right] \ \td \mu(z),
\end{equation}
where $\D\Phi(e)^*:\R^{d\times d} \to \Lie(G)$ is the adjoint of $\D\Phi(e):\Lie(G) \to \R^{d\times d}$.
If $G \subset \R^{d\times d}$ is a matrix Lie group and $\Phi$ is the identity representation, then $\D\Phi(e)$ is the injection $\Lie(G) \hookrightarrow \R^{d\times d}$. When $\Lie(G) \subset \R^{d\times d}$ inherits its inner product from $\R^{d\times d}$, then $\D\Phi(e)^*$ is the orthogonal projection of $\R^{d\times d}$ onto $\Lie(G)$.
For example, if $\Phi$ is the identity representation of $\SE(d)$ in $\R^{d+1}$ with the inner product on $\se(d) \subset \R^{(d+1) \times (d+1)}$ given by the usual inner product of matrices $\langle M_1, M_2 \rangle = \Tr(M_1^T M_2)$, then it can be readily verified that
\begin{equation}
    \D \Phi(e)^* 
    \left(\begin{bmatrix}
        A & b \\
        c^T & a
    \end{bmatrix}\right)
    = \begin{bmatrix}
        \frac{1}{2} (A - A^T) & b \\
        0 & 0
    \end{bmatrix},
    \qquad A\in\R^{d\times d}, \ b \in \R^d, \ c \in \R^d, \ a\in\R.
\end{equation}

In practice, one can use sample points $z_i$ on the manifold to obtain a Monte-Carlo estimate of $S_{\mcal{M}}$ with approximate projections $P_{z_i}$ computed using local principal component analysis (PCA), as described in \cite{Cahill2023Lie}.
More accurate estimates of the tangent spaces can be obtained using the methods in \cite{Berry2020spectral}.
Assuming the $P_{z_i}$ are accurate, the following proposition shows that the correct Lie subalgebra of symmetries is revealed using finitely many sample points $z_i$.
However, this result does not tell us how many samples to use, or even when to stop sampling.

\begin{proposition}\label{prop:manifold_nullspace_convergence}
    Let $\mu$ be a strictly positive probability measure on a smooth manifold $\mcal{M}$ such that $\langle \xi, S_{\mcal{M}} \xi \rangle < \infty$ for every $\xi \in \Lie(G)$.
    Let $z_i$ be drawn independently from the distribution $\mu$ and let $S_m: \Lie(G) \to \Lie(G)$ be defined by
    \begin{equation}
        \big\langle \eta, \ S_{m} \xi \big\rangle_{\Lie(G)} 
    = \frac{1}{m} \sum_{i=1}^{m} \hat{\theta}(\eta)_{z_i}^T (I - P_{z_i})^T (I - P_{z_i}) \hat{\theta}(\xi)_{z_i}.
    \end{equation}
    Then there is almost surely an integer $M_0$ such that for every $m \geq M_0$ we have $\Null(S_m) = \sym_G(\mcal{M})$.
    We provide a proof in Appendix~\ref{app:proofs_of_minor_results}.
\end{proposition}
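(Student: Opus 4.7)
The argument combines a deterministic inclusion with a spectral perturbation bound driven by the strong law of large numbers. First, I would observe that $\sym_G(\mcal{M}) \subseteq \Null(S_m)$ holds deterministically for every realization and every $m$. Indeed, if $\xi \in \sym_G(\mcal{M})$ then Theorem~\ref{thm:submanifolds_of_Rd} gives $\hat{\theta}(\xi)_z \in T_z \mcal{M}$ at every $z \in \mcal{M}$, so $(I - P_{z_i}) \hat{\theta}(\xi)_{z_i} = 0$ for each sample $z_i$. Hence $\langle \xi, S_m \xi \rangle = 0$, and since $S_m$ is symmetric positive semidefinite, $\xi \in \Null(S_m)$. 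This also forces $\rank(S_m) \leq p - \dim \sym_G(\mcal{M})$, where $p = \dim \Lie(G)$.

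Next, I would show that $S_m \to S_{\mcal{M}}$ almost surely as operators on the finite-dimensional space $\Lie(G)$. Fix an orthonormal basis $\{e_k\}_{k=1}^p$ of $\Lie(G)$. For each pair $(i,j)$, the matrix entry $\langle e_i, S_m e_j \rangle$ is an empirical average of i.i.d.\ random variables. By the polarization identity combined with the standing hypothesis $\langle \xi, S_{\mcal{M}} \xi \rangle < \infty$ for every $\xi$, the common mean $\langle e_i, S_{\mcal{M}} e_j \rangle$ is finite, so Kolmogorov's strong law of large numbers applies and yields almost-sure convergence of each entry. Intersecting the finitely many almost-sure events produces a single almost-sure event on which $S_m \to S_{\mcal{M}}$ in operator norm.

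The final step is a rank count via eigenvalue continuity. By Theorem~\ref{thm:submanifolds_of_Rd}, $r := \rank(S_{\mcal{M}}) = p - \dim \sym_G(\mcal{M})$, and since $S_{\mcal{M}}$ is positive semidefinite its smallest nonzero eigenvalue $\lambda_r$ is strictly positive. Weyl's inequality for symmetric operators implies that once $\|S_m - S_{\mcal{M}}\|_{\mathrm{op}} < \lambda_r / 2$, the $r$-th largest eigenvalue of $S_m$ exceeds $\lambda_r / 2 > 0$, hence $\rank(S_m) \geq r$. Combined with the deterministic upper bound $\rank(S_m) \leq r$ from the first step, we get $\rank(S_m) = r$, so $\dim \Null(S_m) = \dim \sym_G(\mcal{M})$. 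A subspace that contains another of equal dimension must coincide with it, giving $\Null(S_m) = \sym_G(\mcal{M})$ for every $m$ beyond the almost-surely finite threshold $M_0$ at which the operator-norm bound first takes effect.

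The plan has no serious obstacle; the only subtle point is invoking SLLN under the mild integrability hypothesis, which polarization reduces to the given assumption on the quadratic form. Everything after that is finite-dimensional linear algebra and the standard min-max/Weyl spectral perturbation estimate.
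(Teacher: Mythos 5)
Your proposal is correct and proves the statement, but it takes a different route in the final step than the paper does. Both proofs share the opening moves: the deterministic inclusion $\sym_G(\mcal{M}) \subseteq \Null(S_m)$ (from $(I - P_{z_i})\hat{\theta}(\xi)_{z_i} = 0$ for $\xi \in \sym_G(\mcal{M})$), and almost-sure convergence $S_m \to S_{\mcal{M}}$ via the strong law of large numbers applied entrywise in a basis of $\Lie(G)$. Where you diverge is in closing the argument. The paper proves and invokes a separate monotone-nullspace lemma (Lemma~\ref{lem:matrix_nullspace_convergence}): the sample nullspaces satisfy $\Null(S_n) \subset \Null(S_m)$ for $n \geq m$ (since each additional sample can only add constraints), so $\dim\Null(S_m)$ is a nonincreasing bounded integer sequence that stabilizes at some $M_0$, and the limit $S_m \to S_{\mcal{M}}$ then pins the stabilized nullspace to $\Null(S_{\mcal{M}})$. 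You instead count ranks using Weyl's eigenvalue perturbation inequality: once $\|S_m - S_{\mcal{M}}\|_{\mathrm{op}} < \lambda_r/2$, the $r$-th eigenvalue of $S_m$ is bounded away from zero, forcing $\rank(S_m) \geq r$, which combined with the deterministic upper bound gives $\rank(S_m) = r$ and hence nullspace equality by dimension count. Your spectral argument is self-contained and requires no monotonicity of $\Null(S_m)$, which makes it slightly more robust (e.g., if the weights $1/m$ were replaced by nonuniform weights the monotonicity argument would break but yours would not). The paper's argument, on the other hand, packages a cleanly reusable lemma that it could apply elsewhere. One small imprecision in your write-up: you attribute $\Null(S_{\mcal{M}}) = \sym_G(\mcal{M})$ to Theorem~\ref{thm:submanifolds_of_Rd}, but that theorem only characterizes $\sym_G(\mcal{M})$ as a tangency condition; the identification with $\Null(S_{\mcal{M}})$ additionally uses continuity of $z \mapsto (I - P_z)\hat\theta(\xi)_z$ together with the strict positivity of $\mu$, which the paper states in the text immediately after defining $S_{\mcal{M}}$ in \eqref{eqn:symmetry_operator_for_submanifolds_of_Rd}. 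That identity is needed for your rank formula $r = \dim\Lie(G) - \dim\sym_G(\mcal{M})$, so it deserves an explicit citation or one-line justification rather than being folded into the theorem reference.
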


\subsection{Symmetries of functions as symmetries of submanifolds}
\label{eqn:functions_as_submanifolds}

The method described for studying symmetries of submanifolds can be applied to smooth maps between finite-dimensional vector spaces by identifying the map $F:\R^m \to \R^n$ with its graph
\begin{equation}
    \graph(F) = \{ (x, F(x)) \in \R^m\times\R^n \ : \ x \in \R^m \}.
\end{equation}
The graph is a smooth, closed, embedded submanifold of the space $\R^m\times\R^n$ by Proposition~5.7 in \cite{Lee2013introduction}.
We show that this approach recovers the Lie derivative and our result in Theorem~\ref{thm:symmetries_of_a_map}.

We consider the action of $G$ on $\R^m\times\R^n$ defined by \eqref{eqn:Theta_action_on_VW} with $\mcal{V} = \R^m$ and $\mcal{W} = \R^n$.
Defining the smoothly-varying family of projections 
\begin{equation}
    P_{(x,F(x))} = \begin{bmatrix}
        I & 0 \\
        \D F(x) & 0
    \end{bmatrix}
    \label{eqn:graph_projection}
\end{equation}
onto $T_{(x,F(x))}\graph(F)$,
it is easy to check that
\begin{equation}
    \begin{bmatrix}
        0 \\
        \mcal{L}_\xi F (x)
    \end{bmatrix}
    = - \underbrace{\left( \begin{bmatrix}
        I & 0 \\
        0 & I
    \end{bmatrix} - \begin{bmatrix}
        I & 0 \\
        \D F(x) & 0
    \end{bmatrix} \right)
    \vphantom{\begin{bmatrix}
        \hat{\theta}(\xi)_x \\
        \hat{T}(\xi)_x F(x)
    \end{bmatrix}}
    }_{I-P_{(x,F(x))}} 
    \underbrace{
    \begin{bmatrix}
        \hat{\theta}(\xi)_x \\
        \hat{T}(\xi)_x F(x)
    \end{bmatrix}}_{\hat{\Theta}(\xi)_{(x,F(x))}}.
    \label{eqn:Lie_derivative_as_projection_for_real_maps}
\end{equation}
We note that this is a special case of Theorem~\ref{thm:characterization_of_Lie_derivative} describing the Lie derivative in terms of a projection onto the tangent space of a function's graph.
The resulting operator $S_{\graph(F)}$ in \eqref{eqn:symmetry_operator_for_submanifolds_of_Rd} is given by
\begin{equation}
    \left\langle \eta, S_{\graph(F)} \xi \right\rangle_{\Lie(G)} 
    = \int_{\R^m} (\mcal{L}_\eta F(x))^T \mcal{L}_\xi F(x) \ \td \mu(x),
\end{equation}
for $\eta,\xi\in\Lie(G)$ and an appropriate positive measure $\mu$ on $\R^m$ that makes the integrals finite.
Therefore, Theorem~\ref{thm:symmetries_of_a_map} is recovered from the result on symmetries of submanifolds in Theorem~\ref{thm:submanifolds_of_Rd}.

Related quantities have been used to study the symmetries of trained neural networks, with the $F$ being the network and its derivatives computed via back-propagation.
The quantity $\left\langle \xi, S_{\graph(F)} \xi \right\rangle_{\Lie(G)} = \Vert \mcal{L}_\xi F \Vert_{L^2(\mu)}$ was used by \citet{Gruver2022Lie} to construct the Local Equivariant Error or (LEE), measuring the extent to which a trained neural network $F$ fails to respect symmetries in the one-parameter group $\{ \exp(t\xi) \}_{t\in\R}$.
The nullspace of $\xi \mapsto \mcal{L}_{\xi} F$ in the special case where $T(x,g) = I$ acts trivially was used by \cite{Moskalev2022liegg} to identify the connected subgroup with respect to which a given network is invariant.

By viewing a function as a submanifold, we propose a simple data-driven technique for estimating the Lie derivative and subgroup of symmetries of the function.
To approximate $\mcal{L}_\xi F$, $S_{\graph(F)}$, and $\sym_G(F)$ using input-output pairs $(x_i, y_i=F(x_i))$, it suffices to approximate the projection in \eqref{eqn:graph_projection} using these data.
To do this, we could obtain $(n+m) \times m$ matrices $U_i$ with columns spanning $T_{(x_i,y_i)} \graph(F)$ by applying local PCA to the data $z_i = (x_i, y_i)$, or by pruning the frames computed in \cite{Berry2020spectral}.
With $E = \begin{bmatrix}
    I_{m\times m} & 0_{m\times n}
\end{bmatrix}$, the projection in \eqref{eqn:graph_projection} is given by 
\begin{equation}
    P_{z_i} = U_i (E U_i)^{-1} E
\end{equation}
because any projection is uniquely determined by its range and nullspace (see Section~5.9 of \cite{Meyer2000matrix}).
Evaluating the generator $\hat{\Theta}(\xi)_{(x_i,y_i)}$ and applying the projection as in \eqref{eqn:Lie_derivative_as_projection_for_real_maps} gives a potentially simple way to approximate $(\mcal{L}_\xi F)(z_i)$, $S_{\graph(F)}$, and $\sym_G(F)$ using the input-output pairs.
However, many such pairs would be needed since the tangent space to the graph of $F$ at $x_i$ is well-approximated by local PCA only when there are at least $m$ neighboring samples sufficiently close to $x_i$.
Even more samples are likely needed when they are noisy.
The convergence properties of the spectral methods in \cite{Berry2020spectral} are better, but they still require enough samples to estimate integrals over $\graph(F)$ using quadrature or Monte-Carlo.


\subsection{Symmetries and conservation laws of dynamical systems}
\label{subsec:dynamical_systems}
Here, we consider the case when $F:\R^n \to \R^n$ is a smooth function defining a dynamical system
\begin{equation}
    \ddt x(t) = F(x(t))
    \label{eqn:ds}
\end{equation}
with state variables $x(t)\in\R^n$.
The solution of this equation is described by the flow map $\Flow: (t, x(\tau)) \mapsto x(\tau + t)$, which is defined on a maximal connected open set $D$ containing $0\times\R^n$.
In many cases we write $\Flow^t(\cdot) = \Flow(t, \cdot)$.
Given a right action $\theta: \R^n \times G \to \R^n$, equivariance for the dynamical system is defined as follows:
\begin{definition}
    The dynamical system in \eqref{eqn:ds} is \textbf{equivariant} with respect to a group element $g\in G$ if the flow map satisfies
    \begin{equation}
        \mcal{K}_g \Flow^t (x) := \theta_{g^{-1}}\left( \Flow^t\left( \theta_{g}(x)\right) \right) = \Flow^t(x)
    \end{equation}
    for every $(t,x) \in D$.
\end{definition}
Differentiating at $t=0$ shows that equivariance of the dynamical system implies that $F$ is equivariant in the sense of Definition~\ref{def:equivariance_real_map_version} with $T(x,g) = \D \theta_{g}(x)$.
The converse is also true thanks to Corollary~9.14 in \cite{Lee2013introduction}, meaning that equivariance for the dynamical system is equivalent to equivariance of $F$.
Therefore, we can study equivariance of the dynamical system in \eqref{eqn:ds} by directly applying the tools developed in Section~\ref{sec:fundamental_operators} to the function $F$.
Thanks to Theorem~\ref{thm:symmetries_of_a_map}, identifying the connected subgroup of symmetries for the dynamical system is a simple matter of computing the nullspace of the linear map $\xi \mapsto \mcal{L}_{\xi} F$, that is
\begin{equation}\tag{\ref{eqn:Sym_G_F}}
    \sym_G(F) = \{ \xi \in \Lie(G) \ : \ \mcal{L}_{\xi} F = 0 \}.
\end{equation}
In this case $\hat{T}(\xi)_x = \frac{\partial}{\partial x} \hat{\theta}(\xi)_x$, meaning the Lie derivative is given by 
\begin{equation}
    \mcal{L}_{\xi} F(x) 
    = \frac{\partial F(x)}{\partial x} \hat{\theta}(\xi)_x - \frac{\partial \hat{\theta}(\xi)_x}{\partial x} F(x)
    = [\hat{\theta}(\xi), F]_x,
\end{equation}
where $[\hat{\theta}(\xi), F]_x$ is the Lie bracket of the vector fields $\hat{\theta}(\xi)$ and $F$ evaluated at $x$.
Symmetries can also be enforced as linear constraints on $F$ described by Theorem~\ref{thm:invariance_conditions_for_real_map}.
This was done by \cite{Ahmadi2020learning_short} for polynomial dynamical systems with discrete symmetries.
Later on in Section~\ref{subsec:symmetry_of_vf} we show that analogous results apply to dynamical systems on manifolds.

A conserved quantity for the system in \eqref{eqn:ds} is defined as follows:
\begin{definition}
    A scalar valued quantity $f: \R^n \to \R$ is said to be \textbf{conserved} when
    \begin{equation}
        \mcal{K}_t f(x) := f(\Flow^t(x)) = f(x) \qquad \forall (t,x) \in D.
    \end{equation}
\end{definition}
In this setting, the composition operators $\mcal{K}_t$ are often referred to as Koopman operators (see \cite{Koopman1931Hamiltonian, Mezic2005spectral, Mauroy2020koopman, Otto2021koopman, Brunton2022siamreview}).
It is easy to see that a smooth function $f$ is conserved if and only if
\begin{equation}
    \mcal{L}_F f := \left.\ddt\right\vert_{t=0} \mcal{K}_t f =  \frac{\partial f}{\partial x} F = 0.
\end{equation}
This relation is used by \cite{Kaiser2018discovering,kaiser2021data} to identify conserved quantities by computing the nullspace of $\mcal{L}_F$ restricted to finite-dimensional spaces of candidate functions.
When the flow is defined for all $t\in\R$, the operators $\mcal{K}_{t}$ and $\mcal{L}_F$ are the fundamental operators from Section~\ref{sec:fundamental_operators} for the right action of the Lie group $G=(\R,+)$ defined by $\theta(x,t) = \Flow^t(x)$ and $T(x,t) = I$.

\begin{remark}
    For Hamiltonian dynamical systems Noether's theorem establishes an equivalence between the symmetries of the Hamiltonian and conserved quantities of the system.
    We study Hamiltonian systems later in Section~\ref{subsec:Hamiltonian_conservation_laws}.
\end{remark}

\section{Promoting symmetry with convex penalties}
\label{sec:promoting_symmetry}
In this section we show how to design custom convex regularization functions to promote symmetries within a given candidate group during training of a machine learning model.
This allows us to train a model with as many symmetries as possible from among the candidates, while breaking candidate symmetries only when the data provides sufficient evidence.
We study both discrete and continuous groups of candidate symmetries.
We quantify the extent to which symmetries within the candidate group are broken using the fundamental operators described in Section~\ref{sec:fundamental_operators}.
For discrete groups we use the transformation operators $\{\mcal{K}_g\}_{g\in G}$ and for continuous groups we use the Lie derivatives $\{\mcal{L}_{\xi}\}_{\xi\in\Lie(G)}$.
In the continuous case we penalize a convex relaxation of the codimension of the subgroup of symmetries given by a nuclear norm (Schatten $1$-norm) of the operator $\xi \mapsto \mcal{L}_{\xi} F$ defined by \eqref{eqn:L_F_op_for_vector_space_case}; minimizing this codimension via the proxy nuclear norm will promote the largest nullspace possible, and hence the largest admissible symmetry group.
Once these regularization functions are developed abstractly in Sections~\ref{subsec:discrete_symmetries}~and~\ref{subsec:continuous_symmetries}, we explain how to apply the approach to basis function regression (Section~\ref{subsec:promoting_symmetry_in_regression}) 
and neural networks (Section~\ref{subsec:multilayer_perceptrons}).
We demonstrate and evaluate these methods on numerical examples in Section~\ref{sec:numerical_experiments}.



As in Section~\ref{sec:fundamental_operators}, the basic building blocks of the machine learning models we consider are continuously differentiable ($C^1$) functions $F: \mcal{V} \to \mcal{W}$ between finite-dimensional vector spaces.
While we consider this restricted setting here, our results readily generalize to sections of vector bundles, as we describe later in Section~\ref{sec:sections_of_vector_bundles}.
These functions could be layers of a multilayer perceptron, integral kernels to be applied to spatio-temporal fields, or simply linear combinations of user-specified basis functions in a regression task.
We consider parametric models where $F$ is constrained to lie in a given finite-dimensional subspace $\mcal{F} \subset C^{1}(\mcal{V};\mcal{W})$ of continuously differentiable functions.
Working within a finite-dimensional subspace of functions will allow us to discretize the fundamental operators in Section~\ref{sec:discretization}.

We consider the same setting as Section~\ref{sec:fundamental_operators}, i.e., candidate symmetries are described by a Lie group $G$ acting via \eqref{eqn:Theta_action_on_VW} on the domain and codomain of functions $F \in \mcal{F}$.
Equivariance in this setting is described by Definition~\ref{def:equivariance_real_map_version}.
When fitting the function $F$ to data, our regularization functions penalize the size of $G \setminus \Sym_G(F)$.
For reasons that will become clear, we use different penalties corresponding to different notions of ``size'' when $G$ is a discrete group versus when $G$ is continuous.
The main result describing the continuous symmetries of $F$ is Theorem~\ref{thm:symmetries_of_a_map}.

\subsection{Discrete symmetries}
\label{subsec:discrete_symmetries}
When the group $G$ has finitely many elements, one can measure the size of $G \setminus \Sym_G(F)$ simply by counting its elements:
\begin{equation}
    R_{G,0}(F) = \vert G \setminus \Sym_G(F) \vert.
\end{equation}
However, this penalty is impractical for optimization owing to its discrete values and nonconvexity.
Letting $\Vert \cdot \Vert$ be any norm on the space $\mcal{F}'' = \vspan \{ \mcal{K}_g F \ : \ g\in G, \ F \in \mcal{F} \}$ yields a convex relaxation of the above penalty given by
\begin{empheq}[box=\widefbox]{equation}
    R_{G,1}(F) = \sum_{g\in G} \Vert  \mcal{K}_g F - F \Vert.
    \label{eqn:discrete_penalty_for_real_maps}
\end{empheq}
This is a convex function on $\mcal{F}$ because $\mcal{K}_g$ is a linear operator and vector space norms are convex.
For example, if $\vect{c} = (c_1, \ldots, c_N)$ are the coefficients of $F$ in a basis for $\mcal{F}''$ and $\mat{K}_g$ is the matrix of $\mcal{K}_g$ in this basis, then the Euclidean norm can be used to define
\begin{equation}
    R_{G,1}(F) = \sum_{g\in G} \Vert \mat{K}_g \vect{c} - \vect{c} \Vert_2.
\end{equation}
This is directly analogous to the group sparsity penalty proposed in \cite{Yuan2006model}.

\subsection{Continuous symmetries}
\label{subsec:continuous_symmetries}
We now consider the case where $G$ is a Lie group with positive dimension and Lie algebra $\mfrak{g}$.
Here we use the dimension of $\Sym_G(F)$ to measure the symmetry of $F$, seeking to penalize the complementary dimension or \emph{codimension}, given by
\begin{equation}
    R_{\mathfrak{g},0}(F) = \codim(\Sym_G(F)) = \dim(G) - \dim(\Sym_G(F)).
\end{equation}
We take this approach in the continuous case because it is no longer possible to simply count the number of broken symmetries.
While it is possible in principle to replace the sum in \eqref{eqn:discrete_penalty_for_real_maps} by an integral of $\| \mcal{K}_g F - F \|$ over $g \in G$, the numerical quadrature required to approximate it becomes prohibitive for higher-dimensional candidate groups.
This difficulty is exacerbated by the fact that the integrand is not smooth.
The space $\mcal{F}''$ can also become infinite-dimensional when $G$ has positive dimension, making it challenging to compute the norm $\| \cdot \|$.

In contrast, it is much easier to measure the ``size'' of a continuous symmetry group using its dimension because this can be computed with linear algebra.
Specifically, the dimension of $\Sym_G(F)$ is equal to that of its Lie algebra.
Thanks to Theorem~\ref{thm:symmetries_of_a_map}, this is the nullspace of a linear operator $L_F : \Lie(G) \to C^{0}(\mcal{V};\mcal{W})$ defined by
\begin{equation}\tag{\ref{eqn:L_F_op_for_vector_space_case}}
    L_F: \xi \mapsto \mcal{L}_{\xi} F,
\end{equation}
where $\mcal{L}_{\xi}$ is the Lie derivative in \eqref{eqn:Lie_derivative_of_real_map}.
By the rank and nullity theorem, the codimension of $\Sym_G(F)$ is equal to the rank of this operator:
\begin{equation}
    R_{\mathfrak{g},0}(F) 
    = \codim(\sym_G(F))
    = \rank(L_F).
\end{equation}
Penalizing the rank of an operator is impractical for optimization owing to its discrete values and nonconvexity.
A commonly used convex relaxation of the rank is provided by the Schatten $1$-norm, also known as the \emph{nuclear norm}, given by
\begin{empheq}[box=\widefbox]{equation}
    R_{\mathfrak{g},*}(F) 
    = \Vert L_{F} \Vert_{*} 
    = \sum_{i=1}^{\dim(G)} \sigma_i(L_F).
    \label{eqn:nuclear_norm_penalty_for_real_maps}
\end{empheq}
Here $\sigma_i(L_F)$ denotes the $i$th singular value of $L_F$ with respect to inner products on $\Lie(G)$ and $\mcal{F}' = \vspan\{ \mcal{L}_{\xi} F \ : \ \xi\in\Lie(G), \ F\in\mcal{F} \}$.
This space is finite-dimensional, being spanned by $\{ \mcal{L}_{\xi_i} F_j \}_{i,j}$ where $\xi_i$ and $F_j$ are basis elements for $\Lie(G)$ and $\mcal{F}$. 
This enables computations with discrete inner products on $\mcal{F}'$, as we describe in Section~\ref{sec:discretization}.
For certain rank minimization problems, penalizing the nuclear norm is guaranteed to recover the true minimum rank solution \citep{Candes2009exact, Recht2010guaranteed, Gross2011recovering}.
Our numerical results in Section~\ref{subsec:symmetric_recovery} suggest that nuclear norm minimization can also recover symmetric functions using \eqref{eqn:nuclear_norm_penalty_for_real_maps}.

The proposed regularization function \eqref{eqn:nuclear_norm_penalty_for_real_maps} is convex on $\mcal{F}$ because $F \mapsto L_F$ is linear and the nuclear norm is convex.
For example, if $(c_1, \ldots, c_N)$ are the coefficients of $F$ in a basis $\{ F_1, \ldots, F_N \}$ for $\mcal{F}$ and $\mat{L}_{F_i}$ are the matrices of $L_{F_i}$ in orthonormal bases for $\Lie(G)$ and $\mcal{F}'$, then
\begin{equation}
    R_{\mathfrak{g},*}(F) = \Vert c_1 \mat{L}_{F_1} + \cdots + c_N \mat{L}_{F_N} \Vert_*.
\end{equation}
With $\{ \xi_1, \ldots, \xi_{\dim(G)} \}$ and $\{ u_{1}, \ldots, u_{N'} \}$ being the orthonormal bases for $\Lie(G)$ and $\mcal{F}'$, one can compute and store the rank-$3$ tensor
    $[\mat{L}_{F_i}]_{j,k} 
    = \left\langle u_j,\ \mcal{L}_{\xi_k} F_i \right\rangle_{\mcal{F}'}$.
Practical methods for constructing and computing with inner products on $\mcal{F}'$ will be discussed in Section~\ref{sec:discretization}.

\subsection{Promoting symmetry in basis function regression}
\label{subsec:promoting_symmetry_in_regression}
To demonstrate how the symmetry-promoting regularization functions proposed above can be used in practice, 
consider a regression problem for a function $F:\R^m\to\R^n$.
It is common to parameterize this problem by expressing $F(x) = W \fndict(x)$ in a dictionary $\fndict:\R^m \to \R^N$ consisting of user-defined smooth functions with a matrix of weights $W\in\R^{n\times N}$ to be fit during the training process.  
For example, the sparse identification of nonlinear dynamics (SINDy) algorithm~\citep{Brunton2016discovering} belongs to this type of learning, among other machine learning algorithms~\citep{Brunton2022book}. 
The fundamental operators (Section~\ref{sec:fundamental_operators}) for this class of functions are given by
\begin{align}
    \mcal{K}_g F(x) &= T(x,g)^{-1} W \fndict(\theta_{g}(x)), \\
    \mcal{L}_\xi F(x) &= W\frac{\partial \fndict(x)}{\partial x}\hat{\theta}(\xi)_{x} - \hat{T}(\xi)_x W \fndict(x).
\end{align}
These can be used directly in \eqref{eqn:discrete_penalty_for_real_maps} and \eqref{eqn:nuclear_norm_penalty_for_real_maps} to construct symmetry-promoting regularization functions $R_G(W)$ that are convex with respect to the weight matrix $W$.
Given a training dataset consisting of input-output pairs $\{ (x_j, y_j) \}_{j=1}^M$ we can seek a regularized least-squares fit by solving the convex optimization problem
\begin{equation}
    \minimize_{W\in\R^{n\times N}} \frac{1}{M}\sum_{j=1}^M \Vert y_j - W\fndict(x_j) \Vert^2 + \gamma R_G(W\mcal{D}).
    \label{eqn:basis_function_regression_prob}
\end{equation}
Here, $\gamma \geq 0$ is a parameter controlling the strength of the regularization that can be determined using cross-validation.
We demonstrate this approach in Section~\ref{subsec:multi_spring_mass_system} by using it to identify a symmetric linear dynamical system in a setting that requires extrapolation from limited data.

\begin{remark}
    The solutions $F = W\mcal{D}$ of \eqref{eqn:basis_function_regression_prob} do not depend on how the dictionary functions are normalized due to the fact that the function being minimized can be written entirely in terms of $F$ and the data $(x_j,y_j)$.
    This is in contrast to other types of regularized regression problems that penalize the weights $W$ directly, and therefore depend on how the functions in $\mcal{D}$ are normalized.
\end{remark}

\subsection{Promoting symmetry in neural networks}
\label{subsec:multilayer_perceptrons}
In this section we describe a convex regularizing penalty to promote $G$-equivariance in feed-forward neural networks
\begin{equation}
    F = F^{(L)} \circ \cdots \circ F^{(1)}
\end{equation}
composed of layers 
$F^{(l)}: \mcal{V}_{l-1} \to \mcal{V}_l$
acted upon by group representations 
$\Phi_l : G \to \GL(\mcal{V}_l)$.
We penalize the breaking of $G$-symmetries as a means to regularize the neural network and to learn which subgroup of symmetries are compatible with the data and which are not.
Since the composition is $g$-equivariant if every layer is $g$-equivariant, the main idea is to measure the symmetries shared by all of the layers.
Specifically, we aim to maximize the ``size'' of the subgroup
\begin{equation}
    \bigcap_{l=1}^L \Sym_G\big(F^{(l)}\big) 
    = \{ g \in G \ : \ \mcal{K}_g F^{(l)} = F^{(l)},\ l=1, \ldots, L \}
    \subset \Sym_G(F),
    \label{eqn:intersection_subgroup}
\end{equation}
where the notion of ``size'' we adopt depends on whether $G$ is discrete or continuous.
The same ideas can be applied to neural operators acting on fields with layers defined by integral operators as described in Section~\ref{subsec:NNs_acting_on_fields}.
In this case we consider symmetries shared by all of the integral kernels.

As in Section~\ref{subsec:discrete_symmetries}, when $G$ is a discrete group with finitely many elements, a convex relaxation of the cardinality of $G \setminus \bigcap_{l=1}^L \Sym_G(F^{(l)})$ is
\begin{empheq}[box=\widefbox]{equation}
    R_{G,1}\big(F^{(1)}, \ldots, F^{(l)}\big)
    = \sum_{g\in G} \sqrt{ \sum_{l=1}^L \big\Vert \mcal{K}_g F^{(l)} - F^{(l)} \big\Vert^2 }.
\end{empheq}
Again, this is analogous to the group-LASSO penalty developed in \citet{Yuan2006model}.

When $G$ is a Lie group with nonzero dimension, we follow the approach in Section~\ref{subsec:continuous_symmetries} using the following observation:
\begin{proposition}
    \label{prop:intersection_subalgebra}
    The subgroup in \eqref{eqn:intersection_subgroup} is closed and embedded in $G$; its Lie subalgebra is
    \begin{equation}
        \bigcap_{l=1}^L \sym_G\big(F^{(l)}\big) = \left\{ \xi \in \Lie(G) \ : \ \mcal{L}_{\xi} F^{(l)} = 0, \ l=1, \ldots, L \right\}.
    \end{equation}
    We provide a proof in Appendix~\ref{app:proofs_of_minor_results}.
\end{proposition}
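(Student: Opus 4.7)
The plan is to reduce the statement to Theorem~\ref{thm:symmetries_of_a_map} plus a standard fact about intersections of closed Lie subgroups, namely that the Lie algebra of an intersection of closed subgroups is the intersection of their Lie algebras.

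First, by Theorem~\ref{thm:symmetries_of_a_map} applied to each $F^{(l)}$, every $\Sym_G(F^{(l)})$ is a closed embedded Lie subgroup with Lie subalgebra $\sym_G(F^{(l)}) = \{\xi \in \Lie(G) : \mcal{L}_\xi F^{(l)} = 0\}$. The finite intersection $H := \bigcap_{l=1}^L \Sym_G(F^{(l)})$ is then a closed subset of $G$ (intersection of closed sets) and is clearly closed under the group operations. By the closed subgroup theorem (cited in the background section from Lee, Theorem~20.12), $H$ is automatically an embedded Lie subgroup of $G$.

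Next, I would identify $\Lie(H)$. For a closed embedded Lie subgroup $K \subseteq G$, the associated Lie subalgebra admits the intrinsic description $\Lie(K) = \{\xi \in \Lie(G) : \exp(t\xi) \in K \text{ for all } t \in \R\}$. I would then argue both inclusions:
\begin{itemize}
\item[($\supseteq$)] If $\mcal{L}_\xi F^{(l)} = 0$ for every $l$, then $\xi \in \sym_G(F^{(l)}) = \Lie(\Sym_G(F^{(l)}))$, so $\exp(t\xi) \in \Sym_G(F^{(l)})$ for all $t$ and all $l$, hence $\exp(t\xi) \in H$ for all $t$, which places $\xi$ in $\Lie(H)$.
\item[($\subseteq$)] If $\xi \in \Lie(H)$, then $\exp(t\xi) \in H \subseteq \Sym_G(F^{(l)})$ for all $t$ and each $l$, so $\xi \in \Lie(\Sym_G(F^{(l)})) = \sym_G(F^{(l)})$, and therefore $\mcal{L}_\xi F^{(l)} = 0$ for every $l$.
\end{itemize}

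The main potential obstacle is the intrinsic characterization of $\Lie(K)$ for a closed embedded subgroup $K$, but this is an immediate consequence of the fact that the exponential curves in $K$ through the identity are exactly the exponentials of vectors in $\Lie(K)$ (e.g.\ Proposition~20.8 and the closed subgroup theorem of Lee cited in Section~\ref{sec:background_on_matrix_Lie_groups}); so the whole argument is essentially bookkeeping on top of Theorem~\ref{thm:symmetries_of_a_map}. Alternatively, one can bypass this intrinsic description entirely and proceed directly: the inclusion $\supseteq$ shows that $\{\xi : \mcal{L}_\xi F^{(l)}=0 \ \forall l\}$ is a subalgebra contained in $\Lie(H)$, while $\Lie(H) \subseteq \Lie(\Sym_G(F^{(l)})) = \sym_G(F^{(l)})$ for each $l$ follows from the functoriality of $\Lie$ applied to the inclusion $H \hookrightarrow \Sym_G(F^{(l)})$, giving the reverse inclusion and completing the proof.
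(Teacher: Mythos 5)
Your proposal is correct and follows essentially the same route as the paper's proof: both establish that the intersection is closed via the closed subgroup theorem, and both identify the Lie subalgebra by passing through $\exp(t\xi)$ in each direction and invoking Theorem~\ref{thm:symmetries_of_a_map} for each layer. The only cosmetic difference is that you package the two inclusions via the intrinsic characterization $\Lie(K) = \{\xi : \exp(t\xi) \in K \ \forall t\}$, whereas the paper differentiates the curves $t \mapsto \mcal{K}_{\exp(t\xi)}F^{(l)}$ and $t \mapsto \exp(t\xi)$ directly at $t=0$, which is the same underlying mechanism.
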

The Lie subalgebra in the proposition is equal to the nullspace of the linear operator $L_{F^{(1)},\ldots,F^{(L)}} : \Lie(G) \to \bigoplus_{l=1}^L C^{\infty}(\mcal{V}_{l-1}; \mcal{V}_l)$ defined by 
\begin{equation}
    L_{F^{(1)},\ldots,F^{(L)}}: \xi \mapsto
    \big( \mcal{L}_\xi F^{(1)}, \ldots, \mcal{L}_\xi F^{(L)} \big).
    \label{eqn:operator_for_multiple_real_maps}
\end{equation}
By the rank and nullity theorem, minimizing the rank of this operator is equivalent to maximizing the dimension of the subgroup of symmetries shared by all of the layers in the network.
As in Section~\ref{subsec:continuous_symmetries}, a convex relaxation of the rank is provided by the nuclear norm
\begin{empheq}[box=\widefbox]{equation} \label{eqn:compositional_nuclear_norm_penalty}
    R_{\mathfrak{g},*}\big( F^{(1)}, \ldots, F^{(l)} \big)
    = \big\Vert L_{F^{(1)},\ldots,F^{(L)}} \big\Vert_* 
    = \left\Vert 
    \begin{bmatrix}
        \mat{L}_{F^{(1)}} \\
        \vdots \\
        \mat{L}_{F^{(L)}}
    \end{bmatrix}
    \right\Vert_*,
\end{empheq}
where $\mat{L}_{F^{(l)}}$ are the matrices of $L_{F^{(l)}}$ in orthonormal bases for $\Lie(G)$ and the associated spaces $\mcal{F}_l'$.

\begin{figure}
    \centering
    \begin{tikzonimage}[trim=0 0 0 0, clip=true, width=0.95\textwidth]{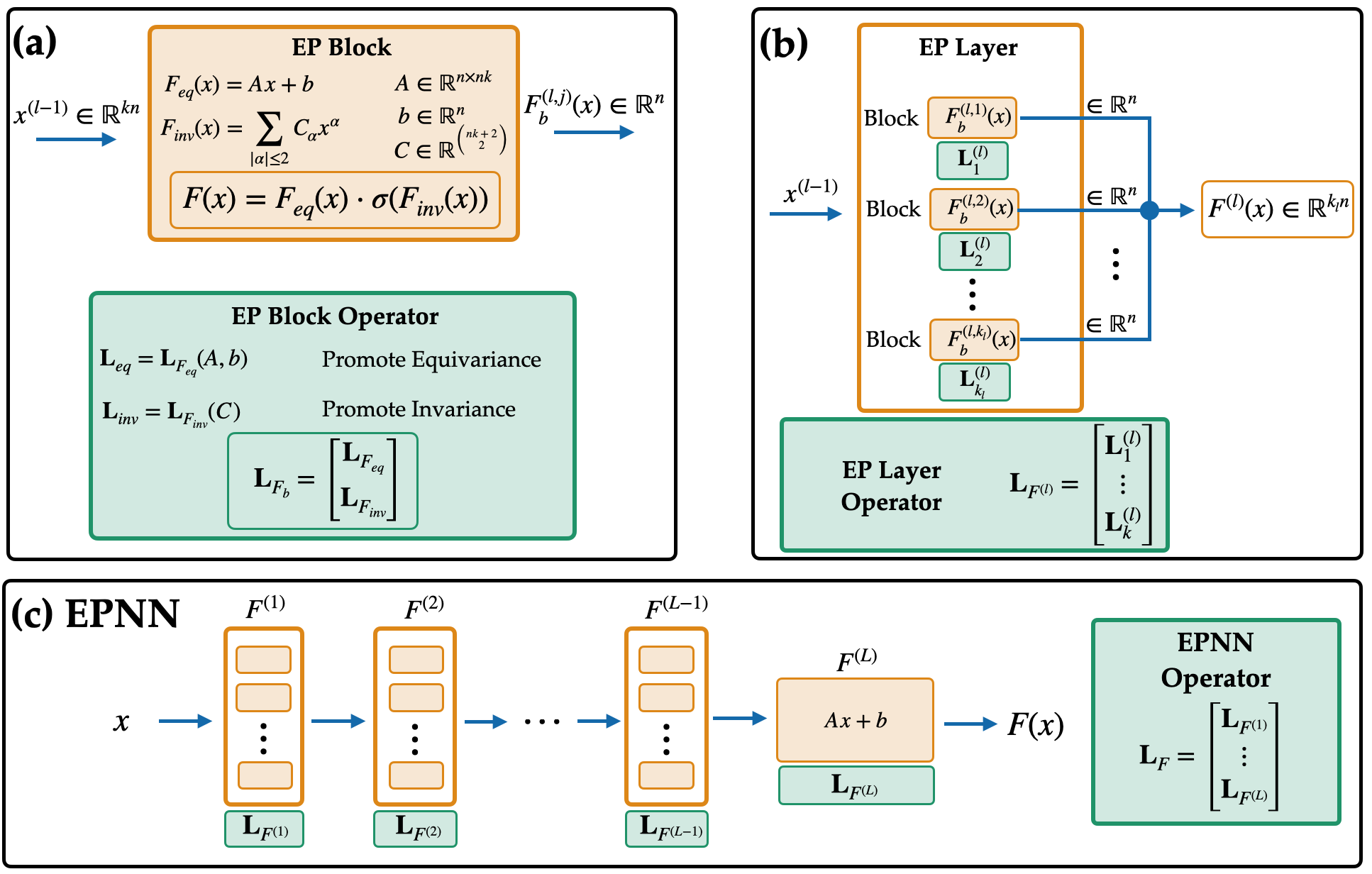}
    \end{tikzonimage}
    \caption{The Equivariance-Promoting Neural Network (EPNN) is a composition of Equivariance-Promoting (EP) layers built from \emph{EP blocks}. An EP block is parametrized by weights $A, b, C$ defining a vector-valued affine function $F_{\text{eq}}$ and a scalar-valued quadratic function $F_{\text{inv}}$. The symmetries shared by these components are captured by the nullspace of the \emph{EP block operator}. These are assembled into operators capturing shared symmetries of components within each layer, and then into an EPNN operator for the whole network.}
    \label{fig:symmetry_net}
\end{figure}

We introduce the Equivariance-Promoting Neural Network (EPNN) architecture shown in Figure~\ref{fig:symmetry_net} to promote Lie group symmetries via the convex regularization \eqref{eqn:compositional_nuclear_norm_penalty}.
This draws from the approach of \cite{Finzi2021practical}, where trainable $G$-equivariant linear layers are composed with $G$-equivariant nonlinearities.
The EPNN is built from Equivariance-Promoting (EP) blocks,
\begin{equation}\label{eqn:symmetry_block}
    F_{\text{b}}(x) = (\underbrace{A x + b \vphantom{\sum_{|\alpha| \leq 2}}}_{F_{\text{eq}}(x)}) \cdot 
    \sigma \Bigg( \underbrace{ 
    \sum_{|\alpha| \leq 2} C_{\alpha} x^{\alpha}
    }_{F_{\text{inv}}(x)} \Bigg),
\end{equation}
where $\alpha$ is a multi-index; $A$, $b$, and $C = (C_{\alpha})_{|\alpha|\leq 2}$ are the trainable weights of the block; and $\sigma$ is a nonlinear activation function such as the sigmoid $\sigma(z) = (1 + e^{-z})^{-1}$.
Using the sigmoid ensures the gradient of the EP block is bounded.
An EP block is $g$-equivariant with respect to linear group actions when $F_{\text{eq}}$ is $g$-equivariant and $F_{\text{inv}}$ is $g$-invariant.
The Lie subalgebra of symmetries shared by $F_{\text{eq}}$ and $F_{\text{inv}}$ forms the nullspace of the discretized EP block operator $\mat{L}_{F_{\text{b}}}$ (see Figure~\ref{fig:symmetry_net}).
The matrix entries of this operator are linear functions of the trainable parameters.

An \emph{EP layer} is built from EP blocks $F^{(l,j)}_{\text{b}}: \mcal{V}_{l-1} \to \tilde{\mcal{V}}_l$ by concatenation
\begin{equation}\label{eqn:symmetry_layer}
    F^{(l)}(x) = \left( F^{(l,1)}_{\text{b}}(x), \ \ldots,\ F^{(l, k_l)}_{\text{b}}(x) \right),
\end{equation}
producing an element in $\mcal{V}_l = \oplus_{j=1}^{k_l} \tilde{\mcal{V}}_l$.
An EPNN is then formed by composing $L-1$ of these layers, with a final affine EP output layer.
The $G$-actions on the hidden layers $l = 1, \ldots, L-1$ are defined by direct sum representations
\begin{equation}\label{eqn:hidden_layer_representation}
    \Phi_l(g) = 
    \begin{bmatrix}
        \tilde{\Phi}_l(g) & & \\
        & \ddots & \\
        & & \tilde{\Phi}_l(g)
    \end{bmatrix}, 
    \qquad \mbox{where} \qquad
    \tilde{\Phi}_l: G \to \GL(\tilde{\mcal{V}}_l).
\end{equation}
The key observation is that if the components $F^{(l,j)}_{\text{eq}}$ and $F^{(l,j)}_{\text{inv}}$ of each EP block are respectively $g$-equivariant and $g$-invariant, i.e.,
\begin{equation}
    \tilde{\Phi}_l(g) F^{(l,j)}_{\text{eq}}\big(\Phi_{l-1}(g)^{-1} x\big) = F^{(l,j)}_{\text{eq}}(x), \qquad \mbox{and} \qquad
    F^{(l,j)}_{\text{inv}}\big(\Phi_{l-1}(g)^{-1} x\big) = F^{(l,j)}_{\text{inv}}(x),
\end{equation}
and the affine EP output layer is $g$-equivariant, then the entire EPNN is $g$-equivariant.
The Lie subalgebra of symmetries shared by all components in an EPNN is given by the nullspace of the discretized EPNN operator $\mat{L}_F$ (see Figure~\ref{fig:symmetry_net}).
We obtain a convex symmetry-promoting penalty \eqref{eqn:compositional_nuclear_norm_penalty} by taking the nuclear norm of the EPNN operator, whose matrix entries are linear functions of the trainable weights.
Of course, it is also possible to enforce $G$-equivariance in the EPNN through a collection of linear constraints on the trainable weights using an approach analogous to \cite{Finzi2021practical} (see Section~\ref{sec:enforcing_symmetry}).
We evaluate EPNNs on a simple example in Section~\ref{subsec:double_spring_pendulum}.

\section{Discretizing the operators}
\label{sec:discretization}
This section describes how to construct matrices for the operators $\mcal{L}_\xi$ and $L_F$ for continuously differentiable functions $F$ in a user-specified finite-dimensional subspace $\mcal{F} \subset C^{1}(\mcal{V};\mcal{W})$.
By choosing bases for the finite-dimensional vector spaces $\mcal{V}$ and $\mcal{W}$, it suffices without loss of generality to consider the case in which $\mcal{V} = \R^m$ and $\mcal{W} = \R^n$.
We assume that $\Lie(G)$ and $\mcal{F}$ are endowed with inner products and that $\{ \xi_1, \ldots \xi_{\dim(G)} \}$ and $\{ F_1, \ldots F_{\dim(\mcal{F})} \}$ are orthonormal bases for these spaces, respectively.
The key task is to endow the finite-dimensional subspace
\begin{equation}
    \mcal{F}' = \vspan \left\{ \mcal{L}_\xi F \ : \ \xi\in\Lie(G),\ F\in\mcal{F} \right\} \subset C^{0}(\R^m;\R^n)
\end{equation}
with a convenient inner product.
Once this is done, an orthonormal basis $\{ u_1, \ldots, u_N \}$ for $\mcal{F}'$ can be constructed by applying a Gram-Schmidt process to the functions $\mcal{L}_{\xi_i} F_j$, which span $\mcal{F}'$.
Matrices for $\mcal{L}_\xi$ and $L_F$ are then easily obtained by computing
\begin{equation}
    \big[ \mat{\mcal{L}}_\xi \big]_{i,j} = \big\langle u_i, \ \mcal{L}_\xi F_j \big\rangle_{\mcal{F}'}, \qquad
    \big[ \mat{L}_F \big]_{i,k} = \big\langle u_i, \ \mcal{L}_{\xi_k} F \big\rangle_{\mcal{F}'}.
\end{equation}
The issue at hand is to choose the inner product on $\mcal{F}'$ in a way that makes computing these matrices easy.
A natural choice is to equip $\mcal{F}'$ with an $L^2(\R^m, \mu; \R^n)$ inner product where $\mu$ is a positive measure on $\R^m$ (such as a Guassian distribution) for which the $L^2$ norms of function in $\mcal{F}'$ are finite.
The problem is that it is usually challenging or inconvenient to compute the required integrals
\begin{equation}
    \big\langle \mcal{L}_{\xi_i} F_j,\ \mcal{L}_{\xi_k} F_l \big\rangle_{L^2(\mu)} = \int_{\R^m} \big(\mcal{L}_{\xi_i} F_j\big)(x)^T \big(\mcal{L}_{\xi_k} F_l\big)(x) \td\mu(x)
    \label{eqn:L2_mu_inner_prod_of_Lie_derivs}
\end{equation}
analytically.
In this section we discuss inner products that are easy to compute in practice.

\subsection{Numerical quadrature and Monte-Carlo}
When \eqref{eqn:L2_mu_inner_prod_of_Lie_derivs} cannot be computed analytically, 
numerical quadrature or Monte-Carlo can be used.
In both cases the integral is approximated by a weighted sum, yielding a semi-inner product
\begin{equation}
    \langle f, \ g \rangle_{L^2(\mu_M)} = \frac{1}{M}\sum_{i=1}^M w_i f(x_i)^T g(x_i)
    \label{eqn:empirical_L2_mu_inner_prod}
\end{equation}
that converges to $\langle f, \ g \rangle_{L^2(\mu)}$ as $M\to\infty$.
The following proposition means that we do not have to pass to the limit $M\to\infty$ in order to obtain a valid inner product defined by \eqref{eqn:empirical_L2_mu_inner_prod} on $\mcal{F}'$.
\begin{proposition}
    \label{prop:Monte_Carlo_eventually_gives_an_inner_product}
    Suppose that $\mcal{F}'$ is finite-dimensional and $\langle f, \ g \rangle_{L^2(\mu_M)} \to \langle f, \ g \rangle_{L^2(\mu)}$ as $M\to\infty$ for every $f,g\in\mcal{F}'$.
    Then there is an $M_0$ such that 
    \eqref{eqn:empirical_L2_mu_inner_prod} is an inner product on $\mcal{F}'$ for every $M \geq M_0$.
    We give a proof in Appendix~\ref{app:proofs_of_minor_results}.
\end{proposition}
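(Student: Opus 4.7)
The plan is to reduce positive-definiteness of the bilinear form on the finite-dimensional space $\mcal{F}'$ to a statement about Gram matrices converging to the identity. Since $\langle \cdot, \cdot \rangle_{L^2(\mu_M)}$ is already symmetric and bilinear on $\mcal{F}'$ by inspection, the only thing I need to verify for it to be an inner product is non-degeneracy, i.e.\ that $\langle f, f \rangle_{L^2(\mu_M)} > 0$ for every nonzero $f \in \mcal{F}'$.

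First I would fix an orthonormal basis $\{v_1, \ldots, v_N\}$ of $\mcal{F}'$ with respect to the true $L^2(\mu)$ inner product (which exists because $\mcal{F}'$ is assumed finite-dimensional and sits inside an $L^2(\mu)$ space on which the limiting form is a genuine inner product). Form the $N\times N$ Gram matrix $[G_M]_{ij} = \langle v_i, v_j \rangle_{L^2(\mu_M)}$. By the hypothesis of pointwise convergence applied to each of the finitely many pairs $(v_i, v_j)$, we have $G_M \to I$ entrywise as $M\to\infty$, and hence in any matrix norm because the space of $N\times N$ matrices is finite-dimensional.

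Next I would invoke continuity of the smallest eigenvalue $\lambda_{\min}: \Sym(N) \to \R$ on symmetric matrices. Since $\lambda_{\min}(I) = 1$, there is some $M_0$ such that $\lambda_{\min}(G_M) \geq 1/2$ for all $M \geq M_0$. For any nonzero $f = \sum_i c_i v_i \in \mcal{F}'$ with coefficient vector $\vect{c} \in \R^N$, we have
\begin{equation}
    \langle f, f \rangle_{L^2(\mu_M)} = \vect{c}^T G_M \vect{c} \geq \tfrac{1}{2} \|\vect{c}\|_2^2 > 0,
\end{equation}
so the form is positive definite, hence a genuine inner product, for all $M \geq M_0$.

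I do not expect a significant obstacle here; the proof is essentially a continuity argument on a finite-dimensional matrix space. The only mild subtlety is that the hypothesis gives only pairwise convergence rather than any uniform rate, but because $N = \dim \mcal{F}'$ is finite, pairwise convergence on the finitely many basis pairs is already enough to yield $G_M \to I$ in operator norm and apply eigenvalue continuity.
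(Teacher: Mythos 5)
Your proposal is correct and follows essentially the same route as the paper: form the Gram matrix of a basis of $\mcal{F}'$ under the empirical bilinear form, use pointwise convergence of finitely many entries, and invoke continuity of the smallest eigenvalue to conclude eventual positive-definiteness. The only cosmetic difference is that you normalize the basis so the limiting Gram matrix is the identity, whereas the paper uses an arbitrary basis and simply notes the limit is positive-definite; both are the same argument.
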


In Monte-Carlo approximation, the samples $x_i$ are drawn independently from a distribution $\nu$ with the assumption that both $\mu$ and $\nu$ are $\sigma$-finite and $\mu$ is absolutely continuous with respect to $\nu$.
The weights are given by the Radon-Nikodym derivative $w_i = \frac{\td \mu}{\td \nu}(x_i)$.
Then for every $f, g\in L^2(\mu)$ the approximate integral converges $\langle f, \ g \rangle_{L^2(\mu_M)} \to \langle f, \ g \rangle_{L^2(\mu)}$ as $M\to\infty$ almost surely thanks to the strong law of large numbers (see Theorem~7.7 in \cite{Koralov2012theory}).
By the proposition, there is almost surely a finite $M_0$ such that \eqref{eqn:empirical_L2_mu_inner_prod} is an inner product on $\mcal{F}'$ for every $M \geq M_0$.

\subsection{Subspaces of polynomials}
Here we consider the special case when $\mcal{F}$ is a finite-dimensional subspace consisting of polynomial functions $\R^m \to \R^n$ and the action \eqref{eqn:Theta_action_on_VW} is defined by representations as in \eqref{eqn:VW_rep_actions}.
Examining \eqref{eqn:Lie_derivative_with_reps}, it is evident that $\mcal{L}_\xi F$ is also a polynomial function $\R^m \to \R^n$ with degree not greater than that of $F\in\mcal{F}$.
Thus, $\mcal{F}'$ is also a space of polynomial functions with degree not exceeding the maximum degree in $\mcal{F}$.
Since a polynomial that vanishes on an open set must be identically zero, we can take the integrals defining the inner product in \eqref{eqn:empirical_L2_mu_inner_prod} over a cube, such as $[0,1]^m\subset \R^m$.
This is convenient because polynomial integrals over the cube can be calculated analytically.

We can also use the sample-based inner product in \eqref{eqn:empirical_L2_mu_inner_prod} with randomly chosen points $x_i$ and positive weights $w_i$.
The following proposition tells us exactly how many sample points we need.
\begin{proposition}
    \label{prop:generic_polynomial_map_sampling}
    Let $\mcal{F}'$ be a space of real polynomial functions $\R^m \to \R^n$ and let $\pi_i:\R^n \to \R$ be the $i$th coordinate projection $\pi(c_1, \ldots, c_n) = c_i$.
    Let 
    \begin{equation}
        M \geq M_0 := \max_{1\leq i \leq n} \dim(\pi_i (\mcal{F}'))
    \end{equation}
    and let $w_1, \ldots, w_M > 0$ be positive weights.
    Then for almost every set of points $(x_1, \ldots, x_M) \in (\R^m)^M$ with respect to Lebesgue measure, \eqref{eqn:empirical_L2_mu_inner_prod} is an inner product on $\mcal{F}'$.  
    We give a proof in Appendix~\ref{app:generic_polynomial_map_sampling}.
\end{proposition}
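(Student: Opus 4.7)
The plan is to verify the three axioms of an inner product for the form defined by \eqref{eqn:empirical_L2_mu_inner_prod}. Bilinearity and symmetry are immediate from the sum structure. Since $w_j > 0$, we always have $\langle f, f\rangle_{L^2(\mu_M)} = \frac{1}{M}\sum_{j=1}^M w_j \|f(x_j)\|^2 \geq 0$, so only positive-definiteness requires the hypothesis on $M$. If $\langle f, f\rangle_{L^2(\mu_M)} = 0$ then $f(x_j) = 0$ for every $j$, which means $\pi_i(f)(x_j) = 0$ for every coordinate index $i$ and sample index $j$. Hence it suffices to exhibit a full-measure set of tuples $(x_1, \ldots, x_M) \in (\R^m)^M$ on which the evaluation maps $\mathrm{ev}_i : V_i \to \R^M$, $g \mapsto (g(x_1), \ldots, g(x_M))$, with $V_i := \pi_i(\mcal{F}')$, are simultaneously injective for $i = 1, \ldots, n$.

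Fix $i$, put $d_i := \dim V_i \leq M_0 \leq M$ (the case $d_i = 0$ is trivial), and pick a basis $g_1, \ldots, g_{d_i}$ of $V_i$. Injectivity of $\mathrm{ev}_i$ is equivalent to the $M \times d_i$ matrix $[g_k(x_j)]$ having rank $d_i$, and whenever the rank drops every $d_i \times d_i$ minor must vanish. I would focus on one specific minor, $\Delta_i(y_1, \ldots, y_{d_i}) := \det[g_k(y_l)]_{l,k=1}^{d_i}$, and show by induction on $k$ that it is not identically zero as a polynomial on $(\R^m)^{d_i}$. Given $y_1, \ldots, y_{k-1}$ making the leading $(k-1)\times(k-1)$ block invertible, cofactor-expand the candidate $k \times k$ determinant along its last row to obtain a polynomial $p(y) = \sum_{l=1}^{k} c_l\, g_l(y)$ whose coefficient of $g_k$ equals that nonzero $(k-1)\times(k-1)$ subdeterminant; linear independence of $g_1, \ldots, g_k$ then forces $p \not\equiv 0$, producing $y_k$ that extends the chain.

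Once $\Delta_i \not\equiv 0$, its zero locus in $(\R^m)^{d_i}$, and the corresponding cylinder inside $(\R^m)^M$, carries zero Lebesgue measure. The bad set $B_i \subset (\R^m)^M$ on which $\mathrm{ev}_i$ fails to be injective is contained in this cylinder, since rank-deficiency forces every $d_i \times d_i$ minor---in particular $\Delta_i$---to vanish; therefore $B_i$ is null, and so is the finite union $\bigcup_{i=1}^n B_i$. On its complement the form $\langle \cdot, \cdot \rangle_{L^2(\mu_M)}$ is positive definite, completing the proof. The main obstacle is the inductive non-vanishing of $\Delta_i$: the rest is a routine reduction from injective evaluation on each coordinate subspace to non-vanishing of a Vandermonde-type determinant, followed by the standard fact that a nonzero polynomial's zero set is Lebesgue-null.
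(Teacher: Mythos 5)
Your proposal is correct and takes essentially the same route as the paper: both reduce to showing that, for each coordinate projection $\pi_i$, the set of tuples on which evaluation fails to be injective on $\pi_i(\mcal{F}')$ is Lebesgue-null, both produce a polynomial whose non-vanishing certifies injectivity, both establish that this polynomial is not identically zero by inductively building a witness tuple from linear independence, and both conclude by taking the finite union over $i$. The only cosmetic difference is that the paper works with the full Gram determinant $\det(\mat{T}^T\mat{T})$ of the $M \times d_i$ evaluation matrix (a polynomial in all $M$ points, whose non-constancy is then invoked), whereas you fix the $d_i \times d_i$ minor from the first $d_i$ points; your choice makes the cylinder structure and the measure-zero conclusion a bit more immediate, but the substance is identical.
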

This means that we can draw $M \geq M_0$ sample points independently from any absolutely continuous measure (such as a Gaussian distribution or the uniform distribution on a cube), and with probability one,  \eqref{eqn:empirical_L2_mu_inner_prod} will be an inner product on $\mcal{F}'$.
When $\mcal{F}$ consists of polynomials with degree at most $d$, then it is sufficient to take
\begin{equation}
    M \geq 
    \left\vert\left\{ (p_0, \ldots, p_m) \in \mathbb{N}_0^{m+1} \ : \ p_0 + \cdots + p_m = d \right\}\right\vert
    =  \binom{d + m}{m}.
\end{equation}

\section{Numerical experiments using new symmetry-promoting penalties}
\label{sec:numerical_experiments}
We perform several numerical experiments to evaluate our proposed approach for promoting Lie group symmetry using nuclear-norm regularization.
All of our code was written in Python and is available at \url{https://github.com/nzolman/unified-symmetry-ml}.

\subsection{Sample complexity to recover symmetric polynomials}
\label{subsec:symmetric_recovery}
Can promoting symmetry help us learn an unknown symmetric function using less data? 
To begin answering this question, we perform numerical experiments studying the amount of sampled data needed to recover structured polynomial functions on $\R^n$ of the form
\begin{align}
    F_{\text{rad}}(x) &= \varphi_{\text{rad}}\big( \| x - c_1 \|_2^2,\ \ldots,\  \| x - c_r \|_2^2 \big) \quad \mbox{and} \label{eqn:multiradial_fun} \\
    F_{\text{lin}}(x) &= \varphi_{\text{lin}}\big( u_1^T x,\ \ldots,\ u_r^T x\big) \label{eqn:low_dim_fun}.
\end{align}
These possess various rotation and translation invariances when $r < n$, as characterized in detail below by Proposition~\ref{prop:symmetries_of_basic_functions} and its corollaries.

We aim to recover the unknown function $F_*$ within the space $\mcal{P}_d(\R^n)$ of polynomial functions on $\R^n$ with degrees up to $d = \deg F_*$ based on the values $y_j = F_*(x_j)$ at sample points $x_1, \ldots, x_N$.
Our approximation $\hat{F}$ of $F_*$ is computed by solving the convex optimization problem
\begin{equation}\label{eqn:symmetric_recovery_relaxed_optimization_problem}
    \minimize_{F \in \mcal{P}_d(\R^n)} R_{\mathfrak{g},*}(F) = \| L_{F} \|_* 
    \qquad \text{s.t.} \qquad
    F(x_j) = y_j, \quad j=1, \ldots, N,
\end{equation}
where $G$ is a candidate Lie group of symmetries acting on $\R^n$.
This was done using the CVXPY software package developed by \cite{Diamond2016cvxpy, Agrawal2018rewriting}.
The nuclear norm in \eqref{eqn:symmetric_recovery_relaxed_optimization_problem} was defined with respect to inner products on the corresponding Lie algebras given by $\langle \xi, \ \eta \rangle_{\Lie(G)} = \Tr(\xi^T \eta)$.
As described in Section~\ref{sec:discretization}, the inner product on the space $\mcal{F}'$ containing the ranges of every $L_F$ was defined by \eqref{eqn:empirical_L2_mu_inner_prod} with unit weights $w_i = 1$ and $M = \dim \mcal{P}_d(\R^n)$ points drawn uniformly from the cube $[-1,1]^n$.
When using the group of translations $G = (\R^n, +)$, the Lie derivative always lowers the degree by $1$, so we only use $M = \dim \mcal{P}_{d-1}(\R^n)$ points in this case.
Note that the discretization points are not the same as the sample points $x_j$ in \eqref{eqn:symmetric_recovery_relaxed_optimization_problem}.

The dimensions of the resulting discretized operators are summarized in Table~\ref{tab:L_tensor_shapes_for_polynomial_recovery}.
One of the largest experiments we performed is shown in Figure~\ref{fig:linear_features_recovery_SEn} with $\deg(\phi_{\text{lin}}) = 2$, $\dim(x) = 30$, and $G = \SE(30)$. 
We computed and stored the entries of the $496 \times 496 \times 465$ tensor $[\mat{L}_{F_i}]_{j,k}$, which were accessed by CVXPY. 
This problem took roughly $40$ minutes to solve using an Intel Xeon Gold 6230 CPU.
On the other hand, the higher-dimensional problem with $\dim(x) = 50$ shown in Figure~\ref{fig:linear_features_recovery_Tn} was solved in less than $1$ minute using the translation group $G = (\R^{50}, +)$.
Here, the tensor had dimensions $1326\times 51 \times 50$, meaning the problem had more optimization variables ($1326$), but involved nuclear norms of much smaller $51 \times 50$ matrices.
\begin{table}[h]
    \centering
    \begin{tabular}{c|c|c|c}
         Group & $\mbox{$i$-dim.} = \dim \mcal{F} = \dim \mcal{P}_d(\R^n)$ & $\mbox{$j$-dim.} = \dim \mcal{F}' = M$ & $\mbox{$k$-dim.} = \dim G$ \\
         \hline
         $\SE(n)$ & $\binom{n+d}{d}$ & $\binom{n+d}{d}$ & $\frac{n(n+1)}{2}$ \\
         $(\R^n, +)$ & $\binom{n+d}{d}$ & $\binom{n+d-1}{d-1}$ & $n$
    \end{tabular}
    \caption{Dimensions of discretized rank-3 tensors $[\mat{L}_{F_i}]_{j,k}$ for symmetric polynomial recovery.
    }
    \label{tab:L_tensor_shapes_for_polynomial_recovery}
\end{table}

To study the sample complexity for \eqref{eqn:symmetric_recovery_relaxed_optimization_problem} to recover functions in the form of $F_{\text{rad}}$ and $F_{\text{lin}}$, we perform multiple experiments using random realizations of these functions sampled at random points $x_j$.
In each experiment, the vectors $c_i$ were drawn uniformly from the cube $[-1,1]^n$ and the vectors $u_i$ were formed from the columns of a random $n\times r$ orthonormal matrix (specifically, the left singular vectors of an $n\times r$ matrix with standard Gaussian entries).
The coefficients of $\varphi_{\text{rad}}$ and $\varphi_{\text{lin}}$ in a basis of monomials up to a specified degree were sampled uniformly from the interval $[0,1]$.
This yielded random polynomial functions $F_{\text{rad}}$ and $F_{\text{lin}}$ with degrees $\deg F_{\text{rad}} = 2\deg \varphi_{\text{rad}}$ and $\deg F_{\text{lin}} = \deg \varphi_{\text{lin}}$.

The sample points $x_j$ for each experiment were drawn uniformly from the cube $[-1,1]^n$.
A total of $\dim \mcal{P}_d(\R^n)$ with $d = \deg F_*$ sample points were drawn, which is sufficient to recover the function almost surely regardless of regularization.
For each experiment we determine the smallest $N_*$ so that recovery is achieved by \eqref{eqn:symmetric_recovery_relaxed_optimization_problem} with $\hat{F} = F_*$ using the sample points $x_1, \ldots, x_N$ for every $N \geq N_*$.
To be precise, successful recovery is declared when all coefficients describing $\hat{F}$ and $F_*$ in the monomial basis for $\mcal{P}_d(\R^n)$ agree to a tolerance of $5\times 10^{-3}$ times the magnitude of the largest coefficient of $F_*$.
The range of values for $N_*$ across $10$ such random experiments provides an estimate of the sample complexity.
In Figures~\ref{fig:radial_features_recovery_SEn},~\ref{fig:linear_features_recovery_SEn},~and~\ref{fig:linear_features_recovery_Tn} we plot the range of values for $N_*$ as shaded regions with the average values displayed as a solid lines.

In Figure~\ref{fig:radial_features_recovery_SEn}, we use the special Euclidean group $G=SE(n)$ as a candidate group to recover functions of the form $F_{\text{rad}}$ with the degree of $\varphi_{\text{rad}}$ specified.
The number of radial features $r \leq \deg \varphi_{\text{rad}}$ is selected in accordance with Corollary~\ref{cor:symmetries_of_polynomial_Fa} in order to ensure that $\sym_G(F_{\text{rad}}) = \mfrak{g}_{\text{rad}}$ has the known form and dimension stated in Proposition~\ref{prop:symmetries_of_basic_functions}.
The symmetry-promoting regularization significantly reduces the number of samples needed to recover $F_{\text{rad}}$ compared to the number of samples needed to solve the linear system specifying this function within the space of polynomials with the same or lesser degree.
As the number of radial features $r$ increases, so does the sample complexity to recover $F_{\text{rad}}$.
This is likely due to the decreased dimension of $\sym_G(F_{\text{rad}})$.

\begin{figure}
    \centering
    \begin{tikzonimage}[trim=0 0 0 0, clip=true, width=0.75\textwidth]{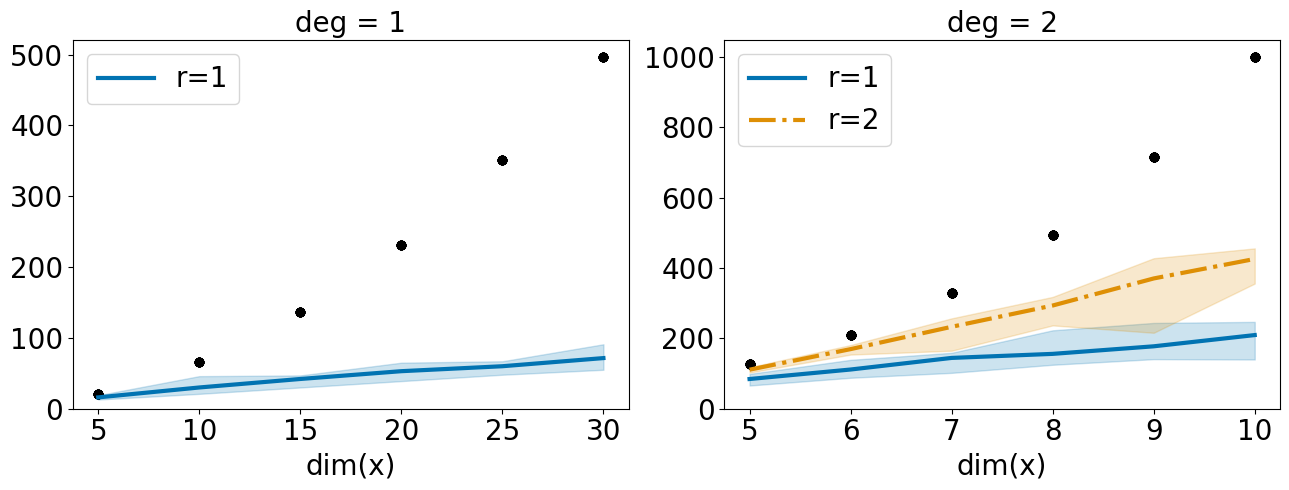}
        \node[rotate=90] at (-0.015, 0.5) {samples, $N_*$};
    \end{tikzonimage}
    \vspace{-.1in}
    \caption{Sample complexity to recover polynomial functions $F_{\text{rad}}$ of $r$ radial features, i.e., \eqref{eqn:multiradial_fun} with polynomial $\varphi_{\text{rad}}$ of the specified degree, by solving \eqref{eqn:symmetric_recovery_relaxed_optimization_problem} using the special Euclidean group $G = SE(n)$. Black dots indicate the number of dictionary functions, $\dim \mcal{P}_d(\R^n)$, hence the number of samples needed to recover $F_{\text{rad}}$ without regularization.
      }
    \label{fig:radial_features_recovery_SEn}
\end{figure}

\begin{figure}
    \centering
    \begin{tikzonimage}[trim=0 0 0 0, clip=true, width=0.95\textwidth]{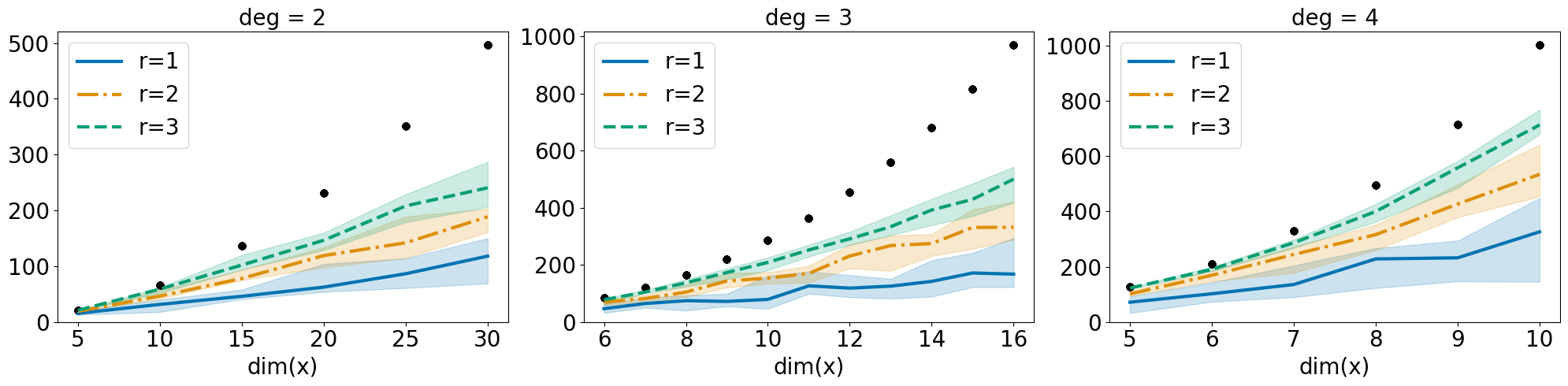}
        \node[rotate=90] at (-0.015, 0.5) {samples, $N_*$};
    \end{tikzonimage}
    \vspace{-.1in}
    \caption{Sample complexity to recover polynomial functions $F_{\text{lin}}$ of $r$ linear features, i.e., \eqref{eqn:low_dim_fun} with polynomial $\varphi_{\text{lin}}$ of the specified degree, by solving \eqref{eqn:symmetric_recovery_relaxed_optimization_problem} using the special Euclidean group $G = SE(n)$. Black dots indicate the number of dictionary functions, $\dim \mcal{P}_d(\R^n)$, hence the number of samples needed to recover $F_{\text{lin}}$ without regularization.
      }
    \label{fig:linear_features_recovery_SEn}
\end{figure}

\begin{figure}
    \centering
    \begin{tikzonimage}[trim=0 0 0 0, clip=true, width=0.95\textwidth]{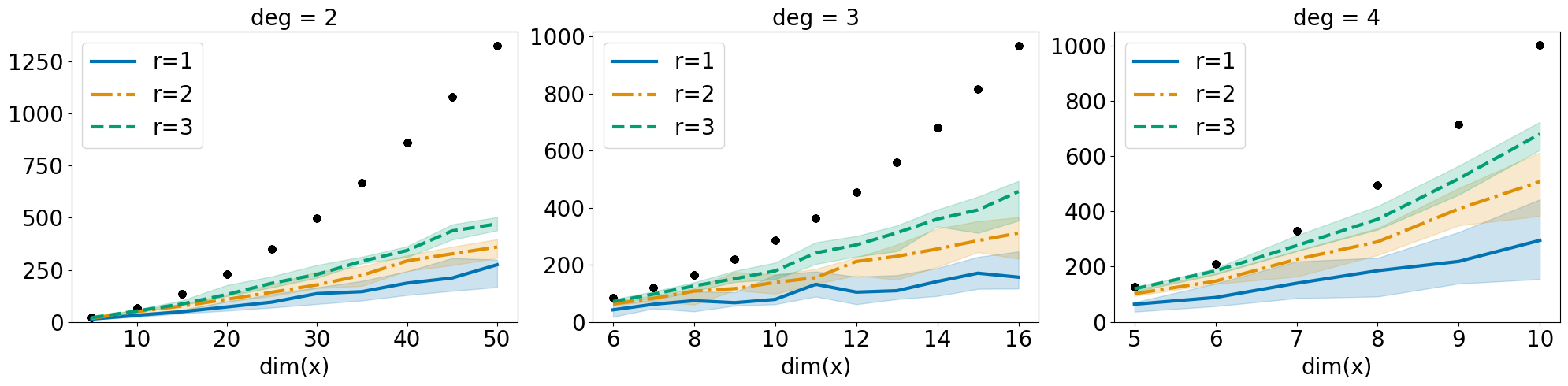}
        \node[rotate=90] at (-0.015, 0.5) {samples, $N_*$};
    \end{tikzonimage}
    \vspace{-.1in}
    \caption{Analogue of Fig.~\ref{fig:linear_features_recovery_SEn} using the group of translations $G = (\R^n, +)$.
      }
    \label{fig:linear_features_recovery_Tn}
\end{figure}

In Figures~\ref{fig:linear_features_recovery_SEn}~and~\ref{fig:linear_features_recovery_Tn}, we use $G = SE(n)$ and the group of translations $G = (\R^n, +)$ as candidate symmetry groups to recover function of the form $F_{\text{lin}}$ with the degree of $\varphi_{\text{lin}}$ specified.
Obviously, $F_{\text{lin}}$ has an $(n-d)$-dimensional subgroup of translation symmetries orthogonal to $\vspan\{ u_1, \ldots, u_r \}$.
By Corollary~\ref{cor:symmetries_of_polynomial_Fb}, choosing $\deg \varphi_{\text{lin}} \geq 2$ is sufficient to ensure that $\sym_{\SE(n)}(F_{\text{lin}}) = \mfrak{g}_{\text{lin}}$ has the known form and dimension stated in Proposition~\ref{prop:symmetries_of_basic_functions}.
The results in Figures~\ref{fig:linear_features_recovery_SEn}~and~\ref{fig:linear_features_recovery_Tn} show that the symmetry-promoting regularization reduces the sample complexity to recover $F_{\text{lin}}$.
Moreover, fewer samples are needed when $F_{\text{lin}}$ depends on fewer linear features, as might be expected because the dimension of $\sym_G(F_{\text{rad}})$ increases as $r$ decreases.

\begin{proposition}
    \label{prop:symmetries_of_basic_functions}
    Let $r \leq n$ and suppose that
    $\{ c_k - c_1 \}_{k=2}^r$ and $\{ u_k \}_{k=1}^r$ are sets of linearly-independent vectors in $\R^n$.
    Then, $\sym_{\SE(n)}(F_{\text{rad}})$ contains the $\frac{1}{2}(n - r)(n - r + 1)$-dimensional subalgebra
    \begin{equation}
        \mathfrak{g}_{\text{rad}} = \left\{ 
            \begin{bmatrix}
                S & v \\
                0 & 0
            \end{bmatrix} \ : \ 
            S^T=-S \ \mbox{and} \ S c_1 = \cdots = S c_r = - v
        \right\}
    \end{equation}
    and $\sym_{\SE(n)}(F_{\text{lin}})$ contains the $\frac{1}{2}(n - r)(n - r + 1)$-dimensional subalgebra
    \begin{equation}
        \mathfrak{g}_{\text{lin}} = \left\{ 
            \begin{bmatrix}
                S & v \\
                0 & 0
            \end{bmatrix} \ : \ 
            S^T=-S, \ S u_1 = \cdots = S u_r = 0, \ \mbox{and} \ u_1^T v = \cdots = u_r^T v = 0
        \right\}.
    \end{equation}
    Either every polynomial $\varphi_{\text{rad}}$ with degree $\leq d$ gives $\sym_{\SE(n)}(F_{\text{rad}}) \neq \mathfrak{g}_{\text{rad}}$ or
    the set of polynomials $\varphi_{\text{rad}}$ with degree $\leq d$ satisfying $\sym_{\SE(n)}(F_{\text{rad}}) \neq \mathfrak{g}_{\text{rad}}$ is a set of measure zero.
    Likewise, for $\varphi_{\text{lin}}$, $F_{\text{lin}}$, and $\mathfrak{g}_{\text{lin}}$.
    See Appendix~\ref{app:proofs_of_minor_results} for a proof.
\end{proposition}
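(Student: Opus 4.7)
\medskip
\noindent\textbf{Proof proposal for Proposition~\ref{prop:symmetries_of_basic_functions}.}
The plan has three parts: verify the claimed inclusions by direct computation with the Lie derivative, compute the dimensions by elementary linear algebra, and handle the genericity statement via an algebraic rank argument. For the first part, I would apply Theorem~\ref{thm:symmetries_of_a_map} together with the formula \eqref{eqn:Lie_derivative_with_reps} for the Lie derivative under the $\SE(n)$ representation on $\R^n$ embedded in $\R^{n+1}$. Writing $\xi = \begin{bmatrix} S & v \\ 0 & 0 \end{bmatrix}$ gives an infinitesimal generator of the form $\hat{\theta}(\xi)_x = -(Sx + v)$, so with trivial codomain action $\hat{T}(\xi) = 0$ the Lie derivative of a scalar $F$ is $\mcal{L}_\xi F(x) = -\nabla F(x)^T(Sx+v)$. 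For $F_{\text{rad}}$, the chain rule gives $\nabla F_{\text{rad}}(x) = 2\sum_k \partial_k \varphi_{\text{rad}} \cdot (x - c_k)$, and the hypothesis $Sc_k = -v$ for each $k$ lets me rewrite $Sx + v = S(x - c_k)$, so each summand $(x-c_k)^T S (x-c_k) = 0$ by skew-symmetry; hence $\mathfrak{g}_{\text{rad}} \subset \sym_{\SE(n)}(F_{\text{rad}})$. For $F_{\text{lin}}$, $\nabla F_{\text{lin}}(x) = \sum_k \partial_k \varphi_{\text{lin}} \cdot u_k$, and the hypotheses $Su_k = 0$ (hence $u_k^T S = 0$) and $u_k^T v = 0$ yield $u_k^T(Sx+v) = 0$ for every $k$, giving $\mathfrak{g}_{\text{lin}} \subset \sym_{\SE(n)}(F_{\text{lin}})$.

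For the dimensions, I would let $W = \vspan\{c_k - c_1\}_{k=2}^r$ (resp. $U = \vspan\{u_k\}_{k=1}^r$). The condition on $\mathfrak{g}_{\text{rad}}$ is equivalent to $S|_W = 0$, since $Sc_k = Sc_1$ for all $k$ is equivalent to $S$ killing $W$, and then $v = -Sc_1$ is determined. Skew-symmetry combined with $S|_W = 0$ forces $S(\R^n) \subset W^\perp$, so $S$ is uniquely determined by its (skew) restriction to $W^\perp$, which has dimension $n-r+1$; thus $\dim \mathfrak{g}_{\text{rad}} = \binom{n-r+1}{2} = \tfrac{1}{2}(n-r)(n-r+1)$. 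For $\mathfrak{g}_{\text{lin}}$, the constraint $S|_U = 0$ with $\dim U = r$ gives $\binom{n-r}{2}$ degrees of freedom for $S$, and the constraint $v \in U^\perp$ gives $n-r$ for $v$, summing to $\tfrac{1}{2}(n-r)(n-r+1)$.

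For the genericity dichotomy, fix any linear complement $\mathfrak{m}$ to $\mathfrak{g}_{\text{rad}}$ in $\se(n)$. Then $\sym_{\SE(n)}(F_{\text{rad}}) = \mathfrak{g}_{\text{rad}}$ iff the restricted operator $L_{F_\varphi}|_{\mathfrak{m}}$ has trivial nullspace. By the Lie-derivative formula, the entries of $L_{F_\varphi}|_{\mathfrak{m}}$ (in any fixed basis of $\mathfrak{m}$ and any finite-dimensional polynomial basis containing the range) are linear in the coefficients of $\varphi_{\text{rad}}$. The condition that $L_{F_\varphi}|_{\mathfrak{m}}$ has nontrivial kernel is the simultaneous vanishing of all $\dim(\mathfrak{m})\times \dim(\mathfrak{m})$ minors of this matrix, hence a polynomial condition on the coefficients of $\varphi_{\text{rad}}$. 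This defines an algebraic subset of the coefficient space of degree-$\leq d$ polynomials; either it is the whole space (every $\varphi_{\text{rad}}$ gives strict containment), or it is a proper algebraic subset, which has Lebesgue measure zero. The identical argument applies to $\varphi_{\text{lin}}$, $F_{\text{lin}}$, and $\mathfrak{g}_{\text{lin}}$.

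The routine computations in the first two paragraphs are straightforward; the main obstacle is making the third-paragraph argument airtight without appealing to a specific discretization. I expect the cleanest route is to observe that $\mcal{L}_\xi F_{\text{rad}}$ is itself a polynomial in $x$ whose coefficients are bilinear in $(\xi,\varphi_{\text{rad}})$, so $L_{F_\varphi}|_{\mathfrak{m}}$ may be identified with a linear map between two finite-dimensional polynomial spaces whose matrix entries are linear in the coefficients of $\varphi_{\text{rad}}$; the dichotomy then follows from the standard fact that the zero set of a nonzero polynomial in $\R^D$ has Lebesgue measure zero.
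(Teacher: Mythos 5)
Your proposal is correct and reaches the same conclusion, but it differs from the paper's proof in two structural ways worth noting. For the containment, the paper does not compute $\mathcal{L}_\xi F_{\text{rad}}$ directly; instead it introduces the auxiliary map $\tilde{F}_{\text{rad}}(x) = (\|x-c_1\|^2, \ldots, \|x-c_r\|^2)$ with the trivial codomain action and computes the \emph{exact} symmetry algebra $\sym_{\SE(n)}(\tilde{F}_{\text{rad}}) = \mathfrak{g}_{\text{rad}}$, then uses $\sym(\tilde{F}_{\text{rad}}) \subset \sym(F_{\text{rad}})$ by composition. This buys something your direct computation does not: since $\mathfrak{g}_{\text{rad}}$ is realized as the symmetry algebra of a function, Theorem~\ref{thm:symmetries_of_a_map} automatically certifies it is a Lie subalgebra (closed under bracket), which is asserted in the proposition but left unverified in your write-up --- your chain-rule argument only establishes the subspace inclusion $\mathfrak{g}_{\text{rad}} \subset \sym_{\SE(n)}(F_{\text{rad}})$. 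This is an easy hand computation (indeed $[\xi_1,\xi_2] = \begin{bmatrix} S_1S_2 - S_2S_1 & S_1 v_2 - S_2 v_1 \\ 0 & 0\end{bmatrix}$, and $S_i c_k = -v_i$ implies $(S_1 S_2 - S_2 S_1) c_k = -(S_1 v_2 - S_2 v_1)$), but it is a step you should not silently skip. For the genericity dichotomy, the paper and your proposal share the key insight that $\mathcal{L}_\xi F_\varphi$ is bilinear in $(\xi, \varphi)$, but the paper builds a single scalar polynomial invariant, $\det(\mathbf{G}(F_{\text{rad}}))$ where $\mathbf{G}$ is a Gram matrix of Lie derivatives integrated over the cube $[0,1]^n$, and invokes the measure-zero property of its level set; your version instead uses the simultaneous vanishing of all $\dim(\mathfrak{m}) \times \dim(\mathfrak{m})$ minors. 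Both are correct, with the Gram determinant being marginally cleaner (one polynomial instead of a finite family) but requiring the observation that a polynomial vanishing on the cube vanishes identically, whereas your minor-based version works with any fixed polynomial basis directly.
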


\begin{corollary}
    \label{cor:symmetries_of_polynomial_Fa}
    With the same hypotheses as Proposition~\ref{prop:symmetries_of_basic_functions}, let $d \geq r$.
    The set of polynomials $\varphi_{\text{rad}}$ with degree $\leq d$ satisfying $\sym_{\SE(n)}(F_{\text{rad}}) \neq \mathfrak{g}_{\text{rad}}$ is a set of measure zero.
    A proof is given in Appendix~\ref{app:proofs_of_minor_results}.
\end{corollary}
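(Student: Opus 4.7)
The plan is to exploit the dichotomy already built into Proposition~\ref{prop:symmetries_of_basic_functions}: for each degree bound $d$, either every polynomial $\varphi_{\text{rad}}$ with $\deg\varphi_{\text{rad}} \le d$ produces a function whose symmetry subalgebra strictly contains $\mathfrak{g}_{\text{rad}}$, or the set of exceptional polynomials has Lebesgue measure zero in the finite-dimensional space of polynomials of degree $\le d$ in $r$ variables. It therefore suffices to exhibit a single $\varphi_{\text{rad}}$ of degree $\le d$ for which $\sym_{\SE(n)}(F_{\text{rad}}) = \mathfrak{g}_{\text{rad}}$; the hypothesis $d \ge r$ is exactly what makes such an exhibition possible.

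My candidate is the degree-$r$ polynomial $\varphi_{\text{rad}}(s_1,\ldots,s_r) = s_1 s_2 \cdots s_r$, giving
\begin{equation*}
    F_{\text{rad}}(x) \;=\; \prod_{k=1}^{r} \|x - c_k\|^2 .
\end{equation*}
The zero set of this $F_{\text{rad}}$ is exactly the finite set $Z = \{c_1,\ldots,c_r\}$ (affinely independent by hypothesis). Proposition~\ref{prop:symmetries_of_basic_functions} already gives the inclusion $\mathfrak{g}_{\text{rad}} \subseteq \sym_{\SE(n)}(F_{\text{rad}})$, so only the reverse inclusion needs an argument.

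For the reverse inclusion, let $\xi = (S,v) \in \sym_{\SE(n)}(F_{\text{rad}})$ with $S$ skew-symmetric. By Theorem~\ref{thm:symmetries_of_a_map}, the one-parameter group $\theta_{\exp(t\xi)}$ preserves every level set of $F_{\text{rad}}$, in particular the zero set $Z$. Thus $t \mapsto \theta_{\exp(t\xi)}(c_k)$ is a continuous map $\R \to Z$ whose value at $t=0$ is $c_k$; because $Z$ is discrete and $\R$ is connected, this curve is constant, so $\theta_{\exp(t\xi)}(c_k) = c_k$ for every $t\in\R$ and every $k$. Differentiating at $t=0$ yields $\hat{\theta}(\xi)_{c_k} = 0$ for all $k$, and under the standard action of $\SE(n)$ on $\R^n$ this infinitesimal fixed-point condition reads $Sc_k + v = 0$, i.e., $Sc_k = -v$ for every $k = 1,\ldots,r$. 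This is precisely the defining condition of $\mathfrak{g}_{\text{rad}}$, so $\xi \in \mathfrak{g}_{\text{rad}}$ and equality of the two subalgebras holds. This rules out the first horn of the dichotomy, forcing the exceptional set to have measure zero.

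The only real obstacle is bookkeeping: tracking sign conventions between the infinitesimal generator of $\SE(n)$ acting on $\R^n$ (as set up in \eqref{eqn:SE_action}) and the algebraic condition $Sc_k = -v$ that defines $\mathfrak{g}_{\text{rad}}$. Once that matches up, the substantive content collapses to the one-line topological remark that a continuous curve into a discrete space is constant.
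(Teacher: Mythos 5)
Your proof is correct and takes a genuinely different route from the paper's. Both rest on the dichotomy built into Proposition~\ref{prop:symmetries_of_basic_functions}, so the task reduces to exhibiting a single polynomial $\varphi_{\text{rad}}$ of degree $\leq d$ with $\sym_{\SE(n)}(F_{\text{rad}}) = \mathfrak{g}_{\text{rad}}$; you and the paper each produce a degree-$r$ witness, which is exactly where the hypothesis $d \geq r$ enters. The paper chooses $\varphi_{\text{rad}}(z)=z_1+z_2^2+\cdots+z_r^r$, so $F_{\text{rad}}(x)=\sum_{k=1}^r \|x-c_k\|_2^{2k}$, and runs a downward induction on polynomial degree: vanishing of the highest-degree part of $\mcal{L}_\xi F_{\text{rad}}$ forces $Sc_r+v=0$, the next-highest forces $Sc_{r-1}+v=0$, and so on. You instead choose $\varphi_{\text{rad}}(s)=s_1\cdots s_r$, so $F_{\text{rad}}(x)=\prod_{k=1}^r \|x-c_k\|^2$, whose zero set is the finite set $\{c_1,\ldots,c_r\}$; since the codomain action is trivial, any symmetry preserves this level set, a continuous curve from a connected space into a finite set is constant, and differentiating the now-stationary orbit of each $c_k$ yields $\hat{\theta}(\xi)_{c_k}=0$, i.e.\ $Sc_k+v=0$, all at once. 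Your argument is more geometric and replaces the leading-coefficient bookkeeping with the observation that the orbit of an isolated zero must be stuck; it relies on the triviality of the codomain action (so that symmetry is ordinary invariance of level sets), but that is already the setting fixed by Proposition~\ref{prop:symmetries_of_basic_functions}, so nothing is lost relative to the paper's approach.
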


\begin{corollary}
    \label{cor:symmetries_of_polynomial_Fb}
    With the same hypotheses as Proposition~\ref{prop:symmetries_of_basic_functions}, let $d \geq 2$.
    The set of polynomials $\varphi_{\text{lin}}$ with degree $\leq d$ satisfying $\sym_{\SE(n)}(F_{\text{lin}}) \neq \mathfrak{g}_{\text{lin}}$ is a set of measure zero.
    A proof is given in Appendix~\ref{app:proofs_of_minor_results}.
\end{corollary}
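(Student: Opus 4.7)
The plan is to use the dichotomy from Proposition~\ref{prop:symmetries_of_basic_functions}: the set of degree-$\leq d$ polynomials $\varphi_{\text{lin}}$ for which $\sym_{\SE(n)}(F_{\text{lin}}) \neq \mathfrak{g}_{\text{lin}}$ is either all such polynomials or a set of measure zero. To deduce the corollary it therefore suffices to exhibit a single witness polynomial of degree $\leq d$ for which $\sym_{\SE(n)}(F_{\text{lin}}) = \mathfrak{g}_{\text{lin}}$, and since $d \geq 2$ it will be enough to construct a quadratic witness.

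I will take $\varphi_{\text{lin}}(y) = y^T M y$ for a symmetric, invertible $r \times r$ matrix $M$ to be chosen, so that $F_{\text{lin}}(x) = x^T A x$ with $A = U M U^T$ where $U = [u_1 \mid \cdots \mid u_r]$. The right $\SE(n)$ action from \eqref{eqn:SE_action} has infinitesimal generator $\hat{\theta}(\xi)_x = -Sx - v$ at $\xi = \begin{bmatrix} S & v \\ 0 & 0 \end{bmatrix}$ with $S^T = -S$, so a short computation using \eqref{eqn:Lie_derivative_of_real_map} yields
\begin{equation*}
    \mcal{L}_\xi F_{\text{lin}}(x) = -2\, x^T A S x - 2\, x^T A v,
\end{equation*}
which vanishes identically in $x$ if and only if $[S, A] = 0$ and $A v = 0$.

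The crux is to pick $M$ so that these two algebraic conditions reduce exactly to the defining conditions of $\mathfrak{g}_{\text{lin}}$. Let $V = \vspan\{u_1, \ldots, u_r\}$. Because $U$ has full column rank and $M$ is invertible, $A$ is a symmetric operator with $\image(A) = V$ and $\ker(A) = V^\perp$, so the condition $Av = 0$ is already equivalent to $u_i^T v = 0$ for every $i$, regardless of the choice of $M$. Writing the thin QR decomposition $U = QR$, the operator $A|_V$ is represented in the $Q$-basis by $R M R^T$; setting $M := R^{-1} \diag(1, 2, \ldots, r) R^{-T}$ therefore makes $A|_V$ have $r$ distinct nonzero eigenvalues. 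Any skew $S$ commuting with $A$ preserves each eigenspace of $A$, but a skew operator on a one-dimensional real subspace vanishes, so $S|_V = 0$ and in particular $S u_i = 0$ for every $i$. Hence the symmetry conditions on $(S, v)$ coincide with those defining $\mathfrak{g}_{\text{lin}}$, so $\sym_{\SE(n)}(F_{\text{lin}}) = \mathfrak{g}_{\text{lin}}$ for this witness, and the dichotomy then delivers the measure-zero conclusion.

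The main obstacle will be this eigenvalue-separation argument: without distinct eigenvalues of $A|_V$ one could produce nontrivial skew operators on $V$ commuting with $A$, yielding extra symmetries beyond $\mathfrak{g}_{\text{lin}}$ and invalidating the witness. Everything else --- the form of the Lie derivative, the identification of $\ker A$ with $V^\perp$, and the invocation of the dichotomy --- is routine linear algebra once the witness is chosen generically enough to break eigenvalue degeneracy.
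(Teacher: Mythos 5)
Your proof is correct and follows the same overall strategy as the paper's: exhibit a single quadratic witness $\varphi_{\text{lin}}$ for which $\sym_{\SE(n)}(F_{\text{lin}}) = \mathfrak{g}_{\text{lin}}$, then invoke the dichotomy established in Proposition~\ref{prop:symmetries_of_basic_functions} to conclude the measure-zero claim. The paper chooses $\varphi_{\text{lin}}(z) = \tfrac{1}{2}z^T D z$ with $D = \diag(1,\ldots,r)$ and argues via an explicit block decomposition of $S$ relative to $U$ and $U_\perp$, extracting the relations $D\tilde{S}_{1,1} = \tilde{S}_{1,1}D$, $D\tilde{S}_{1,2} = 0$, $\tilde{S}_{2,1}D = 0$ by sandwiching the commutation relation with $U^T$ and $U_\perp^T$. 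You instead choose $M = R^{-1}\diag(1,\ldots,r)R^{-T}$ so that $A = QDQ^T$ has distinct nonzero eigenvalues on $V$, and argue spectrally: $S$ commuting with $A$ preserves each one-dimensional eigenspace, and skewness then forces $S$ to vanish on it. These are two routes to the same conclusion, but yours is worth noting: the paper's block-extraction step tacitly uses $U^T U = I$ (true in the numerical experiments but not required by the proposition's hypothesis of mere linear independence), whereas your use of the thin QR factorization and the commutant-of-a-multiplicity-free-operator argument handles arbitrary linearly independent $\{u_i\}$ without that assumption, making the argument both cleaner and strictly more general.
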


\subsection{Extrapolatory recovery of a multi-spring-mass system}
\label{subsec:multi_spring_mass_system}
We demonstrate how symmetry-promoting nuclear-norm regularization can identify equivariant dynamics that correctly extrapolate from a restricted dataset.
We consider a system of $n_p = 5$ particles with identical masses $m_i=1$, whose positions and momenta are vectors $\vect{q}^i = (q^i_x, q^i_y, q^i_z)$ and $\vect{p}^i = (p^i_x, p^i_y, p^i_z)$ in $\R^3$.
However, the training data are confined to the $x,z$-plane, as we explain below.
The particles fall under the influence of gravitational acceleration $\vect{g} = (0, 0, -1)$ and are coupled together by springs with random stiffness $K_{i,j} = \sum_{k=1}^{n_p} (1-\delta_{i,j}) B_{i,k} B_{j,k}$, where $B_{i,j}$ are drawn uniformly from the interval $[0.1,1]$.
The result is a linear dynamical system
\begin{equation}\label{eqn:spring_mass_system}
    \ddt
    \begin{bmatrix}
        \vect{q}^i \\
        \vect{p}^i
    \end{bmatrix}
    = 
    \sum_{j=1}^{n_p}
    \underbrace{
    \begin{bmatrix}
        \mat{0} & \delta_{i,j} m_i^{-1} \mat{I} \\
        \left(K_{i,j}-\delta_{i,j}\sum_{k=1}^{n_p} K_{i,k}\right) \mat{I} & \mat{0}
    \end{bmatrix}
    }_{\mat{A}_{i,j}}
    \begin{bmatrix}
        \vect{q}^j \\
        \vect{p}^j
    \end{bmatrix}
    + 
    \begin{bmatrix}
        \vect{0} \\
        \vect{g}
    \end{bmatrix},
\end{equation}
expressible in the form \eqref{eqn:ds} with $\vect{\dot{x}} = \mat{A}\vect{x}$ using state variable $\vect{x} = (\vect{q}^1, \vect{p}^1, \ldots, \vect{q}^{n_p}, \vect{p}^{n_p}, 1)$.
The entries of the matrix $\mat{A}$ are shown in the leftmost plot of Figure~\ref{fig:multi_spring_mass}(c).

\begin{figure}
    \centering
    \begin{tikzonimage}[trim=0 0 0 0, clip=true, width=0.95\textwidth]{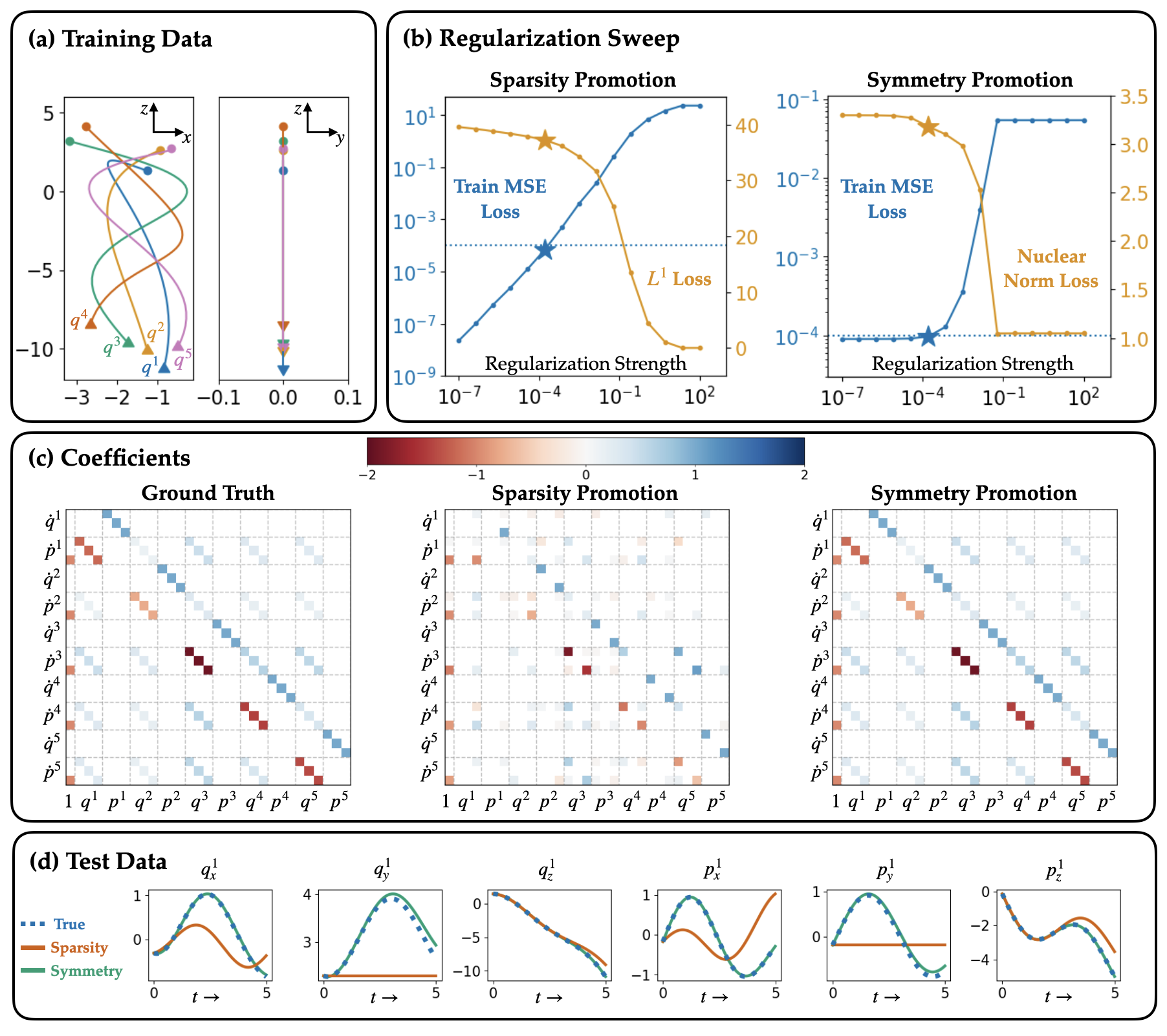}
    \end{tikzonimage}
    \caption{The multi-spring-mass system is learned from training data confined to the $x,z$-plane using $\SE(3)$-symmetry-promoting regularized regression. Results are compared to Sparse Identification of Nonlinear Dynamics (SINDy) using sparsity-promoting $\ell^1$ regularization.}
    \label{fig:multi_spring_mass}
\end{figure}

When there is no gravity, the system is $\SE(3)$-equivariant under the action \eqref{eqn:SE_action} applied simultaneously to all particles.
Gravity breaks the rotational symmetries about all but the $z$-axis, leaving the system equivariant to translations and rotations of the entire system of particles about the $z$-axis, corresponding to the subgroup of $\SE(3)$ given by
\begin{equation}\label{eqn:spring_mass_symmetries}
    \Sym_{\SE(3)}(\mat{A}) = 
    \left\{ 
    \begin{bmatrix}
        \cos(\theta) & -\sin(\theta) & 0 & b_x \\
        \sin(\theta) & \cos(\theta) & 0 & b_y \\
        0 & 0 & 1 & b_z \\
        0 & 0 & 0 & 1
    \end{bmatrix}
    \ : \ \theta, b_x, b_y, b_z \in\R \right\}.
\end{equation}

We use nuclear-norm-based symmetry-promotion \eqref{eqn:nuclear_norm_penalty_for_real_maps} within the group $G = \SE(3)$ to learn the dynamics \eqref{eqn:spring_mass_system} and its symmetries \eqref{eqn:spring_mass_symmetries} from trajectories that only evolve in the $x,z$-plane.
The training data consists of $M=100$ state-time-derivative pairs $(\vect{x}_j, \vect{\dot{x}}_j)$ collected along two trajectories, each with $50$ samples spaced uniformly at time intervals $\Delta t = 0.1$.
One of the trajectories is shown in Figure~\ref{fig:multi_spring_mass}(a).
These were generated by choosing normally-distributed initial conditions
\begin{gather}
    \vect{q}^i \sim \mcal{N} \left(\vect{q}_0,\ \diag\begin{bmatrix} 1 & 0 & 1 \end{bmatrix} \right), \qquad
    \vect{p}^i \sim \mcal{N} \left(\vect{p}_0,\ \diag\begin{bmatrix} 0.01 & 0 & 0.01 \end{bmatrix} \right),
    \qquad i=1, \ldots, n_p, \\
    \vect{q}_0 \sim \mcal{N}(\vect{0},\ \diag\begin{bmatrix} 1 & 0 & 1 \end{bmatrix}), \qquad
    \vect{p}_0 \sim \mcal{N}(\vect{0},\ \diag\begin{bmatrix} 0.01 & 0 & 0.01 \end{bmatrix}),
\end{gather}
with the stated means and covariance so that every particle has zero $y$-position and zero $y$-momentum.

We employ the basis function regression framework described in Section~\ref{subsec:promoting_symmetry_in_regression} with loss function \eqref{eqn:basis_function_regression_prob} and linear dictionary $\mcal{D}$ to fit the system matrix $\mat{A}$ to the training data.
This approach is compared to Sparse Identification of Nonlinear Dynamics (SINDy) employing sparsity-promoting $\ell^1$ regularization (see \cite{Brunton2016discovering, Tibshirani1996regression}) on the entries of $\mat{A}$.
The components of the training loss are plotted against the regularization strength $\gamma$ in Figure~\ref{fig:double_spring_pendulum}(b).
The chosen regularization strengths (indicated by stars) were the maximum values for which the training mean square error (MSE) remained less than $10^{-4}$.

The system matrices learned using sparsity-promoting $\ell^1$ regularization and $\SE(3)$-symmetry-promoting regularization are shown in Figure~\ref{fig:double_spring_pendulum}(c).
We observe that symmetry-promoting regularization recovers the correct matrix structure, while promoting sparsity does not.
Since the training data have zero $y$-positions and momenta, a maximally sparse fit to these data also has zero $y$-components.
On the other hand, promoting $\SE(3)$-symmetry enables identification of the rotational symmetry about the $z$-axis, allowing the learned model to extrapolate from training data that lives only in the $x,z$-plane.
This is confirmed in Figure~\ref{fig:double_spring_pendulum}(d), where we plot the true and predicted trajectories of the first particle on a testing data set where all three components of each particle's initial position and momentum were chosen at random according to
\begin{gather}
    \vect{q}^i \sim \mcal{N} \left(\vect{q}_0,\ \mat{I} \right), \qquad
    \vect{p}^i \sim \mcal{N} \left(\vect{p}_0,\ 0.1\mat{I} \right),
    \qquad i=1, \ldots, n_p, \\
    \vect{q}_0 \sim \mcal{N}(\vect{0},\ \mat{I}), \qquad
    \vect{p}_0 \sim \mcal{N}(\vect{0},\ 0.1\mat{I}).
\end{gather}

\subsection{Neural ODE model of a double-spring-pendulum using EPNNs}
\label{subsec:double_spring_pendulum}
We use the double-spring-pendulum example from  \cite{Finzi2020generalizing} to study the nuclear-norm regularization technique for neural networks presented in Section~\ref{subsec:multilayer_perceptrons}.
As illustrated in Figure~\ref{fig:double_spring_pendulum}(a), this system consists of two masses $m_1 = m_2 = 1$ connected in series to the origin by springs with stiffness $k_1 = k_2 = 1$ and length $\ell_1 = \ell_2 = 1$.
The masses are subject to downward gravitational acceleration $\vect{g} = (0,0,-1)$.
The positions $\vect{q}^i = (q^i_x, q^i_y, q^i_z)$ and momenta $\vect{p}^i = (p^i_x, p^i_y, p^i_z)$ of the masses $i=1,2$ are described by a nonlinear Hamiltonian dynamical system
\begin{gather}\label{eqn:double_spring_pendulum}
    \vect{\dot{q}}^T = \frac{\partial H}{\partial \vect{p}}(\vect{q}, \vect{p}), 
    \qquad 
    \vect{\dot{p}}^T = - \frac{\partial H}{\partial \vect{q}}(\vect{q}, \vect{p}), \\
    H(\vect{q}, \vect{p}) = 
    \frac{\| \vect{p}^1 \|^2}{2 m_1} + \frac{\| \vect{p}^2 \|^2}{2 m_2}
    + \frac{k_1}{2}\big( \| \vect{q}^1 \| - \ell_1 \big)^2
    + \frac{k_2}{2}\big( \| \vect{q}^1 - \vect{q}^2 \| - \ell_2 \big)^2
    - m_1 \vect{g}^T \vect{q}^1 - m_1 \vect{g}^T \vect{q}^2,
\end{gather}
expressible in the form $\vect{\dot{x}} = \vect{F}(\vect{x})$ using the $12$-dimensional state variable $\vect{x} = (\vect{q}^1, \vect{p}^1, \vect{q}^{2}, \vect{p}^{2})$.
In the absence of gravity, the dynamics $\vect{F}$ would be equivariant with respect to rotations about the origin, that is, with respect to the simultaneous action of the orthogonal group $\Ogrp(3)$ on the positions and momenta of both masses.
Gravity breaks most of these symmetries, preserving only the subgroup $\Ogrp(2)$ of rotational symmetries about the vertical $z$-axis.


We compare approximations of the nonlinear function $F$ using EPNNs (see Figure~\ref{fig:symmetry_net} and Section~\ref{subsec:multilayer_perceptrons}) with different candidate symmetry groups to a standard multilayer perceptron (MLP) with a similar number of parameters.
These are trained using the Neural ODE framework of \cite{Chen2018neural} and data gathered in short chunks along trajectories.
In order to study the data efficiency of the resulting symmetry-promoting learning algorithms we train the models on different amounts of data.
For each training data set size, $20$ random experiments are performed, where the training data, testing data, and initial network weights are re-drawn from their respective distributions. 
The box-plots in Figure~\ref{fig:double_spring_pendulum} capture variance resulting from randomness across all of these factors.

\begin{figure}
    \centering
    \begin{tikzonimage}[trim=0 0 0 0, clip=true, width=0.95\textwidth]{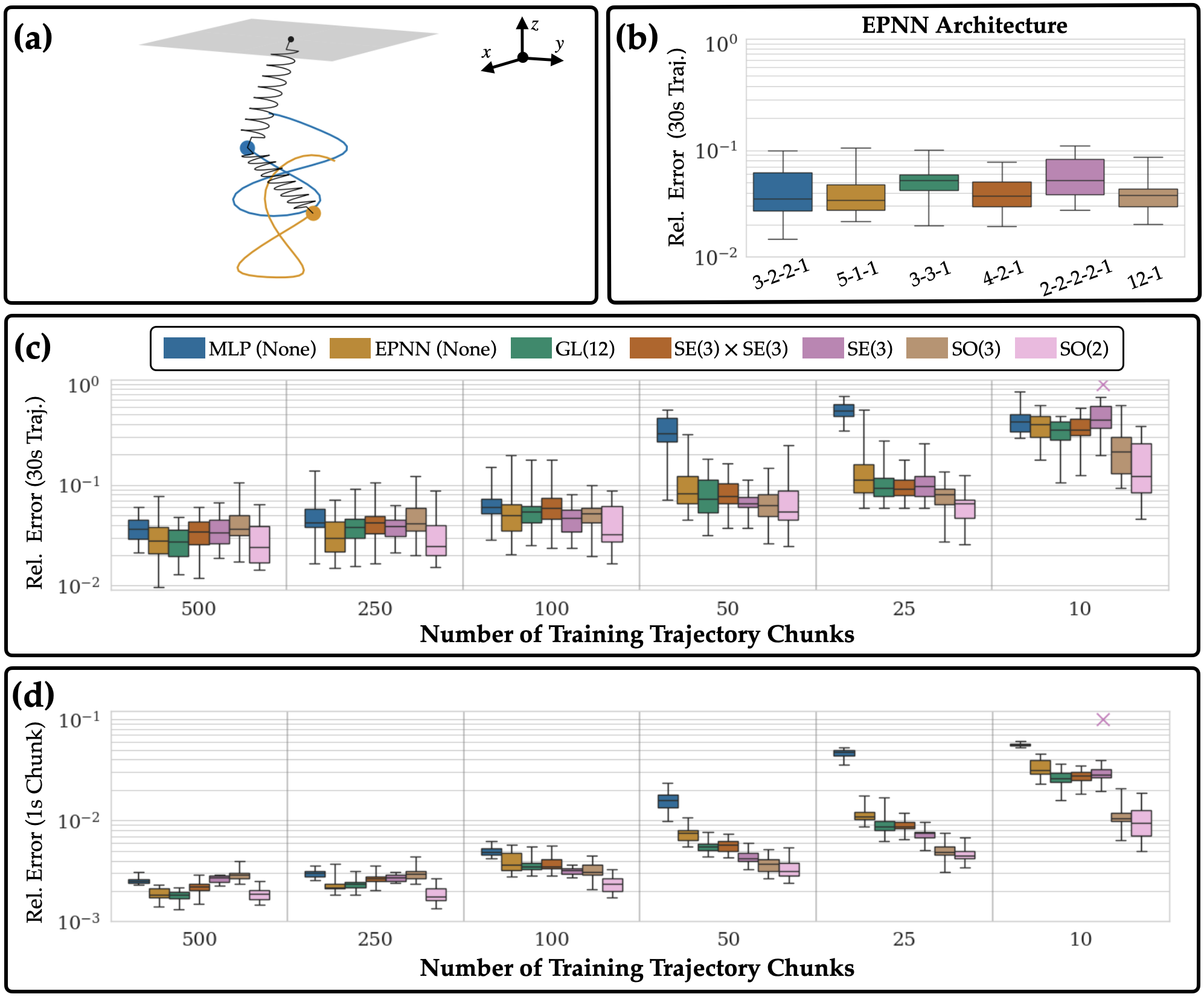}
    \end{tikzonimage}
    \caption{(a) Double-spring-pendulum system from \cite{Finzi2021practical}. (b) Prediction performance on $30$s trajectories using different $\SO(3)$-EPNN architectures trained on $M=100$ trajectory chunks. (c) Prediction performance using 3-3-1 EPNNs trained using different amounts of data and candidate groups. 
    (d) Prediction performance on $1$s trajectory chunks using the same setup as (c). The ``x'' represents a prediction that diverged.}
    \label{fig:double_spring_pendulum}
\end{figure}

\textbf{Data:} The training data for each experiment is generated using the same procedure as \cite{Finzi2020generalizing}, and consists of $M = 10$, $25$, $50$, $100$, $250$, or $500$ trajectory chunks $\big( \vect{x}_j(t_j),\ \vect{x}_j(t_j+\Delta t),\ \ldots,\ \vect{x}_j(t_j + n_t \Delta t)\big)$ sampled at $n_t = 5$ consecutive intervals of length $\Delta t = 0.2$s.
The $j$th chunk is generated by simulating the double-spring-pendulum for $30$s from normally distributed initial conditions,
\begin{equation}\label{eqn:double_spring_pendulum_ICs}
    \vect{q}_j^1(0) \sim \mcal{N}\big( (0,0,-1.5), \ 0.04 \mat{I} \big), \ \
    \vect{q}_j^2(0) \sim \mcal{N}\big( (0,0,-3), \ 0.04 \mat{I} \big), \ \
    \vect{p}_j^1(0),\ \vect{p}_j^2(0) \sim \mcal{N}\big( \vect{0}, \ 0.16 \mat{I} \big),
\end{equation}
and choosing the starting time of the chunk $t_j$ uniformly at random from the interval $[0,29]$.
An evaluation data set of $M$ trajectory chunks generated in the same way is used for early stopping.
Two testing data sets are used in each experiment.
The first consists of a single randomly generated $30$s trajectory with initial conditions drawn according to \eqref{eqn:double_spring_pendulum_ICs}.
The networks are evaluated by forecasting the entire trajectory from its initial condition at $t=0$.
The second testing data set in each experiment consists of $500$ trajectory chunks generated in the same way as the training data.

\textbf{Optimization:} With $\theta$ being the parameters of an EPNN approximation $\vect{F}_{\theta}$, and $\vect{\hat{x}}_j(t;\theta)$ being the trajectory of $\vect{\dot{\hat{x}}}_j = \vect{F}_{\theta}(\vect{\hat{x}}_j)$ with the same initial condition as the chunk $\vect{\hat{x}}_j(t_j;\theta) = \vect{x}_j(t_j)$, we use Neural ODE and the Adam optimizer of \cite{Kingma2014adam} to minimize the loss function
\begin{equation}\label{eqn:double_spring_pendulum_loss}
    J(\theta) = \underbrace{\frac{1}{M}\sum_{j=1}^{M} \sum_{k=1}^{n_t} \big\| \vect{x}_j(t_j + k\Delta t) - \vect{\hat{x}}_j(t_j + k\Delta t; \theta)  \big\|^2}_{\text{MSE}}
    + \gamma \underbrace{\vphantom{\sum_{j=1}^{M}}\big\| \mat{L}_{\vect{F}_{\theta}} \big\|_*}_{R_{\mathfrak{g},*}(\theta)}.
\end{equation}
Here, $\mat{L}_{\vect{F}_{\theta}}$ is the discretized EPNN operator described in Section~\ref{subsec:multilayer_perceptrons} with respect to the action of the candidate group $G$.
The regularization strength $\gamma = 10^{-4}$ was used for $G = \SO(2)$, $\SO(3)$, and $\SE(3)$; $\gamma = 10^{-6}$ was used for $G = \SE(3)\times \SE(3)$; and $\gamma = 10^{-8}$ was used for $G = \GL(12)$.
We use the default ODE solver in JAX (\cite{jax2018github}), which is the Dormand-Prince Runge-Kutta scheme with adaptive step-size and $\text{atol} = \text{rtol} = 1.4 \times 10^{-8}$. 
We fix a learning rate of $3\times 10^{-4}$ for all experiments and use optax's (\cite{deepmind2020jax}) default parameters $\beta_1 = 0.9$, $\beta_2 = 0.999$, $\epsilon=10^{-8}$ for Adam.
We use batches containing all of the training data and we perform $10^{4}$ optimization steps with early-stopping based on the validation data set performance.

\textbf{Architecture:} The EP blocks in our EPNNs are maps $F_{\text{b}}: \R^{12}\to\R^{12}$ defined by \eqref{eqn:symmetry_block} with the sigmoid activation, and the same $G$-action across all layers defined by a fixed representation
\begin{equation}
    \Phi_0 = \tilde{\Phi}_1 = \cdots = \tilde{\Phi}_{L-1} = \Phi_L.
\end{equation}
We compare six different candidate Lie group actions: 
\begin{enumerate}[left=0pt, itemsep=0pt, parsep=0pt,topsep=0pt]
    \item No group action,
    \item $\GL(12)$ acting via matrix multiplication with the state vector,
    \item $\SE(3)\times \SE(3)$ with each copy of $\SE(3)$ acting via \eqref{eqn:SE_action} on the position and momentum of a corresponding mass,
    \item $\SE(3)$ acting via \eqref{eqn:SE_action} on the positions and momenta of both masses,
    \item $\SO(3)$ acting to rotate the positions and momenta of both masses, and
    \item $\SO(2)$ acting to rotate the positions and momenta of both masses about the $z$-axis.
\end{enumerate}

All network weights were initialized uniformly at random in the interval $[-0.1, 0.1]$.
The components $\mat{L}_{F_{\text{eq}}}$ and $\mat{L}_{F_{\text{inv}}}$ of each EP block operator were discretized using the Monte-Carlo method presented in Section~\ref{sec:discretization} with uniformly-sampled points from axis-aligned cubes centered at the origin.
The cubes in the first layer have the minimum width to contain the training data, while the cubes in subsequent layers have width $2$.
The number of Monte-Carlo samples was taken to be $4(n_{\text{params}} + 1)$, where $n_{\text{params}}$ is the number of trainable parameters in the respective component $F_{\text{eq}}$ or $F_{\text{inv}}$.

We consider several architectures such as \emph{4-2-1}, meaning there are $4$ EP blocks in the first EP layer, $2$ EP blocks in the second EP layer, and a final affine EP output layer.
These are compared to a standard fully-connected MLP with layer widths 12-42-42-42-12 and the Swish activation function from \cite{Ramachandran2017searching}.
The MLP was trained and initialized in the same way as the EPNNs, except that the nuclear norm term was not included in loss function \eqref{eqn:double_spring_pendulum_loss}.
The number of parameters for each network is reported in Table~\ref{tab:double_spring_pendulum_architectures}.
All networks were trained using a single NVIDIA RTX 2080 Ti GPU.
The EPNNs took roughly $10$ minutes to train, which was about $5\times$ longer than the time to train the baseline MLP.
\begin{table}[h]
    \centering
    \begin{tabular}{cc|c|c|c|c|c|c}
        architecture: & 3-2-2-1 & 5-1-1 & 3-3-1 & 4-2-1 & 2-2-2-2-1 & 12-1 & 12-42-42-42-12 MLP \\
        parameters: & $4585$ & $4014$ & $4626$ & $4914$ & $2474$ & $4704$ & $4674$
    \end{tabular}
    \caption{Number of trainable parameters in each network used for the double-spring-pendulum.}
    \label{tab:double_spring_pendulum_architectures}
\end{table}

\textbf{Results:} We compare the performance of different $\SO(3)$-EPNN architectures trained on $M=100$ trajectory chunks in Figure~\ref{fig:double_spring_pendulum}(b).
In each random experiment, we follow \cite{Finzi2021practical} and compute the geometric mean of the relative error for predictions $\vect{\hat{x}}(t)$ along the $30$s testing trajectory $\vect{x}(t)$ given by
\begin{equation}\label{eqn:double_spring_pendulum_error}
    \text{Relative Error} = \exp\left[ \frac{1}{150} \sum_{k=1}^{150} \ln\left( \frac{\| \vect{x}(k\Delta t) - \vect{\hat{x}}(k\Delta t) \|}{\| \vect{x}(k\Delta t)\| + \|\vect{\hat{x}}(k\Delta t) \|} \right) \right].
\end{equation}
Comparable performance is observed across a wide range of different architectures.
We do not use these results to perform architecture selection, and the 3-3-1 EPNN is used as a representative architecture for the following sample-efficiency comparisons.

The sample efficiency of the MLP and 3-3-1 EPNNs with different candidate symmetry groups are compared in Figures~\ref{fig:double_spring_pendulum}(c)~and~\ref{fig:double_spring_pendulum}(d) by training on different numbers of trajectory chunks.
This allows us to study the degree to which larger candidate symmetry groups can still serve as useful inductive biases to improve generalization using smaller training datasets.
We report results across experiments using the same error metric \eqref{eqn:double_spring_pendulum_error}.
The MLP exhibits an abrupt loss of performance when the number of training chunks is reduced from $M=100$ to $M=50$, while the performance of EPNNs deteriorates more gradually.
Using smaller candidate symmetry groups yields performance benefits in terms of median relative error when $M=50$ or fewer trajectory chunks are used for training.
These effects are most significant when compared on the $1$s chunk prediction task shown in Figure~\ref{fig:double_spring_pendulum}(d).

\section{Generalization to sections of vector bundles}
\label{sec:sections_of_vector_bundles}
The machinery for enforcing, discovering, and promoting symmetry of maps $F:\mcal{V} \to \mcal{W}$ between finite-dimensional vector spaces is a special case of more general machinery for sections of vector bundles presented here.
Applications of this more general framework include studying the symmetries of vector fields, tensor fields, dynamical systems, and integral operators manifolds with respect to nonlinear group actions.
We rely heavily on background, definitions, and results that can be found in \cite{Lee2013introduction}, \cite{MarsdenMTAA}, and \cite{Kolar1993natural}.

\begin{figure}
    \centering
    \begin{tikzonimage}[trim=120 80 200 170, clip=true, width=0.8\textwidth]{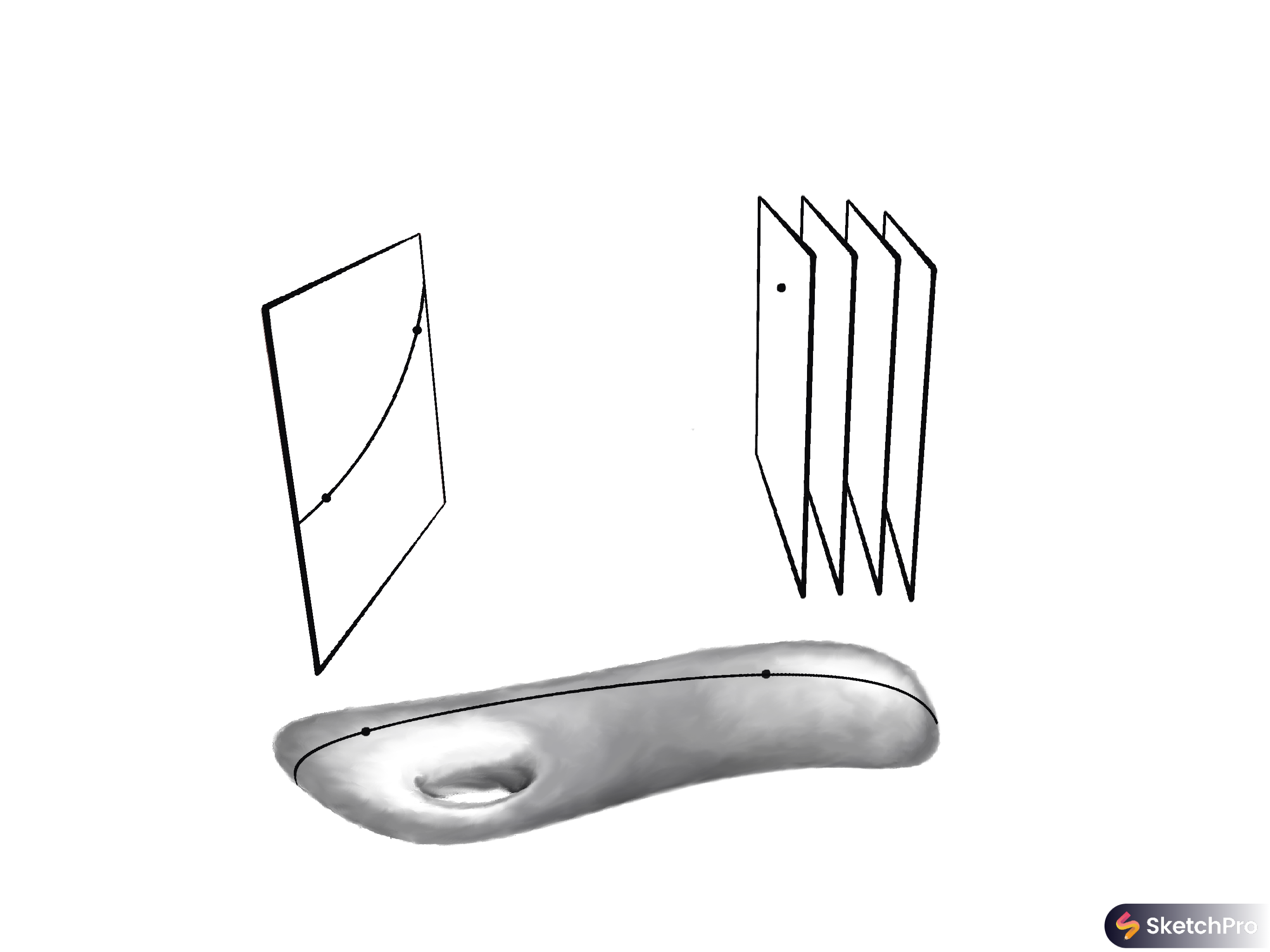}
        \node[rotate=0] at (0.6, 0.1) {\footnotesize $\mcal{M}$};
        \node[rotate=0, anchor=north] at (0.260, 0.2) {\footnotesize $p$};
        \node[rotate=0, anchor=north] at (0.675, 0.285) {\footnotesize $ q = \theta_{\exp(t\xi)}(p)$};
        \node[rotate=0, anchor=south] at (0.24, 0.85) {\footnotesize $E_p$};
        \draw[->] (0.261, 0.205) -- (0.23, 0.45);
        \node[rotate=0] at (0.24, 0.49) {\footnotesize $F(p)$};
        \draw[->] (0.225, 0.530) -- (0.320, 0.660);
        \node[rotate=0, anchor=west] at (0.335, 0.675) {\footnotesize $\mcal{L}_{\xi}F(p)$};
        \node[rotate=0, anchor=east] at (0.325, 0.775) {\footnotesize $\mcal{K}_{\exp(t\xi)}F(p)$};
        \node[rotate=0] at (0.640, 0.90) {\footnotesize $E_q$};
        \draw[->] (0.672, 0.285) -- (0.686, 0.73);
        \node[rotate=0] at (0.690, 0.775) {\footnotesize $F(q)$};
        \draw[->] (0.690, 0.818) -- (0.34, 0.763);
        \node[rotate=0, anchor=south] at (0.500, 0.800) {\footnotesize $\Theta_{\exp(-t\xi)}$};
        \node[rotate=0] at (0.87, 0.6) {\footnotesize $E$};
        \draw[->] (0.87, 0.55) -- (0.87, 0.25);
        \node[rotate=0, anchor=west] at (0.87, 0.4) {\footnotesize $\pi$};
    \end{tikzonimage}
    \caption{Fundamental operators for sections of vector bundles equipped with fiber-linear Lie group actions. 
    The action $\Theta_{g}$ is linear on each fiber $E_p$ and descends under the bundle projection $\pi: E \to \mcal{M}$ to an action $\theta_g$ on $\mcal{M}$. 
    Given a section $F: \mcal{M} \to E$, the finite transformation operators $\mcal{K}_g$ produce a new section $\mcal{K}_g F$ whose value at $p$ is given by evaluating $F$ at $q=\theta_g(p)$ and pulling the value in $E_{q}$ back to $E_p$ via the linear map $\Theta_{g^{-1}}$ on $E_q$. 
    The operators $\mcal{K}_g$ are linear thanks to linearity of $\Theta_{g^{-1}}$ on every $E_q$. 
    The Lie derivative $\mcal{L}_{\xi}$ is the operator on sections formed by differentiating $t\mapsto \mcal{K}_{\exp(t\xi)}$ at $t=0$. 
    Geometrically, $\mcal{L}_{\xi} F (p)$ is the vector in $E_p$ lying tangent to the curve $t\mapsto \mcal{K}_{\exp(t\xi)} F(p)$ in $E_p$ passing through $F(p)$ at $t=0$.}
    \label{fig:Lie_derivative}
\end{figure}

First, we provide some background on smooth vector bundles.
A rank-$k$ smooth vector bundle $E$ is a collection of $k$-dimensional vector spaces $E_p$, called \emph{fibers}, organized smoothly over a base manifold $\mcal{M}$.
This fibers are organized by the \emph{bundle projection} $\pi:E \to \mcal{M}$, a surjective map whose preimages are the fibers $E_p = \pi^{-1}(p)$.
Smoothness means that $\pi$ is a smooth submersion where $E$ is a smooth manifold covered by smooth local trivializations
$$
    \psi_{\alpha}: \pi^{-1}(\mcal{U}_{\alpha})\subset E \to \mcal{U}_{\alpha} \times \R^k,
$$
with $\{\mcal{U}_{\alpha}\}_{\alpha\in\mcal{A}}$ being open subsets covering $\mcal{M}$.
The transition functions between local trivializations are $\R^k$-linear, meaning that there are smooth matrix-valued functions $\mat{T}_{\alpha, \beta}: \mcal{U}_{\alpha}\cap\mcal{U}_{\beta} \to \R^{k\times k}$ satisfying
\begin{equation}
    \psi_{\alpha} \circ \psi_{\beta}^{-1}(p, \vect{v})
    = (p, \mat{T}_{\alpha, \beta}(p)\vect{v})
\end{equation}
for every $p \in \mcal{U}_{\alpha}\cap\mcal{U}_{\beta}$ and $\vect{v} \in \R^k$.
The bundle with this structure is often denoted $\pi: E \to \mcal{M}$.

A \emph{section} of the rank-$k$ vector bundle $\pi:E \to \mcal{M}$ is a map $F: \mcal{M} \to E$ satisfying $\pi \circ F = \Id_{\mcal{M}}$.
The space of (possibly rough) sections, denoted $\sect(E)$, is a vector space with addition and scalar multiplication defined pointwise in each fiber $E_p$.
We equip $\sect(E)$ with the topology of pointwise convergence, making it into a locally-convex space.
The space of sections possessing $m$ continuous derivatives is denoted $C^m(\mcal{M},E)$, with the space of merely continuous sections being $C(\mcal{M},E) = C^0(\mcal{M},E)$ and the space of smooth sections being $C^{\infty}(\mcal{M},E)$.
We use $\vf(\mcal{M}) = C^{\infty}(\mcal{M},T\mcal{M})$ to denote the space of smooth vector fields on a manifold.
A vector bundle and a section are depicted in Figure~\ref{fig:Lie_derivative}, along with the fundamental operators for a group action introduced below.

We consider a smooth \emph{\textbf{fiber-linear}} right $G$-action $\Theta : E \times G \to E$, meaning that every $\Theta_g = \Theta(\cdot, g):E \to E$ is a smooth vector bundle homomorphism.
In other words, $\Theta$ descends under the bundle projection $\pi$ to a unique smooth right $G$-action $\theta : \mcal{M} \times G \to \mcal{M}$ so that the diagram
\begin{equation}
    \begin{tikzcd}
        E \arrow[r, "\Theta_g"] \arrow[d, "\pi"'] & E \arrow[d, "\pi"] \\ 
        \mcal{M} \arrow[r, "\theta_g"'] & \mcal{M}
    \end{tikzcd}
    \label{cd:action_by_vb_homomorphisms}
\end{equation}
commutes and the restricted maps $\Theta_g\vert_{E_p}: E_p \to E_{\theta(p,g)}$ are linear.
It is clear that \eqref{eqn:Theta_action_on_VW} is such an action on the trivial vector bundle $\pi: \mcal{V} \times \mcal{W} \to \mcal{V}$.
We define what it means for a section to be symmetric with respect to the action $\Theta$ as follows:
\begin{definition}
    \label{def:equivariance_of_section}
    A section $F\in\Sigma(E)$ is \textbf{equivariant} with respect to a transformation $g\in G$ if
    \begin{equation}
        \mcal{K}_g F := \Theta_{g^{-1}} \circ F \circ \theta_{g} = F.
        \label{eqn:transformation_operators_vb}
    \end{equation}
    These transformations form a subgroup of $G$ denoted $\Sym_G(F)$.
\end{definition}
The operators $\mcal{K}_{g}$ are depicted in Figure~\ref{fig:Lie_derivative}, which is analogous to Figure~\ref{fig:baby_Lie_derivative}.
Thanks to the vector bundle homomorphism properties of $\Theta_{g^{-1}}$, the operators $\mcal{K}_g: \sect(E) \to \sect(E)$ are well-defined and linear.
Moreover, they form a group under composition $\mcal{K}_{g_1} \mcal{K}_{g_2} = \mcal{K}_{g_1\cdot g_2}$,
with inverses $\mcal{K}_g^{-1} = \mcal{K}_{g^{-1}}$.

The \emph{infinitesimal generator} of the group action is the linear map $\hat{\Theta}:\Lie(G) \to \vf(E)$ defined by
\begin{equation}
    \hat{\Theta}(\xi) = \left. \ddt \right\vert_{t=0} \Theta_{\exp(t\xi)}.
\end{equation}
It turns out that this vector field is $\Theta$-related to $(0, \xi) \in \vf(\mcal{M}\times G)$ (see Lemma~5.13 in \cite{Kolar1993natural}, Lemma~20.14 in \cite{Lee2013introduction}),
meaning that the flow of $\hat{\Theta}(\xi)$ is given by
\begin{equation}
    \Flow_{\hat{\Theta}(\xi)}^t = \Theta_{\exp(t\xi)}.
    \label{eqn:exp_action_and_flow_on_vb}
\end{equation}
Likewise, $\theta_{\exp(t\xi)}$ is the flow of $\hat{\theta}(\xi) = \left. \ddt \right\vert_{t=0} \theta_{\exp(t\xi)} \in \vf(\mcal{M})$, which is $\pi$-related to $\hat{\Theta}(\xi)$.

Differentiating the smooth curves $t \mapsto \mcal{K}_{\exp(t\xi)} F (p)$ lying in $E_p$ for each $p\in\mcal{M}$ gives rise to the Lie derivative $\mcal{L}_{\xi}:D(\mcal{L}_{\xi}) \subset \sect(E) \to \sect(E)$ along $\xi \in \Lie(G)$ defined by
\begin{equation}
    \boxed{
    \mcal{L}_{\xi} F 
    = \left.\ddt \right\vert_{t=0} \mcal{K}_{\exp(t\xi)} F 
    = \lim_{t\to 0} \frac{1}{t}\left( \Theta_{\exp(-t\xi)}\circ F \circ \theta_{\exp(t\xi)} - F \right),
    }
    \label{eqn:Lie_derivative}
\end{equation}
where $F \in D(\mcal{L}_{\xi})$ if and only if the limit converges in $\sect(E)$, i.e., pointwise.
Note that we implicitly identify $T_{F(p)} E_p \cong E_p$.
This construction is illustrated in Figure~\ref{fig:Lie_derivative}.
We emphasize that the Lie derivative \eqref{eqn:Lie_derivative} is a linear operator on sections of the vector bundle $E$.
Moreover, it is linear with respect to $\xi$, as we will show in Proposition~\ref{prop:properties_of_Lie_derivative}.
This allows us to directly apply approaches described in Sections~\ref{sec:enforcing_symmetry}-\ref{sec:promoting_symmetry} to enforce, discover, and promote symmetry using Lie derivatives for more general sections of vector bundles.
In fact, the fundamental operators described in Section~\ref{sec:fundamental_operators} are versions of the operators described here for smooth functions $F:\mcal{V} \to \mcal{W}$ viewed as sections $x \mapsto (x,F(x))$ of the trivial bundle $\pi:\mcal{V}\times\mcal{W} \to \mcal{V}$ and acted upon by the fiber-liner right $G$-action \eqref{eqn:Theta_action_on_VW}.

\begin{remark}[Lie derivatives using flows]
    Thanks to \eqref{eqn:exp_action_and_flow_on_vb}, the Lie derivative defined in \eqref{eqn:Lie_derivative} only depends on the infinitesimal generator $\hat{\Theta}(\xi) \in \vf(E)$, and its flow for small time $t$.
    Hence, any vector field in $\vf(E)$ whose flow is fiber-linear, but not necessarily defined for all $t\in\R$, gives rise to an analogously-defined Lie derivative acting linearly on $\sect(E)$.
    These are the so-called \emph{linear vector fields} described by \cite{Kolar1993natural} in Section~47.9.
    In fact, more general versions of the Lie derivative based on flows for maps between manifolds are described by \cite{Kolar1993natural} in Chapter~11.
    However, these generalizations are nonlinear operators, destroying the convex properties of the symmetry-promoting regularization functions in Section~\ref{sec:promoting_symmetry}.
\end{remark}

Some useful properties of the fundamental operators $\mcal{K}_g$ and $\mcal{L}_\xi$ for studying symmetries are:
\begin{proposition}
    \label{prop:properties_of_Lie_derivative}
    For every $\xi, \eta \in\Lie(G)$, and $\alpha, \beta, t \in\R$, we have
    \begin{equation}
        \ddt \mcal{K}_{\exp(t\xi)}F 
        = \mcal{L}_{\xi} \mcal{K}_{\exp(t\xi)} F
        = \mcal{K}_{\exp(t\xi)} \mcal{L}_{\xi} F
        \qquad \forall F \in D(\mcal{L}_{\xi}),
        \label{eqn:Lie_derivative_as_generator}
    \end{equation}
    \begin{equation}
        \mcal{L}_{\alpha\xi + \beta \eta} F = \big(\alpha \mcal{L}_{\xi} + \beta \mcal{L}_{\eta} \big) F
        \qquad \forall F \in C^1(\mcal{M},E),
        \label{eqn:linearity_of_Lie_derivative_wrt_Lie_algebra}
    \end{equation}
    and
    \begin{equation}
        \mcal{L}_{[\xi, \eta]}F 
        = \big(\mcal{L}_{\xi}\mcal{L}_{\eta} - \mcal{L}_{\eta}\mcal{L}_{\xi}\big)F
        \qquad \forall F \in C^2(\mcal{M},E).
        \label{eqn:Lie_derivative_commutator}
    \end{equation}
    We give a proof in Appendix~\ref{app:properties_of_Lie_derivative}.
\end{proposition}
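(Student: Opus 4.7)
I plan to establish the three identities in order, using successively more substantial tools. Identity \eqref{eqn:Lie_derivative_as_generator} falls out of the group-cocycle law $\mcal{K}_{g_1}\mcal{K}_{g_2} = \mcal{K}_{g_1 g_2}$; identity \eqref{eqn:linearity_of_Lie_derivative_wrt_Lie_algebra} reduces to linearity of the infinitesimal generators $\hat{\theta}$ and $\hat{T}$ via the local-trivialization version of \eqref{eqn:Lie_derivative_of_real_map}; and identity \eqref{eqn:Lie_derivative_commutator} requires an explicit local-coordinate commutator computation combined with the Lie-algebra-homomorphism property of the total-space generator $\hat{\Theta}$ intrinsic to right actions.

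For \eqref{eqn:Lie_derivative_as_generator}, since $\{\exp(t\xi)\}_{t\in\R}$ is a one-parameter subgroup, its elements commute with each other, and applying the group law with $g_1 = \exp(s\xi)$ and $g_2 = \exp(t\xi)$ in either order yields
\[
\mcal{K}_{\exp((t+s)\xi)} F \;=\; \mcal{K}_{\exp(s\xi)}\mcal{K}_{\exp(t\xi)} F \;=\; \mcal{K}_{\exp(t\xi)}\mcal{K}_{\exp(s\xi)} F.
\]
Differentiating each expression at $s = 0$ reproduces the three quantities in \eqref{eqn:Lie_derivative_as_generator}; the rightmost uses that $\mcal{K}_{\exp(t\xi)}$ evaluates a section at a shifted point and applies a fiber-linear map there, so the pointwise limit in $s$ commutes through it. For \eqref{eqn:linearity_of_Lie_derivative_wrt_Lie_algebra}, I would pass to a local trivialization in which a $C^1$ section $F$ is represented by a $C^1$ function $f$ valued in $\R^k$; the Lie derivative then takes the form $\mcal{L}_\xi F(p) = \D f(p)\,\hat{\theta}(\xi)_p - \hat{T}(\xi)_p f(p)$, and linearity in $\xi$ is inherited from the linearity of $\xi \mapsto \hat{\theta}(\xi)$ and $\xi \mapsto \hat{T}(\xi)$, each being a tangent map at the identity.

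For \eqref{eqn:Lie_derivative_commutator}, I would work in a local trivialization and abbreviate $V = \hat{\theta}(\xi)$, $W = \hat{\theta}(\eta)$, $S = \hat{T}(\xi)$, $R = \hat{T}(\eta)$. Substituting $\mcal{L}_\eta F = (\D f)W - Rf$ into the formula for $\mcal{L}_\xi$ and expanding by product and chain rules produces six terms; carrying out the analogous expansion with $\xi$ and $\eta$ swapped and subtracting, the pure second-derivative contributions $\D^2 f(V, W)$ cancel by Schwarz (this is where $F \in C^2$ is used), and the mixed terms $S(\D f)W$ and $R(\D f)V$ cancel pairwise, leaving
\[
(\mcal{L}_\xi \mcal{L}_\eta - \mcal{L}_\eta \mcal{L}_\xi) F \;=\; (\D f)\bigl[(\D W)V - (\D V)W\bigr] - \bigl[(\D R)V - (\D S)W + [R, S]\bigr] f.
\]
The first bracket is the vector-field Lie bracket $[V, W]$, which equals $\hat{\theta}([\xi, \eta])$ because $\hat{\theta}$ is a Lie-algebra homomorphism for right actions (Remark at the end of Section~\ref{subsec:representations_actions_generators}). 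For the fiber contribution I would apply the same homomorphism property to $\hat{\Theta}$ on the total space $E$: writing $\hat{\Theta}(\xi)_{(x, y)} = (V_x, S_x y)$, computing $[\hat{\Theta}(\xi), \hat{\Theta}(\eta)]$ in local coordinates on $E$, and matching its fiber component with that of $\hat{\Theta}([\xi, \eta])_{(x, y)} = (\hat{\theta}([\xi, \eta])_x, \hat{T}([\xi, \eta])_x y)$ yields the cocycle-like identity $\hat{T}([\xi, \eta]) = (\D R)V - (\D S)W + [R, S]$. Substituting back produces exactly $\mcal{L}_{[\xi, \eta]} F$. The principal obstacle is sign-tracking in this cocycle-like identity for $\hat{T}$; once it is established, the rest is algebraic collapse.
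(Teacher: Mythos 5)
Your proof is correct, but it takes a genuinely different route from the paper's, so a comparison is worth making.

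For \eqref{eqn:Lie_derivative_as_generator} the two arguments coincide in substance: both invoke the composition law $\mcal{K}_{g_1}\mcal{K}_{g_2} = \mcal{K}_{g_1 g_2}$, commutativity of $\{\exp(t\xi)\}$, and fiber-linearity of $\Theta_{g^{-1}}$ to push the difference quotient through $\mcal{K}_{\exp(t\xi)}$. You should note explicitly that the third expression in your chain is what certifies $\mcal{K}_{\exp(t\xi)}F \in D(\mcal{L}_\xi)$, making the middle expression well-defined --- the paper handles this carefully, and your proof implicitly relies on the same observation.

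For \eqref{eqn:linearity_of_Lie_derivative_wrt_Lie_algebra} and \eqref{eqn:Lie_derivative_commutator} your approach diverges from the paper's. You pass to a local trivialization and compute with the explicit formula $\mcal{L}_\xi F(p) = \D f(p)\,\hat\theta(\xi)_p - \hat T(\xi)_p f(p)$. The paper instead argues intrinsically: linearity in $\xi$ is extracted from the linearity of the tangent map $\D h(e)$ of the orbit map $h:g\mapsto \mcal{K}_g F(p)$, and the commutator identity comes from the group commutator curve $\gamma(t) = \exp(t\xi)\exp(t\eta)\exp(-t\xi)\exp(-t\eta)$ together with the classical fact $\tfrac12\gamma''(0) = [\xi,\eta]_e$ (Kol\'{a}\v{r}--Michor--Slov\'{a}k, Thm.~3.16), followed by a careful chain-rule expansion in which derivatives are commuted past $\mcal{K}_g$ and $\mcal{L}_\xi$. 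Your coordinate computation is longer for \eqref{eqn:Lie_derivative_commutator} --- six terms per side, cancellation by Schwarz and pairwise matching --- but it is entirely self-contained once one grants the homomorphism property of $\hat\Theta$ on the total space; the paper's argument is shorter but leans on the auxiliary second-derivative lemma for the group commutator curve and on the interchange lemmas \eqref{eqn:passing_derivative_under_Kg}--\eqref{eqn:passing_derivative_under_Lxi}. Your cocycle-like identity $\hat T([\xi,\eta]) = (\D R)V - (\D S)W + [R,S]$ checks out: it is precisely the fiber component of $[\hat\Theta(\xi),\hat\Theta(\eta)] = \hat\Theta([\xi,\eta])$, valid because $\Theta$ is a right action and hence $\hat\Theta$ is a Lie algebra homomorphism by the same token as $\hat\theta$. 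The one thing you should make explicit is why the identity $[\hat\Theta(\xi),\hat\Theta(\eta)] = \hat\Theta([\xi,\eta])$ holds for the total-space action: the remark in Section~4.2 states it only for $\hat\theta$, and although the argument carries over verbatim to $\hat\Theta$ (any right $G$-action gives a Lie algebra homomorphism of generators, e.g.\ by Theorem~20.15 of Lee), the paper never writes this down for $\hat\Theta$, so you must supply the citation or one-line justification yourself.
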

Taken together, these results mean that $\Pi:g \mapsto \mcal{K}_g$ and $\Pi_*:\xi \mapsto \mcal{L}_{\xi}$ are (infinite-dimensional) representations of $G$ and $\Lie(G)$ in $C^{\infty}(\mcal{M},E)$.

The main results of this section are the following two theorems.
The first completely characterizes the identity component of $\Sym_G(F)$ by correspondence with its Lie subgalgebra (see Theorem~19.26 in \cite{Lee2013introduction}).
The second gives necessary and sufficient conditions for a section to be $G$-equivariant.
\begin{theorem}
    \label{thm:symmetries_of_sections}
    If $F \in C(\mcal{M}, E)$ is a continuous section, then $\Sym_G(F)$ is a closed, embedded Lie subgroup of $G$ whose Lie subalgebra is
    \begin{equation}
        \sym_G(F) = \left\{ \xi \in \Lie(G) \ : \ F\in D(\mcal{L}_{\xi})\ \mbox{and}\ \mcal{L}_{\xi} F = 0 \right\}.
        \label{eqn:sym_G_for_vb_section}
    \end{equation}
    We give a proof in Appendix~\ref{app:symmetries_of_sections}.
\end{theorem}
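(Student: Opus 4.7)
The plan is to first show that $\Sym_G(F)$ is a closed subset of $G$, invoke the closed subgroup theorem to conclude it is an embedded Lie subgroup, and then use the infinitesimal identities in Proposition~\ref{prop:properties_of_Lie_derivative} to characterize its Lie subalgebra via \eqref{eqn:sym_G_for_vb_section}.

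For closedness, fix $p \in \mcal{M}$ and consider the map $\Phi_p : G \to E_p$ defined by $\Phi_p(g) = \Theta_{g^{-1}}(F(\theta_g(p)))$. Since $\Theta$ and $\theta$ are smooth and $F$ is continuous, $\Phi_p$ is continuous (it takes values in the fixed fiber $E_p$ because $\Theta_{g^{-1}}$ maps $E_{\theta_g(p)}$ into $E_p$). By definition of the transformation operators $\mcal{K}_g$, we have
\begin{equation*}
    \Sym_G(F) = \bigcap_{p \in \mcal{M}} \Phi_p^{-1}\big(\{F(p)\}\big),
\end{equation*}
which is an intersection of closed sets and therefore closed in $G$. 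It is also a subgroup of $G$ because $g \mapsto \mcal{K}_g$ is a group homomorphism by the cocycle property $\mcal{K}_{g_1}\mcal{K}_{g_2} = \mcal{K}_{g_1 g_2}$ with $\mcal{K}_e = \Id$. The closed subgroup theorem then guarantees that $\Sym_G(F)$ is an embedded Lie subgroup of $G$.

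For the Lie-subalgebra characterization, write $H = \Sym_G(F)$ and let $\mfrak{h}' = \{\xi \in \Lie(G) : F \in D(\mcal{L}_\xi)\ \text{and}\ \mcal{L}_\xi F = 0\}$. If $\xi \in \Lie(H)$, then $\exp(t\xi) \in H$ for all $t$, so $\mcal{K}_{\exp(t\xi)}F = F$; the difference quotient in \eqref{eqn:Lie_derivative} is therefore identically zero, so $F \in D(\mcal{L}_\xi)$ with $\mcal{L}_\xi F = 0$, giving $\xi \in \mfrak{h}'$. Conversely, suppose $\xi \in \mfrak{h}'$. For each $p \in \mcal{M}$, consider the curve $\gamma_p(t) = \mcal{K}_{\exp(t\xi)}F(p)$ in the finite-dimensional vector space $E_p$. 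By \eqref{eqn:Lie_derivative_as_generator} in Proposition~\ref{prop:properties_of_Lie_derivative}, for every $t \in \R$,
\begin{equation*}
    \gamma_p'(t) = \left(\mcal{K}_{\exp(t\xi)} \mcal{L}_\xi F\right)(p) = 0,
\end{equation*}
since $\mcal{L}_\xi F$ is the zero section and the linear operators $\mcal{K}_{\exp(t\xi)}$ send zero to zero. Thus $\gamma_p$ is a differentiable curve with vanishing derivative, hence constant, so $\gamma_p(t) = \gamma_p(0) = F(p)$ for all $t$ and $p$. Equivalently, $\mcal{K}_{\exp(t\xi)}F = F$, meaning $\exp(t\xi) \in H$ for all $t \in \R$. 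The one-parameter subgroup generated by $\xi$ therefore lies in $H$, so $\xi \in \Lie(H)$.

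The main subtlety lies in the reverse inclusion: $F$ is only assumed continuous, so a priori one might worry about the existence of $\gamma_p'(t)$ at values of $t$ other than $0$. The point is that \eqref{eqn:Lie_derivative_as_generator} in Proposition~\ref{prop:properties_of_Lie_derivative} simultaneously asserts both that $\mcal{K}_{\exp(t\xi)}F$ remains in $D(\mcal{L}_\xi)$ and that $\gamma_p'(t)$ is computed by the pointwise application of $\mcal{K}_{\exp(t\xi)}$ to $\mcal{L}_\xi F$; once this identity is in hand, the assumption $\mcal{L}_\xi F = 0$ forces $\gamma_p$ to be constant with no additional regularity on $F$.
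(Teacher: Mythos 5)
Your argument reproduces the paper's proof essentially step for step: the map $\Phi_p(g)=\Theta_{g^{-1}}(F(\theta_g(p)))$ you use is exactly the paper's $h_p$, closedness and the subgroup property are established in the same way, the closed subgroup theorem is invoked identically, and both inclusions for the Lie subalgebra are obtained from the same vanishing difference quotient and the identity $\ddt\mcal{K}_{\exp(t\xi)}F=\mcal{K}_{\exp(t\xi)}\mcal{L}_\xi F$ from Proposition~\ref{prop:properties_of_Lie_derivative}. Your closing remark about the regularity of $F$ is a helpful clarification, but it is the same reasoning the paper relies on implicitly, not a different route.
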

\begin{theorem}
    \label{thm:equivariance_conditions_for_vb_section}
    Suppose that $G$ has $n_G$ connected components with $G_0$ being the component containing the identity element.
    Let $\xi_1, \ldots, \xi_q$ generate $\Lie(G)$ and let $g_1, \ldots, g_{n_G - 1}$ be elements from each non-identity component of $G$. 
    A continuous section $F \in C(\mcal{M},E)$ is $G_0$-equivariant if and only if
    \begin{equation}
        F \in D(\mcal{L}_{\xi_i}) \quad \mbox{and} \quad \mcal{L}_{\xi_i} F = 0, \qquad  i=1,\ldots, q.
    \end{equation}
    If, in addition, we have 
    \begin{equation}
        \mcal{K}_{g_j} F = F, \qquad  j=1, \ldots, n_G-1,
    \end{equation}
    then $F$ is $G$-equivariant.
    We give a proof in Appendix~\ref{app:equivariance_conditions_for_vb_section}.
\end{theorem}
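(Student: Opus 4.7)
The plan is to reduce everything to Theorem~\ref{thm:symmetries_of_sections}, which already identifies $\Sym_G(F)$ as a closed embedded Lie subgroup of $G$ whose Lie subalgebra $\sym_G(F)$ is the common kernel of the $\mcal{L}_\xi$. Once we know $\sym_G(F)$ is a genuine Lie subalgebra of $\Lie(G)$, the conditions $\mcal{L}_{\xi_i}F = 0$ on a generating set automatically propagate to all of $\Lie(G)$.

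For the $G_0$-equivariance equivalence I would argue as follows. ($\Rightarrow$) If $F$ is $G_0$-equivariant, then $G_0 \subset \Sym_G(F)$, so $\Lie(G)=\Lie(G_0) \subset \sym_G(F)$, and Theorem~\ref{thm:symmetries_of_sections} immediately gives $F \in D(\mcal{L}_{\xi_i})$ and $\mcal{L}_{\xi_i} F = 0$ for each $i$. ($\Leftarrow$) Conversely, suppose $\mcal{L}_{\xi_i} F = 0$ for all $i$. By Theorem~\ref{thm:symmetries_of_sections}, each $\xi_i$ lies in $\sym_G(F)$, which is a Lie subalgebra of $\Lie(G)$ (because it is the Lie algebra of the Lie subgroup $\Sym_G(F)$). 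Since $\{\xi_i\}_{i=1}^q$ generates $\Lie(G)$ under linear combinations and brackets, we conclude $\sym_G(F) = \Lie(G)$. Applying the Lie-subalgebra/connected-Lie-subgroup correspondence (Theorem~19.26 in \cite{Lee2013introduction}), the identity component of $\Sym_G(F)$ has the same Lie algebra and the same dimension as $G_0$; being a connected subgroup of $G_0$ of full dimension, it coincides with $G_0$. Thus $G_0 \subset \Sym_G(F)$, i.e., $F$ is $G_0$-equivariant.

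For the second claim, once we have $G_0 \subset \Sym_G(F)$ together with $g_j \in \Sym_G(F)$ for $j = 1, \ldots, n_G - 1$, I would observe that every element of $G$ lies in one of the cosets $g_j G_0$ (with $g_0 := e$ parametrizing the identity component). Since $\Sym_G(F)$ is a subgroup and contains both $g_j$ and $G_0$, it contains each coset $g_j G_0$, so $\Sym_G(F) = G$ and $F$ is $G$-equivariant.

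The only place where any real care is needed is the step showing that the condition on generators $\xi_i$ upgrades to all of $\Lie(G)$, which superficially requires knowing that $D(\mcal{L}_\xi)$ behaves well under linear combinations and brackets of $\xi$ even for merely continuous $F$. I would avoid this difficulty entirely by invoking Theorem~\ref{thm:symmetries_of_sections}: it characterizes $\sym_G(F)$ as the intrinsic Lie algebra of the Lie subgroup $\Sym_G(F) \subset G$, so Lie-algebra closure is free and we never have to analyze the domain of $\mcal{L}_\xi$ by hand. This is the main load-bearing appeal; the rest of the argument is just the standard fact that a Lie group is generated by its identity component together with one representative from each other component.
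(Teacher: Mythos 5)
Your proof is correct and takes essentially the same approach as the paper: the converse direction and the extension to the full group $G$ are identical (invoking Theorem~\ref{thm:symmetries_of_sections}, the connected-subgroup/Lie-subalgebra correspondence, and the coset decomposition $G_j = G_0 \cdot g_j$). The only minor difference is cosmetic — in the forward direction the paper differentiates $\mcal{K}_{\exp(t\xi)}F = F$ directly to get $\mcal{L}_\xi F = 0$, whereas you route through the characterization of $\sym_G(F)$ in Theorem~\ref{thm:symmetries_of_sections}; both are valid and rely on the same underlying facts.
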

These results allow us to promote, enforce, and discover symmetries for sections of vector bundles in fundamentally the same way we did for maps between finite-dimensional vector spaces in Sections~\ref{sec:enforcing_symmetry},~\ref{sec:discovering_symmetry},~and~\ref{sec:promoting_symmetry}.
In particular, symmetry can be enforced through analogous linear constraints, discovered through nullspaces of analogous operators, and promoted through analogous convex penalties based on the nuclear norm.

\begin{remark}[Left actions]
    Theorems~\ref{thm:equivariance_conditions_for_vb_section}~and~\ref{thm:symmetries_of_sections} hold without any modification for left G-actions $\Theta^L:G\times E \to E$.
    This is because we can define a corresponding right $G$-action by $\Theta^R(p,g) = \Theta^L(g^{-1},p)$ with associated operators related by 
    \begin{equation}
        \mcal{K}^R_{g} = \mcal{K}^L_{g^{-1}}
        \qquad \mbox{and}\qquad
        \mcal{L}^R_{\xi} = - \mcal{L}^L_{\xi}.
    \end{equation}
    The symmetry group $\Sym_G(F)$ does not depend on whether it is defined by the condition $\mcal{K}^R_{g} F = F$ or by $\mcal{K}^L_{g} F = F$.
    It is slighly less natural to work with left actions because $\Pi^L: g \mapsto \mcal{K}^L_g$ and $\Pi^L_*: \xi \mapsto \mcal{L}^L_\xi$ are Lie group and Lie algebra anti-homomorphisms, that is, 
    \begin{equation}
        \Pi^L(g_1 g_2) = \Pi^L(g_2) \Pi^L(g_1) 
        \qquad \mbox{and} \qquad
        \Pi^L_*\big([\xi,\eta]\big) 
        = \big[\Pi^L_*(\eta), \Pi^L_*(\xi)\big].
    \end{equation}
\end{remark}

\subsection{Vector fields}
\label{subsec:symmetry_of_vf}
    Here we study the symmetries of a vector field $V\in\vf(\mcal{M})$ under a right $G$-action $\theta:\mcal{M}\times G \to\mcal{M}$.
    This allows us to extend the discussion in Section~\ref{subsec:dynamical_systems} to dynamical systems described by vector fields on smooth manifolds and acted upon nonlinearly by arbitrary Lie groups.
    The tangent map of the diffeomorhpism $\theta_g = \theta(\cdot, g):\mcal{M} \to \mcal{M}$ transforms vector fields via the pushforward map $(\theta_{g})_* : \vf(\mcal{M}) \to \vf(\mcal{M})$ defined by
    \begin{equation}
        ((\theta_{g})_* V)_{p \cdot g} = \D \theta_g(p) V_p
    \end{equation}
    for every $p\in\mcal{M}$.
    \begin{definition}
        \label{def:invariance_of_vf}
        Given $g\in G$, we say that a vector field $V\in\vf(\mcal{M})$ is $g$-\textbf{invariant} if and only if $(\theta_g)_* V = V$, that is,
        \begin{equation}
            V_{p\cdot g} = \D \theta_g(p) V_p \qquad \forall p\in\mcal{M}.
        \end{equation}
    \end{definition}
    Because $(\theta_{g^{-1}})_* (\theta_g)_* = (\theta_{g^{-1}}\circ\theta_g)_* = \Id_{\vf(\mcal{M})}$, it is clear that a vector field is $g$-invariant if and only if it is $g^{-1}$-invariant.
    
    Recall that vector fields $V\in\vf(\mcal{M})$ are smooth sections of the tangent bundle $E = T\mcal{M}$. 
    The right $G$-action $\theta$ on $\mcal{M}$ induces a right $G$-action $\Theta: T\mcal{M} \times G \to T\mcal{M}$ on the tangent bundle defined by
    \begin{equation}
        \Theta_g(v_p) = \D\theta_g(p) v_p.
    \end{equation}
    It is easy to see that each $\Theta_g$ is a vector bundle homomorphism descending to $\theta_g$ under the bundle projection $\pi$.
    The key observation that
    \begin{equation}
        \mcal{K}_g V 
        = \Theta_{g^{-1}} \circ V \circ \theta_g
        = (\theta_{g^{-1}})_* V
    \end{equation}
    means a vector field $V\in\vf(\mcal{M})$ is $g$-invariant if and only if it is $g$-equivariant as a section of $T\mcal{M}$ with respect to the action $\Theta$.
    Recall that (by Lemma~20.14 in \citet{Lee2013introduction}) the left-invariant vector field $\xi \in \Lie(G) \subset \vf(G)$ and its infinitesimal generator $\hat{\theta}(\xi) \in \vf(\mcal{M})$ are $\theta^{(p)}$-related, where $\theta^{(p)}: g \mapsto \theta(p,g)$ is the orbit map.
    This means that $\theta_{\exp(t\xi)}$ is the time-$t$ flow of $\hat{\theta}(\xi)$ by Proposition~9.13 in \citet{Lee2013introduction}.
    As a result, the Lie derivative in \eqref{eqn:Lie_derivative} agrees with the standard Lie derivative of $V$ along $\hat{\theta}(\xi)$ (see \citet[p.228]{Lee2013introduction}),
    that is,
    \begin{equation}
    \boxed{
        \mcal{L}_{\xi} V(p) 
        = \lim_{t\to 0} \frac{1}{t} \left[ \D\theta_{\exp(-t\xi)}(\theta_{\exp(t\xi)}(p)) V_{\theta_{\exp(t\xi)}(p)} - V_p \right]
        = [\hat{\theta}(\xi), V]_p,
        }
    \end{equation}
    where the expression on the right is the Lie bracket of $\hat{\theta}(\xi)$ and $V$ evaluated at $p\in\mcal{M}$.

\subsection{Tensor fields}
\label{subsec:symmetry_of_tf}
    Symmetries of a tensor field can also be revealed using our framework.
    This will be useful for our study of Hamiltonian dynamics in Section~\ref{subsec:Hamiltonian_conservation_laws} and for our study of integral operators, whose kernels can be viewed as tensor fields, in Section~\ref{subsec:equivariant_integral_operators}.
    For simplicity, we consider a rank-$k$ covariant tensor field $A \in \tf^k(\mcal{M})$, although our results extend to contravariant and mixed tensor fields with minimal modification.
    We rely on the basic definitions and machinery found in \citet[Chapter~12]{Lee2013introduction}.
    Under a right $G$-action $\theta$ on $\mcal{M}$, the tensor field transforms via the pullback map $\theta_g^*:\tf^k(\mcal{M}) \to \tf^k(\mcal{M})$ defined by
    \begin{equation}
        (\theta_g^* A)_p(v_1, \ldots, v_k) 
        = (\D\theta_g(p)^* A_{p\cdot g})(v_1, \ldots, v_k) 
        = A_{p\cdot g}(\D\theta_g(p) v_1, \ldots, \D\theta_g(p) v_k)
    \end{equation}
    for every $v_1, \ldots, v_k \in T_p\mcal{M}$.
    \begin{definition}
        \label{def:invariance_of_tf}
        Given $g\in G$, a tensor field $A\in\tf^k(\mcal{M})$ is $g$-\textbf{invariant} if and only if $\theta_g^* A = A$, i.e,
        \begin{equation}
            A_{p\cdot g}(\D\theta_g(p) v_1, \ldots, \D\theta_g(p) v_k) = A_p(v_1, \ldots, v_k) \qquad \forall p\in\mcal{M}.
        \end{equation}
    \end{definition}
    
    To study the invariance of tensor fields in our framework, we recall that a tensor field is a section of the tensor bundle 
    $E = T^kT^*\mcal{M} = \coprod_{p\in\mcal{M}} (T_p^*\mcal{M})^{\otimes k}$, a vector bundle over $\mcal{M}$,
    where $T_p^*\mcal{M}$ is the dual space of $T_p\mcal{M}$.
    The right $G$-action $\theta$ on $\mcal{M}$ induces a
    right $G$-action $\Theta: T^k T\mcal{M} \times G \to T^k T\mcal{M}$ defined by
    \begin{equation}
        \Theta_g(A_p) = \D \theta_{g^{-1}}(\theta_g(p))^* A_p.
    \end{equation}
    It is clear that each $\Theta_g$ is a homomorphism of the vector bundle $T^kT^*\mcal{M}$ descending to $\theta_g$ under the bundle projection.
    The key observation that
    \begin{equation}
        \mcal{K}_g A 
        = \Theta_{g^{-1}}\circ A \circ \theta_g
        = \theta_g^* A
    \end{equation}
    means a tensor field $A\in\tf^k(\mcal{M})$ is $g$-invariant if and only if it is $g$-equivariant as a section of $T^kT^*\mcal{M}$ with respect to the action $\Theta$.
    Since $\theta_{\exp(t\xi)}$ gives the time-$t$ flow of the vector field $\hat{\theta}(\xi)\in\vf(\mcal{M})$, the Lie derivative in \eqref{eqn:Lie_derivative} for this action agrees with the standard Lie derivative of $A\in\tf^k(\mcal{M})$ along $\hat{\theta}(\xi)$ (see \citet[p.321]{Lee2013introduction}), that is
    \begin{equation}
    \boxed{
    (\mcal{L}_{\xi} A)_p 
    = \lim_{t\to 0} \frac{1}{t}\left[ \D\theta_{\exp(t\xi)}(p)^* A_{\theta_{\exp(t\xi)}(p)} - A_p \right]
    = (\mcal{L}_{\hat{\theta}(\xi)} A)_p.
    }
    \label{eqn:Lie_derivative_for_tf}
\end{equation}
The Lie derivative for arbitrary covariant tensor fields can be computed by applying Proposition~12.32 in \cite{Lee2013introduction} and its corollaries.
More generally, thanks to 6.16-18 in \cite{Kolar1993natural}, the Lie derivative for any tensor product of sections of natural vector bundles can be computed via the formula
\begin{equation}
    \mcal{L}_{\xi}(A_1\otimes A_2) = (\mcal{L}_{\xi} A_1) \otimes A_2 + A_1 \otimes (\mcal{L}_{\xi} A_2).
\end{equation}
For example, this holds when $A_1, A_2$ are arbitrary smooth tensor fields of mixed types.
The Lie derivative of a differential form $\omega$ on $\mcal{M}$ can be computed by Cartan's magic formula
\begin{equation}
    \mcal{L}_{\xi} \omega = \hat{\theta}(\xi) \intmul (\td \omega) + \td (\hat{\theta}(\xi) \intmul \omega),
    \label{eqn:Cartan_magic_formula}
\end{equation}
where $\td$ is the exterior derivative.

\subsection{Hamiltonian dynamics}
\label{subsec:Hamiltonian_conservation_laws}

The dynamics of frictionless mechanical systems can be described by Hamiltonian vector fields on symplectic manifolds.
Roughly speaking, these encompass systems that conserve energy, such as motion of rigid bodies and particles interacting via conservative forces.
The celebrated theorem of \cite{noether1918invariante} says that conserved quantities of Hamiltonian systems correspond with symmetries of the energy function (the system's Hamiltonian).
In this section, we briefly illustrate how to enforce Hamiltonicity constraints on learned dynamical systems and how to promote, discover, and enforce conservation laws.
A thorough treatment of classical mechanics, symplectic manifolds, and Hamiltonian systems can be found in \cite{Abraham2008foundations,Marsden:MS}.
This includes methods for reduction of systems with known symmetries and conservation laws.
The following brief introduction follows Chapter~22 of \cite{Lee2013introduction}.

Hamiltonian systems are defined on symplectic manifolds.
That is, a smooth even-dimensional manifold $\mcal{S}$ together with a smooth, nondegenerate, closed differential $2$-form $\omega$, called the symplectic form.
Nondegeneracy means that the map $\hat{\omega}_p: v \mapsto \omega_p(v,\cdot)$ is a bijective linear map of $T_p \mcal{S}$ onto its dual $T_p^* \mcal{S}$ for every $p \in \mcal{S}$. 
Closure means that $\td \omega = 0$, where $\td$ is the exterior derivative.
Thanks to nondegeneracy, any smooth function $H \in C^{\infty}(\mcal{S})$ gives rise to a smooth vector field
\begin{equation}
    V_H = \hat{\omega}^{-1}(\td H)
\end{equation}
called the \emph{Hamiltonian vector field} of $H$.
A vector field $V \in \vf(\mcal{S})$ is said to be Hamiltonian if $V = V_H$ for some function $H$, called the Hamiltonian of $V$.
A vector field is locally Hamiltonian if it is Hamiltonian in neighborhood of each point of $\mcal{S}$.

The symplectic manifolds considered in classical mechanics usually consist of the cotangent bundle $\mcal{S} = T^* \mcal{M}$ of an $m$-dimensional manifold $\mcal{M}$ describing the \emph{configuration} of the system, e.g., the positions of particles.
The cotangent bundle has a canonical symplectic form given by
\begin{equation}
    \omega = \sum_{i=1}^{m} \td x^i \wedge \td \xi_i,
    \label{eqn:canonical_symplectic_form}
\end{equation}
where $(x^i, \xi_i)$ are any choice of natural coordinates on a patch of $T^* \mcal{M}$ (see Proposition~22.11 in \cite{Lee2013introduction}).
Here, each $x^i$ is a generalized coordinate describing the configuration and $\xi_i$ is its \emph{conjugate} or \emph{generalized momentum}.
The Darboux theorem (Theorem~22.13 in \cite{Lee2013introduction}) says that any symplectic form on a manifold can be put into the form of \eqref{eqn:canonical_symplectic_form} by a choice of local coordinates.
In these \emph{Darboux coordinates}, the dynamics of a Hamiltonian system are governed by the equations
\begin{equation}
    \ddt x^i = V_H(x^i) = \frac{\partial H}{\partial \xi_i}, 
    \qquad
    \ddt \xi_i = V_H(\xi_i) = - \frac{\partial H}{\partial x^i},
\end{equation}
which should be familiar to anyone who has studied undergraduate mechanics.

Enforcing local Hamiltonicity on a vector field $V\in\vf(\mcal{S})$ is equivalent to the linear constraint
\begin{equation}
    \mcal{L}_V \omega 
    = 0
\end{equation}
thanks to Proposition~22.17 in \cite{Lee2013introduction}. 
Here $\mcal{L}_V$ is the Lie derivative of the tensor field $\omega \in \tf^2(\mcal{S})$, i.e., \eqref{eqn:Lie_derivative_for_tf} with $\theta$ being the flow of $V$ and its generator being the identity $\hat{\theta}(V) = V$.
Note that the Lie derivative still makes sense even when the orbits $t \mapsto \theta_t(p) = \theta_{\exp(t 1)}(p)$ are only defined for small $t\in (-\varepsilon, \varepsilon)$.
In Darboux coordinates, this constraint is equivalent to the set of equations
\begin{equation}
    \frac{\partial V(x^i)}{\partial x^j} + \frac{\partial V(\xi_j)}{\partial \xi_i} = 0, \qquad
    \frac{\partial V(\xi_i)}{\partial x^j} - \frac{\partial V(\xi_j)}{\partial x^i} = 0, \qquad
    \frac{\partial V(x^i)}{\partial \xi_j} - \frac{\partial V(x^j)}{\partial \xi_i} = 0
\end{equation}
for all $1\leq i,j \leq m$.
When the first de Rham cohomology group satisfies $H^1_{\text{dR}}(\mcal{S}) = 0$, for example when $\mcal{S}$ is contractible, local Hamilonicity implies the existence of a global Hamiltonian for $V$, unique on each component of $\mcal{S}$ up to addition of a constant by \citet[Proposition~22.17]{Lee2013introduction}.

Of course our approach also makes it possible to promote Hamiltonicity with respect to candidate symplectic structures when learning a vector field $V$.
To do this, we can penalize the nuclear norm of $\mcal{L}_V$ restricted to a subspace $\tilde{\Omega}$ of candidate closed $2$-forms using the regularization function
\begin{equation}
    R_{\tilde{\Omega},*}(V) = \big\Vert \left. \mcal{L}_V \right\vert_{\tilde{\Omega}} \big\Vert_*.
\end{equation}
The strength of this penalty can be increased when solving a regression problem for $V$ until there is a non-degenerate $2$-form in the nullspace $\Null(\mcal{L}_V) \cap \tilde{\Omega}$.
This gives a symplectic form with respect to which $V$ is locally Hamiltonian.

Another option is to learn a (globally-defined) Hamiltonian function $H$ directly by fitting $V_H$ to data.
In this case, we can regularize the learning problem by penalizing the breaking of conservation laws.
The time-derivative of a quantity, that is, a smooth function $f\in C^{\infty}(\mcal{S})$ under the flow of $V_H$ is given by the Poisson bracket
\begin{equation}
    \{ f, H \} := \omega(V_f, V_H) = \td f(V_H) = V_H(f) = -V_f(H).
\end{equation}
Hence, $f$ is a conserved quantity if and only if $H$ is invariant under the flow of $V_f$ --- this is Noether's theorem.
It is also evident that the Poisson bracket is linear with respect to both of its arguments.
In fact, the Poisson bracket turns $C^{\infty}(\mcal{S})$ into a Lie algebra with $f \mapsto V_f$ being a Lie algebra homomorphism, i.e.,
$
    V_{\{f, g \}} = [V_f, V_g]
$.

As a result of these basic properties of the Poisson bracket, the quantities conserved by a given Hamiltonian vector field $V_H$ form a Lie subalgebra given by the nullspace of a linear operator $L_H : C^{\infty}(\mcal{S}) \mapsto C^{\infty}(\mcal{S})$ defined by
\begin{equation}
    L_H : f \mapsto \{ f, H \}.
\end{equation}
To promote conservation of quantities in a given subalgebra $\mfrak{g} \subset C^{\infty}(\mcal{S})$ when learning a Hamiltonian $H$, we can penalize the nuclear norm of $L_H$ restricted to $\mfrak{g}$, that is
\begin{equation}
    R_{\mfrak{g},*}(H) = \big\Vert \left.L_H\right\vert_{\mfrak{g}} \big\Vert_*.
\end{equation}
For example, we might expect a mechanical system to conserve angular momentum about some axes, but not others due to applied torques.
In the absence of data to the contrary, it often makes sense to assume that various linear and angular momenta are conserved.



\subsection{Multilinear integral operators}
\label{subsec:equivariant_integral_operators}
In this section we provide machinery to study the symmetries of linear and nonlinear integral operators acting on sections of vector bundles, yielding far-reaching generalizations of our results in Section~\ref{subsec:NNs_acting_on_fields}.
Such operators can form the layers of neural networks acting on various vector and tensor fields supported on manifolds.

Let $\pi_0: E_0 \to \mcal{M}_0$ and $\pi_j: E_j \to \mcal{M}_j$ be vector bundles with $\mcal{M}_j$ being $d_j$-dimensional orientable Riemannian manifolds with volume forms $\volform_j \in \Omega^{d_j}(T^*\mcal{M}_j)$, $j=1,\ldots,r$.
Note that here, $\volform_j$ does not denote the exterior derivative of a $(d_j-1)$-form.
A continuous section $K$ of the bundle 
\begin{equation}
    E = E_0 \otimes E_1^* \otimes \cdots \otimes E_r^* 
    := \coprod_{(p,q_1, \ldots, q_r)\in\mcal{M}_0\times\cdots\times\mcal{M}_r} E_{0,p} \otimes E_{1,q_1}^* \otimes \cdots \otimes E_{r,q_r}^* 
\end{equation}
can be viewed as a continuous family of $r$-multilinear maps
\begin{equation}
    K(p,q_1,\ldots,q_r):\bigoplus_{j=1}^r E_{j,q_j} \to E_{0,p}.
\end{equation}
The section $K$ can serve as the kernel to define an $r$-multilinear integral operator $\mcal{T}_K : D(\mcal{T}_K) \subset \bigoplus_{j=1}^r\sect(E_j) \to \sect(E_0)$ with action on $(F_1,\ldots,F_r)\in D(\mcal{T}_K)$ given by
\begin{empheq}[box=\fbox]{equation}
    \mcal{T}_K[F_1,\ldots,F_r](p) = \int_{\mcal{M}_1\times\cdots\times\mcal{M}_r} K(p,q_1,\ldots,q_r)\big[ F_1(q_1), \ldots, F_r(q_r) \big] \volform_1(q_1) \wedge \cdots \wedge \volform_r(q_r). 
    \label{eqn:general_nonlinear_integral_operator}
\end{empheq}
This operator is linear when $r=1$.
When $r > 1$ and $E_1 = \cdots = E_r$, \eqref{eqn:general_nonlinear_integral_operator} can be used to define a nonlinear integral operator $\sect(E_1) \to \sect(E_0)$ with action $F \mapsto \mcal{T}_K[F,\ldots,F]$.

Given fiber-linear right $G$-actions $\Theta_j: E_j \times G \to E_j$, there is an induced fiber-linear right $G$-action $\Theta: E \times G \to E$ on $E$ defined by
\begin{equation}
    \Theta_g(K_{p,q_1,\ldots,q_r})\big[v_{1}, \ldots, v_{r}\big]
    = \Theta_{0,g} \left( K_{p,q_1,\ldots,q_r}\big[ \Theta_{1,g^{-1}}(v_{1}), \ldots, \Theta_{r,g^{-1}}(v_{r})\big]\right)
\end{equation}
for $K_{p,q_1,\ldots,q_r} \in E$ viewed as an $r$-multilinear map $E_{1,q_1}\oplus \cdots \oplus E_{r,q_r} \to E_{0,p}$ and $v_j \in E_{j,\theta_{j,g}(q_j)}$.
Sections $F_j \in \sect(E_j)$ transform according to
\begin{equation}
    \mcal{K}^{E_j}_{g} F_j = \Theta_{j,g^{-1}} \circ F_j \circ \theta_{j,g},
\end{equation}
with the section defining the integral kernel transforming according to
\begin{multline}
    \mcal{K}^{E}_g K(p,q_1,\ldots,q_r)[v_{q_1}, \ldots, v_{q_r}] = \\ 
    \Theta_{0,g^{-1}}\Big\{
    K\big(\theta_{0,g}(p),\theta_{1,g}(q_1),\ldots,\theta_{r,g}(q_r)\big)
    \big[ \Theta_{1,g}(v_{q_1}), \ldots, \Theta_{r,g}(v_{q_r}) \big] \Big\}.
    \label{eqn:multilinear_kernel_transformation}
\end{multline}
Using these transformation laws, we define equivariance for the integral operator as follows:
\begin{definition}\label{def:equivariant_multilinear_integral_operator}
    The integral operator in \eqref{eqn:general_nonlinear_integral_operator} is \textbf{equivariant} with respect to $g\in G$ if
    \begin{equation}
        \mcal{K}^{E_0}_{g} \mcal{T}_K\big[ \mcal{K}^{E_1}_{g^{-1}}F_1, \ldots, \mcal{K}^{E_r}_{g^{-1}}F_r \big]
        = \mcal{T}_K\big[ F_1, \ldots, F_r \big]
    \end{equation}
    for every $(F_1,\ldots,F_r) \in D(\mcal{T}_K)$.
\end{definition}
This definition is equivalent to the following condition on the integral kernel:
\begin{lemma}\label{lem:equivariant_multlinear_integral_kernel}
    The integral operator in \eqref{eqn:general_nonlinear_integral_operator} is equivariant with respect to $g \in G$ if and only if
    \begin{empheq}[box=\fbox]{equation}
    \mcal{K}_{g}\big( K \volform_1 \wedge \cdots \wedge V_r \big)
    := \mcal{K}^E_g(K) \ \theta_{1,g}^* \volform_1 \wedge \cdots \wedge \theta_{r,g}^*\volform_r
    = K \volform_1 \wedge \cdots \wedge \volform_r,
    \label{eqn:transformation_of_integral_kernels}
\end{empheq}
where $\mcal{K}_g^E$ is defined by \eqref{eqn:multilinear_kernel_transformation}.
A proof is available in Appendix~\ref{app:proofs_of_minor_results}.
\end{lemma}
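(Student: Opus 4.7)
The plan is to unwind both sides of the equivariance condition from Definition~\ref{def:equivariant_multilinear_integral_operator} into integrals over $\mcal{M}_1 \times \cdots \times \mcal{M}_r$ and match integrands. First I would expand
\begin{equation*}
    \mcal{K}^{E_0}_g \mcal{T}_K\big[\mcal{K}^{E_1}_{g^{-1}} F_1, \ldots, \mcal{K}^{E_r}_{g^{-1}} F_r\big](p)
    = \Theta_{0,g^{-1}}\Big(\mcal{T}_K[\ldots]\big(\theta_{0,g}(p)\big)\Big),
\end{equation*}
substitute $(\mcal{K}^{E_j}_{g^{-1}} F_j)(q_j) = \Theta_{j,g}(F_j(\theta_{j,g^{-1}}(q_j)))$, and change variables in each $\mcal{M}_j$-factor via the diffeomorphism $\theta_{j,g}$, so that $\volform_j(q_j)$ gets replaced by $\theta_{j,g}^* \volform_j(\tilde q_j)$. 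Pulling the linear map $\Theta_{0,g^{-1}}$ inside the integral and inside the multilinear bracket $K(\cdot)[\cdot,\ldots,\cdot]$ exactly assembles the transformed kernel $\mcal{K}^E_g(K)$ given in \eqref{eqn:multilinear_kernel_transformation} acting on $(F_1,\ldots,F_r)$. Thus equivariance with respect to $g$ is equivalent to
\begin{equation*}
    \int \mcal{K}^E_g(K)(p,\tilde q_1,\ldots,\tilde q_r)\big[F_1(\tilde q_1),\ldots,F_r(\tilde q_r)\big] \theta_{1,g}^*\volform_1 \wedge \cdots \wedge \theta_{r,g}^*\volform_r
    = \mcal{T}_K[F_1,\ldots,F_r](p)
\end{equation*}
for every $(F_1,\ldots,F_r)$ in the domain and every $p\in\mcal{M}_0$.

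The ``if'' direction is then immediate: if the top-degree $E$-valued form $\mcal{K}^E_g(K)\,\theta_{1,g}^*\volform_1\wedge\cdots\wedge \theta_{r,g}^*\volform_r$ equals $K\,\volform_1\wedge\cdots\wedge\volform_r$ pointwise, then the two integrals coincide for any admissible inputs, giving equivariance.

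For the ``only if'' direction I would argue pointwise in $(p,q_1,\ldots,q_r)$ using a standard density/localization argument of the type used in Proposition~\ref{prop:symmetries_of_linear_integral_operator}. Fix $p\in\mcal{M}_0$ and suppose the domain contains all compactly supported smooth sections (as in Proposition~\ref{prop:symmetries_of_linear_integral_operator}). Subtracting the two integrals gives a multilinear functional on $\bigoplus_j C_c^\infty(\mcal{M}_j,E_j)$ vanishing identically. Choosing $F_j$ to be bump sections concentrated near arbitrary points $q_j^*$ with values approximating arbitrary $v_j^* \in E_{j,q_j^*}$, and using continuity of the two kernels together with continuity of the volume forms, forces
\begin{equation*}
    \mcal{K}^E_g(K)(p,q_1^*,\ldots,q_r^*)[v_1^*,\ldots,v_r^*]\,\theta_{1,g}^*\volform_1 \wedge \cdots \wedge \theta_{r,g}^*\volform_r
    = K(p,q_1^*,\ldots,q_r^*)[v_1^*,\ldots,v_r^*]\,\volform_1\wedge\cdots\wedge\volform_r
\end{equation*}
at $(q_1^*,\ldots,q_r^*)$, and since the $v_j^*$ were arbitrary, the fiberwise multilinear maps agree, yielding \eqref{eqn:transformation_of_integral_kernels}.

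The main obstacle is the ``only if'' direction: carrying out the localization cleanly requires the domain hypothesis on $\mcal{T}_K$ (containing compactly supported smooth sections of each $E_j$) and a careful use of a partition-of-unity/bump-section construction on the $\mcal{M}_j$'s, together with an $r$-linear version of the du Bois-Reymond lemma. The change-of-variables step itself is routine once one remembers that $\theta_{j,g}$ is an orientation-preserving or orientation-reversing diffeomorphism and that the pullback of the volume form absorbs the Jacobian factor automatically.
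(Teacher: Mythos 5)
Your proposal is correct and follows essentially the same route as the paper's proof: expand the equivariance condition, change variables in the integral by the diffeomorphisms $\theta_{j,g}$ (so the volume forms pick up the pullbacks), pull the fiber-linear map $\Theta_{0,g^{-1}}$ inside the integral to assemble $\mcal{K}^E_g(K)$, and then argue equality of kernels from equality of integrals over all compactly supported inputs. The only difference is one of exposition: you spell out the bump-section localization for the ``only if'' direction, whereas the paper simply asserts it follows from the domain containing all smooth compactly supported fields.
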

We note that $\mcal{K}^{\Omega}_g (\volform_j) = \theta_{j,g}^* \volform_j$ is the natural transformation of the differential form $\volform_j \in \Omega^{d_j}(T^*\mcal{M}_j)$ (a covariant tensor field) described in Section~\ref{subsec:symmetry_of_tf}.
The Lie derivative of this action on volume forms is given by
\begin{equation}
    \mcal{L}^{\Omega}_{\xi} \volform 
    = \td \big( \hat{\theta}(\xi) \intmul \volform \big)
    = \diverg \hat{\theta}(\xi) \volform
\end{equation}
thanks to Cartan's magic formula and the definition of divergence (see \cite{Lee2013introduction}).
Therefore, differentiating \eqref{eqn:transformation_of_integral_kernels} along the curve $g(t) = \exp(t\xi)$ yields the Lie derivative
\begin{empheq}[box=\widefbox]{equation}
    \mcal{L}_{\xi} \big( K \volform_1 \wedge \cdots \wedge \volform_r \big)
    = \bigg[ \mcal{L}^E_{\xi}(K) + K \sum_{j=1}^r \diverg \hat{\theta}_j(\xi)\bigg] \volform_1 \wedge \cdots \wedge \volform_r.
    \label{eqn:Lie_derivative_for_integral_kernels}
\end{empheq}
Here, $\mcal{L}_{\xi}^E$ is the Lie derivative associated with \eqref{eqn:multilinear_kernel_transformation}.
For the integral operators discussed in Section~\ref{subsec:NNs_acting_on_fields}, the formulas derived here recover Eqs.~\ref{eqn:transformation_for_linear_integral_kernels_on_Rn}~and~\ref{eqn:Lie_derivative_for_linear_integral_kernels_on_Rn}.

\section{Invariant submanifolds and tangency}
\label{sec:submanifolds_and_tangency}
Studying the symmetries of smooth maps can be cast into a more general framework in which we study the symmetries of submanifolds.
Specifically, the symmetries of a smooth map $F: \mcal{M}_0 \to \mcal{M}_1$ between manifolds correspond to symmetries of its graph, $\graph(F)$, 
and the symmetries of a smooth section of a vector bundle $F\in C^{\infty}(\mcal{M},E)$ correspond to symmetries of its image, $\image(F)$ 
--- both of which are properly embedded submanifolds of $\mcal{M}_0 \times \mcal{M}_1$ and $E$, respectively.
We show that symmetries of 
a large class of submanifolds, including the above,
are revealed by checking whether the infinitesimal generators of the group action are tangent to the submanifold.
In this setting, the Lie derivative of $F\in C^{\infty}(\mcal{M},E)$ has a geometric interpretation as a projection of the infintesimal generator onto the tangent space of the image $\image(F)$, viewed as a submanifold of the bundle.

\subsection{Symmetry of submanifolds}
\label{subsec:detecting_symmetry_of_submanifolds}
In this section we study the infinitesimal conditions for a submanifold to be invariant under the action of a Lie group.
Suppose that $\mcal{N}$ is a manifold and $\theta: \mcal{N}\times G \to \mcal{N}$ is a right action of a Lie group $G$ on $\mcal{N}$.
Sometimes we denote this action by $p\cdot g = \theta(p,g)$ when there is no ambiguity.
Though our results also hold for left actions, as we discuss later in Remark~\ref{rem:left_actions_on_manifolds}, working with right actions is standard in this context and allows us to leverage results from \citet{Lee2013introduction} more naturally in our proofs.
Fixing $p\in\mcal{N}$, the orbit map of this action is denoted $\theta^{(p)}:G\to \mcal{N}$.
Fixing $g\in G$, the map $\theta_g:\mcal{N}\to \mcal{N}$ defined by $\theta_g: p \mapsto \theta(p,g)$ is a diffeomorphism with inverse $\theta_{g^{-1}}$.
\begin{definition}
    \label{def:group_invariance_of_subset}
    A subset $\mcal{M} \subset \mcal{N}$ is \textbf{$G$-invariant} if and only if $\theta(p, g) \in \mcal{M}$ for every $g\in G$ and $p\in\mcal{M}$.
\end{definition}
\noindent
Sometimes we will denote $\mcal{M} \cdot G = \{ \theta(p, g) \ : \ p \in \mcal{M}, \ g\in G \}$, in which case $G$-invariance of $\mcal{M}$ can be stated as $\mcal{M} \cdot G \subset \mcal{M}$.

We study the group invariance of submanifolds of the following type:
\begin{definition}
    \label{def:closed_slice}
    Let $\mcal{M}$ be a weakly embedded $m$-dimensional submanifold of an $n$-dimensional manifold $\mcal{N}$. 
    We say that $\mcal{M}$ is \textbf{arcwise-closed} if any smooth curve $\gamma:[a,b] \to \mcal{N}$ satisfying $\gamma((a,b)) \subset \mcal{M}$ must also satisfy $\gamma([a,b])\subset \mcal{M}$.
\end{definition}
\noindent
Submanifolds of this type include all properly embedded submanifolds of $\mcal{N}$ because properly embedded submanifolds are closed subsets (Proposition~5.5 in \cite{Lee2013introduction}).
More interestingly, we have the following:
\begin{proposition}
    \label{prop:foliations_have_arcwise_closed_leaves}
    The leaves of any (nonsingular) foliation of $\mcal{N}$ are arcwise-closed.
    We provide a proof in Appendix~\ref{app:proofs_of_minor_results}.
\end{proposition}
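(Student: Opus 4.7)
The plan is to reduce the statement to a local check using foliation charts, then exploit connectedness of $\gamma((a,b))$ together with the countability of the plaques of a leaf in any single chart. Concretely, let $\mcal{M}$ be a leaf of a (nonsingular) foliation $\mathcal{F}$ of $\mcal{N}$, and let $\gamma: [a,b] \to \mcal{N}$ be smooth with $\gamma((a,b)) \subset \mcal{M}$. It suffices to show that $\gamma(b) \in \mcal{M}$ (the argument at $a$ is identical).

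First I would choose a foliation chart $(U, \varphi)$ about $\gamma(b)$, so that $\varphi(U) = V \times W$ with $V \subset \R^k$, $W \subset \R^{n-k}$, and the plaques of $\mathcal{F}$ in $U$ are the slices $\varphi^{-1}(V \times \{w\})$. Let $\pi: U \to W$ denote the transverse projection $\pi = \mathrm{pr}_2 \circ \varphi$. By continuity of $\gamma$, pick $\varepsilon > 0$ so small that $\gamma([b-\varepsilon, b]) \subset U$. Then $\gamma((b-\varepsilon, b)) \subset \mcal{M} \cap U$, and I would invoke the standard fact that the intersection of any single leaf with one foliation chart is a countable (or finite) disjoint union of plaques; hence $\pi(\mcal{M} \cap U)$ is a countable subset $\{w_j\} \subset W$.

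Now $\pi \circ \gamma|_{(b-\varepsilon, b)}$ is a continuous map from a connected interval into the countable set $\{w_j\}$, which is totally disconnected in $W$; therefore it must be constant, say equal to some $w_\star$. Consequently $\gamma((b-\varepsilon, b))$ lies in the single plaque $P_\star := \varphi^{-1}(V \times \{w_\star\})$. Continuity of $\pi \circ \gamma$ at $b$ forces $\pi(\gamma(b)) = w_\star$, so $\gamma(b) \in P_\star$. Since $P_\star$ meets $\mcal{M}$ (it contains $\gamma(t)$ for $t \in (b-\varepsilon, b)$) and distinct plaques of the foliation lie in distinct leaves, we conclude $P_\star \subset \mcal{M}$, hence $\gamma(b) \in \mcal{M}$.

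The main obstacle, and the only nontrivial ingredient, is the countability of $\mcal{M} \cap U$ as a union of plaques; this is built into the definition of a foliation (each leaf meets each foliation chart in an at most countable family of plaques, via second-countability of $\mcal{N}$) and I would cite it rather than reprove it. Everything else is elementary: connectedness of an interval, total disconnectedness of a countable subset of Euclidean space, and continuity. Note that we do not even use smoothness of $\gamma$ — only continuity — so the hypothesis could be weakened, but smoothness is what is needed in the applications to $\graph(F)$ and $\image(F)$.
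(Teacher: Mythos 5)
Your proof is correct and follows essentially the same route as the paper's: pass to a foliation chart about $\gamma(b)$, use the at-most-countability of plaques of the leaf in that single chart together with the connectedness of the interval to conclude that the transverse coordinate of $\gamma$ is locally constant, and then take the limit $t\to b$. One small slip in wording: it is not true that ``distinct plaques of the foliation lie in distinct leaves'' (a single leaf can meet a chart in many plaques); what you actually need, and what makes the step go through, is only that each plaque is connected and hence contained in a single leaf, so $P_\star$ meeting $\mcal{M}$ forces $P_\star\subset\mcal{M}$.
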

This means that the kinds of submanifolds we are considering include all possible Lie subgroups (\citet[Theorem~19.25]{Lee2013introduction}) as well as their orbits under free and proper group actions (\citet[Proposition~21.7]{Lee2013introduction}).
The leaves of singular foliations associated with integrable distributions of nonconstant rank (see \citet[Sections~3.18--25]{Kolar1993natural}) can fail to be arcwise-closed.
For example, the distribution spanned by the vector field $x\frac{\partial}{\partial x}$ on $\R$ has maximal integral manifolds $(-\infty, 0)$, $\{0\}$, and $(0,\infty)$ forming a singular foliation of $\R$.
Obviously, the leaves $(-\infty,0)$ and $(0,\infty)$ are not arcwise-closed.

Given a submanifold and a candidate group of transformations, the following theorem describes the largest connected Lie subgroup of symmetries of the submanifold.
Specifically, these symmetries can be identified by checking tangency conditions between infinitesimal generators and the submanifold.
\begin{theorem}
    \label{thm:symmetries_of_a_submanifold}
    Let $\mcal{M}$ be an immersed submanifold of $\mcal{N}$ and let $\theta: \mcal{N}\times G \to \mcal{N}$ be a right action of a Lie group $G$ on $\mcal{N}$ with infinitesimal generator $\hat{\theta}:\Lie(G) \to \vf(\mcal{N})$.
    Then
    \begin{equation} \label{eqn:submanifold_symmetry_Lie_subalgebra}
        \sym_G(\mcal{M}) = \big\{ \xi \in \Lie(G) \ : \ \hat{\theta}(\xi)_p \in T_p\mcal{M} \quad \forall p\in\mcal{M} \big\}
    \end{equation}
    is the Lie subalgebra of a unique connected Lie subgroup $H \subset G$.
    If $\mcal{M}$ is weakly-embedded and arcwise-closed in $\mcal{N}$, then this subgroup has the following properties:
    \begin{enumerate}[label=(\roman*)]
        \item $\mcal{M}\cdot H \subset \mcal{M}$
        \item If $\tilde{H}$ is a connected Lie subgroup of $G$ such that $\mcal{M} \cdot \tilde{H} \subset \mcal{M}$, then $\tilde{H} \subset H$.
    \end{enumerate}
    If $\mcal{M}$ is properly embedded in $\mcal{N}$ then $H$ is the identity component of the closed, properly embedded Lie subgroup
    \begin{equation}
        \Sym_G(\mcal{M}) = \{ g\in G \ : \ \mcal{M} \cdot g \subset \mcal{M} \}.
    \end{equation}
    A proof is provided in Appendix~\ref{app:symmetries_of_a_submanifold}.
\end{theorem}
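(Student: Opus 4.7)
The plan is to build the subgroup $H$ from the Lie-subalgebra side via the correspondence in Theorem~19.26 of Lee, then establish properties (i) and (ii) using flows of infinitesimal generators, and finally identify $H$ with the identity component of $\Sym_G(\mcal{M})$ in the properly-embedded case.

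First I would verify that $\sym_G(\mcal{M})$ as defined in \eqref{eqn:submanifold_symmetry_Lie_subalgebra} is a Lie subalgebra of $\Lie(G)$. Linearity is immediate from linearity of $\hat{\theta}$ and of each $T_p\mcal{M}$. For closure under the bracket, note that since $\theta$ is a right action, $\hat{\theta}$ is a Lie-algebra homomorphism, so $[\hat{\theta}(\xi),\hat{\theta}(\eta)] = \hat{\theta}([\xi,\eta])$. On the other hand, if $\hat{\theta}(\xi)$ and $\hat{\theta}(\eta)$ are everywhere tangent to $\mcal{M}$, they restrict to smooth vector fields on $\mcal{M}$ (using the weak-embedding property so that smooth maps into $\mcal{N}$ landing in $\mcal{M}$ are smooth into $\mcal{M}$), and the bracket computed on $\mcal{M}$ coincides with the ambient bracket at points of $\mcal{M}$, so $\hat{\theta}([\xi,\eta])_p\in T_p\mcal{M}$. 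The Lie-group/Lie-subalgebra correspondence then furnishes a unique connected Lie subgroup $H\subset G$ with $\Lie(H)=\sym_G(\mcal{M})$.

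For part (i), since $H$ is connected every element has the form $\exp(\xi_1)\cdots\exp(\xi_N)$ with $\xi_i\in\sym_G(\mcal{M})$, so it suffices to show $\theta_{\exp(t\xi)}(p)\in\mcal{M}$ for all $t\in\R$, $p\in\mcal{M}$, and $\xi\in\sym_G(\mcal{M})$. Using $\Flow^t_{\hat{\theta}(\xi)}=\theta_{\exp(t\xi)}$ and the fact that $\hat{\theta}(\xi)$ is tangent to $\mcal{M}$ everywhere on $\mcal{M}$, local flow containment (local uniqueness of solutions in $\mcal{M}$ of the restricted vector field, available by weak-embedding) gives some open interval around $t=0$ on which the orbit stays in $\mcal{M}$. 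Here comes the main obstacle: extending this to all $t\in\R$. I would let $T=\sup\{t>0:\theta_{\exp(s\xi)}(p)\in\mcal{M}\ \forall s\in[0,t)\}$ and show $T=\infty$ by contradiction: if $T<\infty$ then the curve $s\mapsto\theta_{\exp(s\xi)}(p)$ is a smooth curve in $\mcal{N}$ on $[0,T]$ whose image on $(0,T)$ lies in $\mcal{M}$, so by arcwise-closedness the endpoint $\theta_{\exp(T\xi)}(p)$ lies in $\mcal{M}$, and local flow containment at that point contradicts maximality. The same argument on the negative side and finite iteration of $\exp$'s give $\mcal{M}\cdot H\subset\mcal{M}$.

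For part (ii), if $\tilde{H}$ is connected with $\mcal{M}\cdot\tilde{H}\subset\mcal{M}$ and $\xi\in\Lie(\tilde{H})$, then $t\mapsto\theta_{\exp(t\xi)}(p)$ is a smooth curve in $\mcal{N}$ whose image lies in $\mcal{M}$; by the weak-embedding property this curve is smooth into $\mcal{M}$, so differentiating at $t=0$ yields $\hat{\theta}(\xi)_p\in T_p\mcal{M}$. Hence $\Lie(\tilde{H})\subset\sym_G(\mcal{M})=\Lie(H)$, and since both subgroups are connected this lifts to $\tilde{H}\subset H$. Finally, for the properly-embedded case, $\Sym_G(\mcal{M})$ is manifestly a subgroup of $G$; closedness follows because $\mcal{M}$ is a closed subset of $\mcal{N}$ (Proposition~5.5 in Lee), and if $g_n\to g$ with each $g_n\in\Sym_G(\mcal{M})$ then $\theta_{g_n}(p)\to\theta_g(p)\in\overline{\mcal{M}}=\mcal{M}$ for every $p\in\mcal{M}$. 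The closed-subgroup theorem makes $\Sym_G(\mcal{M})$ an embedded Lie subgroup. Its identity component $\Sym_G(\mcal{M})_0$ is connected with $\mcal{M}\cdot\Sym_G(\mcal{M})_0\subset\mcal{M}$, so $\Sym_G(\mcal{M})_0\subset H$ by (ii); the reverse inclusion follows from (i) together with connectedness of $H$. The hardest step is the bootstrap from local to global flow-containment via arcwise-closedness, as this is where the hypothesis on $\mcal{M}$ is genuinely used.
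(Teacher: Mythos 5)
Your proof follows essentially the same route as the paper: verify closure of $\sym_G(\mcal{M})$ under the bracket via the homomorphism property of $\hat{\theta}$ and restriction of tangent vector fields, invoke the Lie subalgebra/subgroup correspondence, establish (i) by a bootstrap from local tangency to global orbit containment (where arcwise-closedness enters), deduce (ii) by differentiating orbit curves, and treat the properly-embedded case via the closed subgroup theorem and the two inclusions.

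One small slip worth noting: in your verification that $\sym_G(\mcal{M})$ is closed under the Lie bracket, you invoke the weak-embedding property to restrict $\hat{\theta}(\xi)$ to a smooth vector field on $\mcal{M}$. But the theorem asserts the Lie subalgebra claim for arbitrary \emph{immersed} submanifolds; weak embeddedness is only a hypothesis for properties (i) and (ii). The weak-embedding property is not needed here: Lee's Proposition~8.23 shows a vector field tangent to an immersed submanifold restricts uniquely to a smooth vector field on that submanifold, and Corollary~8.32 then gives closure under bracket, both under immersion alone. As written, your argument establishes the Lie subalgebra statement under stronger hypotheses than the theorem demands. Also, you inline what the paper extracts as a standalone lemma (a maximal integral curve of a vector field tangent to an arcwise-closed weakly embedded submanifold remains in the submanifold once it touches it); your $T=\sup\{\cdot\}$ argument is the same idea and is sound, though the paper's version is careful about uniqueness, translation, and gluing of integral curves in $\mcal{M}$, details you would need to spell out.
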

\noindent Since the infinitesimal generator $\hat{\theta}$ is a linear map and $T_p\mcal{M}$ is a subspace of $T_p\mcal{N}$, the tangency conditions defining the Lie subalgebra \eqref{eqn:submanifold_symmetry_Lie_subalgebra} can be viewed as a set of linear equality constraints on the elements $\xi \in \Lie(G)$.
Hence, $\sym_G(\mcal{M})$ can be computed as the nullspace of a positive semidefinite operator on $\Lie(G)$, defined analogously to the case described earlier in Section~\ref{subsec:submanifolds_of_Rd}.

The following theorem provides necessary and sufficient conditions for arcwise-closed weakly-embedded submanifolds to be $G$-invariant.
These are generally nonlinear constraints on the submanifold, regarded as the zero section of its normal bundle under identification with a tubular neighborhood.
However, we will show in Section~\ref{subsec:Lie_derivative_as_projection} that these become linear constraints recovering the Lie derivative when the submanifold in question is the image of a section of a vector bundle and the group action is fiber-linear.
\begin{theorem}
    \label{thm:symmetry_conditions_for_submanifold}
    Let $\mcal{M}$ be an arcwise-closed weakly-embedded submanifold of $\mcal{N}$ and let $\theta: \mcal{N}\times G \to \mcal{N}$ be a right action of a Lie group $G$ on $\mcal{N}$ with infinitesimal generator $\hat{\theta}:\Lie(G) \to \vf(\mcal{N})$.
    Let $\xi_1, \ldots, \xi_q$ generate $\Lie(G)$ and let $g_1, \ldots, g_{n_G - 1}$ be elements from each non-identity component of $G$.
    Then $\mcal{M}$ is $G_0$-invariant if and only if
    \begin{equation}
        \hat{\theta}(\xi_i)_p \in T_p\mcal{M} \qquad \forall p\in\mcal{M}, \quad i=1, \ldots, q.
        \label{eqn:generator_manifold_tangency}
    \end{equation}
    If, in addition, we have $\mcal{M} \cdot g_j \subset \mcal{M}$ for every $j=1, \ldots, n_{G}-1$, then $\mcal{M}$ is $G$-invariant.
    A proof is provided in Appendix~\ref{app:symmetry_conditions_for_submanifold}.
\end{theorem}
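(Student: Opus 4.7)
The plan is to reduce the theorem to Theorem~\ref{thm:symmetries_of_a_submanifold}, which already identifies the tangency set
\begin{equation*}
\mathfrak{h} := \{ \xi \in \Lie(G) : \hat{\theta}(\xi)_p \in T_p\mcal{M} \ \forall p \in \mcal{M} \}
\end{equation*}
as the Lie subalgebra of a unique connected Lie subgroup $H \subset G$ satisfying $\mcal{M}\cdot H \subset \mcal{M}$ (using the arcwise-closed and weakly-embedded hypotheses). The biconditional is then essentially algebraic: if all the generators $\xi_i$ belong to $\mathfrak{h}$, then $\mathfrak{h} = \Lie(G)$ as subalgebras, forcing $H = G_0$.

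For the forward direction, suppose $\mcal{M}$ is $G_0$-invariant. Fix any $\xi\in\Lie(G)$ and $p\in\mcal{M}$. Since $\exp(t\xi)\in G_0$ for all $t$, the smooth curve $\gamma(t) := \theta_{\exp(t\xi)}(p)$ in $\mcal{N}$ has image in $\mcal{M}$; because $\mcal{M}$ is weakly-embedded, $\gamma$ is smooth as a curve into $\mcal{M}$, and therefore $\gamma'(0) = \hat{\theta}(\xi)_p \in T_p\mcal{M}$. Specializing to each generator gives \eqref{eqn:generator_manifold_tangency}.

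For the converse, assume \eqref{eqn:generator_manifold_tangency}, so that $\xi_i\in\mathfrak{h}$ for every $i$. Theorem~\ref{thm:symmetries_of_a_submanifold} guarantees that $\mathfrak{h}$ is closed under linear combinations and Lie brackets; since $\{\xi_i\}_{i=1}^q$ generate $\Lie(G)$ under exactly these operations, we conclude $\mathfrak{h}=\Lie(G)$. The unique connected Lie subgroup with Lie algebra equal to $\Lie(G)$ is $G_0$, so part~(i) of Theorem~\ref{thm:symmetries_of_a_submanifold} yields $\mcal{M}\cdot G_0 \subset \mcal{M}$, establishing $G_0$-invariance.

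For the second statement, assume additionally that $\mcal{M}\cdot g_j \subset \mcal{M}$ for $j=1,\ldots,n_G-1$. Every $g\in G$ belongs to exactly one coset $g_j G_0$ of the identity component (taking $g_0 := e$), so we may write $g = g_j h$ for some $h\in G_0$. Then
\begin{equation*}
\mcal{M}\cdot g \;=\; (\mcal{M}\cdot g_j)\cdot h \;\subset\; \mcal{M}\cdot h \;\subset\; \mcal{M},
\end{equation*}
where the final containment invokes the $G_0$-invariance just proved. Hence $\mcal{M}$ is $G$-invariant. The only subtle point in this proof is ensuring that tangent vectors to orbit curves genuinely lie in $T_p\mcal{M}$, which is the role of the weakly-embedded hypothesis; everything else is either algebraic propagation through the Lie bracket or coset-wise reduction to the identity component.
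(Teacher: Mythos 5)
Your proof is correct and follows essentially the same route as the paper: derive tangency by differentiating orbit curves (using weak embedding), invoke Theorem~\ref{thm:symmetries_of_a_submanifold} to upgrade tangency of the generators to $\mathfrak{h}=\Lie(G)$ and hence $H = G_0$, then handle the remaining components by writing each element as a product with something in $G_0$. The only cosmetic difference is that you decompose $g = g_j h$ with $h\in G_0$ while the paper writes $g = g_0 g_j$ with $g_0\in G_0$; both are valid since $G_0$ is a normal subgroup and the cosets agree.
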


\begin{remark}[Left actions]
    \label{rem:left_actions_on_manifolds}
    When the group $G$ acts on $\mcal{N}$ from the left according to $\theta^L:G\times\mcal{N} \to \mcal{N}$, we can always construct an equivalent right-action $\theta^R:\mcal{N}\times \mcal{N} \to \mcal{N}$ by setting
    $\theta^R(p, g) = \theta^L(g^{-1}, p)$.
    The corresponding infinitesimal generators are related by
        $\hat{\theta}^R = - \hat{\theta}^L$.
    Since $\hat{\theta}^L(\xi)_p \in T_p\mcal{M}$ if and only if $\hat{\theta}^R(\xi)_p \in T_p\mcal{M}$,  Theorems~\ref{thm:symmetry_conditions_for_submanifold}~and~\ref{thm:symmetries_of_a_submanifold} hold without modification for left $G$-actions.
\end{remark}

\subsection{The Lie derivative as a projection}
\label{subsec:Lie_derivative_as_projection}
We provide a geometric interpretation of the Lie derivative in \eqref{eqn:Lie_derivative} by expressing it in terms of a projection of the infinitesimal generator of the group action onto the tangent space of $\image(F)$ for smooth sections $F\in C^{\infty}(\mcal{M},E)$.
This allows us to connect the Lie derivative to the tangency conditions for symmetry of submanifolds presented in Section~\ref{subsec:detecting_symmetry_of_submanifolds}.

The Lie derivative $\mcal{L}_{\xi} F(p)$ lies in $E_p$, while $T_{F(p)}\image(F)$ is a subspace of $T_{F(p)} E$.
To relate quantities in these different spaces, the following lemma lifts each $E_p$ to a subspace of $T_{F(p)} E$.
\begin{lemma}
    \label{lem:injection_of_E_into_TE}
    For every smooth section $F\in C^{\infty}(\mcal{M},E)$ there is a well-defined injective vector bundle homomorphism $\imath_F : E \to T E$ that 
    is expressed in any local trivialization $\Phi : \pi^{-1}(\mcal{U}) \to \mcal{U} \times \R^k$ as
    \begin{equation}
    \begin{aligned}
        \D\Phi \circ \imath_F \circ \Phi^{-1} : \mcal{U} \times \R^k &\to T(\mcal{U} \times \R^k) \\
        (p, \vect{v}) &\mapsto (0, \vect{v})_{\Phi(F(p))}.
    \end{aligned}
    \label{eqn:injection_E_to_TE}
    \end{equation}
    We give a proof in Appendix~\ref{app:characterization_of_Lie_derivative}.
\end{lemma}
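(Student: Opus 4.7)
The plan is to define $\imath_F$ intrinsically by exploiting the canonical vector-space structure on each fiber $E_p$, verify the claimed local expression, and then read off the required properties (smoothness, linearity, injectivity, and independence of trivialization) from that expression.

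First, I would use that each fiber $E_p$ is a $k$-dimensional real vector space, so for any $y \in E_p$ there is a canonical ``translation'' isomorphism $\tau_y : E_p \to T_y E_p$ given by $\tau_y(v) = \dot{\gamma}(0)$ with $\gamma(s) = y + sv$, a straight line contained in $E_p$. Because each local trivialization realizes $E_p$ as $\{p\}\times \R^k \subset \mcal{U}\times \R^k$, the fibers are embedded submanifolds of $E$, so the inclusions $\iota_p : E_p \hookrightarrow E$ induce linear injections $(\iota_p)_{*,y} : T_y E_p \to T_y E$. I would then define $\imath_F$ fiberwise by
\[
    \imath_F\big|_{E_p} \;=\; (\iota_p)_{*,\,F(p)} \circ \tau_{F(p)} : E_p \longrightarrow T_{F(p)} E.
\]
This construction is manifestly coordinate-free, so it is automatically well-defined as a map $E \to TE$.

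Second, I would compute the local expression. In a trivialization $\Phi : \pi^{-1}(\mcal{U}) \to \mcal{U}\times \R^k$, write $\Phi(F(p)) = (p,\vect{v}_0)$. The curve $s \mapsto F(p) + sv$ in $E_p$ is sent by $\Phi$ to $s \mapsto (p, \vect{v}_0 + s\vect{v})$, whose velocity at $s=0$ is $(0,\vect{v})_{\Phi(F(p))}$. Hence
\[
    \D\Phi \circ \imath_F \circ \Phi^{-1}(p,\vect{v}) = (0,\vect{v})_{\Phi(F(p))},
\]
recovering the stated formula. As a consistency check, the transition law $\Phi_\alpha \circ \Phi_\beta^{-1}(p,\vect{v}) = (p, \mat{T}_{\alpha,\beta}(p)\vect{v})$ differentiates in the fiber direction to $(0,\vect{w}) \mapsto (0, \mat{T}_{\alpha,\beta}(p)\vect{w})$, which is exactly the linear transformation law of $\vect{w} \in E_p$ under a change of trivialization, so the local formula transforms correctly across overlaps.

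Third, all remaining properties follow immediately from the local formula. Smoothness of $\imath_F$ on each $\pi^{-1}(\mcal{U})$ is clear because $(p,\vect{v}) \mapsto (0,\vect{v})_{\Phi(F(p))}$ is the composition of the smooth trivialization $\Phi$, the smooth section $F$, and a linear map into the tangent bundle. The map $\imath_F$ covers $F:\mcal{M}\to E$ as its base map (which is the right base map since $TE$ is a bundle over $E$), and is linear on each fiber by direct inspection. Injectivity follows because $\vect{v}\mapsto (0,\vect{v})_{\Phi(F(p))}$ has trivial kernel.

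The main potential obstacle is a subtle one of bookkeeping rather than depth: one must verify that each fiber $E_p$ genuinely is an embedded submanifold of $E$ so that the tangent inclusion $(\iota_p)_*$ is meaningful, which follows from the slice criterion applied to any local trivialization. Beyond this, the argument is routine.
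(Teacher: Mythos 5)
Your proof is correct, but it reverses the logical direction of the paper's argument. The paper \emph{defines} $\imath_F$ by the local formula $(p, \vect{v}) \mapsto (0, \vect{v})_{\Phi(F(p))}$ and then establishes well-definedness by a direct transition check: writing $\tilde{\Phi}\circ\Phi^{-1}(p,\vect{v}) = (p, \mat{T}(p)\vect{v})$, differentiating, and showing that $\D\tilde{\Phi}\circ\D\Phi^{-1}\circ\imath_{\Phi\circ F} = \imath_{\tilde{\Phi}\circ F}\circ\tilde{\Phi}\circ\Phi^{-1}$. You instead give a coordinate-free construction, namely the canonical straight-line identification $\tau_{F(p)}:E_p\to T_{F(p)}E_p$ followed by the tangent map of the fiber inclusion $\iota_p:E_p\hookrightarrow E$, so well-definedness is automatic and the local formula is then \emph{derived} by a short computation; your transition check becomes a sanity check rather than the crux of the proof. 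Your route is conceptually cleaner and avoids the explicit diagram chase, at the mild cost of having to invoke embeddedness of the fibers and the canonical identification of a vector space with its tangent spaces. In fact, what you construct is precisely the vertical lift of $E$ into $VE\subset TE$ (restricted along $F$) that the paper cites from Kol\'a\v{r}, Michor, and Slov\'ak, Section~6.11, immediately after the lemma; the paper evidently chose to argue from the local formula so as not to presuppose that machinery. Both arguments are complete and of comparable length, and the two approaches are the standard pair of proofs for this kind of statement.
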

This is a special case of the \emph{vertical lift} of $E$ into the vertical bundle $V E = \{ v\in TE \ : \ \D \pi v = 0 \}$ described by \cite{Kolar1993natural} in Section~6.11.
The \emph{vertical projection} $\vpr_E: VE \to E$ provides a left-inverse satisfying $\vpr_E \circ \imath_F = \Id_E$.

The following result relates the Lie derivative to a projection via the vertical lifting.
\begin{theorem}
    \label{thm:characterization_of_Lie_derivative}
    Given a smooth section $F\in C^{\infty}(\mcal{M},E)$ and $p \in \mcal{M}$,
    the map $P_{F(p)}:=\D(F\circ \pi)(F(p)):T_{F(p)} E \to T_{F(p)} E$ is a linear projection onto $T_{F(p)}\image(F)$ and for every $\xi\in\Lie(G)$ we have
    \begin{equation}
        \imath_F \circ (\mcal{L}_{\xi} F) (p) = 
        -\big[ I - P_{F(p)} \big] \hat{\Theta}(\xi)_{F(p)}.
        \label{eqn:Lie_derivative_as_projection}
    \end{equation}
    We give a proof in Appendix~\ref{app:characterization_of_Lie_derivative}.
\end{theorem}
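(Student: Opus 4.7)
The statement has two parts: $P_{F(p)}$ is a projection onto $T_{F(p)}\image(F)$, and the Lie-derivative identity \eqref{eqn:Lie_derivative_as_projection}. I would handle them in that order, since the projection structure is what makes the second formula meaningful.

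For the projection claim, the key observation is that $\pi\circ F = \Id_\mcal{M}$, so the map $F\circ\pi : E \to E$ is idempotent: $(F\circ\pi)\circ(F\circ\pi) = F\circ(\pi\circ F)\circ\pi = F\circ\pi$. Differentiating at $F(p)$ via the chain rule gives $P_{F(p)}^2 = P_{F(p)}$. To identify the image, I would first note that $F$ is an immersion, because the chain rule applied to $\pi\circ F = \Id_\mcal{M}$ forces $\D\pi(F(p))\circ\D F(p) = \Id_{T_p\mcal{M}}$ so $\D F(p)$ is injective; consequently $T_{F(p)}\image(F) = \Range(\D F(p))$. The inclusion $\Range(P_{F(p)})\subset\Range(\D F(p))$ follows from $P_{F(p)} = \D F(p)\circ\D\pi(F(p))$, and the reverse inclusion follows from $P_{F(p)}\circ\D F(p) = \D F(p)\circ\D\pi(F(p))\circ\D F(p) = \D F(p)$.

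For the Lie-derivative identity, consider the curve
\begin{equation}
    \gamma(t) = \Theta_{\exp(-t\xi)}\bigl(F(\theta_{\exp(t\xi)}(p))\bigr),
\end{equation}
which lies entirely in the fiber $E_p$ (because $\Theta_g$ covers $\theta_g$ via the bundle projection), with $\gamma(0) = F(p)$ and $\gamma'(0) = \mcal{L}_\xi F(p)$ by the definition in \eqref{eqn:Lie_derivative}. Since $\gamma$ is fiberwise, its velocity is vertical and equals $\imath_F\circ\mcal{L}_\xi F(p)$ by construction of the vertical lift in Lemma~\ref{lem:injection_of_E_into_TE}. The plan is then to compute $\gamma'(0)$ in $T_{F(p)}E$ by splitting the two-parameter dependence: treating $\gamma(t) = \Theta(F(\theta_{\exp(t\xi)}(p)),\exp(-t\xi))$ as $\Theta$ applied to the pair of curves, the chain rule at $(F(p),e)$ gives
\begin{equation}
    \gamma'(0) = \D F(p)\,\hat{\theta}(\xi)_p - \hat{\Theta}(\xi)_{F(p)},
\end{equation}
using $D_1\Theta(\cdot,e) = \Id$ and $D_2\Theta(F(p),e)\xi = \hat{\Theta}(\xi)_{F(p)}$, together with $\ddt\exp(-t\xi)|_{t=0} = -\xi$.

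It remains to recognize the right-hand side as $-[I-P_{F(p)}]\hat{\Theta}(\xi)_{F(p)}$. Here I would invoke the $\pi$-relatedness of the infinitesimal generators, which follows because $\pi$ intertwines the actions $\Theta$ and $\theta$ (diagram \eqref{cd:action_by_vb_homomorphisms}): $\D\pi(F(p))\,\hat{\Theta}(\xi)_{F(p)} = \hat{\theta}(\xi)_p$. Substituting into $P_{F(p)} = \D F(p)\circ\D\pi(F(p))$ yields $P_{F(p)}\hat{\Theta}(\xi)_{F(p)} = \D F(p)\,\hat{\theta}(\xi)_p$, and the identity follows by rearrangement. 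The only real subtlety I anticipate is keeping track of the identification $T_{F(p)}E_p \cong E_p$ that implicitly enters the definition of $\mcal{L}_\xi F(p)$ and ensuring it matches the vertical lift $\imath_F$; this is exactly what Lemma~\ref{lem:injection_of_E_into_TE} is designed to formalize, and a local-trivialization computation of $\imath_F$ via \eqref{eqn:injection_E_to_TE} would confirm the compatibility unambiguously.
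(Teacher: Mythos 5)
Your proof is correct, and for the Lie-derivative identity it takes a genuinely different route from the paper's. The paper works entirely in a local trivialization $\Phi$: it first rewrites the Lie-derivative limit in the form $\lim_{t\to 0}\frac{1}{t}[F(\theta_{\exp(t\xi)}(p)) - \Theta_{\exp(t\xi)}(F(p))]$ (a limit taken in $E$ rather than in the fiber), then computes $\Phi\circ\mcal{L}_\xi F(p)$, $\imath_F\circ\mcal{L}_\xi F(p)$, $P_{F(p)}\hat\Theta(\xi)_{F(p)}$, and $\hat\Theta(\xi)_{F(p)}$ all in coordinates $(\mcal{U}\times\R^k)$, and matches term by term. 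You instead argue coordinate-freely: apply the chain rule to $\gamma(t)=\Theta(F(\theta_{\exp(t\xi)}(p)),\exp(-t\xi))$ using $\Theta(\cdot,e)=\Id_E$ to get $\gamma'(0)=\D F(p)\hat\theta(\xi)_p-\hat\Theta(\xi)_{F(p)}$, then invoke the $\pi$-relatedness $\D\pi(F(p))\hat\Theta(\xi)_{F(p)}=\hat\theta(\xi)_p$ to rewrite the first term as $P_{F(p)}\hat\Theta(\xi)_{F(p)}$, and finally identify $\gamma'(0)$ with $\imath_F\circ\mcal{L}_\xi F(p)$ as the vertical lift of the fiber-derivative. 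Your argument is shorter and exposes the two geometric facts doing the real work ($D_1\Theta(\cdot,e)=\Id$ and $\pi$-relatedness) instead of burying them in trivialized formulas; the paper's coordinate computation is longer but simultaneously verifies the vertical-lift identification that you defer to ``a local-trivialization computation would confirm.'' That deferred step is the only gap, and it is minor: the velocity of a fiber curve $t\mapsto\Phi^{-1}(p,\vect{\gamma}(t))$ at $t=0$ is precisely $\D\Phi^{-1}\cdot(0,\vect{\gamma}'(0))=\imath_F(\Phi^{-1}(p,\vect{\gamma}'(0)))$ by the very formula \eqref{eqn:injection_E_to_TE} of Lemma~\ref{lem:injection_of_E_into_TE}, so a single line closes it. Your treatment of the projection claim is essentially identical to the paper's, differing only in that you spell out that $T_{F(p)}\image(F)=\Range(\D F(p))$ via the immersion/embedding property of the section $F$, which the paper leaves implicit.
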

For the special case of smooth maps $F:\mcal{V} \to \mcal{W}$ between finite-dimensional vector spaces viewed as sections $x \mapsto (x,F(x))$ of the trivial bundle $\pi:\mcal{V}\times\mcal{W} \to \mcal{V}$, this theorem reproduces \eqref{eqn:Lie_derivative_as_projection_for_real_maps}.
The following corollary provides a link between our main results for sections of vector bundles and our main results for symmetries of submanifolds.
\begin{corollary}
\label{cor:properties_of_Lie_derivative}
For every smooth section $F\in C^{\infty}(\mcal{M},E)$, $\xi\in\Lie(G)$, and $p\in\mcal{M}$ we have
\begin{equation}
    (\mcal{L}_{\xi}F)(p) = 0 
    \qquad \Leftrightarrow \qquad 
    \hat{\Theta}(\xi)_{F(p)} \in T_{F(p)}\image(F).
\end{equation}
\end{corollary}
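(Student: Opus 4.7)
The plan is to deduce this corollary directly from Theorem~\ref{thm:characterization_of_Lie_derivative} together with the injectivity of the vertical lift $\imath_F$ established in Lemma~\ref{lem:injection_of_E_into_TE}. Essentially, the corollary is just the statement that each of the two factors in the right-hand side of \eqref{eqn:Lie_derivative_as_projection} is injective on its relevant domain, so vanishing of the composition is equivalent to vanishing of the inner quantity, which in turn is equivalent to tangency.

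First, I would invoke Theorem~\ref{thm:characterization_of_Lie_derivative} to rewrite the condition $(\mcal{L}_{\xi}F)(p) = 0$ via the identity
\begin{equation}
    \imath_F \circ (\mcal{L}_{\xi} F)(p) = -\big[ I - P_{F(p)} \big] \hat{\Theta}(\xi)_{F(p)}.
\end{equation}
Since $\imath_F$ is an injective vector bundle homomorphism by Lemma~\ref{lem:injection_of_E_into_TE}, its restriction to the fiber $E_p$ is an injective linear map, so $(\mcal{L}_{\xi}F)(p) = 0$ if and only if $\imath_F(\mcal{L}_{\xi}F)(p) = 0$, which by the displayed identity is equivalent to $[I - P_{F(p)}]\hat{\Theta}(\xi)_{F(p)} = 0$.

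Second, I would translate the vanishing of $[I - P_{F(p)}]\hat{\Theta}(\xi)_{F(p)}$ into the tangency statement. Theorem~\ref{thm:characterization_of_Lie_derivative} asserts that $P_{F(p)}$ is a linear projection onto $T_{F(p)}\image(F)$, so a vector $v \in T_{F(p)}E$ satisfies $[I - P_{F(p)}]v = 0$ if and only if $P_{F(p)} v = v$, which holds if and only if $v \in \Range(P_{F(p)}) = T_{F(p)}\image(F)$. Applying this with $v = \hat{\Theta}(\xi)_{F(p)}$ gives the desired equivalence.

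The result is essentially immediate from the two ingredients already in hand, so there is no genuine obstacle to overcome; the only thing to verify carefully is that the projection identity $P_{F(p)} v = v \Leftrightarrow v \in \Range(P_{F(p)})$ is being applied to the correct vector in the correct tangent space. Since the displayed equation in Theorem~\ref{thm:characterization_of_Lie_derivative} puts both sides in $T_{F(p)}E$, this is automatic.
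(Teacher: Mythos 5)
Your proof is correct and matches the paper's intent exactly: the paper states Corollary~\ref{cor:properties_of_Lie_derivative} without a separate proof, treating it as immediate from Theorem~\ref{thm:characterization_of_Lie_derivative}, and the chain you spell out (injectivity of $\imath_F$ from Lemma~\ref{lem:injection_of_E_into_TE}, the identity \eqref{eqn:Lie_derivative_as_projection}, and the fact that $[I-P]v=0 \Leftrightarrow v\in\Range(P)$ for a projection $P$) is precisely the reasoning being left implicit. There is nothing to add.
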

In particular, this means that for smooth sections,
Theorems~\ref{thm:equivariance_conditions_for_vb_section}~and~\ref{thm:symmetries_of_sections}
are special cases of
Theorems~\ref{thm:symmetry_conditions_for_submanifold}~and~\ref{thm:symmetries_of_a_submanifold}.

\section{Conclusion}
\label{sec:conclusion}

This paper provides a unified theoretical approach to enforce, discover, and promote symmetries in machine learning models. 
In particular, we provide theoretical foundations for Lie group symmetry in machine learning from a linear-algebraic viewpoint. 
This perspective unifies and generalizes several leading approaches in the literature, including approaches for incorporating and uncovering symmetries in neural networks and more general machine learning models. 
The central objects in this work are linear operators describing the finite and infinitesimal transformations of smooth sections of vector bundles with fiber-linear Lie group actions.
To make the paper accessible to a wide range of practitioners, Sections~\ref{sec:background_on_matrix_Lie_groups}--\ref{sec:numerical_experiments} deal with the special case where the machine learning models are built using smooth functions between vector spaces.
Our main results establish that the infinitesimal operators --- the Lie derivatives --- fully encode the connected subgroup of symmetries for sections of vector bundles (resp. functions between vector spaces).
In other words, the Lie derivatives encode symmetries that the machine learning models are equivariant with respect to. 

We illustrate that promoting and enforcing continuous symmetries in large classes of machine learning models are dual problems with respect to the bilinear structure of the Lie derivative.
Moreover, these ideas extend naturally to identify continuous symmetries of arbitrary submanifolds, recovering the Lie derivative when the submanifold is the image of a section of a vector bundle (resp., the graph of a function between vector spaces).
Using the fundamental operators, we also describe how symmetries can be promoted as inductive biases during training of machine learning models using convex penalties.
We also provide rigorous data-driven methods for discretizing and approximating the fundamental operators to accomplish the tasks of enforcing, promoting, and discovering symmetry.
We highlight that these tasks admit computational implementations using standard linear algebra and convex optimization.

We evaluate our new approach to promote symmetry using three numerical examples including symmetric polynomial recovery, extrapolatory recovery of a linear multi-spring-mass dynamical system, and training a new Equivariance-Promoting Neural Network (EPNN) to forecast the dynamics of a double-spring-pendulum system.
Our numerical results for polynomial recovery show that minimizing convex symmetry-promoting penalties can be used to recover highly symmetric polynomial functions using fewer samples than are required to determine the polynomial coefficients directly as the solution of a linear system.
This reduction in sample complexity becomes more pronounced in higher dimensions and with increasing symmetry of the underlying function being recovered.
Our numerical results on extrapolatory recovery of the linear spring-mass system illustrate how symmetry-promoting techniques can learn to extrapolate from restricted data, while common sparsity-promoting techniques can fail to do so.
The results obtained using EPNNs to model a double-spring pendulum system indicate that promoting symmetry can provide accuracy benefits in low-data regimes.
Smaller candidate groups containing correct symmetries were shown to provide the greatest benefits, with enough tolerance for incorrect symmetries in the candidate group to potentially allow the method to be useful in practice.
We note that very large candidate groups were not beneficial or harmful compared to EPNNs with no symmetry penalty.
Interestingly, EPNNs had significantly better performance than the baseline MLP in low-data regimes, even when no symmetry-promoting penalty was used.

The main limitation of our approach to promote symmetry is the need to choose a sufficiently small candidate group $G$ containing correct symmetries of the underlying system.
A second limitation is the significant computational cost to work with the discretized linear operators encoding the symmetries of a model. 
Another challenge and potentially significant limitation is to identify appropriate spaces of functions $\mcal{F}$ and model architectures capable of expressing non-trivial symmetric functions.
For example, it is possible that the only $G$-symmetric functions in $\mcal{F}$ are trivial, meaning that enforcing symmetry results in learning only trivial models.
One open question is whether our framework can be used in such cases to learn relaxed symmetries, as described by \cite{Wang2022approximately}.
In other words, it might be possible to find elements in $\mcal{F}$ that are nearly symmetric, and to bound the degree of asymmetry based on the quantities derived from the fundamental operators, such as their norms.
Finally, our reliance on the Lie algebra to study continuous symmetries also limits the ability of our proposed methods to account for partial symmetries, such as the invariance in classifying the characters ``Z'' and ``N'' to rotations by small angles, but not large angles.

In follow-up work, we aim to apply the proposed methods to a wider range of examples to provide detailed comparisons with state-of-the-art techniques and to assess performance trade-offs associated with problem dimension, candidate symmetry group, model size, amount of training data, and noise level.
Since our proposed methods remain to be tested in the more general setting of vector bundles, it will also be important to assess their practical usefulness by implementing, evaluating, and comparing them to existing techniques.
To enable these experiments, a major goal in future work must be to improve the computational efficiency and implementations of techniques to enforce, discover, and promote symmetry within our framework.
Work is currently underway to leverage sparse structure of the discretized operators in certain bases to enable use of efficient Krylov subspace algorithms and approximations based on matrix sketching \citep{halko2011finding, woodruff2014sketching}.
It will also be useful to identify efficient optimization algorithms for training symmetry-constrained or symmetry-regularized machine learning models.
Promising candidates include projected gradient descent, the Alternating Direction Method of Multipliers (ADMM) \citep{boyd2011distributed, fazel2013hankel}, variations of Bregman learning and mirror descent \citep{bungert2022bregman, beck2003mirror}, as well as Iteratively Reweighted Least Squares (IRLS) algorithms \citep{Mohan2012iterative}.
Using iterative reweighting could enable symmetry-promoting penalties to be based on non-convex Schatten $p$-``norms'' with $0 < p < 1$, potentially improving the recovery of underlying symmetry groups compared to the nuclear norm where $p=1$.

Additional avenues for future work include applying the proposed techniques to continuum problems and partial differential equations though principled implementation in neural operators and through jet bundle prolongation \citep{Olver1986applications}.
Combining analogues of our proposed methods with techniques using the weak formulation proposed by \cite{Messenger2021weak, Reinbold2020using} could also provide robust ways to identify symmetric PDEs in the presence of high noise and limited training data.
It will also be important to study the perturbative effects of noisy data in algorithms to discover and promote symmetry, with the goal of understanding the effects of problem dimension, noise level, and amount of data.
Another area of future work could be to
build on the work of \cite{peitz2023partial, steyert2022uncovering} by studying symmetry in the setting of Koopman operators for dynamical systems.
To do this, one might follow the program set forth by \citet{colbrook2023mpedmd}, where the measure preserving property of certain dynamical systems is exploited to enhance the Extended Dynamic Mode Decomposition (EDMD) algorithm of \cite{Williams2015jnls}.

\section*{Acknowledgements}
The authors acknowledge support from the National Science Foundation AI Institute in Dynamic Systems
(grant number 2112085).  SLB acknowledges support from the Army Research Office (ARO W911NF-19-1-0045) and the Boeing Company. 
The authors would also like to acknowledge valuable discussions with Tess Smidt and Matthew Colbrook.

\begin{spacing}{1}
\setlength{\bibsep}{6.pt}
\bibliography{references}

\begin{thebibliography}{101}
\providecommand{\natexlab}[1]{#1}
\providecommand{\url}[1]{\texttt{#1}}
\expandafter\ifx\csname urlstyle\endcsname\relax
  \providecommand{\doi}[1]{doi: #1}\else
  \providecommand{\doi}{doi: \begingroup \urlstyle{rm}\Url}\fi

\bibitem[Abraham et~al.(1988)Abraham, Marsden, and Ratiu]{MarsdenMTAA}
R.~Abraham, J.~E. Marsden, and T.~Ratiu.
\newblock \emph{Manifolds, Tensor Analysis, and Applications}, volume~75 of
  \emph{Applied Mathematical Sciences}.
\newblock Springer-Verlag, 1988.

\bibitem[Abraham and Marsden(2008)]{Abraham2008foundations}
Ralph Abraham and Jerrold~E Marsden.
\newblock \emph{Foundations of mechanics}.
\newblock AMS Chelsea Publishing, 2 edition, 2008.

\bibitem[Agrawal et~al.(2018)Agrawal, Verschueren, Diamond, and
  Boyd]{Agrawal2018rewriting}
Akshay Agrawal, Robin Verschueren, Steven Diamond, and Stephen Boyd.
\newblock A rewriting system for convex optimization problems.
\newblock \emph{Journal of Control and Decision}, 5\penalty0 (1):\penalty0
  42--60, 2018.

\bibitem[Ahmadi and Khadir(2020)]{Ahmadi2020learning_short}
Amir~Ali Ahmadi and Bachir~El Khadir.
\newblock Learning dynamical systems with side information.
\newblock In \emph{Proceedings of the 2nd Conference on Learning for Dynamics
  and Control}, volume 120 of \emph{Proceedings of Machine Learning Research},
  pages 718--727. PMLR, 10--11 Jun 2020.
\newblock URL \url{https://proceedings.mlr.press/v120/ahmadi20a.html}.

\bibitem[Ahmadi and Khadir(2023)]{Ahmadi2023learning_long}
Amir~Ali Ahmadi and Bachir~El Khadir.
\newblock Learning dynamical systems with side information.
\newblock \emph{SIAM Review}, 65\penalty0 (1):\penalty0 183--223, 2023.
\newblock \doi{10.1137/20M1388644}.

\bibitem[Akhound-Sadegh et~al.(2024)Akhound-Sadegh, Perreault-Levasseur,
  Brandstetter, Welling, and Ravanbakhsh]{akhound2024lie}
Tara Akhound-Sadegh, Laurence Perreault-Levasseur, Johannes Brandstetter, Max
  Welling, and Siamak Ravanbakhsh.
\newblock Lie point symmetry and physics-informed networks.
\newblock \emph{Advances in Neural Information Processing Systems}, 36, 2024.

\bibitem[Baddoo et~al.(2023)Baddoo, Herrmann, McKeon, Nathan~Kutz, and
  Brunton]{baddoo2023physics}
Peter~J Baddoo, Benjamin Herrmann, Beverley~J McKeon, J~Nathan~Kutz, and
  Steven~L Brunton.
\newblock Physics-informed dynamic mode decomposition.
\newblock \emph{Proceedings of the Royal Society A}, 479\penalty0
  (2271):\penalty0 20220576, 2023.

\bibitem[Batzner et~al.(2022)Batzner, Musaelian, Sun, Geiger, Mailoa,
  Kornbluth, Molinari, Smidt, and Kozinsky]{batzner2022e3atom}
Simon Batzner, Albert Musaelian, Lixin Sun, Mario Geiger, Jonathan~P Mailoa,
  Mordechai Kornbluth, Nicola Molinari, Tess~E Smidt, and Boris Kozinsky.
\newblock {E(3)}-equivariant graph neural networks for data-efficient and
  accurate interatomic potentials.
\newblock \emph{Nature communications}, 13\penalty0 (1):\penalty0 2453, 2022.

\bibitem[Beck and Teboulle(2003)]{beck2003mirror}
Amir Beck and Marc Teboulle.
\newblock Mirror descent and nonlinear projected subgradient methods for convex
  optimization.
\newblock \emph{Operations Research Letters}, 31\penalty0 (3):\penalty0
  167--175, 2003.

\bibitem[Benton et~al.(2020)Benton, Finzi, Izmailov, and
  Wilson]{Benton2020learning}
Gregory Benton, Marc Finzi, Pavel Izmailov, and Andrew~G Wilson.
\newblock Learning invariances in neural networks from training data.
\newblock \emph{Advances in neural information processing systems},
  33:\penalty0 17605--17616, 2020.

\bibitem[Berry and Giannakis(2020)]{Berry2020spectral}
Tyrus Berry and Dimitrios Giannakis.
\newblock Spectral exterior calculus.
\newblock \emph{Communications on Pure and Applied Mathematics}, 73\penalty0
  (4):\penalty0 689--770, 2020.

\bibitem[Boull{\'e} and Townsend(2023)]{Boulle2023mathematical}
Nicolas Boull{\'e} and Alex Townsend.
\newblock A mathematical guide to operator learning.
\newblock \emph{arXiv preprint arXiv:2312.14688}, 2023.

\bibitem[Bouwmans et~al.(2018)Bouwmans, Javed, Zhang, Lin, and
  Otazo]{Bouwmans2018applications}
Thierry Bouwmans, Sajid Javed, Hongyang Zhang, Zhouchen Lin, and Ricardo Otazo.
\newblock On the applications of robust {PCA} in image and video processing.
\newblock \emph{Proceedings of the IEEE}, 106\penalty0 (8):\penalty0
  1427--1457, 2018.

\bibitem[Boyd et~al.(2011)Boyd, Parikh, Chu, Peleato, Eckstein,
  et~al.]{boyd2011distributed}
Stephen Boyd, Neal Parikh, Eric Chu, Borja Peleato, Jonathan Eckstein, et~al.
\newblock Distributed optimization and statistical learning via the alternating
  direction method of multipliers.
\newblock \emph{Foundations and Trends{\textregistered} in Machine learning},
  3\penalty0 (1):\penalty0 1--122, 2011.

\bibitem[Bradbury et~al.(2018)Bradbury, Frostig, Hawkins, Johnson, Leary,
  Maclaurin, Necula, Paszke, Vander{P}las, Wanderman-{M}ilne, and
  Zhang]{jax2018github}
James Bradbury, Roy Frostig, Peter Hawkins, Matthew~James Johnson, Chris Leary,
  Dougal Maclaurin, George Necula, Adam Paszke, Jake Vander{P}las, Skye
  Wanderman-{M}ilne, and Qiao Zhang.
\newblock {JAX}: composable transformations of {P}ython+{N}um{P}y programs,
  2018.
\newblock URL \url{http://github.com/jax-ml/jax}.

\bibitem[Brandstetter et~al.(2022)Brandstetter, Welling, and
  Worrall]{brandstetter2022lie}
Johannes Brandstetter, Max Welling, and Daniel~E Worrall.
\newblock Lie point symmetry data augmentation for neural pde solvers.
\newblock In \emph{International Conference on Machine Learning}, pages
  2241--2256. PMLR, 2022.

\bibitem[Brunton and Kutz(2022)]{Brunton2022book}
S.~L. Brunton and J.~N. Kutz.
\newblock \emph{Data-Driven Science and Engineering: Machine Learning,
  Dynamical Systems, and Control}.
\newblock Cambridge University Press, 2nd edition, 2022.

\bibitem[Brunton et~al.(2016)Brunton, Proctor, and
  Kutz]{Brunton2016discovering}
Steven~L Brunton, Joshua~L Proctor, and J~Nathan Kutz.
\newblock Discovering governing equations from data by sparse identification of
  nonlinear dynamical systems.
\newblock \emph{Proceedings of the national academy of sciences}, 113\penalty0
  (15):\penalty0 3932--3937, 2016.

\bibitem[Brunton et~al.(2020)Brunton, Noack, and Koumoutsakos]{Brunton2020arfm}
Steven~L. Brunton, Bernd~R. Noack, and Petros Koumoutsakos.
\newblock Machine learning for fluid mechanics.
\newblock \emph{Annual Review of Fluid Mechanics}, 52:\penalty0 477--508, 2020.

\bibitem[Brunton et~al.(2022)Brunton, Budi{\v{s}}i{\'c}, Kaiser, and
  Kutz]{Brunton2022siamreview}
Steven~L Brunton, Marko Budi{\v{s}}i{\'c}, Eurika Kaiser, and J~Nathan Kutz.
\newblock Modern {K}oopman theory for dynamical systems.
\newblock \emph{SIAM Review}, 64\penalty0 (2):\penalty0 229--340, 2022.

\bibitem[Bungert et~al.(2022)Bungert, Roith, Tenbrinck, and
  Burger]{bungert2022bregman}
Leon Bungert, Tim Roith, Daniel Tenbrinck, and Martin Burger.
\newblock A {B}regman learning framework for sparse neural networks.
\newblock \emph{Journal of Machine Learning Research}, 23\penalty0
  (192):\penalty0 1--43, 2022.

\bibitem[Cahill et~al.(2023)Cahill, Mixon, and Parshall]{Cahill2023Lie}
Jameson Cahill, Dustin~G Mixon, and Hans Parshall.
\newblock {L}ie {PCA}: Density estimation for symmetric manifolds.
\newblock \emph{Applied and Computational Harmonic Analysis}, 2023.

\bibitem[Callaham et~al.(2022)Callaham, Rigas, Loiseau, and
  Brunton]{Callaham2022scienceadvances}
Jared~L Callaham, Georgios Rigas, Jean-Christophe Loiseau, and Steven~L
  Brunton.
\newblock An empirical mean-field model of symmetry-breaking in a turbulent
  wake.
\newblock \emph{Science Advances}, 8\penalty0 (eabm4786), 2022.

\bibitem[Cand{\`e}s and Recht(2009)]{Candes2009exact}
Emmanuel~J Cand{\`e}s and Benjamin Recht.
\newblock Exact matrix completion via convex optimization.
\newblock \emph{Foundations of Computational Mathematics}, 9:\penalty0
  717--772, 2009.

\bibitem[Cand{\`e}s et~al.(2011)Cand{\`e}s, Li, Ma, and
  Wright]{Candes2011robust}
Emmanuel~J Cand{\`e}s, Xiaodong Li, Yi~Ma, and John Wright.
\newblock Robust principal component analysis?
\newblock \emph{Journal of the ACM (JACM)}, 58\penalty0 (3):\penalty0 1--37,
  2011.

\bibitem[Caron and Traynor(2005)]{Caron2005zero}
Richard Caron and Tim Traynor.
\newblock The zero set of a polynomial.
\newblock \emph{WSMR Report 05-02}, 2005.
\newblock URL
  \url{https://www.researchgate.net/profile/Richard-Caron-3/publication/281285245_The_Zero_Set_of_a_Polynomial/links/55df56b608aecb1a7cc1a043/The-Zero-Set-of-a-Polynomial.pdf}.

\bibitem[Champion et~al.(2020)Champion, Zheng, Aravkin, Brunton, and
  Kutz]{champion2020unified}
Kathleen Champion, Peng Zheng, Aleksandr~Y Aravkin, Steven~L Brunton, and
  J~Nathan Kutz.
\newblock A unified sparse optimization framework to learn parsimonious
  physics-informed models from data.
\newblock \emph{IEEE Access}, 8:\penalty0 169259--169271, 2020.

\bibitem[Chen et~al.(2018)Chen, Rubanova, Bettencourt, and
  Duvenaud]{Chen2018neural}
Ricky T.~Q. Chen, Yulia Rubanova, Jesse Bettencourt, and David~K Duvenaud.
\newblock Neural ordinary differential equations.
\newblock \emph{Advances in neural information processing systems}, 31, 2018.

\bibitem[Chen et~al.(2020)Chen, Dobriban, and Lee]{Chen2020GroupAug}
Shuxiao Chen, Edgar Dobriban, and Jane~H. Lee.
\newblock A group-theoretic framework for data augmentation.
\newblock \emph{J. Mach. Learn. Res.}, 21\penalty0 (1), jan 2020.
\newblock ISSN 1532-4435.

\bibitem[Cohen and Welling(2016)]{Cohen2016GroupEqConv}
Taco Cohen and Max Welling.
\newblock Group equivariant convolutional networks.
\newblock In Maria~Florina Balcan and Kilian~Q. Weinberger, editors,
  \emph{Proceedings of The 33rd International Conference on Machine Learning},
  volume~48 of \emph{Proceedings of Machine Learning Research}, pages
  2990--2999, New York, New York, USA, 20--22 Jun 2016. PMLR.
\newblock URL \url{https://proceedings.mlr.press/v48/cohenc16.html}.

\bibitem[Cohen et~al.(2018)Cohen, Geiger, K{\"o}hler, and
  Welling]{cohen2018spherical}
Taco~S Cohen, Mario Geiger, Jonas K{\"o}hler, and Max Welling.
\newblock Spherical {CNNs}.
\newblock \emph{arXiv preprint arXiv:1801.10130}, 2018.

\bibitem[Cohen et~al.(2019)Cohen, Geiger, and Weiler]{Cohen2019general}
Taco~S Cohen, Mario Geiger, and Maurice Weiler.
\newblock A general theory of equivariant {CNNs} on homogeneous spaces.
\newblock \emph{Advances in neural information processing systems}, 32, 2019.

\bibitem[Colbrook(2023)]{colbrook2023mpedmd}
Matthew~J Colbrook.
\newblock The {mpEDMD} algorithm for data-driven computations of
  measure-preserving dynamical systems.
\newblock \emph{SIAM Journal on Numerical Analysis}, 61\penalty0 (3):\penalty0
  1585--1608, 2023.

\bibitem[Cubuk et~al.(2019)Cubuk, Zoph, Mane, Vasudevan, and
  Le]{Cubuk2019autoaugment}
Ekin~D Cubuk, Barret Zoph, Dandelion Mane, Vijay Vasudevan, and Quoc~V Le.
\newblock {Autoaugment}: Learning augmentation strategies from data.
\newblock In \emph{Proceedings of the IEEE/CVF conference on computer vision
  and pattern recognition}, pages 113--123, 2019.

\bibitem[DeepMind et~al.(2020)DeepMind, Babuschkin, Baumli, Bell, Bhupatiraju,
  Bruce, Buchlovsky, Budden, Cai, Clark, Danihelka, Dedieu, Fantacci, Godwin,
  Jones, Hemsley, Hennigan, Hessel, Hou, Kapturowski, Keck, Kemaev, King,
  Kunesch, Martens, Merzic, Mikulik, Norman, Papamakarios, Quan, Ring, Ruiz,
  Sanchez, Sartran, Schneider, Sezener, Spencer, Srinivasan, Stanojevi\'{c},
  Stokowiec, Wang, Zhou, and Viola]{deepmind2020jax}
DeepMind, Igor Babuschkin, Kate Baumli, Alison Bell, Surya Bhupatiraju, Jake
  Bruce, Peter Buchlovsky, David Budden, Trevor Cai, Aidan Clark, Ivo
  Danihelka, Antoine Dedieu, Claudio Fantacci, Jonathan Godwin, Chris Jones,
  Ross Hemsley, Tom Hennigan, Matteo Hessel, Shaobo Hou, Steven Kapturowski,
  Thomas Keck, Iurii Kemaev, Michael King, Markus Kunesch, Lena Martens, Hamza
  Merzic, Vladimir Mikulik, Tamara Norman, George Papamakarios, John Quan,
  Roman Ring, Francisco Ruiz, Alvaro Sanchez, Laurent Sartran, Rosalia
  Schneider, Eren Sezener, Stephen Spencer, Srivatsan Srinivasan, Milo\v{s}
  Stanojevi\'{c}, Wojciech Stokowiec, Luyu Wang, Guangyao Zhou, and Fabio
  Viola.
\newblock The {D}eep{M}ind {JAX} {E}cosystem, 2020.
\newblock URL \url{http://github.com/google-deepmind}.

\bibitem[Desai et~al.(2022)Desai, Nachman, and Thaler]{Desai2022symmetry}
Krish Desai, Benjamin Nachman, and Jesse Thaler.
\newblock Symmetry discovery with deep learning.
\newblock \emph{Physical Review D}, 105\penalty0 (9):\penalty0 096031, 2022.

\bibitem[Diamond and Boyd(2016)]{Diamond2016cvxpy}
Steven Diamond and Stephen Boyd.
\newblock {CVXPY}: {A} {P}ython-embedded modeling language for convex
  optimization.
\newblock \emph{Journal of Machine Learning Research}, 17\penalty0
  (83):\penalty0 1--5, 2016.

\bibitem[Esteves et~al.(2018)Esteves, Allen-Blanchette, Makadia, and
  Daniilidis]{esteves2018learning}
Carlos Esteves, Christine Allen-Blanchette, Ameesh Makadia, and Kostas
  Daniilidis.
\newblock Learning {SO(3)} equivariant representations with spherical {CNNs}.
\newblock In \emph{Proceedings of the European Conference on Computer Vision
  (ECCV)}, pages 52--68, 2018.

\bibitem[Fazel et~al.(2013)Fazel, Pong, Sun, and Tseng]{fazel2013hankel}
Maryam Fazel, Ting~Kei Pong, Defeng Sun, and Paul Tseng.
\newblock {H}ankel matrix rank minimization with applications to system
  identification and realization.
\newblock \emph{SIAM Journal on Matrix Analysis and Applications}, 34\penalty0
  (3):\penalty0 946--977, 2013.

\bibitem[Finzi et~al.(2020)Finzi, Stanton, Izmailov, and
  Wilson]{Finzi2020generalizing}
Marc Finzi, Samuel Stanton, Pavel Izmailov, and Andrew~Gordon Wilson.
\newblock Generalizing convolutional neural networks for equivariance to {L}ie
  groups on arbitrary continuous data.
\newblock In \emph{International Conference on Machine Learning}, pages
  3165--3176. PMLR, 2020.

\bibitem[Finzi et~al.(2021)Finzi, Welling, and Wilson]{Finzi2021practical}
Marc Finzi, Max Welling, and Andrew~Gordon Wilson.
\newblock A practical method for constructing equivariant multilayer
  perceptrons for arbitrary matrix groups.
\newblock In \emph{International Conference on Machine Learning}, pages
  3318--3328. PMLR, 2021.

\bibitem[Fukushima(1980)]{fukushima1980neocognitron}
Kunihiko Fukushima.
\newblock Neocognitron: A self-organizing neural network model for a mechanism
  of pattern recognition unaffected by shift in position.
\newblock \emph{Biological cybernetics}, 36\penalty0 (4):\penalty0 193--202,
  1980.

\bibitem[Goswami et~al.(2023)Goswami, Bora, Yu, and
  Karniadakis]{Goswami2023physics}
Somdatta Goswami, Aniruddha Bora, Yue Yu, and George~Em Karniadakis.
\newblock Physics-informed deep neural operator networks.
\newblock In \emph{Machine Learning in Modeling and Simulation: Methods and
  Applications}, pages 219--254. Springer, 2023.

\bibitem[Got\^{o}(1950)]{Goto1950faithful}
Morikuni Got\^{o}.
\newblock Faithful representations of {L}ie groups {II}.
\newblock \emph{Nagoya mathematical journal}, 1:\penalty0 91--107, 1950.

\bibitem[Gross(2011)]{Gross2011recovering}
David Gross.
\newblock Recovering low-rank matrices from few coefficients in any basis.
\newblock \emph{IEEE Transactions on Information Theory}, 57\penalty0
  (3):\penalty0 1548--1566, 2011.

\bibitem[Gross(1996)]{Gross1996role}
David~J Gross.
\newblock The role of symmetry in fundamental physics.
\newblock \emph{Proceedings of the National Academy of Sciences}, 93\penalty0
  (25):\penalty0 14256--14259, 1996.

\bibitem[Gruver et~al.(2022)Gruver, Finzi, Goldblum, and Wilson]{Gruver2022Lie}
Nate Gruver, Marc~Anton Finzi, Micah Goldblum, and Andrew~Gordon Wilson.
\newblock The {L}ie derivative for measuring learned equivariance.
\newblock In \emph{The Eleventh International Conference on Learning
  Representations}, 2022.

\bibitem[Guan et~al.(2021)Guan, Brunton, and Novosselov]{guan2020sparse}
Yifei Guan, Steven~L Brunton, and Igor Novosselov.
\newblock Sparse nonlinear models of chaotic electroconvection.
\newblock \emph{Royal Society Open Science}, 8\penalty0 (8):\penalty0 202367,
  2021.

\bibitem[Halko et~al.(2011)Halko, Martinsson, and Tropp]{halko2011finding}
Nathan Halko, Per-Gunnar Martinsson, and Joel~A Tropp.
\newblock Finding structure with randomness: Probabilistic algorithms for
  constructing approximate matrix decompositions.
\newblock \emph{SIAM review}, 53\penalty0 (2):\penalty0 217--288, 2011.

\bibitem[Hall(2015)]{Hall2015Lie}
Brian~C. Hall.
\newblock \emph{{L}ie Groups, {L}ie Algebras, and Representations: An
  Elementary Introduction}.
\newblock Springer, 2015.

\bibitem[Hataya et~al.(2020)Hataya, Zdenek, Yoshizoe, and
  Nakayama]{Hataya2020faster}
Ryuichiro Hataya, Jan Zdenek, Kazuki Yoshizoe, and Hideki Nakayama.
\newblock Faster autoaugment: Learning augmentation strategies using
  backpropagation.
\newblock In \emph{Computer Vision--ECCV 2020: 16th European Conference,
  Glasgow, UK, August 23--28, 2020, Proceedings, Part XXV 16}, pages 1--16.
  Springer, 2020.

\bibitem[Holmes et~al.(2012)Holmes, Lumley, Berkooz, and Rowley]{HLBR_turb}
P.~J. Holmes, J.~L. Lumley, G.~Berkooz, and C.~W. Rowley.
\newblock \emph{Turbulence, coherent structures, dynamical systems and
  symmetry}.
\newblock Cambridge Monographs in Mechanics. Cambridge University Press,
  Cambridge, England, 2nd edition, 2012.

\bibitem[Horn and Johnson(2013)]{Horn2013matrix}
Roger~A. Horn and Charles~R. Johnson.
\newblock \emph{Matrix Analysis}.
\newblock Cambridge University Press, 2 edition, 2013.

\bibitem[Kaiser et~al.(2018)Kaiser, Kutz, and Brunton]{Kaiser2018discovering}
Eurika Kaiser, J~Nathan Kutz, and Steven~L Brunton.
\newblock Discovering conservation laws from data for control.
\newblock In \emph{2018 IEEE Conference on Decision and Control (CDC)}, pages
  6415--6421. IEEE, 2018.

\bibitem[Kaiser et~al.(2021)Kaiser, Kutz, and Brunton]{kaiser2021data}
Eurika Kaiser, J~Nathan Kutz, and Steven~L Brunton.
\newblock Data-driven discovery of koopman eigenfunctions for control.
\newblock \emph{Machine Learning: Science and Technology}, 2\penalty0
  (3):\penalty0 035023, 2021.

\bibitem[Kingma and Ba(2014)]{Kingma2014adam}
Diederik~P Kingma and Jimmy Ba.
\newblock Adam: A method for stochastic optimization.
\newblock \emph{arXiv preprint arXiv:1412.6980}, 2014.

\bibitem[Kol\'{a}\v{r} et~al.(1993)Kol\'{a}\v{r}, Michor, and
  Slov\'{a}k]{Kolar1993natural}
Ivan Kol\'{a}\v{r}, Peter~W. Michor, and Jan Slov\'{a}k.
\newblock \emph{Natural Operations in Differential Geometry}.
\newblock Springer-Verlag, 1993.

\bibitem[Kondor and Trivedi(2018)]{Kondor2018generalization}
Risi Kondor and Shubhendu Trivedi.
\newblock On the generalization of equivariance and convolution in neural
  networks to the action of compact groups.
\newblock In \emph{International Conference on Machine Learning}, pages
  2747--2755. PMLR, 2018.

\bibitem[Koopman(1931)]{Koopman1931Hamiltonian}
B.~O. Koopman.
\newblock {Hamiltonian} systems and transformations in {Hilbert} space.
\newblock \emph{Proceedings of the National Academy of Sciences}, 17:\penalty0
  315--318, 1931.

\bibitem[Koralov and Sinai(2012)]{Koralov2012theory}
Leonid~B. Koralov and Yakov~G. Sinai.
\newblock \emph{Theory of Probability and Random Processes}.
\newblock Springer, 2 edition, 2012.

\bibitem[Kovachki et~al.(2023)Kovachki, Li, Liu, Azizzadenesheli, Bhattacharya,
  Stuart, and Anandkumar]{Kovachki2023neural}
Nikola Kovachki, Zongyi Li, Burigede Liu, Kamyar Azizzadenesheli, Kaushik
  Bhattacharya, Andrew Stuart, and Anima Anandkumar.
\newblock Neural operator: Learning maps between function spaces with
  applications to {PDEs}.
\newblock \emph{Journal of Machine Learning Research}, 24\penalty0
  (89):\penalty0 1--97, 2023.

\bibitem[LeCun et~al.(1989)LeCun, Boser, Denker, Henderson, Howard, Hubbard,
  and Jackel]{lecun1989backpropagation}
Yann LeCun, Bernhard Boser, John~S Denker, Donnie Henderson, Richard~E Howard,
  Wayne Hubbard, and Lawrence~D Jackel.
\newblock Backpropagation applied to handwritten zip code recognition.
\newblock \emph{Neural computation}, 1\penalty0 (4):\penalty0 541--551, 1989.

\bibitem[Lee(2013)]{Lee2013introduction}
John~M. Lee.
\newblock \emph{Introduction to Smooth Manifolds: Second Edition}.
\newblock Springer, 2013.

\bibitem[Lezcano-Casado and Mart{\i}nez-Rubio(2019)]{Lezcano2019cheap}
Mario Lezcano-Casado and David Mart{\i}nez-Rubio.
\newblock Cheap orthogonal constraints in neural networks: A simple
  parametrization of the orthogonal and unitary group.
\newblock In \emph{International Conference on Machine Learning}, pages
  3794--3803. PMLR, 2019.

\bibitem[Liu and Tegmark(2022)]{Liu2022hidden}
Ziming Liu and Max Tegmark.
\newblock Machine learning hidden symmetries.
\newblock \emph{Phys. Rev. Lett.}, 128:\penalty0 180201, May 2022.
\newblock \doi{10.1103/PhysRevLett.128.180201}.
\newblock URL \url{https://link.aps.org/doi/10.1103/PhysRevLett.128.180201}.

\bibitem[Loiseau and Brunton(2018)]{Loiseau2017jfm}
J.-C. Loiseau and S.~L. Brunton.
\newblock Constrained sparse {Galerkin} regression.
\newblock \emph{Journal of Fluid Mechanics}, 838:\penalty0 42--67, 2018.

\bibitem[Maron et~al.(2018)Maron, Ben-Hamu, Shamir, and
  Lipman]{maron2018invariant}
Haggai Maron, Heli Ben-Hamu, Nadav Shamir, and Yaron Lipman.
\newblock Invariant and equivariant graph networks.
\newblock \emph{arXiv preprint arXiv:1812.09902}, 2018.

\bibitem[Marsden and Ratiu(1999)]{Marsden:MS}
J.~E. Marsden and T.~S. Ratiu.
\newblock \emph{Introduction to mechanics and symmetry}.
\newblock Springer-Verlag, 2nd edition, 1999.

\bibitem[Mauroy et~al.(2020)Mauroy, Susuki, and Mezi{\'c}]{Mauroy2020koopman}
Alexandre Mauroy, Y~Susuki, and I~Mezi{\'c}.
\newblock \emph{Koopman operator in systems and control}.
\newblock Springer, 2020.

\bibitem[Messenger and Bortz(2021)]{Messenger2021weak}
Daniel~A Messenger and David~M Bortz.
\newblock Weak {SINDy}: {G}alerkin-based data-driven model selection.
\newblock \emph{Multiscale Modeling \& Simulation}, 19\penalty0 (3):\penalty0
  1474--1497, 2021.

\bibitem[Meyer(2000)]{Meyer2000matrix}
Carl~D Meyer.
\newblock \emph{Matrix analysis and applied linear algebra}, volume~71.
\newblock Siam, 2000.

\bibitem[Mezi\'c(2005)]{Mezic2005spectral}
Igor Mezi\'c.
\newblock Spectral properties of dynamical systems, model reduction and
  decompositions.
\newblock \emph{Nonlinear Dynamics}, 41:\penalty0 309--325, 2005.

\bibitem[Miao and Rao(2007)]{Miao2007learning}
Xu~Miao and Rajesh~PN Rao.
\newblock Learning the {L}ie groups of visual invariance.
\newblock \emph{Neural computation}, 19\penalty0 (10):\penalty0 2665--2693,
  2007.

\bibitem[Mohan and Fazel(2012)]{Mohan2012iterative}
Karthik Mohan and Maryam Fazel.
\newblock Iterative reweighted algorithms for matrix rank minimization.
\newblock \emph{Journal of Machine Learning Research}, 13\penalty0
  (1):\penalty0 3441--3473, 2012.

\bibitem[Moskalev et~al.(2022)Moskalev, Sepliarskaia, Sosnovik, and
  Smeulders]{Moskalev2022liegg}
Artem Moskalev, Anna Sepliarskaia, Ivan Sosnovik, and Arnold Smeulders.
\newblock {LieGG}: Studying learned {L}ie group generators.
\newblock \emph{Advances in Neural Information Processing Systems},
  35:\penalty0 25212--25223, 2022.

\bibitem[Noether(1918)]{noether1918invariante}
E~Noether.
\newblock Invariante variationsprobleme nachr. d. k{\"o}nig. gesellsch. d.
  wiss. zu g{\"o}ttingen, math-phys. klasse 1918: 235-257.
\newblock \emph{English Reprint: physics/0503066, http://dx. doi.
  org/10.1080/00411457108231446}, page~57, 1918.

\bibitem[Olver(1986)]{Olver1986applications}
Peter~J. Olver.
\newblock \emph{Applications of {L}ie Groups to Differential Equations}.
\newblock Springer, 1986.

\bibitem[Otto and Rowley(2021)]{Otto2021koopman}
Samuel~E Otto and Clarence~W Rowley.
\newblock {K}oopman operators for estimation and control of dynamical systems.
\newblock \emph{Annual Review of Control, Robotics, and Autonomous Systems},
  4:\penalty0 59--87, 2021.

\bibitem[Peitz et~al.(2023)Peitz, Harder, N{\"u}ske, Philipp, Schaller, and
  Worthmann]{peitz2023partial}
Sebastian Peitz, Hans Harder, Feliks N{\"u}ske, Friedrich Philipp, Manuel
  Schaller, and Karl Worthmann.
\newblock Partial observations, coarse graining and equivariance in {K}oopman
  operator theory for large-scale dynamical systems.
\newblock \emph{arXiv preprint arXiv:2307.15325}, 2023.

\bibitem[Raissi et~al.(2019)Raissi, Perdikaris, and
  Karniadakis]{Raissi2019physics}
Maziar Raissi, Paris Perdikaris, and George~Em Karniadakis.
\newblock Physics-informed neural networks: A deep learning framework for
  solving forward and inverse problems involving nonlinear partial differential
  equations.
\newblock \emph{Journal of Computational physics}, 378:\penalty0 686--707,
  2019.

\bibitem[Ramachandran et~al.(2017)Ramachandran, Zoph, and
  Le]{Ramachandran2017searching}
Prajit Ramachandran, Barret Zoph, and Quoc~V Le.
\newblock Searching for activation functions.
\newblock \emph{arXiv preprint arXiv:1710.05941}, 2017.

\bibitem[Rao and Ruderman(1999)]{Rao1999learning}
Rajesh Rao and Daniel Ruderman.
\newblock Learning {L}ie groups for invariant visual perception.
\newblock \emph{Advances in neural information processing systems}, 11, 1999.

\bibitem[Recht et~al.(2010)Recht, Fazel, and Parrilo]{Recht2010guaranteed}
Benjamin Recht, Maryam Fazel, and Pablo~A. Parrilo.
\newblock Guaranteed minimum-rank solutions of linear matrix equations via
  nuclear norm minimization.
\newblock \emph{SIAM Review}, 52\penalty0 (3):\penalty0 471--501, 2010.

\bibitem[Reinbold et~al.(2020)Reinbold, Gurevich, and
  Grigoriev]{Reinbold2020using}
Patrick~AK Reinbold, Daniel~R Gurevich, and Roman~O Grigoriev.
\newblock Using noisy or incomplete data to discover models of spatiotemporal
  dynamics.
\newblock \emph{Physical Review E}, 101\penalty0 (010203), 2020.

\bibitem[Romero and Lohit(2022)]{romero2022learning}
David~W Romero and Suhas Lohit.
\newblock Learning partial equivariances from data.
\newblock \emph{Advances in Neural Information Processing Systems},
  35:\penalty0 36466--36478, 2022.

\bibitem[Rowley et~al.(2003)Rowley, Kevrekidis, Marsden, and
  Lust]{Rowley2003reduction}
Clarence~W. Rowley, Ioannis~G. Kevrekidis, Jerrold~E. Marsden, and Kurt Lust.
\newblock Reduction and reconstruction for self-similar dynamical systems.
\newblock \emph{Nonlinearity}, 16\penalty0 (4):\penalty0 1257, 2003.

\bibitem[Shorten and Khoshgoftaar(2019)]{shorten2019survey}
Connor Shorten and Taghi~M Khoshgoftaar.
\newblock A survey on image data augmentation for deep learning.
\newblock \emph{Journal of big data}, 6\penalty0 (1):\penalty0 1--48, 2019.

\bibitem[Steyert(2022)]{steyert2022uncovering}
Vivian~T Steyert.
\newblock \emph{Uncovering Structure with Data-driven Reduced-Order Modeling}.
\newblock PhD thesis, Princeton University, 2022.

\bibitem[Tao(2011)]{Tao2011expsurjective}
Terence Tao.
\newblock Two small facts about {L}ie groups.
\newblock
  \url{https://terrytao.wordpress.com/2011/06/25/two-small-facts-about-lie-groups/},
  6 2011.

\bibitem[Tibshirani(1996)]{Tibshirani1996regression}
Robert Tibshirani.
\newblock Regression shrinkage and selection via the lasso.
\newblock \emph{Journal of the Royal Statistical Society: Series B
  (Methodological)}, 58\penalty0 (1):\penalty0 267--288, 1996.

\bibitem[Van~Dyk and Meng(2001)]{van2001art}
David~A Van~Dyk and Xiao-Li Meng.
\newblock The art of data augmentation.
\newblock \emph{Journal of Computational and Graphical Statistics}, 10\penalty0
  (1):\penalty0 1--50, 2001.

\bibitem[Varadarajan(1984)]{Varadarajan1984Lie}
V.~S. Varadarajan.
\newblock \emph{{Lie} groups, {Lie} algebras, and their representations}.
\newblock Springer, 1984.

\bibitem[Wang et~al.(2022)Wang, Walters, and Yu]{Wang2022approximately}
Rui Wang, Robin Walters, and Rose Yu.
\newblock Approximately equivariant networks for imperfectly symmetric
  dynamics.
\newblock In \emph{International Conference on Machine Learning}, pages
  23078--23091. PMLR, 2022.

\bibitem[Weiler and Cesa(2019)]{Weiler2019general}
Maurice Weiler and Gabriele Cesa.
\newblock General {E(2)}-equivariant steerable {CNNs}.
\newblock \emph{Advances in neural information processing systems}, 32, 2019.

\bibitem[Weiler et~al.(2018)Weiler, Geiger, Welling, Boomsma, and
  Cohen]{weiler20183d}
Maurice Weiler, Mario Geiger, Max Welling, Wouter Boomsma, and Taco~S Cohen.
\newblock {3D} steerable {CNNs}: Learning rotationally equivariant features in
  volumetric data.
\newblock \emph{Advances in Neural Information Processing Systems}, 31, 2018.

\bibitem[Williams et~al.(2015)Williams, Kevrekidis, and
  Rowley]{Williams2015jnls}
Matthew~O Williams, Ioannis~G Kevrekidis, and Clarence~W Rowley.
\newblock A data-driven approximation of the {K}oopman operator: extending
  dynamic mode decomposition.
\newblock \emph{Journal of Nonlinear Science}, 6:\penalty0 1307--1346, 2015.

\bibitem[Woodruff et~al.(2014)]{woodruff2014sketching}
David~P Woodruff et~al.
\newblock Sketching as a tool for numerical linear algebra.
\newblock \emph{Foundations and Trends{\textregistered} in Theoretical Computer
  Science}, 10\penalty0 (1--2):\penalty0 1--157, 2014.

\bibitem[Yang et~al.(2023{\natexlab{a}})Yang, Dehmamy, Walters, and
  Yu]{yang2023latent}
Jianke Yang, Nima Dehmamy, Robin Walters, and Rose Yu.
\newblock Latent space symmetry discovery.
\newblock \emph{arXiv preprint arXiv:2310.00105}, 2023{\natexlab{a}}.

\bibitem[Yang et~al.(2023{\natexlab{b}})Yang, Walters, Dehmamy, and
  Yu]{Yang2023generative}
Jianke Yang, Robin Walters, Nima Dehmamy, and Rose Yu.
\newblock Generative adversarial symmetry discovery.
\newblock \emph{arXiv preprint arXiv:2302.00236}, 2023{\natexlab{b}}.

\bibitem[Yang et~al.(2024)Yang, Rao, Dehmamy, Walters, and
  Yu]{yang2024symmetry}
Jianke Yang, Wang Rao, Nima Dehmamy, Robin Walters, and Rose Yu.
\newblock Symmetry-informed governing equation discovery.
\newblock \emph{arXiv preprint arXiv:2405.16756}, 2024.

\bibitem[Yuan and Lin(2006)]{Yuan2006model}
Ming Yuan and Yi~Lin.
\newblock Model selection and estimation in regression with grouped variables.
\newblock \emph{Journal of the Royal Statistical Society: Series B (Statistical
  Methodology)}, 68\penalty0 (1):\penalty0 49--67, 2006.

\end{thebibliography}
\end{spacing}

\appendix

\addtocontents{toc}{\protect\setcounter{tocdepth}{0}}

\section{Proofs of minor results}
\label{app:proofs_of_minor_results}

\begin{proof}[Proposition~\ref{prop:symmetries_of_linear_integral_operator}]
    \label{proof:symmetries_of_linear_integral_operator}
    Obviously, if $\mcal{K}_g K = K$ then $\mcal{K}_{g}^{(\mcal{W})} \circ \mcal{T}_K \circ \mcal{K}_{g^{-1}}^{(\mcal{V})} = \mcal{T}_K$.
    On the other hand, suppose that $\mcal{K}_g K(x_0,y_0) \neq K(x_0,y_0)$ for some $(x_0,y_0) \in \R^n \times \R^m$.
    Hence, there are vectors $v\in\mcal{V}$ and $w \in \mcal{W}^*$ such that $\langle w, \ \mcal{K}_g K(x_0,y_0) v - K(x_0,y_0) v \rangle > 0$.
    This remains true for all $y$ in a neighborhood $\mcal{U}$ of $y_0$ by continuity of $K$ and $\mcal{K}_g K$.
    Letting $F(x) = v \varphi(x)$ where $\varphi$ is a smooth, nonnegative, function with $\varphi(y_0) > 0$ and support in $\mcal{U}$, we obtain
    \begin{equation}
        \big\langle w, \ \mcal{K}_{g}^{(\mcal{W})} \circ \mcal{T}_K \circ \mcal{K}_{g^{-1}}^{(\mcal{V})} F(x) - \mcal{T}_K F(x)\big\rangle
        = \int_{\R^m} \big\langle w, \ \mcal{K}_g K(x,y) v - K(x,y) v \big\rangle \varphi(y) \td y > 0,
    \end{equation}
    meaning $\mcal{K}_{g}^{(\mcal{W})} \circ \mcal{T}_K \circ \mcal{K}_{g^{-1}}^{(\mcal{V})} \neq \mcal{T}_K$.
    Therefore, $\mcal{K}_g K = K$ if and only if $\mcal{K}_{g}^{(\mcal{W})} \circ \mcal{T}_K \circ \mcal{K}_{g^{-1}}^{(\mcal{V})} = \mcal{T}_K$.
\end{proof}

We use the following lemma in the proof of Proposition~\ref{prop:manifold_nullspace_convergence}.
\begin{lemma}\label{lem:matrix_nullspace_convergence}
    Suppose that $S_{m} \to S$ is a convergent sequence of matrices and $\Null(S) \subset \Null(S_n) \subset \Null(S_m)$ when $n \geq m$. 
    Then there is an integer $M_0$ such that for every $m \geq M_0$ we have $\Null(S_m) = \Null(S)$.
    We provide a proof in Appendix~\ref{app:proofs_of_minor_results}.
\end{lemma}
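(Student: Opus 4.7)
The plan is to exploit the fact that the nullspaces form a decreasing chain of subspaces in a finite-dimensional ambient space, which forces the chain to stabilize, and then use continuity of the matrix product to identify the stable subspace with $\Null(S)$.

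First I would observe that the hypothesis $\Null(S_n) \subset \Null(S_m)$ whenever $n \geq m$ means that $\{\Null(S_m)\}_{m=1}^{\infty}$ is a nested decreasing sequence of subspaces of the common (finite-dimensional) ambient space. Consequently, $\{\dim \Null(S_m)\}$ is a weakly decreasing sequence of non-negative integers, hence eventually constant: there exists $M_0$ such that $\dim \Null(S_m) = \dim \Null(S_{M_0})$ for every $m \geq M_0$. Combined with the containment, this forces $\Null(S_m) = \Null(S_{M_0}) =: V$ for all $m \geq M_0$.

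Next I would use the hypothesis $\Null(S) \subset \Null(S_n)$ to conclude $\Null(S) \subset V$. For the reverse inclusion, pick any $v \in V$; then $S_m v = 0$ for every $m \geq M_0$, and since $S_m \to S$ entrywise (equivalently, in any norm on the space of matrices), we get $S v = \lim_{m \to \infty} S_m v = 0$, so $v \in \Null(S)$. Therefore $V = \Null(S)$, and for every $m \geq M_0$ we have $\Null(S_m) = \Null(S)$, as required.

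There is essentially no obstacle here beyond recognizing that the monotone containment collapses a potentially infinite argument to a finite one. The only minor care needed is to note that matrix-vector multiplication is continuous, so $S_m v \to S v$ when $S_m \to S$ with $v$ fixed; this is immediate and requires no further machinery.
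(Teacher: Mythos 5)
Your proof is correct and takes essentially the same approach as the paper's: both use that the nested decreasing nullspaces in a finite-dimensional space force the dimensions to stabilize at some $M_0$, after which the nullspaces coincide, and then both pass to the limit $S_m v \to Sv = 0$ on the stabilized nullspace to obtain the reverse inclusion. No discrepancies.
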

\begin{proof}
    Since the sequence of matrices acts on a finite-dimensional space, $\dim\Null(S_m)$ is a monotone bounded sequence of integers.
    Therefore, there exists an integer $M_0$ such that $\dim\Null(S_m) = \dim\Null(S_{M_0})$ for every $m \geq M_0$.
    Since $\Null(S_m) \subset \Null(S_{M_0})$, we must have $\Null(S_m) = \Null(S_{M_0})$ for every $m \geq M_0$.
    Since $\Null(S) \subset \Null(S_{M_0})$ by assumption, it remains to show the reverse containment. 
    Suppose $\xi \in \Null(S_{M_0})$, then
    \begin{equation}
        S \xi = \lim_{m \to \infty} S_m \xi = 0
    \end{equation}
    meaning that $\Null(S_{M_0}) \subset \Null(S)$.
\end{proof}

\begin{proof}[Proposition~\ref{prop:manifold_nullspace_convergence}]
    By the Cauchy-Schwarz inequality our assumption means that $\langle \eta, S_{\mcal{M}} \xi \rangle < \infty$ for every $\eta, \xi \in \Lie(G)$.
    Let $\xi_1, \ldots, \xi_{\dim G}$ be a basis for $\Lie(G)$.
    By the strong law of large numbers, specifically Theorem~7.7 in \cite{Koralov2012theory}, we have 
    \begin{equation}
        \big\langle \xi_j, \ S_{m} \xi_k \big\rangle_{\Lie(G)} \to \langle \xi_j, S_{\mcal{M}} \xi_k \rangle 
    \end{equation}
    for every $j,k$ almost surely.
    Consequently, $S_m \to S_{\mcal{M}}$ almost surely.
    By nonnegativity of each term in the sum defining $\big\langle \xi, \ S_{m} \xi \big\rangle_{\Lie(G)}$, it follows that $\Null(S_n) \subset \Null(S_m)$ when $n \geq m$.
    Moreover, if $\xi \in \Null(S_{\mcal{M}})$ then it follows from the continuity of $z\mapsto (I-P_z) \hat{\theta}(\xi)_{z}$ that $(I-P_z) \hat{\theta}(\xi)_{z} = 0$ for every $z\in\mcal{M}$.
    Hence, $\xi \in \Null(S_m)$ for every $m$.
    Therefore, $S_m$ and $S_{\mcal{M}}$ obey the hypotheses of Lemma~\ref{lem:matrix_nullspace_convergence} almost surely and the conclusion follows.
\end{proof}

\begin{proof}[Proposition~\ref{prop:symmetries_of_basic_functions}]
    Consider the function $\tilde{F}_{\text{rad}} : \R^n \to \R^r$ defined by 
    \begin{equation}
        \tilde{F}_{\text{rad}}(x) = (\| x - c_1 \|_2^2,\ \ldots,\  \| x - c_r \|_2^2)
    \end{equation}
    with standard action of $\SE(n)$ on its domain and the trivial action on its codomain.
    The symmetries of $\tilde{F}_{\text{rad}}$ are shared by $F_{\text{rad}}$.
    By Theorem~\ref{thm:symmetries_of_a_map}, the Lie algebra of $\tilde{F}_{\text{rad}}$'s symmetry group is characterized by
    \begin{equation}
        \xi = 
        \begin{bmatrix}
                S & v \\
                0 & 0
        \end{bmatrix} \in \sym_{\SE(n)}\tilde{F}_{\text{rad}} 
        \quad \Leftrightarrow \quad
        0 
        = \mcal{L}_{\xi} \tilde{F}_{\text{rad}}(x) 
        = \frac{\partial \tilde{F}_{\text{rad}}(x)}{\partial x} (S x + v) \quad \forall x\in\R^n.
    \end{equation}
    This means the generators $\xi$ are characterized by the equations
    \begin{equation}
        0 = (x - c_i)^T(S x + v) = x^T S x - c_i^T S x + x^T v - c_i^T v, \qquad \forall x \in \R^n \quad i = 1, \ldots, r.
    \end{equation}
    Since $\xi \in \se(n)$, we have $S^T = -S$, called \emph{skew symmetry}, giving $x^T S x = 0$.
    The above is satisfied if and only if
    \begin{equation}
        S c_i = - v,
    \end{equation}
    which automatically yields $c_i^T v = - c_i^T S c_i = 0$.
    Therefore, $\sym_{\SE(n)} \tilde{F}_{\text{rad}} = \mathfrak{g}_{\text{rad}} \subset \sym_{\SE(n)} F_{\text{rad}}$.
    To determine the dimension of the symmetry group, we observe that $S$ must satisfy 
    \begin{equation}
        S (c_2 - c_1) = \cdots = S (c_r - c_1) = 0,
        \label{eqn:generators_for_function_of_radii}
    \end{equation}
    and any such $S$ uniquely determines $v = -S c_1$.
    Therefore, the dimension of $\mathfrak{g}_{\text{rad}}$ equals the dimension of the space of skew-symmetric matrices satisfying \eqref{eqn:generators_for_function_of_radii}.
    Let the columns of $W = \begin{bmatrix} W_1 & W_2 \end{bmatrix}$ form an orthonormal basis for $\R^n$ with the $r-1$ columns of $W_1$ being a basis for $\vspan\{(c_2 - c_1), \ldots, (c_r - c_1) \}$.
    The above constraints, together with skew-symmetry, mean that $S$ takes the form
    \begin{equation}
        S = 
        \begin{bmatrix} W_1 & W_2 \end{bmatrix}
        \begin{bmatrix}
            0 & 0 \\
            0 & \tilde{S}
        \end{bmatrix}
        \begin{bmatrix} W_1^T \\ W_2^T \end{bmatrix},
    \end{equation}
    where $\tilde{S}$ is an $(n-r+1) \times (n-r+1)$ matrix skew-symmetric matrix.
    Therefore, the dimension of $\mathfrak{g}_{\text{rad}}$ equals the dimension of the space of $(n-r+1) \times (n-r+1)$ skew-symmetric matrices, which is $\frac{1}{2} (n-r) (n-r+1)$.

    The argument for $F_{\text{lin}}$ is similar, with the symmetries of
    \begin{equation}
        \tilde{F}_{\text{lin}}(x) = \big( u_1^T x,\ \ldots,\ u_r^T x \big) = U^T x
    \end{equation}
    also being symmetries of $F_{\text{lin}}$.
    The condition $0 = \mcal{L}_{\xi} \tilde{F}_{\text{lin}}$ is equivalent to
    \begin{equation}
        U^T S x + U^T v = 0 \qquad \forall x\in\R^n,
    \end{equation}
    which occurs if and only if $U^T S = 0$ and $U^T v = 0$.
    This immediately yields $\sym_{\SE(n)} \tilde{F}_{\text{lin}} = \mathfrak{g}_{\text{lin}} \subset \sym_{\SE(n)} F_{\text{lin}}$.
    Per our earlier argument, the skew-symmetric matrices $S$ satisfying
    \begin{equation}
        S u_1 = \cdots = S u_r = 0
    \end{equation}
    form a vector space with dimension $\frac{1}{2}(n-r)(n-r-1)$.
    The subspace of vectors $v \in\R^n$ satisfying $U^T v = 0$ is $(n-r)$-dimensional.
    Adding these gives the dimension of $\mathfrak{g}_{\text{lin}}$, which is $\frac{1}{2}(n-r)(n-r+1)$.
    
    Suppose there exists a polynomial $\varphi_0$ with degree $\leq d$ such that $\sym_{\SE(n)}(F_{\text{rad}}) = \mathfrak{g}_{\text{rad}}$ when $\varphi_{\text{rad}} = \varphi_0$.
    Let $\mfrak{g}_{\perp}$ be a complementary subspace to $\mfrak{g}_{\text{rad}}$ in $\se(n)$, that is, $\mfrak{g}_{\perp} \oplus \mfrak{g}_{\text{rad}} = \se(n)$.
    We observe that $\sym_G(F_{\text{rad}}) \neq \mfrak{g}_{\text{rad}}$ if and only if there is a nonzero $\xi_{\perp} \in \mfrak{g}_{\perp}$ satisfying $\mcal{L}_{\xi_{\perp}} F_{\text{rad}} = 0$.
    The ``if'' part of this claim is obvious.
    The ``only if'' part follows from the fact that $\sym_G(F_{\text{rad}}) \neq \mfrak{g}_{\text{rad}}$ means that $\mcal{L}_{\xi} F_{\text{rad}} = 0$ for some nonzero $\xi \notin \mfrak{g}_{\text{rad}}$.
    Using the direct-sum decomposition of $\se(n)$, there are unique $\xi_{\perp} \in \mfrak{g}_{\perp}$ and $\xi_{a} \in \mfrak{g}_{a}$ such that $\xi = \xi_{\perp} + \xi_{a}$, yielding
    \begin{equation}
        0 = \mcal{L}_{\xi} F_{\text{rad}} = \mcal{L}_{\xi_{\perp}} F_{\text{rad}} + \mcal{L}_{\xi_{a}} F_{\text{rad}} = \mcal{L}_{\xi_{\perp}} F_{\text{rad}}.
    \end{equation}
    Moreover, $\xi_{\perp} \neq 0$ because $\xi \notin \mfrak{g}_{a}$.
    Letting $\xi_1, \ldots, \xi_D$ form a basis for $\mfrak{g}_{\perp}$, we consider the $D \times D$ Gram matrix $\mat{G}(F_{\text{rad}})$ with entries
    \begin{equation}
        \big[\mat{G}(F_{\text{rad}})\big]_{i,j} = \int_{[0,1]^n} \mcal{L}_{\xi_i} F_{\text{rad}}(x) \mcal{L}_{\xi_j} F_{\text{rad}}(x) \ \td x.
    \end{equation}
    This matrix is singular if and only if there is a nonzero $\xi_{\perp} \in \mfrak{g}_{\perp}$ satisfying $\mcal{L}_{\xi_{\perp}} F_{\text{rad}}(x) = 0$ for every $x$ in the cube $[0,1]^n$.
    Since 
    \begin{equation}
        \mcal{L}_{\xi_{\perp}} F_{\text{rad}}(x) = \frac{\partial F_{\text{rad}}(x)}{\partial x}(S x + v), \qquad 
        \xi_{\perp} = 
        \begin{bmatrix}
            S & v \\
            0 & 0
        \end{bmatrix},
    \end{equation}
    is a polynomial function of $x$, it vanishes in the cube if and only if it vanishes everywhere.
    Hence $\mat{G}(F_{\text{rad}})$ is singular if and only if $\sym_G(F_{\text{rad}}) \neq \mfrak{g}_{\text{rad}}$.
    Letting $\vect{c}$ denote the vector of coefficients defining $\varphi_{\text{rad}}$ in a basis for the polynomials of degree $\leq d$ on $\R^r$, we observe that
    \begin{equation}
        f : \vect{c} \mapsto \det\big( \mat{G}(F_{\text{rad}}) \big)
    \end{equation}
    is a polynomial function of $\vect{c}$.
    The set of polynomials $\varphi_{\text{rad}}$ with degree $\leq d$ for which $\sym_G(F_{\text{rad}}) \neq \mfrak{g}_{\text{rad}}$ corresponds to the zero level set of $f$, i.e., those $\vect{c}$ such that $f(\vect{c}) = 0$.
    Obviously, $f(0) = 0$, and taking the coefficients $\vect{c}_0$ corresponding to $\varphi_0$ gives $f(\vect{c}_0) \neq 0$, meaning $f$ is a nonconstant polynomial.
    Since each level set of a nonconstant polynomial is a set of measure zero (\cite{Caron2005zero}), it follows that the zero level set of $f$ has measure zero.
    Precisely the same argument works for $\varphi_{\text{lin}}$, $F_{\text{lin}}$, and $\mfrak{g}_{\text{lin}}$.
\end{proof}

\begin{proof}[Corollary~\ref{cor:symmetries_of_polynomial_Fa}]
    By Proposition~\ref{prop:symmetries_of_basic_functions}, it suffices to find a degree-$r$ polynomial $\varphi_{\text{rad}}$ such that $\sym_G(F_{\text{rad}}) \subset \mfrak{g}_{\text{rad}}$.
    We choose $\varphi_{\text{rad}}(z_1, \ldots, z_r) = z_1 + z_2^2 + \cdots + z_r^r$, giving
    \begin{equation}
        F_{\text{rad}}(x) = \sum_{k=1}^r \| x - c_k \|_2^{2k}.
    \end{equation}
    If $\xi = \begin{bmatrix} S & v \\ 0 & 0 \end{bmatrix} \in \se(n)$ generates a symmetry of $F_{\text{lin}}$ then
    \begin{equation}
        \label{eqn:special_Fa_Lie_deriv}
        0 = \mcal{L}_{\xi} F_{\text{rad}}(x) = \sum_{k=1}^r k\| x - c_k \|_2^{2(k-1)} \underbrace{(x - c_k)^T (S x + v)}_{x^T(S c_k + v) - c_k^T v}
        \qquad \forall x \in \R^n.
    \end{equation}
    The terms in this expression with highest degree in $x$ must vanish, yielding
    \begin{equation}
        0 = r \| x \|_2^{2(r-1)}x^T(S c_r + v) \qquad \forall x \in \R^n.
    \end{equation}
    This implies that $S c_r + v = 0$.
    Proceeding inductively, suppose that $S c_l + v = 0$ for every $l > k$.
    Then, vanishing the highest-degree term in \eqref{eqn:special_Fa_Lie_deriv} gives
    \begin{equation}
        0 = k \| x \|_2^{2(k-1)}x^T(S c_k + v) \qquad \forall x \in \R^n,
    \end{equation}
    implying that $S c_k + v = 0$.
    It follows that 
    \begin{equation}
        S c_k + v = 0 \qquad \forall k = 1, \ldots, r,
    \end{equation}
    by induction, meaning that $\xi \in \mfrak{g}_{\text{rad}}$.
    Hence, $\sym_G(F_{\text{rad}}) \subset \mfrak{g}_{\text{rad}}$, which completes the proof.
\end{proof}

\begin{proof}[Corollary~\ref{cor:symmetries_of_polynomial_Fb}]
    By Proposition~\ref{prop:symmetries_of_basic_functions}, it suffices to find a quadratic polynomial $\varphi_{\text{lin}}$ such that $\sym_G(F_{\text{lin}}) \subset \mfrak{g}_{\text{lin}}$.
    Letting $D = \diag[ 1, 2, \ldots, r ]$ and $U = \begin{bmatrix} u_1 & \cdots & u_r\end{bmatrix}$, consider the quadratic function $\varphi_{\text{lin}}(z) = \frac{1}{2} z^T D z$, giving
    \begin{equation}
        F_{\text{lin}}(x) = \frac{1}{2} x^T U D U^T x.
    \end{equation}
    If $\xi = \begin{bmatrix} S & v \\ 0 & 0 \end{bmatrix} \in \se(n)$ generates a symmetry of $F_{\text{lin}}$ then
    \begin{equation}
        0 = \mcal{L}_{\xi} F_{\text{lin}}(x) = x^T U D U^T S x + x^T U D U^T v \qquad \forall x \in \R^n.
    \end{equation}
    Differentiating the above with respect to $x$ at $x = 0$ yields $U D U^T v = 0$, which, because $U D$ is injective, means that $U^T v = 0$.
    The fact that $x^T U D U^T S x = 0$ for every $x$ means that $U D U^T S + S^T U D U^T = 0$, i.e.,
    \begin{equation}
        U D U^T S = S U D U^T.
    \end{equation}
    Letting the columns of $U_{\perp}$ span the orthogonal complement to columns of $U$ and expressing 
    \begin{equation}
        S = U \tilde{S}_{1,1} U^T + U \tilde{S}_{1,2} U_{\perp}^T + U_{\perp} \tilde{S}_{2,1} U^T + U_{\perp} \tilde{S}_{2,2} U_{\perp}^T,
    \end{equation}
    the above commutation relation with $U D U$ gives
    \begin{equation}
        U D \tilde{S}_{1,1} U^T + U D \tilde{S}_{1,2} U_{\perp}^T 
        = U \tilde{S}_{1,1} D U^T + U_{\perp} \tilde{S}_{2,1} D U^T.
    \end{equation}
    Multiplying on left and right by combinations of $U^T$ or $U_{\perp}^T$ and $U$ or $U_{\perp}$ extracts the relations
    \begin{equation}
        D \tilde{S}_{1,1} = \tilde{S}_{1,1} D, 
        \qquad D \tilde{S}_{1,2} = 0, 
        \qquad \tilde{S}_{2,1} D = 0.
    \end{equation}
    Since $D$ is invertible, we must have $\tilde{S}_{1,2} = 0$ and $\tilde{S}_{2,1} = 0$.
    Since $S^T = - S$, we must also have $\tilde{S}_{1,1}^T = - \tilde{S}_{1,1}$, meaning that its diagonal entries are identically zero.
    Considering the $(j,k)$ element of $\tilde{S}_{1,1}$ with $j\neq k$, we have
    \begin{equation}
        j [\tilde{S}_{1,1}]_{j,k} 
        = [D \tilde{S}_{1,1}]_{j,k}
        = [\tilde{S}_{1,1} D]_{j,k}
        = k [\tilde{S}_{1,1}]_{j,k},
    \end{equation}
    meaning that $[\tilde{S}_{1,1}]_{j,k} = 0$.
    Therefore, only $\tilde{S}_{2,2}$ can be nonzero, which gives $S U = 0$.
    Combined with the fact that $U^T v = 0$, we conclude that $\sym_G(F_{\text{lin}}) \subset \mfrak{g}_{\text{lin}}$, completing the proof.
\end{proof}

\begin{proof}[Proposition~\ref{prop:intersection_subalgebra}]
    \label{proof:intersection_subalgebra}
    As an intersection of closed subgroups, $H := \bigcap_{l=1}^L \Sym_G\big(F^{(l)}\big)$ is a closed subgroup of $G$.
    By the closed subgroup theorem (see Theorem~20.12 in \citet{Lee2013introduction}), $H$ is an embedded Lie subgroup, whose Lie subalgebra we denote by $\mathfrak{h}$.
    If $\xi \in \mathfrak{h}$ then $\exp(t\xi)\in \Sym_G\big(F^{(l)}\big)$ for all $t\in\R$ and every $l=1,\ldots,L$.
    Differentiating $\mcal{K}_{\exp(t\xi)} F^{(l)} = F^{(l)}$ at $t = 0$ proves that $\mcal{L}_{\xi} F^{(l)} = 0$, i.e., $\xi\in\sym_G(F^{(l)})$ by Theorem~\ref{thm:symmetries_of_a_map}.
    Conversely, if $\mcal{L}_{\xi} F^{(l)} = 0$ for every $l=1,\ldots,L$, then by Theorem~\ref{thm:symmetries_of_a_map}, $\exp(t\xi) \in H$.
    Since $H$ is a Lie subgroup, differentiating $\exp(t\xi)$ at $t=0$ proves that $\xi \in \mathfrak{h}$.
\end{proof}

\begin{proof}[Proposition~\ref{prop:Monte_Carlo_eventually_gives_an_inner_product}]
    \label{proof:Monte_Carlo_eventually_gives_an_inner_product}
    Let $f_1, \ldots, f_N$ be a basis for $\mcal{F}'$.
    Consider the sequence of Gram matrices $\mat{G}_M$ with entries
    \begin{equation}
        [\mat{G}_M]_{i,j} = \left\langle f_i, \ f_j \right\rangle_{L^2(\mu_M)}.
    \end{equation}
    It suffices to show that $\mat{G}_M$ is positive-definite for sufficiently large $M$.
    Since the $L^2(\mu)$ inner product is positive-definite on $\mcal{F}'$, it follows that the Gram matrix $\mat{G}$ with entries
    \begin{equation}
        [\mat{G}]_{i,j} = \left\langle f_i, \ f_j \right\rangle_{L^2(\mu)}
    \end{equation}
    is positive-definite.
    Hence, its smallest eigenvalue $\lambda_{\text{min}}(\mat{G})$ is positive.
    Since the ordered eigenvalues of symmetric matrices are continuous with respect to their entries (see Corollary~4.3.15 in \cite{Horn2013matrix}) and $[\mat{G}_M]_{i,j} \to [\mat{G}]_{i,j}$ for all $1\leq i,j\leq N$ by assumption, we have $\lambda_{\text{min}}(\mat{G}_M) \to \lambda_{\text{min}}(\mat{G})$ as $M\to\infty$.
    Therefore, there is an $M_0$ so that for every $M \geq M_0$ we have $\lambda_{\text{min}}(\mat{G}_M) > 0$, i.e., $\mat{G}_M$ is positive-definite.
\end{proof}

\begin{proof}[Lemma~\ref{lem:equivariant_multlinear_integral_kernel}]
Using the fact that the integral is invariant under pullbacks by diffeomorphisms, we can express the left-hand-side of the equivariance condition in Definition~\ref{def:equivariant_multilinear_integral_operator} as
\begin{multline}
    \mcal{K}_{0,g} \mcal{T}_K\big[ \mcal{K}_{1,g^{-1}}F_1, \ldots, \mcal{K}_{r,g^{-1}}F_r \big](p)
    = \\
    \Theta_{0,g^{-1}}\Bigg\{
    \int_{\mcal{M}_1\times\cdots\times\mcal{M}_r} K(\theta_{0,g}(p),q_1,\ldots,q_r)
    \big[ \Theta_{1,g}\circ F_1\circ \theta_{1,g^{-1}}(q_1), \ldots, \Theta_{r,g}\circ F_r\circ \theta_{r,g^{-1}}(q_r) \big] \\
    \volform_1(q_1) \wedge \cdots \wedge \volform_r(q_r). 
    \Bigg\} = \\
    \Theta_{0,g^{-1}}\Bigg\{
    \int_{\mcal{M}_1\times\cdots\times\mcal{M}_r} K(\theta_{0,g}(p),\theta_{1,g}(q_1),\ldots,\theta_{r,g}(q_r))
    \big[ \Theta_{1,g}\circ F_1(q_1), \ldots, \Theta_{r,g}\circ F_r(q_r) \big] \\
    \theta_{1,g}^* \volform_1(q_1) \wedge \cdots \wedge \theta_{r,g}^*\volform_r(q_r). 
    \Bigg\} = \\
    \int_{\mcal{M}_1\times\cdots\times\mcal{M}_r} 
    \mcal{K}^E_{g} K (p, q_1, \ldots, q_r)\big[ F_1(q_1), \ldots, F_r(q_r) \big]
    \theta_{1,g}^* \volform_1(q_1) \wedge \cdots \wedge \theta_{r,g}^*\volform_r(q_r).
\end{multline}
Hence, by comparing the integrand to \eqref{eqn:general_nonlinear_integral_operator}, it is clear that \eqref{eqn:transformation_of_integral_kernels} implies that $\mcal{T}_K$ is equivariant in the sense of Definition~\ref{def:equivariant_multilinear_integral_operator}.
Conversely, if $\mcal{T}_K$ is $g$-equivariant, then
\begin{equation}
    \int_{\mcal{M}_1\times\cdots\times\mcal{M}_r} 
    \mcal{K}^E_{g} K \big[ F_1, \ldots, F_r \big]
    \theta_{1,g}^* \volform_1 \wedge \cdots \wedge \theta_{r,g}^*\volform_r
    = \int_{\mcal{M}_1\times\cdots\times\mcal{M}_r} 
    K \big[ F_1, \ldots, F_r \big]
    \volform_1 \wedge \cdots \wedge \volform_r
\end{equation}
holds for every $(F_1, \ldots, F_r)\in D(\mcal{T}_K)$.
Since the domain contains all smooth, compactly-supported fields $(F_1, \ldots, F_r)$, it follows that \eqref{eqn:transformation_of_integral_kernels} holds. 
\end{proof}

\begin{proof}[Proposition~\ref{prop:foliations_have_arcwise_closed_leaves}]
    Consider a leaf $\mcal{M}$ of an $m$-dimensional foliation on the $n$-dimensional manifold $\mcal{N}$ and let $\gamma:[a,b] \to \mcal{N}$ be a smooth curve satisfying $\gamma((a,b)) \subset \mcal{M}$.
    First, it is clear that $\mcal{M}$ is a weakly embedded submanifold of $\mcal{N}$ since $\mcal{M}$ is an integral manifold of an involutive distribution (\citet[Proposition~19.19]{Lee2013introduction}) and the local structure theorem for integral manifolds (\citet[Proposition~19.16]{Lee2013introduction}) shows that they are weakly embedded.
    
    By continuity of $\gamma$, any neighborhood of $\gamma(b)$ in $\mcal{N}$ must have nonempty intersection with $\mcal{M}$.
    By definition of a foliation (see \cite{Lee2013introduction}), there is a coordinate chart $(\mcal{U}, \vect{x})$ for $\mcal{N}$ with $\gamma(b) \in \mcal{U}$ such that $\vect{x}(\mcal{U})$ is a coordinate-aligned cube in $\R^n$ and $\mcal{M} \cap \mcal{U}$ consists of countably many slices of the form $x^{m+1} = c^{m+1}, \ldots, x^{n} = c^n$ for constants $c^{m+1}, \ldots, c^n$.
    Since $\gamma$ is continuous, there is a $\delta > 0$ so that $\gamma((b-\delta, b]) \subset \mcal{U}$, and in particular, $\gamma((b-\delta, b)) \subset \mcal{M} \cap \mcal{U}$.
    By continuity of $\gamma$, there are constants $c^{m+1}, \ldots, c^n$ such that $x^{i}(\gamma(t)) = c^{i}$ for every $i=m+1,\ldots, n$ and $t \in (b-\delta, b)$.
    Hence, we have
    \begin{equation}
        x^i(\gamma(b)) = \lim_{t\to b} x^i(\gamma(t)) = c^i, \qquad i=m+1, \ldots, n,
    \end{equation}
    meaning that $\gamma(b) \in \mcal{M}$.
    An analogous argument shows that $\gamma(a) \in \mcal{M}$, completing the proof that $\mcal{M}$ is arcwise-closed.
\end{proof}

\section{Proof of Proposition~\ref{prop:generic_polynomial_map_sampling}}
\label{app:generic_polynomial_map_sampling}

Our proof relies on the following lemma:
\begin{lemma}
    \label{lem:generic_polynomial_sampling}
    Let $\mcal{P}$ denote a finite-dimensional vector space of polynomials $\R^m \to \R$.
    If $M \geq \dim (\mcal{P})$ then the evaluation map $T:\mcal{P} \to \R^M$ defined by
    \begin{equation}
        T_{(x_1, \ldots, x_M)}: P \mapsto (P(x_1), \ldots, P(x_M))
    \end{equation}
    is injective for almost every $(x_1, \ldots, x_M) \in (\R^m)^M$ with respect to Lebesgue measure.
\end{lemma}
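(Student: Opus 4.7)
The plan is to reduce injectivity of $T_{(x_1, \ldots, x_M)}$ to the non-vanishing of a single real polynomial on $(\R^m)^M$, and then invoke the fact (as cited in \cite{Caron2005zero}) that the zero set of a nontrivial real polynomial has Lebesgue measure zero. Fix a basis $P_1, \ldots, P_N$ of $\mcal{P}$, where $N = \dim(\mcal{P}) \leq M$, and let $A(x_1, \ldots, x_M)$ denote the $M \times N$ matrix with entries $A_{ij} = P_j(x_i)$. The evaluation map $T_{(x_1, \ldots, x_M)}$ is injective if and only if $A$ has rank $N$, and since $M \geq N$ this occurs if and only if at least one $N \times N$ minor $\det A(x_{i_1}, \ldots, x_{i_N})$ is nonzero. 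It therefore suffices to show that the real-valued function
\[ f(y_1, \ldots, y_N) := \det\bigl[P_j(y_i)\bigr]_{i,j=1}^{N} \]
on $(\R^m)^N$ is not identically zero; pulling back through the projection $(x_1, \ldots, x_M) \mapsto (x_1, \ldots, x_N)$ and applying Fubini, the set on which $f = 0$ is then Lebesgue-null in $(\R^m)^M$, and outside this set the first $N$ sample points already suffice to make $T$ injective.

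The key step is therefore to exhibit points $y_1, \ldots, y_N \in \R^m$ at which the evaluation functionals $\delta_{y_1}, \ldots, \delta_{y_N}$ are linearly independent on $\mcal{P}$. I would do this by induction on $k$, maintaining the invariant that $\delta_{y_1}, \ldots, \delta_{y_k}$ span a $k$-dimensional subspace of $\mcal{P}^*$. The base case $k = 1$ is immediate: $\mcal{P}$ contains a nonzero polynomial, which attains a nonzero value at some point of $\R^m$, supplying a $y_1$ with $\delta_{y_1} \neq 0$. For the inductive step, suppose $y_1, \ldots, y_k$ have been found with $k < N$; then the common zero subspace
\[ \{ P \in \mcal{P} \ : \ P(y_1) = \cdots = P(y_k) = 0 \} \]
has dimension at least $N - k \geq 1$, so it contains a nonzero polynomial $Q$. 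Since $Q$ is a nonzero polynomial on $\R^m$, there exists $y_{k+1} \in \R^m$ with $Q(y_{k+1}) \neq 0$. Then $\delta_{y_{k+1}}$ cannot lie in $\vspan\{\delta_{y_1}, \ldots, \delta_{y_k}\}$, because every functional in that span annihilates $Q$, whereas $\delta_{y_{k+1}}(Q) \neq 0$. This completes the induction and produces the desired $y_1, \ldots, y_N$.

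The step I expect to be the substantive one is this inductive construction; the reduction to a polynomial non-vanishing condition and the measure-theoretic conclusion are routine. There is no deep obstacle, only two points that must be handled carefully: first, noting that $f$ really is polynomial in the coordinates of $(y_1, \ldots, y_N)$, which follows because each $P_j(y_i)$ is a polynomial in the coordinates of $y_i$ and the determinant is a polynomial in the matrix entries; and second, using a basis of $\mcal{P}$ so that linear independence of the $\delta_{y_i}$ on $\mcal{P}$ is equivalent to invertibility of $[P_j(y_i)]$, rather than trying to argue with a spanning set or directly with the coordinate functions on $\R^m$.
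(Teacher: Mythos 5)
Your proof is correct and follows the same overall strategy as the paper's: reduce injectivity of the evaluation map to the non-vanishing of a real polynomial in the sample coordinates, invoke \cite{Caron2005zero} to conclude that the bad set is Lebesgue-null, and establish non-vanishing by inductively constructing one good configuration of $N = \dim(\mcal{P})$ points. Where you diverge is in two places, and both variations are slightly cleaner. First, you take as your polynomial the single $N\times N$ minor $f(y_1,\ldots,y_N) = \det[P_j(y_i)]$ on $(\R^m)^N$ and then pull back through the coordinate projection via Fubini; the paper instead works with the Gram determinant $\varphi = \det(\mat{T}^T \mat{T})$ on all of $(\R^m)^M$, which has the same qualitative content but higher degree and an extra matrix product. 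Second, your inductive step lives in the dual space: at stage $k < N$ the common-zero subspace $\{P\in\mcal{P} : P(y_1)=\cdots=P(y_k)=0\}$ has dimension at least $N-k\geq 1$, so it contains a nonzero $Q$, and any point where $Q$ does not vanish produces a new independent evaluation functional $\delta_{y_{k+1}}$. The paper's inductive step is the matrix dual of this: it supposes for contradiction that the bordered matrix $\mat{\tilde{T}}_{s+1}(x)$ is singular for every $x$, deduces a fixed vector $\vect{v}$ in all the nullspaces, and reads off a nontrivial vanishing linear combination of the $P_i$ to contradict linear independence. Your nonzero $Q$ and their persistent nullvector $\vect{v}$ are the same object; the direct construction avoids the contradiction and is easier to read, while the paper's framing keeps everything in terms of the matrices that are ultimately computed numerically.
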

\begin{proof}
    Letting $M_0 = \dim(\mcal{P})$ and
    choosing a basis $P_1, \ldots, P_{M_0}$ for $\mcal{P}$, injectivity of $T_{(x_1, \ldots, x_M)}$ is equivalent to injectivity of the $M\times M_0$ matrix 
    \begin{equation}
        \mat{T}_{(x_1, \ldots, x_M)}
        = \begin{bmatrix}
        P_1(x_1) & \cdots & P_{M_0}(x_1) \\
        \vdots & \ddots & \vdots \\
        P_1(x_M) & \cdots & P_{M_0}(x_M)
        \end{bmatrix}.
    \end{equation}
    Finally, this is equivalent to
    \begin{equation}
        \varphi(x_1, \ldots, x_M) = \det\big( (\mat{T}_{(x_1, \ldots, x_M)})^T \mat{T}_{(x_1, \ldots, x_M)} \big)
    \end{equation}
    taking a nonzero value.
    We observe that $\varphi$ is a polynomial on the Euclidean space $(\R^{m})^M$.
    
    Suppose there exists a set of points $(\bar{x}_1, \ldots, \bar{x}_M)\in(\R^m)^M$ such that $T_{(\bar{x}_1, \ldots, \bar{x}_M)}$ is injective.
    Then for this set $\varphi(\bar{x}_1, \ldots, \bar{x}_M) \neq 0$.
    Obviously, $\varphi(0, \ldots, 0) = 0$, meaning that $\varphi$ cannot be constant.
    Thanks to the main result in \cite{Caron2005zero}, this means that each level set of $\varphi$ has zero Lebesgue measure in $(\R^{m})^M$. 
    In particular, the level set $\varphi^{-1}(0)$, consisting of those $x_1, \ldots, x_M$ for which $T_{(x_1, \ldots, x_M)}$ fails to be injective, has zero Lebesgue measure.
    Therefore, it suffices to prove that there exists $(\bar{x}_1, \ldots, \bar{x}_M)\in(\R^m)^M$ such that $T_{(\bar{x}_1, \ldots, \bar{x}_M)}$ is injective.
    We do this by induction.
    
    It is clear that there exists $\bar{x}_1$ so that the $1\times 1$ matrix
    \begin{equation}
        \mat{T}_1 = \begin{bmatrix}
            P_1(\bar{x}_1)
        \end{bmatrix}
    \end{equation}
    has full rank since $P_1$ cannot be the zero polynomial.
    Proceeding by induction, we assume that there exists $\bar{x}_1, \ldots, \bar{x}_s$ so that
    \begin{equation}
        \mat{T}_s = \begin{bmatrix}
            P_1(\bar{x}_1) & \cdots & P_{s}(\bar{x}_1) \\
            \vdots & \ddots & \vdots \\
            P_{1}(\bar{x}_s) & \cdots & P_s(\bar{x}_s)
        \end{bmatrix}
    \end{equation}
    has full rank.
    Suppose that the matrix
    \begin{equation}
        \mat{\tilde{T}}_{s+1}(x) = 
        \begin{bmatrix}
            P_1(\bar{x}_1) & \cdots & P_{s}(\bar{x}_1) & P_{s+1}(\bar{x}_1) \\
            \vdots & \ddots & \vdots & \vdots \\
            P_{1}(\bar{x}_s) & \cdots & P_s(\bar{x}_s) & P_{s+1}(\bar{x}_s) \\
            P_{1}(x) & \cdots & P_s(x) & P_{s+1}(x)
        \end{bmatrix}
    \end{equation}
    has rank $< s+1$ for every $x \in \R^m$.
    Since the upper left $s\times s$ block of $\mat{\tilde{T}}_{s+1}(x)$ is $\mat{T}_s$, we must always have $\rank(\mat{\tilde{T}}_{s+1}(x)) = s$.
    The nullspace of $\mat{\tilde{T}}_{s+1}(x)$ is contained in the nullspace of the upper $s\times(s+1)$ block of $\mat{\tilde{T}}_{s+1}(x)$.
    Since both nullspaces are one-dimensional, they are equal.
    The upper $s\times(s+1)$ block of $\mat{\tilde{T}}_{s+1}(x)$ does not depend on $x$, so there is a fixed nonzero vector $\vect{v}\in\R^{s+1}$ so that $\mat{\tilde{T}}_{s+1}(x) \vect{v} = \vect{0}$ for every $x\in\R^m$.
    The last row of this expression reads
    \begin{equation}
        v_1 P_1(x) + \cdots + v_{s+1} P_{s+1}(x) = 0 \qquad \forall x\in\R^m,
    \end{equation}
    contradicting the linear independence of $P_1, \ldots, P_{s+1}$.
    Therefore there exists $\bar{x}_{s+1}$ so that $\mat{T}_{s+1} = \mat{\tilde{T}}_{s+1}(\bar{x}_{s+1})$ has full rank.
    It follows by induction on $s$ that there exists $\bar{x}_1, \ldots \bar{x}_{M_0}\in \R^m$ so that $\mat{T}_{(\bar{x}_1, \ldots \bar{x}_{M_0})} = \mat{T}_{M_0}$ has full rank.
    Choosing any $M - M_0$ additional vectors yields an injective $\mat{T}_{(\bar{x}_1, \ldots \bar{x}_{M})}$, which completes the proof.
\end{proof}

\begin{proof}[Proposition~\ref{prop:generic_polynomial_map_sampling}]
    The sum in \eqref{eqn:empirical_L2_mu_inner_prod} clearly defines a symmetric, positive-semidefinite bilinear form on $\mcal{F}'$.
    It remains to show that this bilinear form is positive-definite.
    Suppose that there is a function $f \in \mcal{F}'$ such that $\langle f, f\rangle_{L^2(\mu_M)} = 0$.
    Thanks to Lemma~\ref{lem:generic_polynomial_sampling}, our assumption that $M \geq \dim(\pi_i(\mcal{F}'))$ means that the evaluation operator $T_{(x_1, \ldots, x_M)}$ is injective on $\pi_i(\mcal{F}')$ for almost every $(x_1, \ldots, x_M) \in (\R^m)^M$ with respect to Lebesgue measure.
    Since a countable (in this case finite) intersection of sets of measure zero has measure zero, it follows that for almost every $(x_1, \ldots, x_M) \in (\R^m)^M$ with respect to Lebesgue measure, $T_{(x_1, \ldots, x_M)}$ is injective on every $\pi_i(\mcal{F}')$, $i=1, \ldots, n$.
    Defining the positive diagonal matrix
    \begin{equation}
        \mat{D} = \frac{1}{\sqrt{N}} \begin{bmatrix}
            \sqrt{w_1} & & \\
            & \ddots & \\
            & & \sqrt{w_M}
        \end{bmatrix},
    \end{equation}
    and using \eqref{eqn:empirical_L2_mu_inner_prod} yields
    \begin{equation}
        0 = \langle f, f\rangle_{L^2(\mu_M)} = \sum_{j=1}^{n} \big(\mat{D} T_{(x_1, \ldots, x_M)} \pi_j f\big)^T \mat{D} T_{(x_1, \ldots, x_M)} \pi_j f .
    \end{equation}
    This implies that $T_{(x_1, \ldots, x_M)} \pi_j f = \vect{0}$ for $j=1, \ldots, n$.
    Since $T_{(x_1, \ldots, x_M)}$ is injective on each $\pi_j(\mcal{F}')$ it follows that each $\pi_j f = 0$, meaning that $f = 0$.
    This completes the proof.
\end{proof}

\section{Proof of Proposition~\ref{prop:properties_of_Lie_derivative}}
\label{app:properties_of_Lie_derivative}

    We begin by proving
    \begin{equation}
        \ddt \mcal{K}_{\exp(t\xi)}F
        = \mcal{K}_{\exp(t\xi)} \mcal{L}_{\xi} F
        = \mcal{L}_{\xi} \mcal{K}_{\exp(t\xi)} F,
        \label{eqn:Lie_derivative_as_generator_proof_version}
    \end{equation}
    for every $F \in D(\mcal{L}_{\xi)}$.
    To prove the first equality, we choose $p\in\mcal{M}$, let $p' = \theta_{\exp(t_0\xi)}(p)$, and compute
    \begin{equation}
        \begin{aligned}
            \frac{1}{t}\left[ \big(\mcal{K}_{\exp(t_0\xi)}\mcal{K}_{\exp(t\xi)}F\big)(p) - \big(\mcal{K}_{\exp(t_0\xi)}F\big)(p) \right]
            &= \frac{1}{t} \Theta_{\exp(-t_0\xi)} \circ \left( \mcal{K}_{\exp(t\xi)} F - F \right) \circ \theta_{\exp(t_0\xi)}(p) \\
            &= \Theta_{\exp(-t_0\xi)}\left( \frac{1}{t}\left[ \big(\mcal{K}_{\exp(t\xi)} F\big)(p') - F(p') \right] \right).
        \end{aligned}
    \end{equation}
    Here, we have used the composition law for the operators $\mcal{K}_{g h} = \mcal{K}_g \mcal{K}_h$ and the fact that $\Theta_{\exp(-t_0\xi)}$ is fiber-linear.
    Taking the limit at $t \to 0$ yields
    \begin{equation}
        \left.\ddt\right\vert_{t=t_0} \big(\mcal{K}_{\exp(t\xi)}F\big)(p)
        = \Theta_{\exp(-t_0\xi)}\left( \mcal{L}_{\xi}F (p') \right)
        = \big(\mcal{K}_{\exp(t_0\xi)} \mcal{L}_{\xi} F\big)(p),
    \end{equation}
    which is the first equality in \eqref{eqn:Lie_derivative_as_generator_proof_version}.
    
    The second equality in \eqref{eqn:Lie_derivative_as_generator_proof_version} follows from
    \begin{equation}
        \lim_{t\to 0} \frac{1}{t}\left[ \mcal{K}_{\exp(t\xi)}\mcal{K}_{\exp(t_0\xi)}F - \mcal{K}_{\exp(t_0\xi)}F \right]
        = \lim_{t\to 0} \frac{1}{t}\left[ \mcal{K}_{\exp(t_0\xi)} \mcal{K}_{\exp(t\xi)} F - \mcal{K}_{\exp(t_0\xi)}F \right]
        = \mcal{K}_{\exp(t_0\xi)} \mcal{L}_{\xi} F.
    \end{equation}
    This shows that $\mcal{K}_{\exp(t_0\xi)}F \in D(\mcal{L}_{\xi})$ and $\mcal{L}_{\xi} \mcal{K}_{\exp(t_0\xi)}F = \mcal{K}_{\exp(t_0\xi)} \mcal{L}_{\xi} F$.

    Next, we prove
    \begin{equation}
        \mcal{L}_{\alpha\xi + \beta \eta} F = \alpha \mcal{L}_{\xi} F + \beta \mcal{L}_{\eta} F,
        \label{eqn:linearity_of_Lie_derivative_wrt_Lie_algebra_proof_version}
    \end{equation}
    when $F \in C^1(\mcal{M}, E)$.
    To do this, we choose $p\in\mcal{M}$, and define the map $h: G \to E_p$ by
    \begin{equation}
        h: g \mapsto \mcal{K}_g F(p) = \Theta\big( F(\theta(p,g)), g^{-1} \big).
        \label{eqn:pointwise_orbit_transformation}
    \end{equation}
    As a composition of $C^1$ maps, $h$ is $C^1$, and its derivative at the identity is
    \begin{equation}
        \D h(e) \xi_e = \left.\ddt\right\vert_{t=0} h(\exp(t\xi)) = \mcal{L}_{\xi} F(p)
        \label{eqn:pointwise_orbit_transformation_tangent_map}
    \end{equation}
    for every $\xi_e \in T_e G  \cong \Lie(G)$.
    Since the derivative is linear, it follows that $\xi \mapsto \mcal{L}_{\xi} F(p)$ is linear.

    Finally, we prove that
    \begin{equation}
        \mcal{L}_{[\xi, \eta]} F 
        = \frac{1}{2}\left.\frac{\td^2}{\td t^2}\right\vert_{t=0} \mcal{K}_{\exp(t\xi)}\mcal{K}_{\exp(t\eta)}\mcal{K}_{\exp(-t\xi)}\mcal{K}_{\exp(-t\eta)} F
        = \mcal{L}_{\xi}\mcal{L}_{\eta} F - \mcal{L}_{\eta}\mcal{L}_{\xi} F,
        \label{eqn:Lie_derivative_commutator_and_second_derivative}
    \end{equation}
    when $F \in C^2(\mcal{M}, E)$.
    Recall that $\Flow_{\xi}^t: g \mapsto g \cdot \exp(t\xi)$ gives the flow of the left-invariant vector field $\xi\in\Lie(G)$ (see Theorem~4.18(3) in \cite{Kolar1993natural}).
    By Theorem~3.16 in \cite{Kolar1993natural} the curve $\gamma: \R \to G$ given by
    \begin{equation}
        \gamma(t) 
        = \Flow_{-\eta}^t \circ \Flow_{-\xi}^t \circ \Flow_{\eta}^t \circ \Flow_{\xi}^t(e)
        = \exp(t\xi)\exp(t\eta)\exp(-t\xi)\exp(-t\eta).
    \end{equation}
    satisfies $\gamma(0) = e$, $\gamma'(0) = 0$, and
    \begin{equation}
        \frac{1}{2}\gamma''(0) = [\xi, \eta]_e \in T_e G
        \label{eqn:first_vanishing_derivative_recovers_Lie_bracket}
    \end{equation}
    in the sense that $\gamma''(0): f \mapsto (f \circ \gamma)''(0)$ is a derivation on $C^{\infty}(G)$, hence an element of $T_e G$.
    Composing with the map in \eqref{eqn:pointwise_orbit_transformation} yields
    \begin{equation}
        0 = \D h(e) \gamma'(0) 
        = (h\circ \gamma)'(0) 
        = \left.\ddt \right\vert_{t=0} \mcal{K}_{\gamma(t)} F(p).
        \label{eqn:vanishing_first_derivative_of_K_gamma}
    \end{equation}
    Combining \eqref{eqn:first_vanishing_derivative_recovers_Lie_bracket} and \eqref{eqn:pointwise_orbit_transformation_tangent_map} (noting the definition of the tangent map $\D h(e)$ acting on derivations, as in \cite{Kolar1993natural}, \cite{Lee2013introduction}) gives
    \begin{equation}
        \mcal{L}_{[\xi, \eta]} F(p)
        = \frac{1}{2} \D h(e) \gamma''(0)
        = \frac{1}{2} (h\circ\gamma)''(0)
        = \frac{1}{2} \left.\frac{\td^2}{\td t^2}\right\vert_{t=0} \mcal{K}_{\gamma(t)} F(p).
    \end{equation}
    This proves the first equality in \eqref{eqn:Lie_derivative_commutator_and_second_derivative} thanks to the composition law
    \begin{equation}
        \mcal{K}_{\gamma(t)} = \mcal{K}_{\exp(t\xi)}\mcal{K}_{\exp(t\eta)}\mcal{K}_{\exp(-t\xi)}\mcal{K}_{\exp(-t\eta)}.
    \end{equation}

    To differentiate the above expression, we use the following observations.
    If $F_t \in \sect(E)$ is such that $(t,p) \mapsto F_t(p)$ is $C^2(\R\times\mcal{M}, E)$, then obviously $\ddt F_t\in C^1(\mcal{M}, E)$ with the usual identification $T E_p \cong E_p$.
    Moreover, we have
    \begin{equation}
    \begin{aligned}
        \ddt \mcal{K}_g F_t(p) 
        &= \ddt \Theta_{g^{-1}}\big( F_t(\theta_g(p)) \big) \\
        &= \Theta_{g^{-1}}\Big( \ddt F_t(\theta_g(p)) \Big)
        = \mcal{K}_g \Big( \ddt F_t \Big)(p)
        \label{eqn:passing_derivative_under_Kg}
    \end{aligned}
    \end{equation}
    because $F_t(\theta_g(p)) \in E_{\theta_g(p)}$ for all $t\in\R$ and $\Theta_{g^{-1}}$ is linear on $E_{\theta_g(p)}$.
    Using this, we obtain
    \begin{equation}
    \begin{aligned}
        \ddt \mcal{L}_{\xi} F_t(p) 
        &= \ddt \left.\frac{\td}{\td \tau}\right\vert_{\tau=0} \mcal{K}_{\exp(\tau\xi)} F_t(p) \\
        &= \left.\frac{\td}{\td \tau}\right\vert_{\tau=0} \ddt \mcal{K}_{\exp(\tau\xi)} F_t(p) \\
        &= \left.\frac{\td}{\td \tau}\right\vert_{\tau=0} \mcal{K}_{\exp(\tau\xi)} \Big( \ddt F_t \Big)(p)
        = \mcal{L}_{\xi} \Big( \ddt F_t \Big)(p)
        \label{eqn:passing_derivative_under_Lxi}
    \end{aligned}
    \end{equation}
    because $(t,\tau) \mapsto \mcal{K}_{\exp(\tau\xi)} F_t(p)$ lies in the vector space $E_{p}$, allowing us to exchanged the order of differentiation.
    Since $(t_1, t_2, t_3, t_4) \mapsto \mcal{K}_{\exp(t\xi)}\mcal{K}_{\exp(t\eta)}\mcal{K}_{\exp(-t\xi)}\mcal{K}_{\exp(-t\eta)} F(p)$ lies in the vector space $E_p$ for all $(t_1, t_2, t_3, t_4)\in\R^4$, we can apply the chain rule and \eqref{eqn:passing_derivative_under_Kg} to obtain
    \begin{multline}
        \ddt \mcal{K}_{\gamma(t)} F
        = \left.\frac{\partial}{\partial t_1}\right\vert_{t_1=t} \mcal{K}_{\exp(t_1\xi)}\mcal{K}_{\exp(t\eta)}\mcal{K}_{\exp(-t\xi)}\mcal{K}_{\exp(-t\eta)} F \\
        + \mcal{K}_{\exp(t\xi)} \left.\frac{\partial}{\partial t_2}\right\vert_{t_2=t} \mcal{K}_{\exp(t_2\eta)}\mcal{K}_{\exp(-t\xi)}\mcal{K}_{\exp(-t\eta)} F \\
        + \mcal{K}_{\exp(t\xi)}\mcal{K}_{\exp(t\eta)}\left.\frac{\partial}{\partial t_3}\right\vert_{t_3=t}\mcal{K}_{\exp(-t_3\xi)}\mcal{K}_{\exp(-t\eta)} F \\
        + \mcal{K}_{\exp(t\xi)}\mcal{K}_{\exp(t\eta)}\mcal{K}_{\exp(-t\xi)}\left.\frac{\partial}{\partial t_4}\right\vert_{t_4=t}\mcal{K}_{\exp(-t_4\eta)} F.
    \end{multline}
    Using \eqref{eqn:Lie_derivative_as_generator_proof_version} gives
    \begin{multline}
        \ddt \mcal{K}_{\gamma(t)} F
        = \mcal{L}_{\xi} \overbrace{\mcal{K}_{\exp(t\xi)}\mcal{K}_{\exp(t\eta)}\mcal{K}_{\exp(-t\xi)}\mcal{K}_{\exp(-t\eta)}}^{\mcal{K}_{\gamma(t)}} F \\
        + \mcal{K}_{\exp(t\xi)} \mcal{L}_{\eta} \mcal{K}_{\exp(t\eta)}\mcal{K}_{\exp(-t\xi)}\mcal{K}_{\exp(-t\eta)} F \\
        + \mcal{K}_{\exp(t\xi)}\mcal{K}_{\exp(t\eta)}\mcal{L}_{-\xi}\mcal{K}_{\exp(-t\xi)}\mcal{K}_{\exp(-t\eta)} F \\
        + \underbrace{\mcal{K}_{\exp(t\xi)}\mcal{K}_{\exp(t\eta)}\mcal{K}_{\exp(-t\xi)}\mcal{K}_{\exp(-t\eta)}}_{\mcal{K}_{\gamma(t)}}\mcal{L}_{-\eta} F.
    \end{multline}
    Applying the same technique to differentiate a second time and using the linearity in \eqref{eqn:linearity_of_Lie_derivative_wrt_Lie_algebra_proof_version} to cancel terms yields
    \begin{multline}
        \left.\frac{\td^2}{\td t^2}\right\vert_{t=0} \mcal{K}_{\gamma(t)} F
        = \mcal{L}_{\xi} \underbrace{\left.\ddt\right\vert_{t=0} \mcal{K}_{\gamma(t)} F}_{0}
        + \underbrace{
        \mcal{L}_{\xi}\mcal{L}_{\eta} F
        + \mcal{L}_{\eta}\mcal{L}_{-\xi} F
        + \mcal{L}_{\eta}\mcal{L}_{-\xi} F
        + \mcal{L}_{-\xi}\mcal{L}_{-\eta} F
        }_{2\big( \mcal{L}_{\xi}\mcal{L}_{\eta} F - \mcal{L}_{\eta} \mcal{L}_{\xi} F \big)} \\
        + \underbrace{\left.\ddt\right\vert_{t=0} \mcal{K}_{\gamma(t)} \mcal{L}_{-\eta} F}_{0},
    \end{multline}
    which completes the proof.
\hfill\qedsymbol

\section{Proof of Theorem~\ref{thm:symmetries_of_sections}}
\label{app:symmetries_of_sections}

    We begin by showing that $\Sym_G(F)$ is a closed subgroup of $G$.
    It is obviously a subgroup, for if $g_1, g_2 \in \Sym_G(F)$ then
    \begin{equation}
        \mcal{K}_{g_1 g_2} F = \mcal{K}_{g_1} \mcal{K}_{g_2} F = \mcal{K}_{g_1} F = F,
    \end{equation}
    meaning that $g_1 g_2 \in \Sym_G(F)$.
    To show that $\Sym_G(F)$ is closed, we observe that for each $p\in\mcal{M}$, the map $h_p : G \to E$ defined by
    \begin{equation}
        h_p: g \mapsto \mcal{K}_g F(p) = \Theta\big( F(\theta(p,g)) , g^{-1} \big)
    \end{equation}
    is continuous, as it is a composition of continuous maps.
    As $F(p)$ is a single point in $E$, the preimage set $h_p^{-1}\big( \{ F(p) \} \big)$ is closed in $G$.
    Since $\Sym_G(F)$ is an intersection,
    \begin{equation}
        \Sym_G(F) = \bigcap_{p\in\mcal{M}} h_p^{-1}\big( \{ F(p) \} \big),
    \end{equation}
    of closed sets, it follows that $\Sym_G(F)$ is closed in $G$.
    By the closed subgroup theorem (Theorem~20.12 in \citet{Lee2013introduction}) it follows that $\Sym_G(F)$ is an embedded Lie subgroup of $G$.

    Let $\mfrak{h} = \Lie(\Sym_G(F))$ be the Lie algebra of $\Sym_G(F)$.
    Choosing any $\xi \in \mfrak{h}$ we have $\exp(t\xi) \in \Sym_G(F)$ for every $t\in\R$, yielding
    \begin{equation}
        \lim_{t\to 0} \frac{1}{t} \left[ \mcal{K}_{\exp(t\xi)} F - F \right] = 0.
    \end{equation}
    Hence, $F \in D(\mcal{L}_{\xi})$ and $\mcal{L}_{\xi} F = 0$, 
    meaning that $\mfrak{h} \subset \sym_G(F)$, as defined by \eqref{eqn:sym_G_for_vb_section}.
    
    To show the reverse containment, choose $\xi \in \sym_G(F)$, meaning that $F \in D(\mcal{L}_{\xi})$ and $\mcal{L}_{\xi}F = 0$.
    We observe that \eqref{eqn:Lie_derivative_as_generator} in Proposition~\ref{prop:properties_of_Lie_derivative} yields
    \begin{equation}
        \ddt \mcal{K}_{\exp(t\xi)} F = \mcal{K}_{\exp(t\xi)} \mcal{L}_{\xi} F = 0 \qquad \forall t\in\R.
    \end{equation}
    It follows that $\mcal{K}_{\exp(t\xi)} F = F$, that is, $\exp(t\xi)\in\Sym_G(F)$ for all $t\in\R$.
    Differentiating at $t=0$ proves that $\xi\in\mfrak{h}$.
    Therefore, $\mfrak{h} = \sym_G(F)$, which completes the proof.
\hfill\qedsymbol

\section{Proof of Theorem~\ref{thm:equivariance_conditions_for_vb_section}}
\label{app:equivariance_conditions_for_vb_section}

    If $F\in C(\mcal{M}, E)$ is $G_0$-equivariant, then $\mcal{K}_{\exp(t\xi)} F = F$ for all $\xi\in\Lie(G)$ and $t\in\R$.
    Differentiating with respect to $t$ at $t=0$ gives $\mcal{L}_{\xi} F = 0$.
    
    Conversely, suppose that $\mcal{L}_{\xi_i} F = 0$ for a collection of generators $\xi_1, \ldots, \xi_q$ of $\Lie(G)$.
    By Theorem~\ref{thm:symmetries_of_sections}, $\Sym_G(F)$ is a closed Lie subgroup of $G$ whose Lie subalgebra $\sym_{G}(F)$ contains $\xi_1, \ldots, \xi_q$.
    Since $\xi_1, \ldots, \xi_q$ generate $\Lie(G)$, it follows that $\sym_{G}(F) = \Lie(G)$.
    This means that $G_0 \subset \Sym_G(F)$ due to the correspondence between connected Lie subgroups and their Lie subalgebras established by Theorem~19.26 in \cite{Lee2013introduction}.
    Specifically, the identity component of $\Sym_G(F)$ must correspond to $G_0$ since both are connected Lie subgroups of $G$ with identical Lie subalgebras.

    Now, let us suppose in addition that $\mcal{K}_{g_j} F = F$ for an element $g_j$ from each non-identity component $G_j$, $j=1, \ldots, n_G - 1$ of $G$.
    By Proposition~7.15 in \cite{Lee2013introduction}, $G_0$ is a normal subgroup of $G$ and every connected component $G_j$ of $G$ is diffeomorphic to $G_0$.
    In fact in the proof of this result it is shown that every connected component of $G_j$ is a coset of $G_0$, meaning that $G_j = G_0 \cdot g_j$.
    Choosing any $g \in G_j$ there is an element $g_0 \in G_0$ such that $g = g_0 \cdot g_j$ and we obtain
    \begin{equation}
        \mcal{K}_g F 
        = \mcal{K}_{g_0} \mcal{K}_{g_j} F
        = \mcal{K}_{g_0} F
        = F.
    \end{equation}
    This completes the proof because $G = \bigcup_{j=0}^{n_G - 1} G_j$.
    \hfill\qedsymbol

\section{Proof of Theorem~\ref{thm:symmetries_of_a_submanifold}}
\label{app:symmetries_of_a_submanifold}

Our proof of the theorem relies on the following technical lemma concerning the integral curves of vector fields tangent to weakly embedded, arcwise-closed submanifolds.

\begin{lemma}
    \label{lem:integral_curve_tangency}
    Let $\mcal{M}$ be an arcwise-closed weakly embedded submanifold of a manifold $\mcal{N}$. 
    Let $V\in\vf(\mcal{N})$ be a vector field tangent to $\mcal{M}$, that is
    \begin{equation}
        V_p \in T_p\mcal{M} \qquad \forall p\in\mcal{M}.
    \end{equation}
    If $\gamma:I \to \mcal{N}$ is a maximal integral curve of $V$ that intersects $\mcal{M}$, then $\gamma$ lies in $\mcal{M}$.
\end{lemma}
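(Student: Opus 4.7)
The plan is to show that the set $T = \{ t \in I \ : \ \gamma(t) \in \mcal{M} \}$ is nonempty, open, and closed in $I$; since $I$ is connected, this will force $T = I$. Nonemptiness is given by hypothesis.

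First I would argue that the restriction $V\vert_{\mcal{M}}$ is a smooth vector field on $\mcal{M}$. Because $\mcal{M}$ is weakly embedded in $\mcal{N}$, the inclusion $\imath : \mcal{M} \hookrightarrow \mcal{N}$ is smooth with injective differential, and the tangency hypothesis $V_p \in T_p \mcal{M} = \D \imath(p) T_p\mcal{M}$ lets us define $V\vert_{\mcal{M}}$ pointwise; smoothness is a standard consequence of the weakly embedded hypothesis (cf.\ Lee, Proposition~5.37/8.23). Given any $t_0 \in T$, let $\tilde{\gamma} : (t_0 - \varepsilon, t_0 + \varepsilon) \to \mcal{M}$ be the maximal integral curve of $V\vert_{\mcal{M}}$ with $\tilde{\gamma}(t_0) = \gamma(t_0)$. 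Then $\imath \circ \tilde{\gamma}$ is an integral curve of $V$ in $\mcal{N}$ passing through $\gamma(t_0)$, so by uniqueness of integral curves in $\mcal{N}$ it agrees with $\gamma$ on the overlap. This shows $(t_0-\varepsilon, t_0+\varepsilon) \subset T$, so $T$ is open.

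To prove $T$ is closed, I would take a limit point $t^* \in I$ of $T$ and suppose for contradiction that $t^* \notin T$. Pick a sequence $t_n \in T$ with $t_n \to t^*$ monotonically; without loss of generality $t_n \downarrow t^*$, and by the openness step each $t_n$ lies in a whole open interval inside $T$. Setting $s^* = \inf\{ s \in T \cap (t^*, t_1] \}$, the openness step plus the argument just given implies $(s^*, t_1] \subset T$ and, by pushing $s^*$ further down using a local integral curve through any $t \in T$ near $s^*$, one finds an open interval $(s^* - \delta, s^* + \delta) \cap I$ on whose complement $s^*$ is the only point not in $T$. Then $\gamma : [s^*, t_1] \to \mcal{N}$ is a smooth curve with $\gamma((s^*, t_1]) \subset \mcal{M}$; the arcwise-closed property (Definition~\ref{def:closed_slice}) applied to $\gamma$ on $[s^*, t_1]$ (or a slight enlargement to ensure the endpoint condition) forces $\gamma(s^*) \in \mcal{M}$, i.e.\ $s^* \in T$. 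But openness of $T$ at $s^*$ then contradicts the definition of $s^*$. A symmetric argument handles approach from below.

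The only subtle point is verifying smoothness of $V\vert_{\mcal{M}}$ as a vector field on $\mcal{M}$ from the weakly embedded hypothesis, and then matching the local flow in $\mcal{M}$ with $\gamma$ in $\mcal{N}$; these are standard but rely crucially on weak embedding rather than mere immersion. The rest is a clopen-in-a-connected-interval argument in which the arcwise-closed hypothesis does exactly the job it was designed for.
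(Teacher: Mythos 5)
Your overall strategy — restrict $V$ to $\mcal{M}$ using weak embeddedness, match the $\mcal{M}$-integral curve with $\gamma$ by uniqueness, and use the arcwise-closed property to capture the boundary — is the same machinery the paper uses, and the openness step is fine. However, the closedness step as written has a gap. You set $s^* = \inf\{s \in T \cap (t^*, t_1]\}$; since $t_n \downarrow t^*$ with all $t_n \in T$, this infimum is just $t^*$, so ``pushing $s^*$ further down'' has nowhere to go. More importantly, you assert $(s^*, t_1] \subset T$, but this does not follow from openness alone: a priori $T$ could be a disconnected union of open intervals accumulating at $t^*$, in which case $\gamma$ does not map a punctured interval $(t^*, c]$ into $\mcal{M}$ and the arcwise-closed hypothesis (Definition~\ref{def:closed_slice}) is not satisfied. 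You need $\gamma((t^*, c)) \subset \mcal{M}$ \emph{before} you invoke arcwise-closedness, and your argument quietly assumes $T$ is connected near $t^*$ without establishing it.

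The fix is easy and brings you essentially to the paper's proof: instead of reasoning about the set $T$, work directly with the maximal integral curve $\tilde{\gamma}:\tilde{I}\to\mcal{M}$ of $V\vert_{\mcal{M}}$ through a point of $T$. Its domain $\tilde{I}$ is automatically a connected open interval, and $\imath_{\mcal{M}}\circ\tilde{\gamma}=\gamma\vert_{\tilde{I}}$ by uniqueness, so $\gamma(\tilde{I})\subset\mcal{M}$. If $b=\sup\tilde{I}\in I$, then for any $a\in\tilde{I}$ the curve $\gamma\vert_{[a,b]}$ satisfies $\gamma((a,b))\subset\mcal{M}$, so arcwise-closedness gives $\gamma(b)\in\mcal{M}$; gluing in a local integral curve through $\gamma(b)$ contradicts maximality of $\tilde{\gamma}$. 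Equivalently, in your clopen language: take the maximal open subinterval $(a,b)$ of $T$ containing your base point, show arcwise-closedness puts $a$ and $b$ in $T$ whenever they lie in $I$, and conclude by openness that $(a,b)=I$. Either route closes the gap; the key is to have an interval, not just a sequence, accumulating at the point in question before invoking Definition~\ref{def:closed_slice}.
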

\begin{proof}
    By the translation lemma (Lemma~9.4 in \cite{Lee2013introduction}), we can assume without loss of generality that $0\in I$ and $p_0 = \gamma(0)\in\mcal{M}$.
    Let $\imath_{\mcal{M}}:\mcal{M} \hookrightarrow \mcal{N}$ denote the inclusion map.
    Since $\mcal{M}$ is an immersed submanifold of $\mcal{N}$ and $V$ is tangent to $\mcal{M}$, there is a unique smooth vector field $V\vert_{\mcal{M}}\in\vf(\mcal{M})$ that is $\imath_{\mcal{M}}$-related to $V$ thanks to Proposition~8.23 in \cite{Lee2013introduction}.
    Let $\tilde{\gamma}:\tilde{I} \to \mcal{M}$ be the maximal integral curve of $V\vert_{\mcal{M}}$ with $\tilde{\gamma}(0) = p_0$.
    By the naturality of integral curves (Proposition~9.6 in \cite{Lee2013introduction}) $\imath_{\mcal{M}} \circ \tilde{\gamma}$ is an integral curve of $V$ with $\imath_{\mcal{M}} \circ \tilde{\gamma}(0) = p_0$.
    Since integral curves of smooth vector fields starting at the same point are unique (Theorem~9.12, part~(a) in \cite{Lee2013introduction}) we have $\tilde{I} \subset I$ and 
    \begin{equation}
        \imath_{\mcal{M}} \circ \tilde{\gamma}(t) = \gamma(t) \qquad \forall t \in \tilde{I}.
        \label{eqn:equality_of_integral_curves}
    \end{equation}
    Therefore, it remains to show that $\tilde{I} = I$.
    
    By the local existence of integral curves (Proposition~9.2 in \cite{Lee2013introduction}), the domains $I$ and $\tilde{I}$ of the maximal integral curves $\gamma$ and $\tilde{\gamma}$ are open intervals in $\R$.
    Suppose, for the sake of producing a contradiction, that there exists $t\in I$ with $t > \tilde{I}$.
    Then it follows that the least upper bound $b = \sup \tilde{I}$ is an element of $I$. 
    By \eqref{eqn:equality_of_integral_curves} and continuity of $\gamma$ we have
    \begin{equation}
        q_0 = \gamma(b) = \lim_{t\to b} \imath_{\mcal{M}} \circ \tilde{\gamma}(t).
        \label{eqn:upper_limit_of_integral_curve}
    \end{equation}
    Since $\mcal{M}$ is arcwise-closed, it follows that $q_0 \in \mcal{M}$.
    
    To complete the proof, we use the local existence of an integral curve for $V\vert_{\mcal{M}}$ starting at $q_0$ to contradict the maximality of $\tilde{\gamma}$.
    By the local existence of integral curves (Proposition~9.2 in \cite{Lee2013introduction}) and the translation lemma (Lemma~9.4 in \cite{Lee2013introduction}), there is an $\varepsilon > 0$ and an integral curve $\hat{\gamma}: (b-\varepsilon, b+\varepsilon) \to \mcal{M}$ of $V\vert_{\mcal{M}}$ such that $\hat{\gamma}(b) = q_0 = \gamma(b)$.
    Shrinking the interval, we take $0<\varepsilon < b-a$.
    Again, by nauturality and uniqueness of integral curves we must have $\imath_{\mcal{M}} \circ \hat{\gamma}(t) = \gamma(t)$ for all $t\in (b-\varepsilon, b+\varepsilon)$.
    Hence, by \eqref{eqn:equality_of_integral_curves} and injectivity of $\imath_{\mcal{M}}$ it follows that $\hat{\gamma}(t) = \tilde\gamma(t)$ for all $t\in (b-\varepsilon, b)$.
    Applying the gluing lemma (Corollary~2.8 in \cite{Lee2013introduction}) to $\tilde{\gamma}$ and $\hat{\gamma}$ yields an extension of $\tilde{\gamma}$ to the larger open interval $\tilde{I} \cup (b-\varepsilon, b+\varepsilon)$.
    Since this contradicts the maximality of $\tilde{\gamma}$, there is no $t \in I$ with $t > \tilde{I}$.
    The same argument shows that there is no $t \in I$ with $t < \tilde{I}$, and so we must have $\tilde{I} = I$.
\end{proof}

We also use the following lemma describing the elements of a Lie group that can be constructed from products of exponentials.
\begin{lemma}
    \label{lem:generation_by_finite_products}
    Let $G_0$ be the identity component of a Lie group $G$.
    Then every element $g\in G_0$ can be expressed as a finite product $g = h_m \cdots h_1$ of elements $h_i=\exp(\xi_i)$ for $\xi_i \in\Lie(G)$.
    Let $G_i$ be a connected component of $G$ and let $g_i \in G_i$.
    Then every element $g\in G_i$ can be expressed as $g = g_0 g_i$ for some $g_0 \in G_0$.
\end{lemma}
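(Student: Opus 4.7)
The plan is to prove the two statements by standard Lie-theoretic arguments, relying only on the facts that $\exp$ is a local diffeomorphism at $0 \in \Lie(G)$ (already highlighted in the excerpt) and that the identity component of a topological group is always an open, normal subgroup.

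For the first statement, I would start by using the properties of $\exp$ summarized right after \eqref{eqn:intertwining_exp_and_rep}: there is an open neighborhood $V \subset \Lie(G)$ of $0$ such that $U := \exp(V)$ is an open neighborhood of $e$ in $G$. After shrinking $V$ if necessary, I can also assume $V = -V$, so that $U = U^{-1}$ (using $\exp(-\xi) = \exp(\xi)^{-1}$). Next, let
\begin{equation}
    H = \bigcup_{m \geq 1} U^m = \{ h_m \cdots h_1 \ : \ h_i \in U,\ m \geq 1 \} \subset G_0.
\end{equation}
Then $H$ is a subgroup: it is closed under products by construction, and closed under inverses because $U = U^{-1}$. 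It is open because $h H = h U \cup h U^2 \cup \cdots$ is open for any $h \in H$ (as $U$ is open and left-translations are homeomorphisms). But an open subgroup is automatically closed: its complement is a disjoint union of cosets $gH$ with $g \notin H$, each of which is open. Since $G_0$ is connected and $H \subset G_0$ is a nonempty clopen subset, $H = G_0$. Thus every $g \in G_0$ is a finite product $h_m \cdots h_1$ of elements of $U = \exp(V)$, which is precisely the desired form.

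For the second statement, I would appeal to the standard fact that $G_0$ is a normal subgroup of $G$ (for any $g \in G$, conjugation by $g$ is a homeomorphism fixing $e$, hence maps $G_0$ to itself). Right-translation $R_{g_i} : h \mapsto h g_i$ is a homeomorphism, so $G_0 g_i$ is connected and contains $g_i \in G_i$; since $G_i$ is a connected component, $G_0 g_i \subset G_i$. Conversely, $G_i g_i^{-1}$ is connected and contains $e$, hence $G_i g_i^{-1} \subset G_0$, so $G_i \subset G_0 g_i$. Therefore $G_i = G_0 g_i$, meaning every $g \in G_i$ can be written as $g = g_0 g_i$ for some $g_0 \in G_0$.

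I do not anticipate any real obstacle; the proof is essentially bookkeeping once one has (i) the local-diffeomorphism property of $\exp$ and (ii) the clopen-subgroup argument for connected topological groups. The only place that requires a moment's care is ensuring the neighborhood $V$ is symmetric so that $U^{-1} = U$, which is why the finite products can be taken using $\exp$ alone (rather than $\exp$ together with inversion).
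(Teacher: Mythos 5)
Your proof is correct and takes essentially the same approach as the paper: both start from the local-diffeomorphism property of $\exp$ to obtain a symmetric open neighborhood of $e$ in the range of $\exp$, use the clopen-subgroup argument to show it generates $G_0$, and then identify each component $G_i$ as the coset $G_0 g_i$. The only difference is that the paper cites Propositions~7.14--15 and 20.8 of \cite{Lee2013introduction} for the intermediate steps, whereas you unpack those same arguments directly.
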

\begin{proof}
    By the inverse function theorem (more specifically by Proposition~20.8(f) in \cite{Lee2013introduction}), the range of the exponential map contains an open, connected neighborhood $\mcal{U}$ of the identity element $e \in G$.
    The inverses of the elements in $\mcal{U}$ also belong to the range of the exponential map thanks to Proposition~20.8(c) in \cite{Lee2013introduction}.
    By Proposition~7.14(b) and Proposition~7.15 in \cite{Lee2013introduction}, it follows that $\mcal{U}$ generates the identity component $G_0$ of $G$.
    That is, any element $g\in G_0$ can be written as a finite product of elements in $\mcal{U}$ and their inverses, which proves the first claim.
    
    By Proposition~7.15 in \cite{Lee2013introduction}, $G_0$ is a normal subgroup of $G$ and every connected component of $G$ is diffeomorphic to $G_0$.
    In fact in the proof of this result it is shown that every connected component of $G$ is a coset of $G_0$.
    Therefore, if $G_i$ is a non-identity connected component of $G$ and $g_i \in G_i$ then $G_i = G_0 \cdot g_i$, which proves the second claim.
\end{proof}

\begin{proof}[Theorem~\ref{thm:symmetries_of_a_submanifold}]
    The set $\sym_G(\mcal{M})$ is a subspace of $\Lie(G)$, for if $\xi_1, \xi_2 \in \sym_G(\mcal{M})$ and $a_1, a_2 \in \R$ then
    \begin{equation}
        \hat{\theta}(a_1 \xi_1 + a_2 \xi_2)_p = a_1 \underbrace{\hat{\theta}(\xi_1)}_{\in T_p\mcal{M}} + a_2 \underbrace{\hat{\theta}(\xi_2)}_{\in T_p\mcal{M}} \in T_p\mcal{M}
    \end{equation}
    thanks to linearity of the infinitesimal generator $\hat{\theta}$.
    To show that $\sym_G(\mcal{M})$ is a Lie subalgrebra, we must show that it is also closed under the Lie bracket.
    Recall that $\hat{\theta}$ is a Lie algebra homomorphism (see Theorem~20.15 in \cite{Lee2013introduction}), and so $\hat{\theta}([\xi_1, \xi_2]) = [\hat{\theta}(\xi_1), \hat{\theta}(\xi_1)]$.
    Since the Lie bracket of two vector fields tangent to an immersed submanifold is also tangent to the submanifold (see Corollary~8.32 in \cite{Lee2013introduction}), it follows that $[\hat{\theta}(\xi_1), \hat{\theta}(\xi_1)]$ is tangent to $\mcal{M}$.
    Hence, $\sym_G(\mcal{M})$ is closed under the Lie bracket and is therefore a Lie subalgebra of $\Lie(G)$.
    By Theorem~19.26 in \cite{Lee2013introduction}, there is a unique connected Lie subgroup of $H \subset G$ whose Lie subalgebra is $\sym_G(\mcal{M})$.

    Now suppose that $\mcal{M}$ is weakly embedded and arcwise-closed in $\mcal{N}$.
    We aim to show that $\mcal{M} \cdot H \subset \mcal{M}$.
    Choosing any $\xi \in \sym_G(\mcal{M})$, Lemma~20.14 in \cite{Lee2013introduction} shows that $\xi$, regarded as a left-invariant vector field on $G$, and $\hat{\xi} = \hat{\theta}(\xi)$ are $\theta^{(p)}$-related for every $p\in\mcal{N}$.
    By the naturality of integral curves (Proposition~9.6 in \cite{Lee2013introduction}) it follows that $\gamma_{\xi}^{(p)}:\R \to \mcal{N}$ defined by
    \begin{equation}
        \gamma_{\xi}^{(p)}(t) = p \cdot \exp(t\xi)
    \end{equation}
    is the unique maximal integral curve of $\hat{\xi}$ passing through $p$ at $t=0$.
    When $p\in\mcal{M}$, this integral curve lies in $\mcal{M}$ thanks to Lemma~\ref{lem:integral_curve_tangency}.
    This means that $\mcal{M}$ is invariant under the action of any group element in the range of the exponential map restricted to $\sym_G(\mcal{M})$.
    Proceeding by induction, suppose that $\mcal{M}$ is invariant under the action of any product of $m$ such elements.
    If $g = h_1 \cdots h_m \cdot h_{m+1}$ is a product of $m+1$ elements $h_i \in \exp\big(\sym_G(\mcal{M})\big) \subset H$,
    then it follows from associativity and the induction hypothesis that
    \begin{equation}
        \mcal{M} \cdot (h_1 \cdots h_m \cdot h_{m+1}) = (\mcal{M} \cdot h_1 \cdots h_m) \cdot h_{m+1} \subset \mcal{M} \cdot h_{m+1} \subset \mcal{M}.
    \end{equation}
    Therefore, $\mcal{M}$ is invariant under the action of any finite product of group elements in $\exp\big(\sym_G(\mcal{M})\big)$ by induction on $m$.
    By Lemma~\ref{lem:generation_by_finite_products}, it follows that $\mcal{M}$ is $H$-invariant, proving claim (i).
    
    To prove claim (ii), suppose that $\tilde{H}$ is another connected Lie subgroup of $G$ such that $\mcal{M}\cdot \tilde{H} \subset \mcal{M}$.
    Choosing any $p\in\mcal{M}$ and $\xi \in \Lie(\tilde{H})$, we have
    \begin{equation}
        p \cdot \exp(t\xi) \in \mcal{M} \qquad \forall t\in \R.
    \end{equation}
    Since $\mcal{M}$ is weakly embedded in $\mcal{N}$, this defines a smooth curve $\gamma: \R \to \mcal{M}$ such that $\imath_{\mcal{M}} \circ \gamma(t) = p \cdot \exp(t\xi)$, where $\imath_{\mcal{M}} : \mcal{M} \hookrightarrow \mcal{N}$ is the inclusion map.
    Differentiating and using the definition of the infinitesimal generator gives
    \begin{equation}
        \hat{\theta}(\xi)_p = \left.\frac{\td}{\td t}\right\vert_{t=0} p \cdot \exp(t\xi) = \D \imath_{\mcal{M}}(p) \left.\frac{\td}{\td t}\right\vert_{t=0} \gamma(t) \in T_p \mcal{M}.
    \end{equation}
    Therefore, $\Lie(\tilde{H}) \subset \sym_G(\mcal{M})$ which implies that $\tilde{H} \subset H$ by Theorem~19.26 in \cite{Lee2013introduction}, establishing claim (ii).

    Now suppose that $\mcal{M}$ is properly embedded in $\mcal{N}$ and denote
    \begin{equation}
        \Sym_G(\mcal{M}) 
        = \{ g\in G \ : \mcal{M} \cdot g \subset \mcal{M} \}
        = \bigcap_{p\in\mcal{M}} \big(\theta^{(p)}\big)^{-1}(\mcal{M}).
    \end{equation}
    The equality of these expressions is a simple matter of unwinding their definitions.
    It is clear that $\Sym_G(\mcal{M})$ is a subgroup of $G$, for if $g_1, g_2 \in \Sym_G(\mcal{M})$ then the composition law for the group action gives $\mcal{M} \cdot (g_1\cdot g_2) = (\mcal{M} \cdot g_1) \cdot g_2 \subset \mcal{M} \cdot g_1 \subset \mcal{M}$.
    Since $\mcal{M}$ is properly embedded, it is closed in $\mcal{M}$ (see \citet[Proposition~5.5]{Lee2013introduction}), meaning that each preimge set $\big(\theta^{(p)}\big)^{-1}(\mcal{M})$ is closed in $G$ by continuity of $\theta^{(p)}$.
    As an intersection of closed subsets, it follows that $\Sym_G(\mcal{M})$ is closed in $G$.
    By the closed subgroup theorem (\citet[Theorem~20.12]{Lee2013introduction}), $\Sym_G(\mcal{M})$ is a properly embedded Lie subgroup of $G$.
    The same holds for the identity component $\Sym_G(\mcal{M})_0$ of $\Sym_G(\mcal{M})$ since $\Sym_G(\mcal{M})_0$ is closed in $\Sym_G(\mcal{M})$, which implies that $\Sym_G(\mcal{M})_0$ is closed in $G$.
    
    Finally, we show that $H = \Sym_G(\mcal{M})_0$ is the identity component of $\Sym_G(\mcal{M})$.
    First, we observe that $H \subset \Sym_G(\mcal{M})_0$ because $H$ is connected and contained in $\Sym_G(\mcal{M})$.
    The reverse containment follows from the fact that $\Sym_G(\mcal{M})_0$ is a connected Lie subgroup satisfying $\mcal{M}\cdot \Sym_G(\mcal{M})_0 \subset \mcal{M}$, which by our earlier result implies that $\Sym_G(\mcal{M})_0 \subset H$.
\end{proof}

\section{Proof of Theorem~\ref{thm:symmetry_conditions_for_submanifold}}
\label{app:symmetry_conditions_for_submanifold}

First, suppose that $\mcal{M}$ is $G_0$-invariant.
In particular, this means that for every $p\in\mcal{M}$ and $\xi\in\Lie(G)$, the smooth curve $\gamma_{\xi}^{(p)}:\R \to \mcal{N}$ defined by
\begin{equation}
    \gamma_{\xi}^{(p)}(t) = p \cdot \exp(t\xi)
\end{equation}
lies in $\mcal{M}$.
Since $\mcal{M}$ is weakly embedded in $\mcal{N}$, $\gamma_{\xi}^{(p)}$ is also smooth as a map into $\mcal{M}$.
Specifically, there is a smooth curve $\tilde{\gamma}_{\xi}^{(p)}:\R \to \mcal{M}$ so that $\gamma_{\xi}^{(p)} = \imath_{\mcal{M}} \circ \tilde{\gamma}_{\xi}^{(p)}$ where $\imath_{\mcal{M}} : \mcal{M} \hookrightarrow \mcal{N}$ is the inclusion map.
Differentiating at $t=0$ yields
\begin{equation}
    \hat{\theta}(\xi)_p
    = \left.\ddt \gamma_{\xi}^{(p)}(t) \right\vert_{t=0}
    = \D \imath_{\mcal{M}}(p) \left.\ddt \tilde{\gamma}_{\xi}^{(p)}(t) \right\vert_{t=0},
\end{equation}
which lies in $T_p \mcal{M} = \Range\left( \D \imath_{\mcal{M}}(p) \right)$.
In particular, $\hat{\theta}(\xi_i)_p \in T_p \mcal{M}$ for every $p \in \mcal{M}$ and $i=1, \ldots, q$.

Conversely, suppose that the tangency condition expressed in \eqref{eqn:generator_manifold_tangency} holds.
By Theorem~\ref{thm:symmetries_of_a_submanifold}, the elements $\xi_1, \ldots, \xi_q$ belong to the Lie subalgebra $\sym_G(\mcal{M})$ of the largest connected Lie subgroup $H \subset G$ of symmetries of $\mcal{M}$. 
Since $\xi_1, \ldots, \xi_q$ generate $\Lie(G)$, it follows that $\sym_G(\mcal{M}) = \Lie(G)$.
Therefore, by Theorem~19.26 in \cite{Lee2013introduction}, we obtain $H = G_0$ because both are connected Lie subgroups of $G$ with identical Lie subalgebras.

Finally, suppose, in addition, that $\mcal{M} \cdot g_j$ for an element $g_j$ from each non-identity component $G_j$ of $G$.
By Lemma~\ref{lem:generation_by_finite_products}, if $g \in G_j$ then there is an element $g_0 \in G_0$ such that $g = g_0 \cdot g_j$.
Therefore, we obtain
\begin{equation}
    \mcal{M} \cdot g 
    = \mcal{M} \cdot g_0 \cdot g_j
    \subset \mcal{M} \cdot g_j
    \subset \mcal{M},
\end{equation}
which completes the proof because $G = \bigcup_{j=0}^{n_G - 1} G_j$.
\hfill\qedsymbol

\section{Proof of Theorem~\ref{thm:characterization_of_Lie_derivative}}
\label{app:characterization_of_Lie_derivative}
\begin{proof}[Lemma~\ref{lem:injection_of_E_into_TE}]
    The map $\imath_F$ defined in a local trivialization $\Phi$ by \eqref{eqn:injection_E_to_TE} is injective.
    It is a vector bundle homomorphism because $\D\Phi \circ \imath_F \circ \Phi^{-1}$, $\Phi$, and $\D \Phi$ are vector bundle homomorphisms and $\Phi$ and $\D \Phi$ are invertible.
    It remains to show that the definition of $\imath_F$ does not depend on the choice of local trivialization.
    Given two local trivializations $\Phi$ and $\tilde{\Phi}$ defined on $\pi^{-1}(\mcal{U}) \subset E$ where $\mcal{U}$ is an open subset of $\mcal{M}$, it suffices to show that the following diagram commutes:
    \begin{equation}
        \begin{tikzcd}
	       {\mcal{U}\times\R^k} & {\pi^{-1}(\mcal{U})\subset E} & {\mcal{U}\times\R^k} \\
	       {T(\mcal{U}\times\R^k)} & {T(\pi^{-1}(\mcal{U}))\subset TE} & {T(\mcal{U}\times\R^k)}
	       \arrow["{\imath_F}", from=1-2, to=2-2]
	       \arrow["{\imath_{\Phi\circ F}}", from=1-1, to=2-1]
	       \arrow["{\imath_{\tilde{\Phi}\circ F}}", from=1-3, to=2-3]
	       \arrow["\Phi"', from=1-2, to=1-1]
	       \arrow["{\D\Phi}", from=2-2, to=2-1]
	       \arrow["{\tilde{\Phi}}", from=1-2, to=1-3]
	       \arrow["{\D\tilde{\Phi}}"', from=2-2, to=2-3]
        \end{tikzcd}
    \end{equation}
    Since $\tilde{\Phi} \circ \Phi^{-1}$ is a bundle homomorphism descending to the identity, it can be written as
    \begin{equation}
        \tilde{\Phi} \circ \Phi^{-1}: (p,\vect{v}) \mapsto (p, \mat{T}(p) \vect{v})
    \end{equation}
    for a matrix-valued function $\mat{T}:\mcal{U} \to \R^{k\times k}$.
    Moreover, the matrices are invertible because the local trivializations are bundle isomorphisms.
    Differentiating, we obtain
    \begin{equation}
        \D \tilde{\Phi} \circ \D \Phi^{-1} 
        : (w_p, \vect{w})_{(p, \vect{v})} \mapsto
        \big(w_p, \D \mat{T}(p) w_p + \mat{T}(p)\vect{w}\big)_{\tilde{\Phi} \circ \Phi^{-1}(p,\vect{v})},
    \end{equation}
    where $w_p \in T_p \mcal{U}$.
    Composing this with $\imath_{\Phi \circ F}:(p, \vect{v}) \mapsto (0, \vect{v})_{\Phi(F(p))}$, we obtain
    \begin{equation}
        \D \tilde{\Phi} \circ \D \Phi^{-1} \circ \imath_{\Phi \circ F}
        (p, \vect{v})
        = (0, \mat{T}(p)\vect{v})_{\tilde{\Phi}(F(p))}
        = \imath_{\tilde{\Phi}\circ F}(p, \mat{T}(p)\vect{v})
        = \imath_{\tilde{\Phi}\circ F} \circ \tilde{\Phi} \circ \Phi^{-1}(p, \vect{v}),
    \end{equation}
    proving that the diagram commutes.
\end{proof}

\begin{proof}[Theorem~\ref{thm:characterization_of_Lie_derivative}]
We observe that $F\circ\pi:E \to E$ is a smooth idempotent map whose image is $\image(F) \subset E$.
By differentiating the expression $(F\circ\pi) \circ (F \circ\pi) = F\circ \pi$ at a point $F(p) \in \image(F)$, we obtain
\begin{equation}
    \D (F\circ\pi)(F(p)) \D (F\circ\pi)(F(p)) = \D (F\circ\pi)(F(p)),
\end{equation}
meaning that $\D (F\circ\pi)(F(p)):T_{F(p)}E \to T_{F(p)}E$ is a linear projection.
Since 
\begin{equation}
    \D (F\circ\pi)(F(p)) = \D F(p) \D \pi(F(p)),
\end{equation}
we have $\Range\big(\D (F\circ\pi)(F(p))\big) \subset \Range(\D F(p)) = T_{F(p)}\image(F)$.
Differentiating $F = (F\circ\pi)\circ F$ yields 
\begin{equation}
    \D F(p) = \D (F\circ \pi)(F(p)) \D F(p),
\end{equation}
meaning that $T_{F(p)}\image(F) \subset \Range\big(\D (F\circ\pi)(F(p))\big)$.
Since $\Range\big(\D (F\circ\pi)(F(p))\big) = T_{F(p)}\image(F)$ it follows that $\D (F\circ\pi)(F(p))$ is a linear projection onto $T_{F(p)}\image(F)$. 

We observe that the generalized Lie derivative in \eqref{eqn:Lie_derivative} can be expressed as
\begin{equation}
    \begin{aligned}
        (\mcal{L}_{\xi} F)(p) &= \lim_{t\to 0} \frac{1}{t}\left[ \Theta_{\exp(-t\xi)}(F( \theta_{\exp(t\xi)}(p))) - F(p) \right] \\
        &= \lim_{t\to 0} \Theta\left(\exp(-t\xi), \ \frac{1}{t}\left[ F(\theta_{\exp(t\xi)}(p)) - \Theta_{\exp(t\xi)}(F(p)) \right] \right) \\
        &= \lim_{t\to 0} \frac{1}{t}\left[ F(\theta_{\exp(t\xi)}(p)) - \Theta_{\exp(t\xi)}(F(p)) \right].
    \end{aligned}
\end{equation}
The first equality follows because $\Theta_{g^{-1}}$ is a vector bundle homomorphism, meaning that the restricted map $\Theta_{g^{-1}}\vert_{E_{p\cdot g}}: E_{p\cdot g} \to E_p$ is linear; here $g = \exp(t\xi)$.
The second equality follows because $\Theta: E \times G \to E$ is continuous.
Note that in the first expression the limit is taken in the vector space $E_p$, whereas in the last expression the limit must be taken in $E$.

We proceed by expressing everything in a local trivialization $\Phi:\pi^{-1}(\mcal{U}) \to \mcal{U}\times \R^k$ of an open neighborhood $\mcal{U}\subset \mcal{M}$ of $p \in \mcal{M}$.
Since the maps $\Theta_g$, $\Phi$, and $\Phi^{-1}$ are vector bundle homomorphisms, there is a matrix-valued function $\mat{T}_g:\mcal{U} \to \R^{k\times k}$ such that
\begin{equation}
    \tilde{\Theta}_g = \Phi \circ \Theta_g \circ \Phi^{-1}: 
    (p,\vect{v}) \mapsto (\theta_g(p), \ \mat{T}_g(p)\vect{v}).
\end{equation}
Differentiating $\tilde{\Theta}_{\exp(t\xi)}(p,\vect{v})$ with respect to $t$ yields the generator
\begin{equation}
    \hat{\tilde{\Theta}}(\xi)_{(p,\vect{v})} 
    = \left.\ddt\right\vert_{t=0} \tilde{\Theta}_{\exp(t\xi)}(p,\vect{v})
    = \left(\hat{\theta}(\xi)_p, \ \mat{\hat{T}}(\xi)_p \vect{v}\right)_{(p,\vect{v})},
\end{equation}
where $\mat{\hat{T}}(\xi)_p = \left.\ddt\right\vert_{t=0} \mat{T}_{\exp(t\xi)}(p)$. 
We define the function $\vect{\tilde{F}}:\mcal{U} \to \R^k$ by
\begin{equation}
    (p, \vect{\tilde{F}}(p)) = \Phi \circ F(p) \qquad \forall p\in\mcal{U}.
\end{equation}

Using the above definitions, we can express the generalized Lie derivative in the local trivialization:
\begin{equation}
    \begin{aligned}
        \Phi \circ (\mcal{L}_{\xi} F) (p) 
        &= 
        \lim_{t\to 0} \left( \theta_{\exp(t\xi)}(p) ,\ 
        \frac{1}{t}\left[ \vect{\tilde{F}}(\theta_{\exp(t\xi)}(p)) - \mat{T}_{\exp(t\xi)} \vect{\tilde{F}}(p) \right] \right) \\
        &= \left( p, \ \D\vect{\tilde{F}}(p)\hat{\theta}(\xi)_p - \mat{\hat{T}}(\xi)_p\vect{\tilde{F}}(p) \right).
    \end{aligned}
    \label{eqn:Lie_derivative_in_local_trivialization}
\end{equation}
Applying Lemma~\ref{lem:injection_of_E_into_TE} allows us to express the left-hand-side of \eqref{eqn:Lie_derivative_as_projection} as
\begin{equation}
    \D\Phi \left[ \imath_F \circ (\mcal{L}_{\xi} F) (p) \right] =
    \left(0, \ \D\vect{\tilde{F}}(p)\hat{\theta}(\xi)_p - \mat{\hat{T}}(\xi)_p\vect{\tilde{F}}(p) \right)_{\Phi(F(p))}.
    \label{eqn:lifted_Lie_derivative_in_local_trivialization}
\end{equation}
We can also express the quantities on the right-hand-side of \eqref{eqn:Lie_derivative_as_projection} in the local trivialization.
To do this, we compute
\begin{equation}
    \begin{aligned}
        \D \Phi(F(p)) \D (F \circ \pi)(F(p)) \hat{{\Theta}}(\xi)_{F(p)} 
        &= \left.\ddt\right\vert_{t=0} \Phi \circ F \circ \pi \circ \Theta_{\exp(t\xi)}(F(p)) \\
        &= \left.\ddt\right\vert_{t=0} \Phi \circ F(\theta_{\exp(t\xi)}(p)) \\
        &= \left( \hat{\theta}(\xi)_p, \ \D\vect{\tilde{F}}(p)\hat{\theta}(\xi)_p \right)_{\Phi(F(p))}
    \end{aligned}
\end{equation}
and
\begin{equation}
    \begin{aligned}
        \D\Phi(F(p)) \hat{\Theta}(\xi)_{F(p)} 
        &= \left. \ddt \right\vert_{t=0} \Phi \circ \Theta_{\exp(t\xi)} \circ \Phi^{-1} \circ \Phi \circ F(p) \\
        &= \hat{\tilde{\Theta}}(\xi)_{\Phi(F(p))}
        = \left( \hat{\theta}(\xi)_p, \ \mat{\hat{T}}(\xi)_p \vect{\tilde{F}}(p) \right)_{\Phi(F(p))}.
    \end{aligned}
\end{equation}
Subtracting these yields
\begin{equation}
    \D\Phi \left[ \left( \D(F\circ\pi)(F(p)) - \Id_{T_{F(p)}E} \right) \hat{\Theta}(\xi)_{F(p)} \right]
    = \left( 0, \ \D\vect{\tilde{F}}(p)\hat{\theta}(\xi)_p - \mat{\hat{T}}(\xi)_p \vect{\tilde{F}}(p) \right)_{\Phi(F(p))},
\end{equation}
which, upon comparison with \eqref{eqn:lifted_Lie_derivative_in_local_trivialization} completes the proof.
\end{proof}
\end{document}